\documentclass[12pt,english]{article}
\usepackage[T1]{fontenc}
\usepackage[latin9]{inputenc}
\usepackage{xcolor}
\usepackage{units}
\usepackage{url}
\usepackage{algorithm2e}
\usepackage{amsmath}
\usepackage{amsthm}
\usepackage{amssymb}
\usepackage{stmaryrd}
\usepackage{graphicx}
\usepackage[authoryear,numbers, compress]{natbib}
\PassOptionsToPackage{normalem}{ulem}
\usepackage{ulem}

\makeatletter

\providecolor{lyxadded}{rgb}{0,0,1}
\providecolor{lyxdeleted}{rgb}{1,0,0}

\DeclareRobustCommand{\lyxsout}[1]{\ifx\\#1\else\sout{#1}\fi}

\theoremstyle{plain}
\newtheorem{thm}{\protect\theoremname}
\theoremstyle{plain}
\newtheorem{cor}{\protect\corollaryname}
\theoremstyle{definition}
 \newtheorem{example}{\protect\examplename}
\theoremstyle{plain}
\newtheorem{prop}{\protect\propositionname}
\theoremstyle{plain}
\newtheorem{lem}{\protect\lemmaname}
\theoremstyle{plain}
\newtheorem{assumption}{\protect\assumptionname}
\theoremstyle{remark}
\newtheorem{claim}{\protect\claimname}
\theoremstyle{plain}
\newtheorem{fact}{\protect\factname}
\theoremstyle{remark}
\newtheorem{rem}{\protect\remarkname}

\usepackage[pagebackref]{hyperref}
\usepackage{enumitem}
\usepackage{breqn}
\usepackage{bbm} 
\usepackage[letterpaper, margin=1in]{geometry}
\usepackage{xspace}
\DeclareMathOperator{\Tr}{Tr}  
 \DeclareMathOperator*{\argmin}{argmin}
\DeclareMathOperator*{\argmins}{argmin^*}

\DeclareMathOperator{\sgn}{sgn} 
\DeclareMathOperator{\rank}{rank}
\global\long\def\Dkl{\mathrm{D_{KL}}}
\global\long\def\dtv{\mathrm{d_{{TV}}}}
\global\long\def\precision{\mathrm{precision}}
\global\long\def\recall{\mathrm{recall}}
\global\long\def\P{\mathbb{P}}
\global\long\def\E{\mathbb{E}}
\global\long\def\I{\mathbbm{1}}
\global\long\def\d{\mathrm{d}}

\global\long\def\trre[#1,#2]{\overset{{\scriptstyle (#2)}}{#1}} 

\DeclareRobustCommand\onedot{\futurelet\@let@token\@onedot}
\def\@onedot{\ifx\@let@token.\else.\null\fi\xspace}
\def\eg{\emph{e.g}\onedot} 
\def\ie{\emph{i.e}\onedot}

\def\vs{\emph{vs}\onedot}
\def\wrt{w.r.t\onedot}
\def\WHP{w.h.p\onedot}

\def\rv{r.v\onedot}
\def\WLOG{w.l.o.g\onedot}
\def\RHS{r.h.s\onedot}

\def\IID{i.i.d\onedot}
\allowdisplaybreaks

\makeatother

\usepackage{babel}
\providecommand{\assumptionname}{Assumption}
\providecommand{\claimname}{Claim}
\providecommand{\corollaryname}{Corollary}
\providecommand{\examplename}{Example}
\providecommand{\factname}{Fact}
\providecommand{\lemmaname}{Lemma}
\providecommand{\propositionname}{Proposition}
\providecommand{\remarkname}{Remark}
\providecommand{\theoremname}{Theorem}

\begin{document}
\title{Statistical curriculum learning: An elimination algorithm achieving
an oracle risk}
\author{Omer Cohen, Ron Meir and Nir Weinberger\thanks{The authors are with the Department of Electrical and Computer Engineering,
Technion -- Israel Institute of Technology. Emails: \{\texttt{omercohen7640@gmail.com},
\texttt{rmeir@ee.technion.ac.il},\texttt{ nirwein@technion.ac.il}\}.
The work of RM was partially supported by the Israel Science Foundation
grant 1693/22, by the Skillman chair in biomedical sciences and by
the Ollendorf Center of the Viterbi Faculty of Electrical and Computer
Engineering at the Technion. The work of NW was supported by the Israel
Science Foundation (ISF), grant no. 1782/22.}}
\date{\today}
\maketitle
\begin{abstract}
We consider a statistical version of curriculum learning (CL) in a
parametric prediction setting. The learner is required to estimate
a target parameter vector, and can adaptively collect samples from
either the target model, or other source models that are similar to
the target model, but less noisy. We consider three types of learners,
depending on the level of side-information they receive. The first
two, referred to as strong/weak-oracle learners, receive high/low
degrees of information about the models, and use these to learn. The
third, a fully adaptive learner, estimates the target parameter vector
without any prior information. In the single source case, we propose
an elimination learning method, whose risk matches that of a strong-oracle
learner. In the multiple source case, we advocate that the risk of
the weak-oracle learner is a realistic benchmark for the risk of adaptive
learners. We develop an adaptive multiple elimination-rounds CL algorithm,
and characterize instance-dependent conditions for its risk to match
that of the weak-oracle learner. We consider instance-dependent minimax
lower bounds, and discuss the challenges associated with defining
the class of instances for the bound. We derive two minimax lower
bounds, and determine the conditions under which the performance weak-oracle
learner is minimax optimal.

\end{abstract}

\section{Introduction}

\emph{Curriculum learning }(CL) \citep{bengio2009curriculum} is a
machine learning paradigm that is successfully utilized in various
applications, \eg, weakly supervised object localization, object
detection, neural machine translation, and reinforcement learning
\citep{soviany2022curriculum,wang2021survey}. The distinctive feature
of CL is the ability to control the learning \emph{order}, \eg, to
choose which task to begin with, to begin from easy training examples,
or to begin from a smooth loss function. This leads to an \emph{optimization
benefit.}\textbf{ }In parallel, various\textbf{ }\emph{lifelong learning
}methods also involve learning from multiple tasks, but aim to achieve
a \emph{statistical benefit}, \eg, \emph{multi-task learning, transfer
learning}, and c\emph{ontinual learning }\citep{caruana1997multitask,ruder2017overview,zhang2021survey,crawshaw2020multi,zenke2017continual,van2019three,de2021continual},
but without the challenging aspect of optimization of the order.\textbf{
}Theoretical aspects of these methods were explored in, \eg \citep{baxter2000model,argyriou2006mulfti,maurer2016benefit,du2020few,tripuraneni2020theory,tripuraneni2021provable,evron2023continual,goldfarb2023analysis,li2023fixed,lin2023theory,evron2024joint}.
In comparison, CL was much less explored \citet{weinshall2018curriculum,hacohen2019power,weinshall2020theory,saglietti2022analytical,cornacchia2023mathematical},
and much less from the statistical aspect. We survey these and other
related works in detail in Appendix \ref{sec:Related-work}.\looseness=-1

Recently, \citet{xu2022statistical} advocated the \emph{statistical
benefits} of CL, by considering a setting in which $T$ source models
indexed by $t\in[T]:=\{1,2,\ldots,T\}$ are available to the learner,
and model $t=0$ is a \emph{target} task to be learned. The learner
has a budget of $N$ samples for all models. Each model can be rated
by both its \emph{similarity} to the target task $t=0$, and its \emph{statistical
difficulty}. A source model is useful for learning the target task
if it is both similar to the target task and statistically easier
to learn than the target task. Since the similarity of the tasks (and
sometimes the difficulty) are unknown in advance, the algorithmic
challenge is to set up a curriculum that optimally utilizes the source
tasks to learn the target task. In this paper, we focus on a Gaussian
mean estimation problem in $\mathbb{R}^{d}$, in which similarity
between models is measured by the distance between parameter vectors,
and difficulty by the noise variance. Since the mean parameters are
unknown in advance, so is the similarity between them. \citet{xu2022statistical}
introduced a \emph{strong-oracle}, which reveals to a learner the
source model that optimally balances between similarity and difficulty.
They proved a lower bound on its risk \citep[Theorem 1]{xu2022statistical},
which is also a lower bound on the risk of any CL algorithm (which
operates without this knowledge).

\paragraph{Contributions}

We address both the algorithmic and statistical challenges of CL,
and show that even in a parametric learning setting, inherent challenges
and intriguing aspects are encountered:
\begin{itemize}
\item We show that the risk of the strong-oracle learner can be achieved
for $T=1$. Interestingly, the learner does not know that its risk
is \emph{strictly} better than the risk obtained when just using samples
from the target model, even if this is the case.
\item We introduce a notion of a \emph{weak-oracle learner,} and postulate
that the risk of an ideal CL algorithm should match the risk of the
weak-oracle learner when $T>1$. 
\item We develop an adaptive\emph{ multiple-rounds source-elimination }CL
algorithm, where in each round source models are eliminated based
on estimated similarity. The possible elimination of source models
allows for improved similarity estimation in the following round,
and thus the elimination of additional source models. We introduce
the \emph{elimination curve}, which determines whether the risk of
the weak-oracle learner will be achieved by the algorithm. We reveal
that the risk is \emph{not} monotonic in the similarity of the source
models to the target model. 
\item We derive two lower bounds on the minimax risk of any CL algorithm.
We reveal \emph{intrinsic} hurdles in choosing a set of problem instances
of \emph{homogeneous} difficulty for the minimax bound. The first
bound is for the set proposed by \citet[Theorem 2]{xu2022statistical}
and strictly improves it, and the second bound is for a newly introduced
set, and matches the risk of the weak-oracle learner.
\end{itemize}

\paragraph*{Paper outline}

In Section \ref{sec:setting}, we formulate the CL problem for general
learning problems, and then focus on the mean estimation parametric
setting. In Section \ref{sec:The-similarity=002013noise-variance-tr}
we introduce the the similarity--difficulty balance and present the
strong-oracle learner. We consider the case of a single source model,
develop an elimination method, and show that its risk essentially
matches to that of the strong-oracle learner (Theorem \ref{thm: Single source CL algorithm}).
We then introduce the weak-oracle learner as a plausible benchmark
for CL algorithms. In Section \ref{sec:A-CL-elimination-algorithm},
we present the main algorithmic contribution and theoretical guarantee.
We develop a multiple elimination rounds CL algorithm (Algorithm \ref{alg: CL multiple sources})
and upper bound its risk (Theorem \ref{thm: Multiple source CL algorithm}).
In Section \ref{sec:Minimax-risk-lower} we state the minimax lower
bounds (Theorem \ref{thm: lower bound global risk constant dimension}
and Theorem \ref{thm: lower local risk high dimension}). In Section
\ref{sec:Summary-and-open}, we summarize the paper and present a
few directions for future research.

\section{Problem setting \label{sec:setting}}

\paragraph*{The statistical curriculum learning problem}

We use standard notation conventions outlined in Appendix \ref{sec:Notation-conventions}.
Let $(\Omega,\mathfrak{B}(\Omega))$ be a Borel space, $\Phi_{t}$
an index set, and ${\cal M}_{t}:=\{P_{\phi_{t}}\}_{\phi_{t}\in\Phi_{t}}$
a statistical model, where $\phi_{t}$ is an index in $\Phi_{t}$
and $P_{\phi_{t}}$ is a probability measure on the Borel space. Let
${\cal M}_{0}$ be the \emph{target model}, and let $\{{\cal M}_{t}\}_{t\in[T]}$
be $T$ \emph{source models}. Let $\boldsymbol{\Phi}:=\Phi_{0}\times\Phi_{1}\times\cdots\Phi_{T}$
be the product set of the index sets, and $\boldsymbol{\phi}:=(\phi_{0},\phi_{1},\ldots,\phi_{T})\in\boldsymbol{\Phi}$
be the collection of the indices of the $T+1$ models. The learner
may adaptively collect samples from each of these models, up to a
total of $N$ samples. To wit, the learner chooses the next model
to sample from based on previous model choices and observations. Let
$A_{i}\in\llbracket T\rrbracket:=\{0\}\cup[T]$ denote the model index
chosen by the learner for the $i$th sample, and let $S_{i}$ be the
$i$th sample. Thus, the observations made by the learner are $H_{N}:=(A_{1},S_{1},A_{2},S_{2},\ldots,A_{N},S_{N})$.
We adopt the \emph{random table model }from the closely-related \emph{multi-armed
bandit} problem \citep[Section 4.6]{lattimore2020bandit}, in which
$N\cdot(T+1)$ independent \rv's $\boldsymbol{Z}:=\{Z_{i,t}\}_{i\in[N],t\in\llbracket T\rrbracket}$
are defined over the Borel space $({\cal Z}^{N(T+1)},\mathfrak{B}({\cal Z}^{N(T+1)}))$,
so that the law of $Z_{i,t}$ is $P_{\phi_{t}}$. The $i$th sample
collected by the learner is $S_{i}=Z_{i,A_{i}}$, where $A_{i}$ is
$\sigma(H_{i-1})$-measurable, \ie, $A_{i}$ only depends on the
\emph{history} $H_{i-1}$, and $S_{i}$ is sampled from the model
${\cal M}_{A_{i}}$. The policy $\pi_{i}$ of the learner for the
$i$th sample is the distribution of $A_{i}$ conditioned on $H_{i-1}$,
and its \emph{policy} is $\pi:=(\pi_{i})_{i\in[N]}$. We consider
the well-specified setting, in which the learner produces an hypothesis
$\hat{\phi}\colon((\llbracket T\rrbracket)\times{\cal Z})^{N}\to\Phi_{0}$
based on the collected samples $(A_{i},S_{i})_{i\in[N]}$. Let $\ell\colon\Phi_{0}\times\Phi_{0}\to\mathbb{R}_{+}$
be a loss function. A CL algorithm is ${\cal A}:=(\pi,\hat{\phi})$,
and its excess risk of after collecting $N$ samples when the parameters
are $\boldsymbol{\phi}$ is 
\[
L_{N}(\boldsymbol{\phi},{\cal A}):=\E_{\boldsymbol{\phi},{\cal A}}\left[\ell\left(\hat{\phi}\left((A_{i},S_{i})_{i\in[N]}\right),\phi_{0}\right)-\ell\left(\phi_{0},\phi_{0}\right)\right],
\]
where the expectation is \wrt the randomness of the samples $\boldsymbol{Z}$,
the policy, and possibly the estimator. The \emph{minimax excess risk}
of $\boldsymbol{\Psi}\subseteq\boldsymbol{\Phi}$ is given by (with
a slight abuse of notation)

\begin{equation}
L_{N}(\boldsymbol{\Psi}):=\inf_{{\cal A}}\sup_{\boldsymbol{\phi}\in\boldsymbol{\Psi}}L_{N}(\boldsymbol{\phi},{\cal A}).\label{eq: minimax risk adaptive}
\end{equation}
The CL problem is to derive tight bounds on $L_{N}(\boldsymbol{\Psi})$,
and a develop CL algorithms that achieve it. 

\paragraph*{The mean estimation statistical CL problem}

In order to focus on the intrinsic aspects of the statistical CL problem,
we focus on a simple parametric mean estimation problem, prototypical
to parametric learning problems. Specifically, the sample space is
${\cal Z}=\mathbb{R}^{d}$ and $\Phi_{t}:=\mathbb{R}^{d}\times\mathbb{R}_{+}\times\overline{\mathbb{S}}_{d}^{++}$,
where $\overline{\mathbb{S}}_{d}^{++}$ is the positive semidefinite
cone of matrices whose trace is normalized to $d$, and the parameters
are $\phi_{t}:=(\theta_{t},\sigma_{t}^{2},\overline{\Sigma}_{t})$.
The $t$th model, $t\in\llbracket T\rrbracket$, is given by 
\begin{equation}
{\cal M}_{t}\colon\quad Y_{t}=\theta_{t}+\epsilon_{t},\label{eq: mean estimation model}
\end{equation}
where $\theta_{t}\in\mathbb{\mathbb{R}}^{d}$ is an unknown parameter
vector and $\epsilon_{t}\sim{\cal N}(0,\sigma_{t}^{2}\cdot\overline{\Sigma}_{t})$
is a Gaussian noise with $\Tr[\overline{\Sigma}_{t}]=d$. The goal
of the learner is to estimate the \emph{target parameter} $\phi_{0}$
under the squared error loss function $\ell(\phi_{0},\phi)=\|\theta_{0}-\theta\|^{2}$,
that is, the parameters $(\sigma_{t}^{2},\overline{\Sigma}_{t})$
are \emph{nuisance}. Thus, the excess risk is $L_{N}(\boldsymbol{\phi},{\cal A})=\E[\|\hat{\theta}-\theta_{0}\|^{2}]$.
We assume for the most part that $\{(\sigma_{t}^{2},\overline{\Sigma}_{t})\}_{t\in\llbracket T\rrbracket}$
are known to the learner in advance (and extend to unknown $\{(\sigma_{t}^{2},\overline{\Sigma}_{t})\}_{t\in\llbracket T\rrbracket}$
in Appendix \ref{sec:unknown noise covariance}). The CL algorithm
will be denoted by ${\cal A}=(\hat{\theta},\pi)$, where $\hat{\theta}$
is an estimator for $\theta_{0}$. We assume \WLOG that $\sigma_{t}^{2}<\sigma_{0}^{2}$
for all $t\in[T]$, since otherwise there is no reason for the learner
to sample from the $t$th source model -- it is both mismatched,
and has larger noise variance than the target model. We also make
the mild assumption that $\max_{t\in\llbracket T\rrbracket}\|\theta_{t}\|\leq C_{\theta}$,
for some constant $C_{\theta}>0$. As is well-known (\eg, \citet[Theorem 2.2]{rigollet2019high}),
estimators based on $N$ \IID samples from a parametric model lead
to a \emph{parametric error rate}. Specifically, there exists an estimator
$\overline{\theta}(N)$ and $g(\delta)\colon[0,1]\to\mathbb{R}_{+}$
so that for any $\delta\in(0,1)$, 
\begin{equation}
\|\bar{\theta}(N)-\theta\|^{2}\leq g(\delta)\cdot\frac{d\sigma^{2}}{N}\label{eq: parametric error rate definition of c_delta}
\end{equation}
holds with probability at least $1-\delta$. For the model \eqref{eq: mean estimation model},
the empirical mean estimator achieves that with $g(\delta)=c\log(\nicefrac{e}{\delta})$
for some universal constant $c>1$ (see Lemma \ref{lem: high probability MSE for empirical average}
in Appendix \ref{sec:High-probability-bounds}). 

\citet{xu2022statistical} considered the closely-related linear regression
problem, as we describe in Appendix \ref{sec:The-linear-regression}.
We show that under the assumptions on the covariance matrix of the
covariates made by \citet{xu2022statistical} (Assumption \ref{assu: condition number on Sigma}
in Appendix \ref{sec:The-linear-regression}), accurate estimation
of the parameter vector is necessary for low risk (and not just sufficient).
We then show using the small-ball method of \citet{koltchinskii2015bounding},
that \eqref{eq: parametric error rate definition of c_delta} holds
for the projected least squares (LS) estimator with $g(\delta)=\Theta(\log^{2}(\delta))$,
under a mild condition on $(N,d)$. So, under Assumption \ref{assu: condition number on Sigma},
the mean estimation and linear regression problems are not very different,
and we thus focus on the former, which is simpler. Our results hold
almost verbatim for the latter, with slight adjustments. 

Finally, whenever we invoke $\bar{\theta}(K)$ for some $K\in[N]$,
we will assume $K$ fresh \IID samples from the model, in the sense
that they are independent of all other samples previously taken from
that specific model and any other model (source or target). This is
mainly assumed for simplicity of exposition and proofs, and can be
avoided in practice. 

\section{The similarity--difficulty balance and oracle-based learners \label{sec:The-similarity=002013noise-variance-tr}}

The gain in estimating $\theta_{0}$ based on samples from a source
model is related to their similarity and their difficulty. For parametric
models, similarity can be gauged by the distances $\{Q_{t}\}_{t\in[T[}$
where $Q_{t}:=\|\theta_{t}-\theta_{0}\|$ (and $Q_{0}:=0$), and difficulty
by the noise variances $\{\sigma_{t}^{2}\}_{t\in\llbracket T\rrbracket}$.
Assuming, for now, that $\{\sigma_{t}^{2}\}_{t\in\llbracket T\rrbracket}$
are known to the learner, we follow \citet{xu2022statistical}, and
consider a \emph{strong oracle} that knows $\{Q_{t}\}_{t\in[T]}$.
We later contrast it with a more realistic \emph{weak oracle}. 

\paragraph*{Strong-oracle learners and their loss}

Let $\bar{\theta}_{t}(N)$ be the empirical mean estimator for $\theta_{t}$
using ${\cal M}_{t}$. Then, with high probability (\WHP), its loss
for estimating $\theta_{0}$ is upper bounded as
\begin{equation}
\|\bar{\theta}_{t}(N)-\theta_{0}\|^{2}=\|\bar{\theta}_{t}(N)-\theta_{t}+\theta_{t}-\theta_{0}\|^{2}\lesssim\|\bar{\theta}_{t}(N)-\theta_{t}\|^{2}+\|\theta_{t}-\theta_{0}\|^{2}\lesssim\frac{d\sigma_{t}^{2}}{N}+Q_{t}^{2}\label{eq: MSE of source model}
\end{equation}
(which is typically tight in high dimensions, \eg, \citet[Remark 3.2.5]{vershynin2018high}).
Following \citet[Theorem 1]{xu2022statistical}, we define a strong-oracle
learner as one that knows the task that minimizes this upper bound,
\ie, $t^{*}\in\argmin_{t\in\llbracket T\rrbracket}\{\nicefrac{d\sigma_{t}^{2}}{N}+Q_{t}^{2}\}$,
and thus knows the optimal trade-off between similarity and difficulty.
Such a learner can allocate all its $N$ samples to the model ${\cal M}_{t^{*}}$
(though it does not have to do so), and estimate $\theta_{0}$ as
$\bar{\theta}_{t^{*}}(N)$. This results in 
\begin{equation}
\|\bar{\theta}_{t^{*}}(N)-\theta_{0}\|^{2}\lesssim\min_{t\in\llbracket T\rrbracket}\left\{ \frac{d\sigma_{t}^{2}}{N}+Q_{t}^{2}\right\} .\label{eq: strong oracle}
\end{equation}
We refer to the \RHS of \eqref{eq: strong oracle} as the loss of
the \emph{strong-oracle learner}. \citet[Theorem 1]{xu2022statistical}
showed that this loss is a lower bound on the risk of any CL algorithm
(see Appendix \ref{sec:Comments-on-Xu-Tewari} for a detailed comparison
with \citet{xu2022statistical}). Since we focus on non-asymptotic
analysis, we define for a constant $\kappa>0$ the \emph{strong oracle
set (of source models)} as
\begin{equation}
{\cal T}_{\text{s.o.}}\equiv{\cal T}_{\text{s.o.}}(\kappa):=\left\{ t\in\llbracket T\rrbracket\colon Q_{t}^{2}+\frac{d\sigma_{t}^{2}}{N}\leq\kappa\cdot\left[\frac{d\sigma_{t^{*}}^{2}}{N}+Q_{t^{*}}^{2}\right]\right\} ,\label{eq: T strong oracle set}
\end{equation}
so that any $t\in{\cal T}_{\text{s.o.}}(\kappa)$ is essentially as
efficient as $t^{*}$. A natural question is whether the strong-oracle
set can be identified by a learner that samples from $\{{\cal M}_{t}\}_{t\in\llbracket T\rrbracket}$,
without prior knowledge of $\{Q_{t}\}_{t\in[T]}$. If this is possible
then \eqref{eq: strong oracle} can be achieved by a CL algorithm
that splits half of the samples for finding ${\cal T}_{\text{s.o.}}(\kappa)$,
and the rest for sampling from ${\cal M}_{\tilde{t}}$ with $\tilde{t}\in{\cal T}_{\text{s.o.}}(\kappa)$
and estimating $\theta_{0}$ as $\bar{\theta}_{\tilde{t}}(\nicefrac{N}{2})$.
We will argue that it is in generally impossible to identify ${\cal T}_{\text{s.o.}}(\kappa)$
from samples. We will propose a larger set of models, the \emph{weak-oracle
set}, and explain why identifying this set is a more realistic goal
for CL algorithms. To motivate the weak oracle, we first address the
simple case of $T=1$.

\paragraph*{A single source model and the source elimination lemma}

When $T=1$, a CL algorithm only needs to decide whether $\nicefrac{d\sigma_{1}^{2}}{N}+Q_{1}^{2}\lesssim\nicefrac{d\sigma_{0}^{2}}{N}$
or not. Since $\sigma_{1}^{2}<\sigma_{0}^{2}$, this is equivalent
to whether $Q_{1}^{2}\lesssim\nicefrac{d\sigma_{0}^{2}}{N}$ or not,
so that if $Q_{1}^{2}$ is large then $\theta_{0}$ and $\theta_{1}$
are too different, and samples from ${\cal M}_{0}$ should be used
to estimate $\theta_{0}$, and otherwise from ${\cal M}_{1}$. To
decide this, the learner can estimate $Q_{1}^{2}=\|\theta_{0}-\theta_{1}\|^{2}$
based on initial estimates $\bar{\theta}_{0}(\nicefrac{N}{2})$ and
$\bar{\theta}_{1}(\nicefrac{N}{2})$, for which it holds \WHP that
\begin{equation}
\hat{Q_{1}^{2}}:=\|\bar{\theta}_{0}(\nicefrac{N}{2})-\bar{\theta}_{1}(\nicefrac{N}{2})\|^{2}\lesssim\frac{d\sigma_{1}^{2}}{N}+Q_{1}^{2}+\frac{d\sigma_{0}^{2}}{N}\asymp Q_{1}^{2}+\frac{d\sigma_{0}^{2}}{N}.\label{eq: MSE of Q1 estimation}
\end{equation}
The learner then decides that if $\hat{Q_{1}^{2}}\gtrsim\nicefrac{d\sigma_{0}^{2}}{N}$
then so is $Q_{1}^{2}\gtrsim\nicefrac{d\sigma_{0}^{2}}{N}$ and chooses
${\cal M}_{0}$. The source model ${\cal M}_{1}$ is thus \emph{eliminated},
and the final estimate is $\hat{\theta}=\bar{\theta}_{0}(\nicefrac{N}{2})$.
Otherwise, $\hat{\theta}=\bar{\theta}_{1}(\nicefrac{N}{2})$ and the
loss is $O(Q_{1}^{2}+\nicefrac{d\sigma_{1}^{2}}{N})$, which is guaranteed
to be \emph{not larger} (orderwise) than the loss of the target model
estimate, given by $O(\nicefrac{d\sigma_{0}^{2}}{N})$. Optimistically,
it \emph{might} also hold that the estimation loss is \emph{much smaller}
when using the source model, that is, $Q_{1}^{2}+\nicefrac{d\sigma_{1}^{2}}{N}=o(\nicefrac{d\sigma_{0}^{2}}{N})$.
However, the learner cannot know this, since it does not have any
knowledge of $\theta_{0}$ at a resolution smaller than $\sqrt{\nicefrac{d\sigma_{0}^{2}}{N}}$.
The following theorem is a rigorous theoretical guarantee on the loss
and risk of this method. 
\begin{thm}
\label{thm: Single source CL algorithm}Let $\tilde{\theta}_{0}=\bar{\theta}_{0}(\nicefrac{N}{2})$
(resp. $\tilde{\theta}_{1}=\bar{\theta}_{1}(\nicefrac{N}{2})$) be
an initial estimate of $\theta_{0}$ using $N/2$ \IID samples from
the target model ${\cal M}_{0}$ (resp. source model ${\cal M}_{1}$).
Let $\delta\in(0,1)$ be given, and let 
\begin{equation}
\hat{\theta}=\begin{cases}
\tilde{\theta}_{0}, & \|\tilde{\theta}_{0}-\tilde{\theta}_{1}\|^{2}\geq10g(\delta)\frac{d\sigma_{0}^{2}}{\nicefrac{N}{2}}\\
\tilde{\theta}_{1}, & \text{\emph{otherwise}}
\end{cases}\label{eq: choice of final estimate for single source algorithm}
\end{equation}
be the final estimate of $\theta_{0}$. Then, there exists $\nu\in[\nicefrac{1}{27},1)$
such with probability at least $1-\delta$
\begin{equation}
\|\hat{\theta}-\theta_{0}\|^{2}\leq\frac{8g(\nicefrac{\delta}{2})}{\nu}\cdot\min\left\{ Q_{1}^{2}+\frac{d\sigma_{1}^{2}}{N},\frac{d\sigma_{0}^{2}}{N}\right\} .\label{eq: single source high probability bound}
\end{equation}
Assume further that $Q_{1}^{2}\leq4C_{\theta}^{2}$ for some constant
$C_{\theta}>0$. Then, setting in \eqref{eq: choice of final estimate for single source algorithm}
\[
\delta=\delta_{*}:=\left[\frac{\sigma_{1}^{2}}{\sigma_{0}^{2}+\sigma_{1}^{2}}\vee\frac{1}{4}\vee\frac{d\sigma_{0}^{2}}{8C_{\theta}^{2}N}\right]^{2}
\]
results in
\begin{equation}
\E\left[\|\hat{\theta}-\theta_{0}\|^{2}\right]\leq\frac{28c}{\nu}\log\left(\frac{2}{\delta_{*}}\right)\cdot\min\left\{ Q_{1}^{2}+\frac{d\sigma_{1}^{2}}{N},\frac{d\sigma_{0}^{2}}{N}\right\} .\label{eq: single sources bound in expectation}
\end{equation}
\end{thm}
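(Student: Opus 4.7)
The plan is to first establish the high-probability bound \eqref{eq: single source high probability bound} through a case analysis on the elimination decision, and then upgrade it to the expectation bound \eqref{eq: single sources bound in expectation} by balancing the failure event against a deterministic control of the squared error that uses the boundedness assumption.

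\emph{Setting up a good event.} I would apply the parametric error rate \eqref{eq: parametric error rate definition of c_delta} at tolerance $\delta/2$ separately to $\tilde\theta_0$ and $\tilde\theta_1$, each built from $N/2$ fresh i.i.d.\ samples of its model, and union bound. With probability at least $1-\delta$ the event
\[
E := \Bigl\{\,\|\tilde\theta_t - \theta_t\|^2 \leq 2 g(\delta/2)\,\tfrac{d\sigma_t^2}{N}\ \text{for both}\ t\in\{0,1\}\,\Bigr\}
\]
holds, and the entire high-probability argument is carried out on $E$.

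\emph{Case A: source eliminated, $\hat\theta=\tilde\theta_0$.} The trigger gives $\|\tilde\theta_0-\tilde\theta_1\|^2 \geq 20g(\delta)\,d\sigma_0^2/N$, so $\|\hat\theta-\theta_0\|^2 \leq 2g(\delta/2)\,d\sigma_0^2/N$ is immediate. To bound also by a multiple of $Q_1^2 + d\sigma_1^2/N$, the reverse triangle inequality combined with $\sigma_1 \leq \sigma_0$ yields
\[
Q_1 \;\geq\; \|\tilde\theta_0-\tilde\theta_1\| - \|\tilde\theta_0-\theta_0\| - \|\tilde\theta_1-\theta_1\| \;\geq\; \Bigl(\sqrt{20 g(\delta)} - 2\sqrt{2 g(\delta/2)}\Bigr)\sqrt{d\sigma_0^2/N}.
\]
A direct comparison shows the right-hand side is a fixed fraction of $\sqrt{20g(\delta)\,d\sigma_0^2/N}$, so that $Q_1^2 \geq \nu \cdot 20 g(\delta)\, d\sigma_0^2/N$ for some explicit $\nu \in [1/27,1)$. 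Hence $d\sigma_0^2/N \lesssim Q_1^2/\nu$, and the target bound follows.

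\emph{Case B: source kept, $\hat\theta=\tilde\theta_1$.} Now $\|\tilde\theta_0-\tilde\theta_1\|^2 < 20g(\delta)\,d\sigma_0^2/N$. Two applications of $(a+b)^2 \leq 2a^2 + 2b^2$ give, on $E$,
\[
\|\tilde\theta_1-\theta_0\|^2 \leq 2\|\tilde\theta_1-\theta_1\|^2 + 2Q_1^2 \leq 4 g(\delta/2)\,d\sigma_1^2/N + 2 Q_1^2,
\]
and
\[
\|\tilde\theta_1-\theta_0\|^2 \leq 2\|\tilde\theta_1-\tilde\theta_0\|^2 + 2\|\tilde\theta_0-\theta_0\|^2 \leq \bigl(40 g(\delta) + 4 g(\delta/2)\bigr) d\sigma_0^2/N.
\]
Combining the two cases and tracking constants delivers \eqref{eq: single source high probability bound}.

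\emph{From high probability to expectation.} Split
\[
\E\bigl[\|\hat\theta-\theta_0\|^2\bigr] = \E\bigl[\|\hat\theta-\theta_0\|^2\,\I_E\bigr] + \E\bigl[\|\hat\theta-\theta_0\|^2\,\I_{E^c}\bigr].
\]
Invoking the high-probability bound with $\delta=\delta_*$ controls the first term by the target expression, with $g(\delta_*/2) \asymp \log(2/\delta_*)$. For the second, since $\hat\theta \in \{\tilde\theta_0,\tilde\theta_1\}$, Cauchy--Schwarz together with the Gaussian fourth-moment bound and $\|\theta_t\|\leq C_\theta$ yields
\[
\E\bigl[\|\hat\theta-\theta_0\|^2\,\I_{E^c}\bigr] \;\lesssim\; \sqrt{\delta_*}\cdot\bigl(C_\theta^2 + d\sigma_0^2/N\bigr).
\]
The three components of the max defining $\sqrt{\delta_*}$ calibrate this residual: $d\sigma_0^2/(8 C_\theta^2 N)$ forces $\sqrt{\delta_*}\,C_\theta^2 \lesssim d\sigma_0^2/N$; $\sigma_1^2/(\sigma_0^2+\sigma_1^2)$ ensures the main bound remains at least a fixed multiple of the contribution involving $\sigma_1^2$; and the floor $1/4$ keeps $\log(2/\delta_*)$ bounded. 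Combining produces \eqref{eq: single sources bound in expectation}.

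\emph{Main obstacle.} The delicate step is Case A, where one must certify $Q_1 = \Omega(\sqrt{d\sigma_0^2/N})$ solely from the threshold exceedance, despite the threshold sitting at level $g(\delta)$ while concentration only guarantees the larger level $g(\delta/2)$. Establishing that $(\sqrt{20 g(\delta)} - 2\sqrt{2 g(\delta/2)})^2$ is a fixed positive fraction of $g(\delta)$ uniformly in admissible $\delta$ pins down $\nu \in [1/27,1)$ and requires careful arithmetic on the algorithm's constants. The expectation step is routine once the high-probability bound is in hand, modulo this calibration of $\delta_*$.
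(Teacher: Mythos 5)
Your proposal follows the same overall architecture as the paper's proof: a good event via union bound at level $\delta/2$, a two-case analysis on whether the elimination rule fires, and a Cauchy--Schwarz split against the complementary event to get the expectation bound. The main structural difference is that the paper factors the Case~A/Case~B argument into a standalone \emph{source elimination lemma} (Lemma~\ref{lem: iden}): it states that if $\|\tilde\theta_0-\tilde\theta_1\|^2\geq 10\lambda_{\max}^2$ then $Q_1^2\geq\nu\lambda_{\max}^2$, and otherwise $Q_1^2\leq\lambda_{\max}^2/\nu$, with $\nu\in[1/27,1]$, proved by a three-way split on whether $Q_1^2\leq\lambda_{\max}^2$, $Q_1^2\geq 27\lambda_{\max}^2$, or in between. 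You re-derive exactly this content inline in Case~A (the reverse-triangle-inequality computation is precisely Lemma~\ref{lem: iden}'s second case), which is fine here but loses the modularity the paper exploits: Lemma~\ref{lem: iden} and its Corollary~\ref{cor: source identification} are invoked verbatim in the proof of Theorem~\ref{thm: Multiple source CL algorithm}, so the paper gets the $T>1$ case for free. Also note that in the paper's Case~A, once the elimination fires one does not even need to pass through $Q_1$: $\|\hat\theta-\theta_0\|^2=\|\tilde\theta_0-\theta_0\|^2\leq\lambda_0^2$ directly, and the lemma supplies $\lambda_0^2\leq (Q_1^2+\lambda_1^2)/\nu$, so the "delicate arithmetic" you flag is exactly what the lemma packages up cleanly; your reverse-triangle route works but yields a slightly smaller constant than the lemma's $1/27$ because of the $g(\delta)$-vs-$g(\delta/2)$ mismatch you correctly identify, and it takes a bit of slack in the target bound to absorb it. Finally, on the $\delta_*$ calibration: your reading of the $\vee$ in the statement as a floor on $\delta_*$ (to keep $\log(2/\delta_*)$ bounded) is not what the proof uses --- the proof in fact establishes $\sqrt{\delta_0}\geq\min\{\sigma_1^2/(\sigma_0^2+\sigma_1^2),\,1/4,\,d\sigma_0^2/(8C_\theta^2 N)\}=\sqrt{\delta_*}$, treating the three quantities as a $\min$ so that $\delta_*\leq\delta_0$, which is the direction needed to make the residual term $\sqrt{\delta_*}\bigl(\sum_t d\sigma_t^2/N + Q_1^2\bigr)$ dominated by $\min\{Q_1^2+d\sigma_1^2/N,\,d\sigma_0^2/N\}$; $\log(2/\delta_*)$ is then allowed to grow as $\Theta(\log(N/d))$, which is exactly the stated logarithmic gap to the oracle.
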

Remarkably, the high-probability loss achieved by this method has
the same order as the loss achieved by the strong-oracle learner in
\eqref{eq: strong oracle}, and their risk nearly match, up to a $\Theta(\log(\nicefrac{N}{d}))$
factor. 

The proof of Theorem \ref{thm: Single source CL algorithm} appears
in Appendix \ref{sec:Proofs-for-single-source-model}, and is based
on a simple, yet crucial, Lemma \ref{lem: iden}, called the \emph{source
elimination lemma}. This lemma establishes that the learner can eliminate
the worse of two models, with a constant loss factor (though larger
than $2$). Its proof is a consequence of the convexity and the approximate
triangle-inequality of the squared Euclidean norm loss function. An
equivalent rephrase of Lemma \ref{lem: iden}, which will be used
for $T>1$, is as follows: 
\begin{cor}[to the source elimination lemma, Lemma \ref{lem: iden}]
\label{cor: source identification}Suppose an algorithm eliminates
model ${\cal M}_{t}$ if $\|\tilde{\theta}_{0}-\tilde{\theta}_{t}\|^{2}\geq10\lambda^{2}$
for some $\lambda>0$. Then, there exists a numerical constant $\nu\in[\nicefrac{1}{27},1)$
so that if $Q_{t}^{2}\geq\nicefrac{\lambda^{2}}{\nu}$ then ${\cal M}_{t}$
will be eliminated \WHP.
\end{cor}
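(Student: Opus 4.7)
The plan is to obtain the corollary as the direct contrapositive reformulation of the source elimination lemma (Lemma~\ref{lem: iden}): I would show that when the true gap $Q_t$ is sufficiently large compared to the threshold parameter $\lambda$, the empirical gap $\|\tilde{\theta}_0-\tilde{\theta}_t\|^2$ must exceed the elimination threshold $10\lambda^2$, forcing elimination \WHP.

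The algebraic core is the reverse triangle inequality. Starting from
\[
\|\tilde{\theta}_0-\tilde{\theta}_t\|\geq\|\theta_0-\theta_t\|-\|\tilde{\theta}_0-\theta_0\|-\|\tilde{\theta}_t-\theta_t\|,
\]
I would invoke the marginal concentration bounds already packaged inside Lemma~\ref{lem: iden}: the threshold $\lambda$ is calibrated (using the parametric rate~\eqref{eq: parametric error rate definition of c_delta}, together with the standing assumption $\sigma_t^2<\sigma_0^2$) so that each of $\|\tilde{\theta}_0-\theta_0\|$ and $\|\tilde{\theta}_t-\theta_t\|$ is at most $\lambda$ \WHP. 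On that concentration event, one obtains the clean pointwise bound $\|\tilde{\theta}_0-\tilde{\theta}_t\|\geq Q_t-2\lambda$.

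From here the computation is an exact calibration against the constant $10$. If $Q_t^2\geq 27\lambda^2$, that is $Q_t\geq 3\sqrt{3}\,\lambda$, then
\[
\|\tilde{\theta}_0-\tilde{\theta}_t\|^2\geq(3\sqrt{3}-2)^2\lambda^2=(31-12\sqrt{3})\lambda^2>10\lambda^2,
\]
and so the algorithm eliminates ${\cal M}_t$. This exhibits $\nu=\nicefrac{1}{27}$ as a valid constant in the claimed range $[\nicefrac{1}{27},1)$; any strictly larger value of $\nu$ below $1$ that one might wish to use would demand a correspondingly sharper reverse-triangle argument (for instance, a Young-type split $\|u+v\|^2\geq(1-c)\|u\|^2+(1-\nicefrac{1}{c})\|v\|^2$ with a tuned $c$), but the stated range is already witnessed.

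The main (and essentially only) obstacle is a numerical one: checking that the inequality $(3\sqrt{3}-2)^2=31-12\sqrt{3}>10$ lines up exactly with the $10\lambda^2$ threshold hard-wired into Lemma~\ref{lem: iden}, so that the constant $\nicefrac{1}{27}$ matches tightly. Once the marginal concentration event from Lemma~\ref{lem: iden} is granted, the corollary reduces to this triangle-inequality contrapositive, with no further probabilistic content.
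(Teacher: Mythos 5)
Your proof is correct and follows essentially the same route as the paper's: the corollary is obtained by running the reverse triangle inequality $\|\tilde{\theta}_0-\tilde{\theta}_t\|\geq Q_t-2\lambda$ on the concentration event and checking that $Q_t\geq\sqrt{27}\,\lambda$ forces $\|\tilde{\theta}_0-\tilde{\theta}_t\|^2>(31-12\sqrt{3})\lambda^2>10\lambda^2$, which is precisely the second case in the paper's proof of Lemma~\ref{lem: iden} and yields the same constant $\nu=\nicefrac{1}{27}$. The only small stylistic difference is that the paper treats Lemma~\ref{lem: iden} as a purely deterministic statement (the bounds $\|\tilde{\theta}_t-\theta_t\|\leq\lambda_t$ are hypotheses, not events), whereas you describe the concentration as "packaged inside" the lemma; what the corollary really adds is the union-bounded high-probability qualifier, and you correctly note that once that event is granted the argument has no further probabilistic content.
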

Letting $\lambda_{t}^{2}\asymp\nicefrac{d\sigma_{t}^{2}}{N}$, we
use this corollary with $\lambda^{2}=\lambda_{\text{max}}^{2}:=\lambda_{0}^{2}\vee\lambda_{1}^{2}\asymp\nicefrac{d\sigma_{0}^{2}}{N}$.
The proof of the upper bound on the risk (expected loss) in \eqref{eq: single sources bound in expectation}
is more technical, and is obtained by controlling of the fourth-moment
of the error. 

\paragraph*{Weak-oracle learners and their loss}

As is reasonably intuitive, and as proved by \citet{david2010impossibility},
transfer learning from a source to the target is impossible without
samples from the target model. In our method, $\nicefrac{N}{2}$ samples
(say) were allocated to the target model. However, $Q_{t}^{2}$ cannot
be estimated at a resolution smaller than $\Theta(\sqrt{\nicefrac{d\sigma_{0}^{2}}{N}})$,
even using $N$ samples (see \eqref{eq: MSE of Q1 estimation}). Thus,
even if the learner is informed with the models for which $Q_{t}^{2}\lesssim\nicefrac{d\sigma_{0}^{2}}{N}$,
it cannot further discriminate between them. Therefore, allocating
$\Theta(N)$ samples to the source model with minimal $Q_{t}^{2}+\nicefrac{d\sigma_{t}^{2}}{N}$,
as can be done using a strong oracle, cannot be guaranteed without
prior knowledge of $\{Q_{t}\}_{t\in[T]}$.

Hence, it is reasonable that a CL algorithm will be able to match
the performance of a \emph{weak-oracle learner}.\emph{ }The weak oracle
only knows which source models provide more accurate samples than
the target model itself, that is, for some constant $\kappa>0$, the
models in the set
\begin{equation}
{\cal T}_{\text{w.o.}}\equiv{\cal T}_{\text{w.o.}}(\kappa):=\left\{ t\in[T]\colon Q_{t}^{2}\leq\kappa\cdot\frac{d\sigma_{0}^{2}}{N}\right\} .\label{eq: weak oracle set}
\end{equation}
The weak-oracle learner will choose one of the source models in ${\cal T}_{\text{w.o.}}$
for its final estimate (if ${\cal T}_{\text{w.o.}}=\emptyset$ then
it will simply use the target model). If $|{\cal T}_{\text{w.o.}}|>1$,
then the tightest uniform bound over all problem instances with the
same ${\cal T}_{\text{w.o.}}$ is obtained by selecting the one with
minimal noise variance. Concretely, for any given set ${\cal T}\subseteq[T]$,
let
\begin{equation}
\overline{t}({\cal T}):=\begin{cases}
0, & {\cal T}=\emptyset\\
\argmins_{t\in{\cal T}}\sigma_{t}^{2}, & |{\cal T}|\geq1
\end{cases},\label{eq: choise of model within oracle set}
\end{equation}
where $\argmins$ pessimistically chooses the model with the maximal
$Q_{t}^{2}$ within the set $\argmin_{t\in{\cal T}}\sigma_{t}^{2}$
(thus $\overline{t}({\cal T})$ also depends on $\{Q_{t}^{2}\}_{t\in{\cal T}}$,
but we omit this from the notation for brevity). Given ${\cal T}_{\text{w.o.}}$,
the weak-oracle learner collects $N$ samples from $\overline{t}({\cal T}_{\text{w.o.}})$,
and its final estimate is then $\hat{\theta}_{\text{w.o.}}=\overline{\theta}_{\overline{t}({\cal T}_{\text{w.o.}})}(N)$.
As in \eqref{eq: MSE of source model}, the loss is bounded with \WHP
as
\begin{equation}
\|\hat{\theta}_{\text{w.o.}}-\theta_{0}\|^{2}\lesssim\frac{d\sigma_{\overline{t}({\cal T}_{\text{w.o.}})}^{2}}{N}+Q_{\overline{t}({\cal T}_{\text{w.o.}})}^{2}.\label{eq: weak oracle}
\end{equation}
Since $\Theta(N)$ samples allow for $Q_{t}$ to be estimated with
resolution $\Theta(\sqrt{\nicefrac{d\sigma_{0}^{2}}{N}})$, it is
plausible that ${\cal T}_{\text{w.o.}}$ can be identified from samples
without knowing $\{Q_{t}^{2}\}_{t\in[T]}$. We next explore this possibility.\textbf{ }

We finally remark that if $\sigma_{t}^{2}\equiv\sigma^{2}$ for all
source models, $t\in[T]$ one may argue that it is better to uniformly
``combine the advice'' of the models in ${\cal T}_{\text{w.o.}}$,
and to choose the final estimator to be the average of $\hat{\theta}_{\text{w.o.,avg}}:=\frac{1}{|{\cal T}_{\text{w.o.}}|}\sum_{t\in{\cal T}_{\text{w.o.}}}\overline{\theta}_{t}(\nicefrac{N}{|{\cal T}_{\text{w.o.}}|})$.
Due to the convexity of the squared error, this reduces the loss,
however, it does not lead to a reduced upper bound on the loss (since
it is possible that $\theta_{t}$ is the same for all $t\in{\cal T}_{\text{w.o.}}$).

\section{A CL elimination algorithm and risk upper bound \label{sec:A-CL-elimination-algorithm}}

As for $T=1$, source models can be eliminated by estimation of $Q_{t}^{2}$,
but each estimate may be inaccurate, since there are possibly many
sources ($T>1$) and sampling all of them is necessary. For example,
if one uniformly allocate $\Theta(\nicefrac{N}{T})$ samples to each
of the source models, the number of samples will be smaller by a factor
of $T$ compared to the $T=1$ case. However, if some source models
have already been eliminated, the learner can re-estimate $Q_{t}^{2}$
only for the retained models, and obtain more accurate estimates that
will possibly allow to eliminate additional source models. We thus
propose using \emph{multiple} elimination rounds, as depicted in Algorithm
\ref{alg: CL multiple sources}. In this algorithm the average noise
variances in a set of models ${\cal T}\subseteq[T]$ is denoted as
$\overline{\sigma}^{2}({\cal T}):=\frac{1}{|{\cal T}|}\sum_{t\in{\cal T}}\sigma_{t}^{2}$. 

\RestyleAlgo{ruled}

\begin{algorithm}
\SetAlgoLined

\SetNlSty{texttt}{(}{)}

\SetKwInOut{Input}{input}

\Input{($N,d,\{\sigma_{t}^{2}\}_{t\in\llbracket T\rrbracket},\bar{r},\delta$)}

\nl \textbf{\label{line: algorithm delta_bar}}Set $\bar{N}\leftarrow\nicefrac{N}{(\bar{r}+2)}$
and $\bar{\delta}\leftarrow\nicefrac{\delta}{(T\bar{r}+2)}$\;

Compute  $\tilde{\theta}_{0}=\bar{\theta}_{0}(\bar{N})$ by sampling
$\bar{N}$ samples from the target model ${\cal M}_{0}$\;

Set  ${\cal T}_{0}\leftarrow[T]$ and $T_{0}\leftarrow T$ \;

\For{$r=1$ \emph{to} $\bar{r}$}{

Set ${\cal T}_{r}\leftarrow{\cal T}_{r-1}$\;

\For{$t\in\mathcal{T}_{r-1}$}{

\nl\textbf{\label{line: algorithm sample allocation}}Compute $\tilde{\theta}_{t}=\bar{\theta}_{t}(\bar{N}_{t,r})$
by sampling $\bar{N}_{t,r}=\frac{\bar{N}}{T_{r-1}}\cdot\frac{\sigma_{t}^{2}}{\overline{\sigma}^{2}({\cal T}_{r-1})}$
samples from ${\cal M}_{t}$\;

\nl\textbf{\label{line: algorithm elimination rule}}\If{ $\|\tilde{\theta}_{0}-\tilde{\theta}_{t}\|^{2}\geq10g(\bar{\delta})\cdot\left[\frac{d\overline{\sigma}^{2}({\cal T}_{r-1})}{\nicefrac{\bar{N}}{T_{r-1}}}\vee\frac{d\sigma_{0}^{2}}{\bar{N}}\right]$
}{

Set ${\cal T}_{r}\leftarrow{\cal T}_{r}\backslash\{t\}$\;

}

}

Set ${\cal T}_{\text{alg}}\leftarrow{\cal T}_{r}$ and $T_{r}\leftarrow|{\cal T}_{r}|$\;

\nl\textbf{\label{line: stopping criterion}}\If{$T_{r-1}\leq\nicefrac{\sigma_{0}^{2}}{\overline{\sigma}^{2}({\cal T}_{r-1})}$
\emph{or} $T_{r}=0$ \emph{or} $T_{r}=T_{r-1}$}{

Break out of the for loop\;

}

}

\nl\textbf{\label{line: algorithm elimination final output}}Return
$\hat{\theta}=\overline{\theta}_{t^{*}}(\bar{N})$ for some $t^{*}\in\overline{t}({\cal T}_{\text{alg}})$\;

\caption{A CL algorithm for a multiple source models}\label{alg: CL multiple sources}
\end{algorithm}

\paragraph*{Description of Algorithm \ref{alg: CL multiple sources}}

We assume that $N\gtrsim T$, so that all source models can be explored.
The algorithm will use at most $T\overline{r}+2$ empirical mean estimates
($T$ for each of the $\overline{r}$ rounds, one for the initial
target estimate, and one for the final estimate). So, setting $\bar{\delta}:=\nicefrac{\delta}{(T\bar{r}+2)}$
and requiring each estimate to be accurate with probability $1-\overline{\delta}$,
assures that all estimates are accurate with probability $1-\delta$
(via the union bound). We next assume that this high probability event
holds. Let $\overline{r}$ be a total number of elimination rounds
and let $\bar{N}=\nicefrac{N}{(\bar{r}+2)}$ be the number of samples
per round (plus an initial and a final round). The learner first uses
$\overline{N}$ samples from the target model, to obtain an initial
estimate $\tilde{\theta}_{0}=\overline{\theta}_{0}(\overline{N})$,
and then starts to eliminate models. Let ${\cal T}_{r}\subseteq[T]$
be the set of retained source models \emph{after }the $r$th round
of elimination, initialized with ${\cal T}_{0}:=[T]$, and denote
$T_{r}:=|{\cal T}_{r}|$. Thus, ${\cal T}_{r}\subseteq{\cal T}_{r-1}$
and $T_{r}\leq T_{r-1}$. At each elimination round, $\bar{N}_{t,r}$
samples are allocated to the source models in ${\cal T}_{r}$, which
are used to estimate $\theta_{t}$ as $\bar{\theta}_{t}(\bar{N}_{t,r})$,
and then to estimate $Q_{t}^{2}$. Based on the estimated $Q_{t}^{2}$,
it is decided (line \ref{line: algorithm elimination rule}) whether
to eliminate the model from ${\cal T}_{r}$, using a condition similar
to the source elimination lemma (Corollary \ref{cor: source identification}).
Then, it is decided (line \ref{line: stopping criterion}) if to stop
after the $r$th elimination round, based on several possible criteria:
(1) If $T_{r-1}\leq\nicefrac{\sigma_{0}^{2}}{\overline{\sigma}^{2}({\cal T}_{r-1})}$
then this allows the algorithm to eliminate in the current round all
models outside ${\cal T}_{\text{w.o.}}$. (2) If ${\cal T}_{r}=\emptyset$
then all source models have been eliminated, which implies that using
the target model estimator $\overline{\theta}_{0}(\overline{N})$
is best. (3) If $T_{r}=T_{r-1}$ then no additional source model is
eliminated. If none of these conditions is fulfilled then $\overline{r}$
elimination rounds are performed. Finally, in line \ref{line: algorithm elimination final output},
one of the models $t^{*}$ in $\overline{t}({\cal T}_{\text{alg}})$
is chosen (where $t^{*}=0$ if ${\cal T}_{\text{alg}}=\emptyset$),
and the final estimate is $\overline{\theta}_{t^{*}}(\bar{N})$.

\paragraph*{Sample allocation at each round}

Assume that at round $r$ the $\overline{N}$ samples are allocated
to the retained source models in ${\cal T}_{r-1}$ as $\{\bar{N}_{t,r}\}_{t\in{\cal T}_{r-1}}$
with $\sum_{t\in{\cal T}_{r-1}}\bar{N}_{t,r}=\overline{N}$. Then,
for any $t\in{\cal T}_{r-1}$
\[
\|\overline{\theta}_{t}(N/T)-\theta_{t}\|^{2}\leq g(\bar{\delta})\frac{d\sigma_{t}^{2}}{\bar{N}_{t,r}}:=\lambda_{t}^{2}
\]
(with probability $1-\overline{\delta}$), and according to the source
elimination lemma (Corollary \ref{cor: source identification}), the
$t$th model will be eliminated if $Q_{t}^{2}\geq\nicefrac{(\lambda_{t}^{2}\vee\lambda_{0}^{2})}{\nu}$.
Since the retained source models in ${\cal T}_{r-1}$ were not discriminated
in round $r-1$, a plausible sample allocation will induce an equal
elimination barrier for all $t\in{\cal T}_{r-1}$, \ie, $\lambda_{t_{1}}^{2}=\lambda_{t_{2}}^{2}$
for any $t_{1},t_{2}\in{\cal T}_{r-1}$. This results in the sample
allocation $\bar{N}_{t,r}=\nicefrac{\bar{N}}{T_{r-1}}\cdot\nicefrac{\sigma_{t}^{2}}{\overline{\sigma}^{2}({\cal T}_{r-1})}$
used in line \ref{line: algorithm sample allocation}, for which $\lambda_{t}^{2}=\nicefrac{g(\bar{\delta})d\overline{\sigma}^{2}({\cal T}_{r-1})}{(\nicefrac{\bar{N}}{T_{r-1}})}$.
For example, if $\sigma_{t}^{2}\equiv\sigma^{2}$ for all source models
$t\in[T]$, then the sample allocation is uniform over $T_{r-1}$
and $\lambda_{t}^{2}=\nicefrac{g(\bar{\delta})d\sigma^{2}}{(\nicefrac{\bar{N}}{T_{r-1}})}$.
The barrier $\nicefrac{(\lambda_{t}^{2}\vee\lambda_{0}^{2})}{\nu}$
thus decreases at each round, allowing for additional source models
to be eliminated. Now, if the number of currently retained models
$T_{r-1}$ is \emph{small} enough, specifically, $T_{r-1}\leq\nicefrac{\sigma_{0}^{2}}{\overline{\sigma}^{2}({\cal T}_{r-1})}$,
then the barrier for elimination is $\lambda_{t}^{2}\leq\nicefrac{g(\bar{\delta})}{\nu}\cdot\nicefrac{d\sigma_{0}^{2}}{\overline{N}}$,
which is the same barrier used by the weak oracle \eqref{eq: weak oracle set},
with $\kappa\leq\nicefrac{g(\bar{\delta})}{\nu}\cdot(\overline{r}+2)$.
Thus, if $\overline{r}$ is not very large, whenever $T_{r-1}\leq\nicefrac{\sigma_{0}^{2}}{\overline{\sigma}^{2}({\cal T}_{r-1})}$,
the set ${\cal T}_{\text{w.o.}}$ is identified by the algorithm \WHP.
In fact, if $T\leq\nicefrac{\sigma_{0}^{2}}{\overline{\sigma}^{2}({\cal T}_{0})}$
then this occurs already in the first round. Multiple rounds are needed
for \emph{large} number of source models. 

\paragraph*{Number of rounds and elimination dynamics}

The elimination dynamics depend in an intricate way on $\{Q_{t}^{2},\sigma_{t}^{2}\}_{t\in\llbracket T\rrbracket}$.
To allow for simple tracking, we introduce the \emph{elimination curve}
$\beta_{\delta}(\tau)$. Let $\tau\in(0,1)$, and assume that there
are $\tau T$ retained models. Then, $T\beta_{\delta}(\tau)$ is the
number of models that will be eliminated next by the algorithm (\WHP).
Concretely, letting $\tau_{\text{min}}:=\frac{1}{T}\lceil\nicefrac{\sigma_{0}^{2}}{\overline{\sigma}^{2}([T])}\rceil,$
and setting the required reliability to $\delta\in(0,1]$, we obtain
\begin{equation}
\beta_{\delta}(\tau):=\begin{cases}
\frac{\left|\left\{ t\in[T]\colon Q_{t}^{2}\leq\frac{g(\bar{\delta})}{\nu}\cdot\frac{d\overline{\sigma}^{2}([\tau T])}{N/(\tau T)}\right\} \right|}{T}, & \tau\in(\tau_{\text{min}},1)\\
\frac{\left|\left\{ t\in[T]\colon Q_{t}^{2}\leq\frac{g(\bar{\delta})}{\nu}\cdot\frac{d\sigma_{0}^{2}}{N}\right\} \right|}{T}, & \tau\in(0,\tau_{\text{min}})
\end{cases}.\label{eq: defintion beta function - unequal noise variances}
\end{equation}
Hence, at the first round $T_{1}=T\cdot\beta_{\delta}(1)$ source
models will be retained by the algorithm \WHP, after the second $T_{2}=T\cdot\beta_{\delta}(\beta_{\delta}(1))$,
and so on. After $\bar{r}$ rounds, the number of retained source
models is \WHP $T_{\bar{r}}=T\cdot\beta_{\delta}(\beta_{\delta}(\cdots(1))=T\cdot\beta_{\delta}^{(\bar{r})}(1)$,
where $\beta_{\delta}^{(r)}$ denotes the $r$th order functional
power of $\beta_{\delta}$, and if $T_{r}\leq\tau_{\text{min}}T$
then the algorithm stops since it eliminates all source models outside
${\cal T}_{\text{w.o.}}$. That is, if the algorithm satisfies either
$T_{r-1}\leq\nicefrac{\sigma_{0}^{2}}{\overline{\sigma}^{2}({\cal T}_{r-1})}$
or $T_{r}=0$, then ${\cal T}_{\text{alg}}={\cal T}_{\bar{r}}={\cal T}_{\text{w.o.}}(\nicefrac{g(\bar{\delta})}{\nu})$,
and the algorithm identifies the weak oracle set with constant $\nicefrac{g(\bar{\delta})}{\nu}$.
However, it may happen that the elimination stops becuase $T_{r}=T_{r-1}$.
As we next show, this is easily identified by $\beta_{\delta}(\tau)$. 
\begin{example}
\label{exa: beta function}Consider $T=10^{3}$ and assume the normalized
values $\nicefrac{g(\bar{\delta})}{\nu}\cdot\nicefrac{(d\sigma_{t}^{2})}{(\nicefrac{N}{T})}=1$,
for which $\beta_{\delta}(\tau)=\frac{1}{T}|\{t\in[\tau]\colon Q_{t}^{2}\leq\tau\}|$
for $\tau\in(\tau_{\text{min}},1)$ and $\beta_{\delta}(\tau)=\beta_{\delta}(\tau_{\text{min}})$
for $\tau\in(0,\tau_{\text{min}})$. We assume $\tau_{\text{min}}=0.2$.
We demonstrate the sensitivity of CL to the values of $\{Q_{t}^{2}\}_{t\in[T]}$
by comparing two very similar cases -- in the first case $Q_{t}^{2}=\nicefrac{t^{1/3}}{9}$
and in the second case $Q_{t}^{2}=\nicefrac{t^{1/3}}{12}$. In both
cases, if $\sigma_{0}^{2}$ is sufficiently large, then ${\cal T}_{\text{w.o.}}$
is not empty. Figure \ref{fig:beta and simulation} (left panel) shows
that in the first case (red curve) $\beta_{\delta}(1)\approx0.72$,
$\beta_{\delta}(0.62)\approx0.28$, and $\beta_{\delta}(0.242)\approx0.017$,
so that after $3$ rounds, all models outside ${\cal T}_{\text{w.o.}}$
are eliminated. By contrast, in the second case (blue curve), $\beta_{\delta}(1)=1$,
and while all source models are better than the target model, \emph{none}
of them is eliminated. Thus, the risk of the weak-oracle learner is
\emph{not} achieved by Algorithm \ref{alg: CL multiple sources}. 
\end{example}
Example \ref{exa: beta function} reveals an intricate property: The
distances $\{Q_{t}\}_{t\in[T]}$ are smaller in the second case compared
to the first case, but Algorithm \ref{alg: CL multiple sources} has
a lower loss guarantee in the first case -- thus the loss is \emph{not}
monotonic with the distances. This appears to be an inherent property
of the statistical CL problem: Models with low similarity (large $Q_{t}$)
can be easily eliminated. Nonetheless, generalizing the observation
of Example \ref{exa: beta function}, if $\beta_{\delta}(\tau)$ is
bounded away from the identity line (and thus does not have a fixed
point), then Algorithm \ref{alg: CL multiple sources} will identify
the weak-oracle set using $r=\Theta(\log T)$ rounds, as follows (the
proof is obvious from the discussion above):
\begin{prop}
\label{prop: beta strictly bounded }Suppose $\beta_{\delta}(\tau)\leq\overline{\beta}\tau$
for some $\overline{\beta}\in(0,1)$ and all $\tau\in(\tau_{\text{min}},1]$.
Then, ${\cal T}_{\text{alg}}={\cal T}_{\text{w.o.}}(\nicefrac{g(\bar{\delta})}{\nu})$
if
\[
\bar{r}=\frac{\log\tau_{\text{min}}}{\log\overline{\beta}}=\frac{\log\left(T\cdot\nicefrac{\overline{\sigma}^{2}([T])}{\sigma_{0}^{2}}\right)}{\log(\nicefrac{1}{\overline{\beta}})}.
\]
\end{prop}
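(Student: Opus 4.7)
The plan is to combine the hypothesized contraction of $\beta_\delta$ with the elimination dynamics already derived in the paragraph on \emph{Number of rounds and elimination dynamics}. By the union-bound choice $\bar\delta = \delta/(T\bar r + 2)$ made in line \ref{line: algorithm delta_bar}, the concentration events for all per-round empirical-mean estimates hold simultaneously with probability at least $1 - \delta$; condition on this event. Then, as spelled out before the proposition, the fraction of retained source models after round $r$ is deterministically $T_r/T = \beta_\delta^{(r)}(1)$, and the barrier used in the elimination rule (line \ref{line: algorithm elimination rule}) at round $r$ is exactly $(g(\bar\delta)/\nu)\cdot[d\overline\sigma^2({\cal T}_{r-1})/(\bar N/T_{r-1}) \vee d\sigma_0^2/\bar N]$.

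First, I would prove by induction on $r \ge 1$ that, as long as $\beta_\delta^{(r-1)}(1) > \tau_{\text{min}}$, one has $\beta_\delta^{(r)}(1) \le \overline\beta^{\,r}$. The base case is the hypothesis applied at $\tau = 1$. For the inductive step, the hypothesis applies to $\tau = \beta_\delta^{(r-1)}(1) \in (\tau_{\text{min}}, 1]$, yielding $\beta_\delta^{(r)}(1) \le \overline\beta \cdot \beta_\delta^{(r-1)}(1) \le \overline\beta^{\,r}$. Because $\overline\beta < 1$, the iterates are strictly decreasing while above $\tau_{\text{min}}$; combined with the integrality of $T_r = T\beta_\delta^{(r)}(1)$, this rules out the premature stopping branch $T_r = T_{r-1}$ in line \ref{line: stopping criterion} before the iterates cross $\tau_{\text{min}}$.

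Next, with the choice $\bar r = \log\tau_{\text{min}}/\log\overline\beta$ one has $\overline\beta^{\,\bar r} = \tau_{\text{min}}$, so either some earlier round $r' \le \bar r$ already produces $T_{r'}/T \le \tau_{\text{min}} \le \sigma_0^2/\overline\sigma^2({\cal T}_{r'})$ and triggers the stopping criterion of line \ref{line: stopping criterion}, or all $\bar r$ rounds run and $T_{\bar r}/T \le \overline\beta^{\,\bar r} = \tau_{\text{min}}$ triggers it at round $\bar r$. At that terminal round the barrier degenerates to $(g(\bar\delta)/\nu)\cdot d\sigma_0^2/\bar N$, which is precisely the barrier defining ${\cal T}_{\text{w.o.}}(g(\bar\delta)/\nu)$ in \eqref{eq: weak oracle set}, so the surviving set is exactly ${\cal T}_{\text{w.o.}}(g(\bar\delta)/\nu)$ as claimed. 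The equivalence of the two displayed expressions for $\bar r$ is the elementary identity $\log\tau_{\text{min}}/\log\overline\beta = \log(1/\tau_{\text{min}})/\log(1/\overline\beta) = \log(T\overline\sigma^2([T])/\sigma_0^2)/\log(1/\overline\beta)$, using the definition of $\tau_{\text{min}}$.

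There is no genuine obstacle here; the argument is a one-line bookkeeping exercise on the iterates of $\beta_\delta$. The only care required is to verify, via the strict-decrease observation above, that the $T_r = T_{r-1}$ clause of line \ref{line: stopping criterion} does not short-circuit the argument before the iterates reach $\tau_{\text{min}}$, and to match the terminal elimination barrier to the weak-oracle barrier.
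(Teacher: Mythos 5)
Your proof follows the same route the paper itself takes: the paper declares the proposition's proof ``obvious from the discussion above,'' and your write-up is a reasonable formalization of exactly that discussion --- the geometric contraction of the iterates $\tau_r = \beta_\delta^{(r)}(1)$, the fact that the definition of $\beta_\delta$ is capped at $\tau_{\text{min}}$, and the observation that the $T_r = T_{r-1}$ branch cannot short-circuit because $\overline\beta < 1$ forces strict decrease of the integer counts. The last of these is a small detail the paper leaves implicit, and it is a good one to make explicit.

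There is, however, a gap in the chain ``$T_{r'}/T \le \tau_{\text{min}} \le \sigma_0^2/\overline\sigma^2({\cal T}_{r'})$.'' The threshold $\tau_{\text{min}} = \frac{1}{T}\lceil \sigma_0^2/\overline\sigma^2([T])\rceil$ is built from the average noise variance over \emph{all} sources, while the algorithm's stopping criterion compares $T_{r-1}$ to $\sigma_0^2/\overline\sigma^2({\cal T}_{r-1})$, the average over the \emph{retained} sources. Nothing in the setup forces $\overline\sigma^2({\cal T}_{r'}) \le \overline\sigma^2([T])$: the sources eliminated for having large $Q_t$ may well have small $\sigma_t^2$, in which case the retained subset has a \emph{larger} average variance and the stopping criterion is strictly harder to satisfy than ``$T_{r'} \le \tau_{\text{min}} T$.'' Even in the special case $\sigma_t^2 \equiv \sigma^2$ the ceiling in $\tau_{\text{min}}$ goes the wrong direction, so $T_{r'} = \lceil \sigma_0^2/\sigma^2\rceil$ need not satisfy $T_{r'} \le \sigma_0^2/\sigma^2$. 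You assert this inequality without proof; to close the argument one should either add a hypothesis (\eg, $\sigma_t^2$ nondecreasing with $Q_t^2$, so the retained prefix has smaller average variance) or replace $\overline\sigma^2([T])$ in the definition of $\tau_{\text{min}}$ by $\max_t \sigma_t^2$ so that $\overline\sigma^2({\cal T}_{r'})$ is always dominated. A related, milder issue is an off-by-one in the round count: the induction gives $\tau_{\bar r} \le \overline\beta^{\bar r} = \tau_{\text{min}}$, but for round $\bar r$ itself to invoke the degenerate barrier one needs $\tau_{\bar r - 1} \le \tau_{\text{min}}$, so strictly one more elimination round may be required. Both of these looseness points are already present in the paper's own discussion (and in the $N$ vs.\ $\bar N$ factor of $\bar r + 2$ you inherit at the end), so they are not flaws unique to your argument, but they are worth acknowledging explicitly rather than absorbing silently.
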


\paragraph*{The final estimate}

Similarly to the strong/weak-oracle learners, we choose the final
estimate to be based on $\overline{N}$ samples from $t_{\text{alg}}:=\overline{t}({\cal T}_{\text{alg}})$,
which favors the model with minimal noise variance, see \eqref{eq: choise of model within oracle set}.
The final estimate is thus $\hat{\theta}_{\text{alg}}=\overline{\theta}_{t_{\text{alg}}}(\overline{N})$,
and the resulting loss is bounded \WHP, as in \eqref{eq: MSE of source model},
by $\|\hat{\theta}_{\text{alg}}-\theta_{0}\|^{2}\lesssim\nicefrac{d\sigma_{t_{\text{alg}}}^{2}}{N}+Q_{t_{\text{alg}}}^{2}$.
The following theorem provides a rigorous high probability guarantee
on the loss and the risk of Algorithm \ref{alg: CL multiple sources}.
The proof appears in Appendix \ref{sec:Proofs-for-Section-algorithm}
and is a direct consequence of the source elimination lemma. 
\begin{thm}
\label{thm: Multiple source CL algorithm}Assume \WLOG that\footnote{The learner does not know $\{Q_{t}^{2}\}_{t\in[T]}$, and does not
know that this order holds.} $0=Q_{0}^{2}\leq Q_{1}^{2}\leq\cdots\leq Q_{T}^{2}$. Let $\beta_{\delta}(\tau)$
be as in \eqref{eq: defintion beta function - unequal noise variances}
and let ${\cal T}_{\text{alg}}=[T_{\bar{r}}]$ with $T_{\bar{r}}=T\beta_{\delta}(\beta_{\delta}(\cdots(1)))=T\beta_{\delta}^{(\bar{r})}(1)$.
Let $\bar{\delta}=\nicefrac{\delta}{(T\bar{r}+2)}$ for $\delta\in(0,1)$
and let $\hat{\theta}$ be the output of Algorithm \ref{alg: CL multiple sources}.
Then, with probability at least $1-\delta$
\begin{equation}
\|\hat{\theta}-\theta_{0}\|^{2}\leq2(\bar{r}+2)\cdot\left(g(\bar{\delta})\vee1\right)\cdot\left[\frac{d\sigma_{\overline{t}({\cal T}_{\text{alg}})}^{2}}{N}+Q_{\overline{t}({\cal T}_{\text{alg}})}^{2}\right].\label{eq: MSE multiple source high probability}
\end{equation}
 Assume further that $\max_{t\in[T]}Q_{t}^{2}\leq4C_{\theta}^{2}$
for some constant $C_{\theta}>0$. Then, setting 
\[
\delta=\delta_{*}:=\left[\frac{1}{2(T+1)}\cdot\left[\min_{t,t'\in\llbracket T\rrbracket}\frac{\sigma_{t}^{2}}{\sigma_{t'}^{2}}\vee\min_{t\in\llbracket T\rrbracket}\frac{d\sigma_{t}^{2}}{8NC_{\theta}^{2}}\right]\right]^{2}
\]
as input to Algorithm \ref{alg: CL multiple sources} results in
\begin{equation}
\E\left[\|\hat{\theta}-\theta_{0}\|^{2}\right]\leq8c(\bar{r}+2)\cdot\log\left(\frac{(T\bar{r}+2)}{\delta_{*}}\right)\cdot\left[\frac{d\sigma_{\overline{t}({\cal T}_{\text{alg}})}^{2}}{N}+Q_{\overline{t}({\cal T}_{\text{alg}})}^{2}\right].\label{eq: multiple sources bound in expectation}
\end{equation}
\end{thm}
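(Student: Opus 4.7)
My plan is to establish the high probability bound \eqref{eq: MSE multiple source high probability} as a direct consequence of Corollary \ref{cor: source identification} applied at each elimination round, and then bootstrap it to the expectation bound \eqref{eq: multiple sources bound in expectation} via a fourth-moment tail control combined with the boundedness assumption $\max_{t}\|\theta_t\| \leq C_\theta$.

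For the high probability bound, I first apply a union bound over the at-most $T\bar{r} + 2$ empirical mean estimates computed throughout Algorithm \ref{alg: CL multiple sources}: one for each source model at each of the $\bar{r}$ rounds, one for the initial target estimate, and one for the final estimate. Each satisfies its parametric error bound \eqref{eq: parametric error rate definition of c_delta} with probability at least $1 - \bar{\delta}$ for $\bar{\delta} = \delta/(T\bar{r}+2)$, so all hold simultaneously on an event $\mathcal{G}$ with $\P[\mathcal{G}]\geq 1 - \delta$. On $\mathcal{G}$, Corollary \ref{cor: source identification} applied to the elimination rule in line \ref{line: algorithm elimination rule} at each round $r$ implies that $t$ remains in ${\cal T}_r$ iff $Q_t^2$ is below the round-$r$ barrier, so by the definition of the elimination curve in \eqref{eq: defintion beta function - unequal noise variances} we obtain $T_r = T\beta_\delta^{(r)}(1)$ and, by the assumed ordering of $\{Q_t\}$, ${\cal T}_r = [T_r]$. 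The final estimate $\hat{\theta} = \overline{\theta}_{t^*}(\bar{N})$ with $t^* = \overline{t}({\cal T}_{\text{alg}})$ then satisfies
\[
\|\hat{\theta} - \theta_0\|^2 \leq 2\|\overline{\theta}_{t^*}(\bar{N}) - \theta_{t^*}\|^2 + 2Q_{t^*}^2 \leq 2(\bar{r}+2)\, g(\bar{\delta})\, \frac{d\sigma_{t^*}^2}{N} + 2 Q_{t^*}^2,
\]
since $\bar{N} = N/(\bar{r}+2)$, yielding \eqref{eq: MSE multiple source high probability} after absorbing constants into $(g(\bar{\delta})\vee 1)$.

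For the expectation bound, I split $\E[\|\hat{\theta}-\theta_0\|^2] = \E[\|\hat{\theta}-\theta_0\|^2\I_{\mathcal{G}}] + \E[\|\hat{\theta}-\theta_0\|^2\I_{\mathcal{G}^c}]$. The first term is controlled by the high-probability bound; for the second, Cauchy--Schwarz gives
\[
\E\left[\|\hat{\theta}-\theta_0\|^2\I_{\mathcal{G}^c}\right] \leq \sqrt{\P[\mathcal{G}^c]\cdot\E[\|\hat{\theta}-\theta_0\|^4]} \leq \sqrt{\delta\cdot \E[\|\hat{\theta}-\theta_0\|^4]}.
\]
Using $\|\hat{\theta}-\theta_0\|^4 \leq 8\|\hat{\theta}-\theta_{t^*}\|^4 + 8 Q_{t^*}^4$, the bound $Q_{t^*}^2 \leq 4C_\theta^2$, and the fact that, conditional on $t^*$, $\hat{\theta} - \theta_{t^*}$ is Gaussian (by the fresh-samples convention at the end of Section \ref{sec:setting}) with fourth moment of order $(d\sigma_{t^*}^2/\bar{N})^2 \leq (d\sigma_0^2/\bar{N})^2$, one obtains $\E[\|\hat{\theta}-\theta_0\|^4] \lesssim C_\theta^4 + (d\sigma_0^2/\bar{N})^2$. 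The specific choice $\delta = \delta_*$ is calibrated exactly so that $\sqrt{\delta_*}\cdot(C_\theta^2 + d\sigma_0^2/\bar{N})$ is dominated by $d\sigma_{\overline{t}({\cal T}_{\text{alg}})}^2/N + Q_{\overline{t}({\cal T}_{\text{alg}})}^2$, matching the good-event bound up to a constant; summing the two contributions yields \eqref{eq: multiple sources bound in expectation}.

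The main technical hurdle is the bookkeeping that tracks the random trajectory ${\cal T}_0 \supseteq {\cal T}_1 \supseteq \cdots$ against the deterministic iteration of $\beta_\delta$, and in particular verifying that the algorithm's barrier $\nicefrac{g(\bar{\delta}) d\overline{\sigma}^2({\cal T}_{r-1})}{\bar{N}/T_{r-1}} \vee \nicefrac{g(\bar{\delta}) d\sigma_0^2}{\bar{N}}$ matches the threshold inside $\beta_\delta(\tau)$ across both regimes $\tau > \tau_{\min}$ and $\tau \leq \tau_{\min}$ (the latter activating the stopping criterion in line \ref{line: stopping criterion} since it coincides with the weak-oracle threshold). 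A minor subtlety is the pessimistic $\argmins$ in \eqref{eq: choise of model within oracle set}: among the models with minimal noise variance in ${\cal T}_{\text{alg}}$ one must select the worst $Q_t^2$, so the final bound cleanly features $Q_{\overline{t}({\cal T}_{\text{alg}})}^2$ rather than $\min_{t\in{\cal T}_{\text{alg}}} Q_t^2$. Everything else is a routine combination of the source elimination lemma, the union bound, and standard Gaussian moment estimates.
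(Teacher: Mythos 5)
Your proposal is correct and takes essentially the same route as the paper's proof, with one small and genuinely different ingredient in the fourth-moment step.

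For the high-probability bound, you and the paper do the same thing: a union bound over the $T\bar{r}+2$ empirical estimates (defining the good event $\mathcal{G}$ with $\bar{\delta}=\delta/(T\bar{r}+2)$), Corollary~\ref{cor: source identification} applied at each round to track ${\cal T}_r$ against the iterates of $\beta_\delta$, and the decomposition $\|\hat{\theta}-\theta_0\|^2 \leq 2\|\overline{\theta}_{t^*}(\bar{N})-\theta_{t^*}\|^2 + 2Q_{t^*}^2$ with $\bar{N}=N/(\bar{r}+2)$.

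For the expectation bound, the paper controls $\E[\|\hat{\theta}-\theta_0\|^4]$ by using that $\hat{\theta}\in\{\tilde{\theta}_t\}_{t\in\llbracket T\rrbracket}$ almost surely and dropping the selection indicator, yielding a \emph{sum} over all $T+1$ models, i.e.\ $\sqrt{\E[\|\hat{\theta}-\theta_0\|^4]} \lesssim \sum_{t\in\llbracket T\rrbracket}[d\sigma_t^2/\bar{N}+Q_t^2]$; this is precisely what generates the $(T+1)$ factor in $\delta_*$. You instead condition on $t^*$, which is valid because the final batch is fresh and hence independent of the $\sigma$-algebra determining $t^*$, and then worst-case via $\sigma_{t^*}^2\le\sigma_0^2$ and $Q_{t^*}^2\le 4C_\theta^2$. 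Your bound $\E[\|\hat{\theta}-\theta_0\|^4]\lesssim C_\theta^4+(d\sigma_0^2/\bar{N})^2$ is actually $T$-free and hence tighter than the paper's, so the stated (smaller) $\delta_*$ still suffices. The conditioning argument is a cleaner route and buys a mild sharpening of the tail term. What you gloss over is the verification that the stated $\delta_*$ indeed satisfies the calibration $\sqrt{\delta_*}\cdot(C_\theta^2+d\sigma_0^2/\bar{N})\lesssim d\sigma_{\overline{t}({\cal T}_{\text{alg}})}^2/N+Q_{\overline{t}({\cal T}_{\text{alg}})}^2$; this is the only non-mechanical computation in the paper's proof (it is where the $1/(T+1)$ factor and the two terms inside the $\vee$ are matched against the lower bound on $\delta_0$), and a fully rigorous write-up would need to carry it out, much as the paper does at the end of Appendix~\ref{sec:Proofs-for-Section-algorithm}. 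Everything else you wrote is accurate and matches the paper's reasoning.
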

For mean estimation $g(\bar{\delta})=\Theta(\log(\nicefrac{T\bar{r}}{\delta}))$,
and so if ${\cal T}_{\text{alg}}={\cal T}_{\text{w.o.}}(\nicefrac{g(\bar{\delta})}{\nu})$
then the loss/risk of the weak-oracle learner is achieved using $\bar{r}=\Theta(\log T)$
rounds, up to a mild $\Theta(\log^{2}(\nicefrac{TN}{d}))$ factor. 

\paragraph*{Extension to unknown variances/covariance-matrices}

In Appendix \ref{sec:unknown noise covariance}, we extend Algorithm
\ref{alg: CL multiple sources} and Theorem \ref{thm: Multiple source CL algorithm}
to the case the variances $\{\sigma_{t}^{2}\}_{t\in\llbracket T\rrbracket}$
are unknown (with $\overline{\Sigma}_{t}=I_{d}$ for all $t\in\llbracket T\rrbracket$
), and then to unknown covariance matrices. The extension is based
on a preliminary step of variance/covariance estimation, and plug-in
of the estimated values instead of the exact ones in Algorithm \ref{alg: CL multiple sources}.
Assuming that the mild condition $dN=\Omega(T\log T)$ holds for unknown
variances and $dN\gtrsim d\log d\cdot T\log T$ holds for unknown
covariance matrices, the loss/risk of this extended Algorithm \ref{alg: CL multiple sources}
is roughly the same as in Theorem \ref{thm: Multiple source CL algorithm}
in both cases.

\paragraph*{Numerical experiments}

We empirically validated Algorithm \ref{alg: CL multiple sources}
in simulations, and provide the results in Appendix \ref{sec:Simulations}.
Here, we highlight one experiment in which the sources are partitioned
to three types, according to the distance of their parameter from
the target parameter, designated as ``close'', ``medium'', and
``far''. Figure \ref{fig:beta and simulation} (right panel) displays
the loss of Algorithm \ref{alg: CL multiple sources} for different
mixtures of types $(T_{\text{far}},T_{\text{close}})$ and $T_{\text{med}}=T-T_{\text{close}}-T_{\text{far}}$,
for $T=100$ source models, averaged over $200$ experiments. One
can track the loss over various interesting paths. For example, since
Algorithm \ref{alg: CL multiple sources} can eliminate the ``far''
models, the loss is consistently low when traversing the straight
line from $(0,100)$ to $(100,0)$, as long as there is at least one
close source model. The loss increases when traversing the straight
line from $(50,50)$ to $(0,0)$ due to the additional ``medium''
parameters, which are not eliminated. The loss of Algorithm \ref{alg: CL multiple sources}
in $(0,a)$ has lower loss compared to $(a,0)$, since the ``close''
parameters in the former case are replaced by ``far'' parameters
in the latter. 

\begin{figure}
\begin{centering}
\includegraphics[scale=0.5]{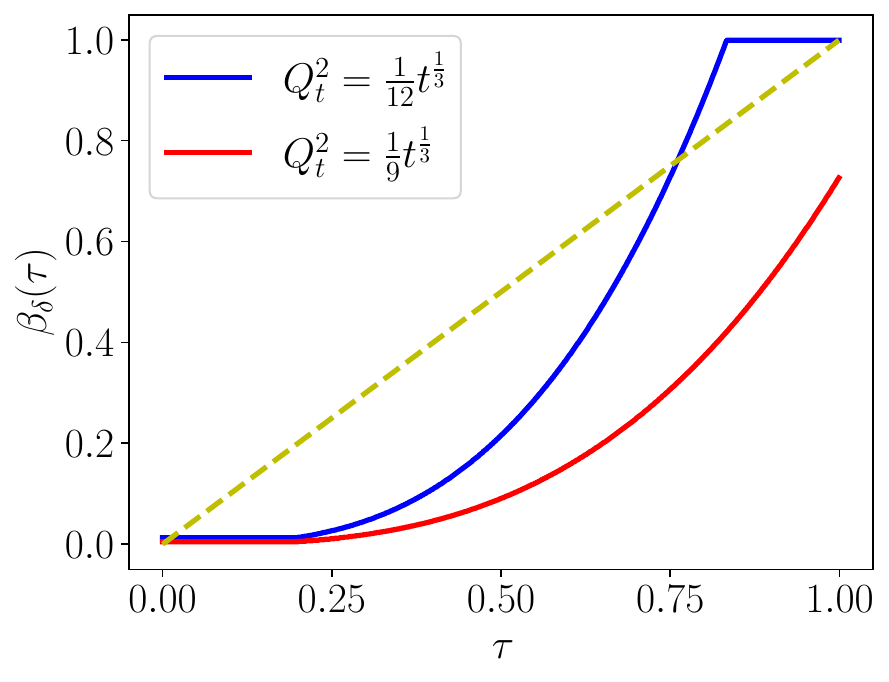}\includegraphics[scale=0.5]{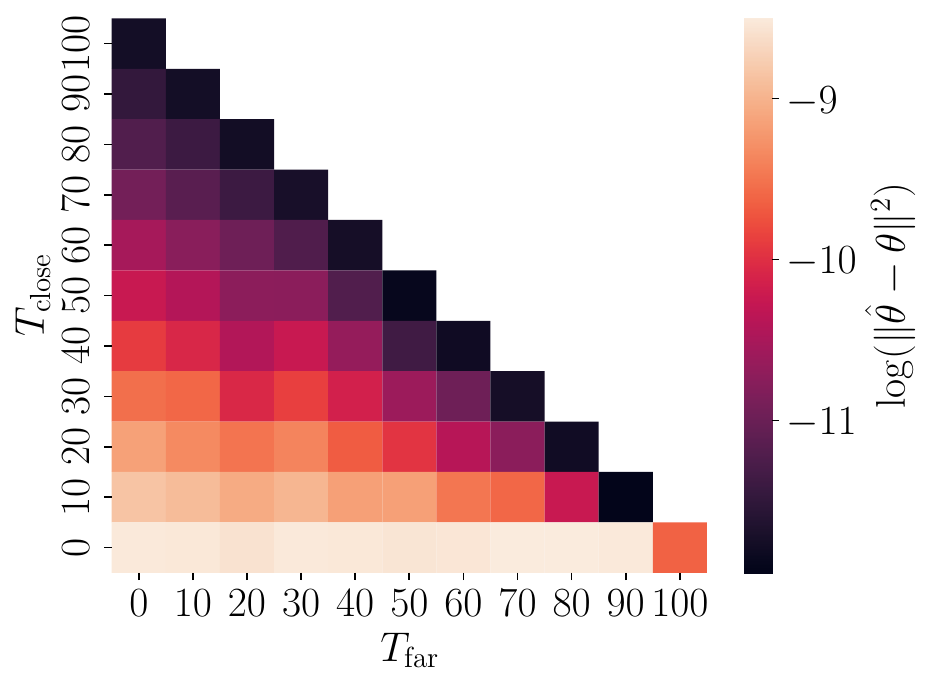}
\par\end{centering}
\caption{\label{fig:beta and simulation}Left: The elimination curves $\beta_{\delta}(\tau)$
for Example \ref{exa: beta function} (the identity line in dashed
yellow). Right: The loss $\|\hat{\theta}-\theta_{0}\|^{2}$ on a log-scale.
Parameters are $N=10^{5},\;d=2,\;\sigma^{2}=0.1,\;\sigma_{0}^{2}=1,\;\tilde{Q}_{\text{close}}^{2}=0,\;\tilde{Q}_{\text{medium}}^{2}=10,\;\tilde{Q}_{\text{far}}^{2}=2\cdot10^{4}$
where $\tilde{Q}_{t}^{2}:=\nicefrac{Q_{t}^{2}}{(\nicefrac{d\sigma_{0}^{2}}{N})}$
is the normalized distance. }
\end{figure}

\section{Minimax risk lower bounds \label{sec:Minimax-risk-lower}}

We next derive lower bounds on the minimax risk, which can be compared
with the risk of the weak-oracle learner \eqref{eq: weak oracle},
and with Algorithm \ref{alg: CL multiple sources}. Since the learner
can typically learn the noise statistics (see Appendix \ref{sec:unknown noise covariance}),
we assume that $\{\overline{\Sigma}_{t}=I_{d}\}_{t\in\llbracket T\rrbracket}$
and $\{\sigma_{t}^{2}\}_{t\in\llbracket T\rrbracket}$ are known to
the learner. In this case, we may assume \WLOG that $\sigma_{0}^{2}\geq\sigma_{1}^{2}\geq\sigma_{2}^{2}\geq\cdots\geq\sigma_{T}^{2}$,
and let $\boldsymbol{\Gamma}:=(\sigma_{0}^{2},\sigma_{1}^{2},\ldots,\sigma_{T}^{2})$.
We take the noise variances as fixed, thus specify a problem-instance
by $\boldsymbol{\theta}=(\theta_{0},\theta_{1},\ldots,\theta_{T})$,
and let $\boldsymbol{\Psi}\subset(\mathbb{R}^{d})^{T+1}$ be a set
of such parameters. The minimax risk is then 
\begin{equation}
L_{N,d}(\boldsymbol{\Psi},\boldsymbol{\Gamma}):=\inf_{{\cal A}}\sup_{\boldsymbol{\theta}\in\boldsymbol{\Psi}}\E_{\boldsymbol{\phi},{\cal A}}\left[\left\Vert \hat{\theta}\left((A_{i},S_{i})_{i\in[N]}\right)-\theta_{0}\right\Vert ^{2}\right].\label{eq: minimax risk Gaussian location model}
\end{equation}
To lower bound the minimax risk, we follow the standard method of
reducing the learning problem to an hypothesis testing problem between
$K$ hypotheses $\boldsymbol{\Theta}_{\text{test}}:=\{\theta^{(j)}\}_{j\in[K]}\subset\boldsymbol{\Psi}$
where $\theta^{(j)}=(\theta_{0}^{(j)},\theta_{1}^{(j)},\ldots,\theta_{T}^{(j)})$,
and then bounding the error probability in the latter \citep{yang1999information}
\citep[Chapter 15]{wainwright2019high}. However, this reduction is
different from the standard one in two aspects, which we next discuss,
before deriving two minimax lower bounds. 

\paragraph*{A general reduction to hypothesis testing}

The algorithm ${\cal A}$ in \eqref{eq: minimax risk Gaussian location model}
collects samples adaptively, and so its $N$ samples are \emph{not}
\IID. Thus, in principle, the reduction to hypothesis testing should
be made to a similarly adaptive tester. However, lower bounds for
adaptive testers are not as widely available, compared to Le-Cam's
and Fano's bounds for \IID samples. There are a few strategies to
circumvent this. First, we may naively assume that each of the $T+1$
models is sampled $N$ times, and so there is no need for adaptive
allocation. This leads to non-trivial lower bounds, but does not capture
the correct dependence on $T$. Second, we may choose the set $\boldsymbol{\Theta}_{\text{test}}$
in a way that an optimal learner will surely only sample from one
of the models. Since the noise is Gaussian, this is the case, \eg,
if the $\theta_{t_{1}}^{(j)}\propto\theta_{t_{2}}^{(j)}$ for some
$t_{1},t_{2}\in[T]$ and all $j\in[K]$. Note that making an analogous
claim for estimation, rather than testing, is non-trivial. We discuss
this in detail in Appendix \ref{subsec:A-general-reduction}.

\paragraph*{The choice of the localization set and associated challenges}

The \emph{localization set} $\boldsymbol{\Psi}$ localizes the lower
bound on a set of problem instances. For the most informative bound,
these instances should have equal difficulty, where here, as in \citep{mousavi2020minimax,xu2022statistical},
the difficulty is determined by $\{Q_{t}^{2}\}_{t\in\llbracket T\rrbracket}$
(beyond$\{\sigma_{t}^{2}\}_{t\in\llbracket T\rrbracket}$, which are
assumed fixed). In fact, for our problem, some localization is essential,
since if $\boldsymbol{\Psi}=(\mathbb{R}^{d})^{T+1}$ then the worst
case is when $Q_{t}^{2}\gg\nicefrac{d\sigma_{0}^{2}}{N}$, and all
source models are useless for the target task; this trivializes the
bound. \citet{xu2022statistical} proposed a set we call a \emph{semi-local
set}, given by 
\begin{equation}
\boldsymbol{\Psi}\equiv\boldsymbol{\Psi}_{\leq}(\boldsymbol{q}):=\bigcup_{\zeta\in\mathfrak{S}_{T}}\left\{ \boldsymbol{\theta}\in\boldsymbol{\Theta}\colon Q_{t}^{2}:=\|\theta_{t}-\theta_{0}\|^{2}\leq q_{\zeta(t)}^{2},\text{ for all }t\in[T]\right\} ,\label{eq: semi-local set}
\end{equation}
where $\mathfrak{S}_{T}$ is the set of all permutations of $[T]$
and $\boldsymbol{q}=\{q_{t}\}_{t\in[T]}$ satisfies, \WLOG, $0<q_{1}\leq\cdots\leq q_{T}$.
So, the learner only knows an \emph{unordered multiset} of upper bounds
on $\{Q_{t}\}_{t\in[T]}$ (where $Q_{t}:=\|\theta_{0}-\theta_{t}\|$).
However, the risks of the weak-oracle learner and Algorithm \ref{alg: CL multiple sources}
are \emph{not} \emph{necessarily monotonic} with $Q_{t}$, and so
the worst-case may achieved in the interior of this set, \ie, for
$Q_{t}<q_{\zeta(t)}$. Hence, the set $\boldsymbol{\Psi}_{\leq}(\boldsymbol{q})$
contains problem instances of \emph{heterogeneous} difficulty. A reasonable
solution is to replace the $\leq$ in \eqref{eq: semi-local set}
with $=$, thus obtaining $\boldsymbol{\Psi}_{=}(\boldsymbol{q})$,
a \emph{fully localized set}. However, restricting $\theta$ to $\boldsymbol{\Psi}_{=}(\boldsymbol{q})$
may reveal too much information to the learner, which allows unrealistically
low risk, and thus to a loose bound. Indeed, assume that $\sigma_{t}^{2}=0$
for all $t\in[T]$, that $T$ is odd, that $\nicefrac{(T+1)}{2}$
of the source parameters are identical to the target parameter, and
that the other $\nicefrac{(T-1)}{2}$ source parameters are at a larger,
equal distance, which is still within the target noise resolution.
That is $q_{0}^{2}=q_{1}^{2}=\cdots q_{\nicefrac{(T+1)}{2}}^{2}=0$
and $q_{\nicefrac{(T+3)}{2}}^{2}=\cdots=q_{T}^{2}:=\overline{q}^{2}\lesssim\nicefrac{d\sigma_{0}^{2}}{N}$.
The loss of the weak-oracle learner in \eqref{eq: weak oracle} is
$\|\hat{\theta}_{\text{w.o.}}-\theta_{0}\|^{2}\lesssim\overline{q}^{2}$,
and is achieved by Algorithm \ref{alg: CL multiple sources}. However,
just knowing the \emph{multiset} $\{q_{t}^{2}\}_{t\in[T]}$, but not
the permutation $\zeta$, the learner can collect one sample from
each source model, and estimate $\theta_{0}$ as the observed sample
that is common to $\nicefrac{(T+1)}{2}$ models (or more). Thus, no
lower bound can be better than $0$. In Appendix \ref{subsec:The-choice-of}
we provide two more examples of exceeding the weak-oracle learner
risk, either by a dimensionality reduction to $d=1$ (when $T=2$),
or by a reduction to two possibilities (when $T=1$). We also explain
why this occurs even for noise with a strictly positive variance.
The problems of both $\boldsymbol{\Psi}_{\leq}(\boldsymbol{q})$ (lack
of monotonicity) and $\boldsymbol{\Psi}_{=}(\boldsymbol{q})$ (possibly
trivial or loose lower bound) reveal the delicate nature of the statistical
CL problem. 

\paragraph*{Semi-local minimax risk lower bound in low dimensions}

For the \emph{semi-local }set $\boldsymbol{\Psi}_{\leq}(\boldsymbol{q})$
we derive a lower bound based on a one-dimensional construction, which,
as such, does not capture the correct dependence on $d$. For brevity,
we let $t_{\text{w.o.}}:=\overline{t}({\cal T}_{\text{w.o.}}(\kappa))$
where ${\cal T}_{\text{w.o.}}(\kappa)$ is defined for the identity
permutation, \ie, $Q_{t}^{2}=\|\theta_{t}-\theta_{0}\|^{2}=q_{t}^{2}$.
We also let $q_{\text{\text{w.o.}}}:=q_{t_{\text{w.o.}}}$ as well
as $\sigma_{\text{\text{w.o.}}}^{2}:=\sigma_{t_{\text{w.o.}}}^{2}$,
and refer to $t_{\text{w.o.}}$ as the index of the \emph{weak-oracle
model}. We let the index of the task with median distance $q_{t}$
among the ``close'' models $t\in[t_{\text{w.o.}}-1]$ be $t_{\text{med}}:=\lceil\nicefrac{(t_{\text{w.o.}}+1)}{2}\rceil$,
and let $q_{\text{med}}:=q_{t_{\text{med.}}}$. We thus also refer
to ${\cal M}_{\text{med}}:={\cal M}_{t_{\text{med}}}$ as the \emph{median
model} (within the ``close'' source models).
\begin{thm}
\label{thm: lower bound global risk constant dimension}Suppose \WLOG
that $q_{0}^{2}=0\leq q_{1}^{2}\leq\cdots\leq q_{T}^{2}$ and $\sigma_{0}^{2}\geq\sigma_{1}^{2}\geq\cdots\geq\sigma_{T}^{2}$.
Then
\[
L_{N,d}\left(\boldsymbol{\Psi}_{\leq}(\boldsymbol{q}),\boldsymbol{\Gamma}\right)\geq\frac{1}{720}\cdot\left(\frac{\sigma_{T}^{2}}{N}+q_{\text{med}}^{2}\right),
\]
with $\kappa=\nicefrac{1}{(4d)}$ in ${\cal T}_{\text{w.o.}}(\kappa)$
of \eqref{eq: weak oracle set}.
\end{thm}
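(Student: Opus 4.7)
The plan is to reduce the minimax estimation problem to binary hypothesis testing via Le Cam's two-point method, and to apply it twice with two different pairs of hypotheses: one that extracts the $\sigma_T^2/N$ term and one that extracts the $q_{\text{med}}^2$ term. Since $\max(a,b) \geq (a+b)/2$, combining the two lower bounds (losing at most a factor of two) will yield the additive form in the statement. Both constructions will be one-dimensional ($d=1$), consistent with the absence of $d$ from the bound, and both will be arranged so that adaptive sampling provides no advantage beyond what a non-adaptive analysis would give.

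For the $\sigma_T^2/N$ term I take $\theta_t^{(j)} = (-1)^{j}\epsilon$ for every $t \in \llbracket T \rrbracket$ and $j \in \{0,1\}$, with $\epsilon \asymp \sqrt{\sigma_T^2/N}$. Under both hypotheses all cross-model distances vanish, so $\boldsymbol{\theta}^{(j)} \in \boldsymbol{\Psi}_{\leq}(\boldsymbol{q})$ trivially, while the two target parameters are separated by $2\epsilon$. For any adaptive policy, the chain-rule decomposition of KL gives
\[
\Dkl\bigl(\P^{(0)}_{H_N} \,\big\|\, \P^{(1)}_{H_N}\bigr) = \sum_{t \in \llbracket T \rrbracket} \E^{(0)}[N_t] \cdot \frac{(2\epsilon)^2}{2\sigma_t^2} \leq \frac{2\epsilon^2 N}{\sigma_T^2},
\]
where $N_t$ counts the samples allocated to model $t$, and the inequality uses $\sigma_t^{-2} \leq \sigma_T^{-2}$ together with $\sum_t N_t \leq N$. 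Choosing $\epsilon^2$ to be a small constant times $\sigma_T^2/N$ bounds the KL by a universal constant, and the two-point Le Cam inequality produces a risk of order $\sigma_T^2/N$.

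For the $q_{\text{med}}^2$ term I use a construction in which the source parameters are \emph{identical} across the two hypotheses, so that source samples carry zero KL. Setting $\alpha := q_{\text{med}}/2$, I choose $\theta_t^{*} \in \{-\alpha,+\alpha\}$ for $t \in [t_{\text{w.o.}}]$ with a balanced sign pattern ($t_{\text{med}}-1$ entries equal to $+\alpha$ and the remaining $t_{\text{w.o.}}-t_{\text{med}}+1$ entries equal to $-\alpha$), and extend to $t > t_{\text{w.o.}}$ by a common shared value that occupies the large slots of $\boldsymbol{q}$ under both hypotheses. The two hypotheses differ only in the target: $\theta_0^{(0)} = +\alpha$ and $\theta_0^{(1)} = -\alpha$, so $\|\theta_0^{(0)} - \theta_0^{(1)}\|^2 = q_{\text{med}}^2$. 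Since $\theta_t^{(0)} = \theta_t^{(1)}$ for all $t \in [T]$, the adaptive KL collapses to $\E^{(0)}[N_0] \cdot (2\alpha)^2/(2\sigma_0^2) \leq 2\alpha^2 N/\sigma_0^2$, and the weak-oracle constraint $q_{\text{med}}^2 \leq q_{\text{w.o.}}^2 \leq \sigma_0^2/(4N)$ (precisely $\kappa = 1/(4d)$ at $d=1$) keeps this below a small constant, so Le Cam again yields a lower bound of order $q_{\text{med}}^2$.

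The main technical obstacle is the combinatorial bookkeeping in the second construction: verifying that the sorted distance vectors under \emph{both} sign assignments are simultaneously majorized by the sorted $\boldsymbol{q}$, and filling in the source parameters for $t > t_{\text{w.o.}}$ so that both hypotheses remain in $\boldsymbol{\Psi}_{\leq}(\boldsymbol{q})$. This majorization is exactly what forces the cutoff at the median index $t_{\text{med}}$ rather than a smaller quantile, and is the reason $q_{t_{\text{med}}}$ rather than $q_1$ appears. Once both constructions are certified valid, tracking explicit constants through Pinsker's inequality, the two-point bound, and the $\max \to (\cdot + \cdot)/2$ passage delivers the numerical constant $\nicefrac{1}{720}$.
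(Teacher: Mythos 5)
Your proposal is correct and takes a genuinely different route from the paper's. The paper first invokes Claim~\ref{claim: From adaptive sampling to N samples each} to remove adaptivity altogether (giving the learner $N$ samples from every model), then coalesces the $T$ sources into two groups ($\{1,\dots,t_{\mathrm{med}}-1\}$ sharing one parameter, $\{t_{\mathrm{med}},\dots,T\}$ sharing another), and finally applies a previously established $T=2$ Le~Cam bound (Proposition~\ref{prop: Lower bound d=00003D1 T=00003D2}) whose two-point construction is a ``swap'' in which the source parameters themselves move between hypotheses, contributing nonzero KL from every model. You instead keep the sampling adaptive and invoke the divergence-decomposition identity (the $\sum_t \E[N_t]\cdot\mathrm{KL}_t$ chain rule) directly, and you split the additive bound into two separate two-point tests: one with all $T+1$ parameters coinciding at $\pm\epsilon$, which extracts $\sigma_T^2/N$ because $\sigma_t^{-2}\le\sigma_T^{-2}$ uniformly; and one with a balanced sign pattern over the close sources ($t_{\mathrm{med}}-1$ at $+\alpha$, the rest at $-\alpha$, far sources at $0$) in which the \emph{source} parameters are literally identical under both hypotheses and thus contribute zero KL, so only $\E[N_0]\le N$ target samples matter. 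The $\max\ge\tfrac{1}{2}(\cdot+\cdot)$ step then yields the additive form.

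What each approach buys: the paper's Claim~\ref{claim: From adaptive sampling to N samples each} side-steps the need to state an adaptive hypothesis-testing lower bound at the cost of a weaker reduction (the benchmark learner now sees $N(T+1)$ samples), whereas your chain-rule route is tighter and conceptually cleaner. Your second construction is particularly elegant because the sources carry zero information about the hypothesis, which makes the constant bookkeeping transparent; indeed, working the numbers through (KL $\le N q_{\mathrm{med}}^2/(2\sigma_0^2)\le 1/8$ using $\kappa=1/(4d)$, Pinsker, then Le Cam) gives a constant considerably better than $1/720$, so the final sentence of your proposal should say ``matches or improves'' rather than ``delivers'' that constant. The only places to tighten the writeup are: (i) explicitly cite the divergence-decomposition lemma for adaptive sampling, since the paper deliberately avoids relying on it; and (ii) spell out the sorted-majorization check you sketch, namely that under the hypothesis with $t_{\mathrm{med}}-1$ sources at distance $q_{\mathrm{med}}$, the first far slot index $t_{\mathrm{w.o.}}-t_{\mathrm{med}}+2$ satisfies $t_{\mathrm{w.o.}}-t_{\mathrm{med}}+2\ge t_{\mathrm{med}}$ precisely because $t_{\mathrm{med}}=\lceil(t_{\mathrm{w.o.}}+1)/2\rceil$, and the last $q_{\mathrm{med}}$ slot satisfies $T-t_{\mathrm{med}}+2\ge t_{\mathrm{med}}$ trivially; this is the exact combinatorial origin of the median index that you correctly identify as the crux.
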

The lower bound is tighter than the strong-oracle learner risk of
\citet[Theorem 1]{xu2022statistical} and also tighter than its improved
version \citep[Theorem 2]{xu2022statistical} (see Appendix \ref{subsec:Proof-of-Theorem- first-minimax}
for a detailed comparison). However, in general it is lower than the
weak-oracle learner risk since it uses $q_{\text{med}}^{2}$ instead
of the larger $q_{\text{w.o.}}^{2}$, and $\sigma_{T}^{2}$ instead
of $\sigma_{t_{\text{med}}}^{2}$. Interestingly, the gap between
them depends on $\{q_{t}^{2}\}_{t\in[T]}$ in an intricate way, just
like the intricate way that the upper bound on the risk of Algorithm
\ref{alg: CL multiple sources} depends on the distances (i.e., the
elimination curve $\beta_{\delta}(\tau)$). In Appendix \ref{subsec:Proof-of-Theorem- first-minimax}
we discuss this in detail, and provide a proof based on Le-Cam's two-point
method. We prove it for $T=2$, and then reduce $T>2$ to this case.
Nonetheless, as the proof shows, the bound matches the weak-oracle
learner risk for $T\leq2$.

\paragraph*{Single localized-source minimax risk lower bound in high dimensions}

We propose 
\[
\boldsymbol{\Psi}_{*}(q):=\bigcup_{\zeta\in\mathfrak{S}_{T}}\left\{ \boldsymbol{\theta}\in\boldsymbol{\Theta}\colon\exists t\in[T]\text{ such that }Q_{t}^{2}=\|\theta_{t}-\theta_{0}\|^{2}=q^{2}\right\} ,
\]
which is \emph{a single localized-source set}, and\emph{ }offers a
reasonable balance between localization of the instance and the information
revealed to the learner. Given that $\boldsymbol{\theta}\in\boldsymbol{\Psi}_{*}(q)$,
the learner only knows that there exists one source model whose parameter
is at distance $q$ from the target parameter.
\begin{thm}
\label{thm: lower local risk high dimension}Assume that $d\geq3$,
that $q_{0}^{2}=0\leq q_{1}^{2}\leq\cdots\leq q_{T}^{2}$ and that
$\sigma_{0}^{2}\geq\sigma_{1}^{2}\geq\cdots\geq\sigma_{T}^{2}$. Set
$\kappa=1$ in the definition of ${\cal T}_{\text{w.o.}}(\kappa)$
in \eqref{eq: weak oracle set}. There exists $c(d)\geq e^{-5}$ so
that 
\[
L_{N,d}(\boldsymbol{\Psi}_{*}(q_{\text{w.o.}}),\boldsymbol{\Gamma})\ge\begin{cases}
L_{N,d}(\boldsymbol{\Psi}_{=}(\boldsymbol{q}),\boldsymbol{\Gamma})\geq\frac{c^{2}(d)}{2}\cdot\frac{d\sigma_{\text{w.o.}}^{2}}{N}, & \nicefrac{d\sigma_{\text{w.o.}}^{2}}{N}\geq q_{\text{w.o.}}^{2}\\
\frac{c^{2}(d)}{8}\cdot q_{\text{w.o.}}^{2}, & \nicefrac{d\sigma_{\text{w.o.}}^{2}}{N}<q_{\text{w.o.}}^{2}
\end{cases}.
\]
\end{thm}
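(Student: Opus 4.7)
The plan is to reduce the estimation problem to hypothesis testing within $\boldsymbol{\Psi}_{*}(q_{\text{w.o.}})$, apply Fano's inequality in the noise-dominated regime ($d\sigma_{\text{w.o.}}^{2}/N \geq q_{\text{w.o.}}^{2}$) and Le Cam's two-point method in the bias-dominated regime ($d\sigma_{\text{w.o.}}^{2}/N < q_{\text{w.o.}}^{2}$), and crucially exploit the freedom afforded by $\boldsymbol{\Psi}_{*}$---only one source distance is pinned down---to neutralize the contribution of the low-noise sources $\mathcal{M}_{t}$ with $\sigma_{t}^{2} < \sigma_{\text{w.o.}}^{2}$, which would otherwise dominate the adaptive KL budget.

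For the first case, I would build $M = e^{c_{0}d}$ hypotheses indexed by a Gilbert--Varshamov packing $\{v_{j}\}_{j\in[M]}$ in a ball of radius $r \asymp \sqrt{d\sigma_{\text{w.o.}}^{2}/N}$ with pairwise separation $\Omega(r)$, set $\theta_{0}^{(j)} = v_{j}$ and $\theta_{t_{\text{w.o.}}}^{(j)} = v_{j} + q_{\text{w.o.}} u$ for a fixed unit vector $u$ (so that $\|\theta_{t_{\text{w.o.}}}^{(j)} - \theta_{0}^{(j)}\| = q_{\text{w.o.}}$ holds on the nose), and freeze $\theta_{t}^{(j)} = c_{t}$ for $t \neq t_{\text{w.o.}}$ at a $j$-independent constant. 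Then samples from $\mathcal{M}_{t}$, $t \neq t_{\text{w.o.}}$, carry no information about $j$, and only $\mathcal{M}_{0}$ and $\mathcal{M}_{t_{\text{w.o.}}}$ discriminate, yielding per-sample KL at most $\|v_{j}-v_{j'}\|^{2}/(2\sigma_{\text{w.o.}}^{2})$. The adaptive KL chain rule caps the $N$-sample KL by $Nr^{2}/(2\sigma_{\text{w.o.}}^{2}) \asymp d \asymp \log M$, and Fano's inequality gives $L(\boldsymbol{\Psi}_{*}) \gtrsim r^{2} \asymp d\sigma_{\text{w.o.}}^{2}/N$. The intermediate claim $L(\boldsymbol{\Psi}_{=}(\boldsymbol{q})) \geq (c^{2}(d)/2)\cdot d\sigma_{\text{w.o.}}^{2}/N$ would use a refined construction that, in order to preserve $\|\theta_{t}^{(j)} - \theta_{0}^{(j)}\| = q_{t}$ for every $t$, sets $\theta_{t}^{(j)} = \theta_{0}^{(j)} + q_{t} u_{t}^{(j)}$ with $u_{t}^{(j)}$ rotated as a function of $j$ so that the dependence of $\theta_{t}^{(j)}$ on $v_{j}$ cancels to first order, keeping the effective per-sample KL of order $1/\sigma_{\text{w.o.}}^{2}$ rather than $1/\sigma_{t}^{2}$.

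For the second case I would apply Le Cam with two hypotheses sharing $\theta_{t_{\text{w.o.}}}^{(1)} = \theta_{t_{\text{w.o.}}}^{(2)}$ and with $\theta_{0}^{(1)}, \theta_{0}^{(2)}$ placed at Euclidean distance $\Theta(q_{\text{w.o.}})$ on the sphere of radius $q_{\text{w.o.}}$ around $\theta_{t_{\text{w.o.}}}$---for instance, along orthogonal directions from its center. This zeros KL contributions from $\mathcal{M}_{t_{\text{w.o.}}}$, and freezing every other source as before zeros theirs, leaving only $\mathcal{M}_{0}$ to distinguish, with adaptive $N$-sample KL at most $Nq_{\text{w.o.}}^{2}/\sigma_{0}^{2}$. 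In the relevant regime this is a $d$-dependent constant (otherwise the plain target estimator already achieves the claimed bound), and Le Cam delivers $L \gtrsim q_{\text{w.o.}}^{2}$.

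The main obstacle is the adaptive sampler's freedom to allocate all $N$ samples to whichever source has the smallest $\sigma_{t}^{2}$: the construction must guarantee that no source with $\sigma_{t}^{2} < \sigma_{\text{w.o.}}^{2}$ carries nontrivial KL between the constructed hypotheses. For $\boldsymbol{\Psi}_{*}$ and for the $\boldsymbol{\Psi}_{=}$ two-point construction this is handled by freezing those sources, but for the $\boldsymbol{\Psi}_{=}$ Fano argument in Case 1, where the distance constraints force $\theta_{t}^{(j)}$ to track $\theta_{0}^{(j)}$, verifying that a rotation of the offset direction achieves the bound $\|\theta_{t}^{(j)} - \theta_{t}^{(j')}\|^{2} \lesssim \|v_{j} - v_{j'}\|^{2}\cdot(\sigma_{t}^{2}/\sigma_{\text{w.o.}}^{2})$ uniformly over the packing is the central technical step.
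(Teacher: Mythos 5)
Your proposal captures the right high-level architecture (reduction to testing, freezing sources that would leak information through low noise, separate treatment of the two regimes), and your $\boldsymbol{\Psi}_{*}$ construction for the noise-dominated case — a fixed offset $u$ with all non-designated sources frozen — is a legitimate variant of what the paper does. However, there are two concrete gaps.

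First, Case 2 (bias dominated) does not go through with Le Cam's two-point method. Two points separated by $\Theta(q_{\text{w.o.}})$ and observed only through $\mathcal{M}_{0}$ incur $N$-sample KL of order $Nq_{\text{w.o.}}^{2}/\sigma_{0}^{2}$, and this is only $O(1)$ when $q_{\text{w.o.}}^{2}\lesssim\sigma_{0}^{2}/N$. But the regime of Case 2 is $q_{\text{w.o.}}^{2}\in(\nicefrac{d\sigma_{\text{w.o.}}^{2}}{N},\nicefrac{d\sigma_{0}^{2}}{N}]$, so the KL can be as large as $\Theta(d)$ and the total-variation distance in Le Cam can be $1-o(1)$, trivializing the bound. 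Your hedging sentence about ``the plain target estimator already achieving the claimed bound'' does not repair this: a matching \emph{upper} bound does not imply the \emph{lower} bound you are claiming. The paper instead places $\theta_{0}^{(j)}=q_{\text{w.o.}}\cdot v^{(j)}$ over an $e^{\Theta(d)}$-packing of the sphere, freezes all sources at the origin, and uses Fano: the denominator $\log K\asymp d$ absorbs the KL budget of $\Theta(d)$ exactly. You need Fano with an exponential packing here, not a two-point test.

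Second, the intermediate claim $L_{N,d}(\boldsymbol{\Psi}_{=}(\boldsymbol{q}),\boldsymbol{\Gamma})\geq\frac{c^{2}(d)}{2}\cdot\frac{d\sigma_{\text{w.o.}}^{2}}{N}$ is the technical heart of the theorem and your ``rotate $u_{t}^{(j)}$ as a function of $j$ so that the dependence on $v_{j}$ cancels to first order'' is not a proof. In $\boldsymbol{\Psi}_{=}(\boldsymbol{q})$ you must maintain $\|\theta_{t}^{(j)}-\theta_{0}^{(j)}\|=q_{t}$ for \emph{every} $t$, including the far sources with $\sigma_{t}^{2}<\sigma_{\text{w.o.}}^{2}$, and a rotation argument does not simultaneously freeze those parameters and preserve an exact distance constraint to a $j$-varying $\theta_{0}^{(j)}$. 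The paper's construction does both at once by a geometric trick you should adopt: take the packing $\{v^{(j)}\}$ on the unit sphere with first coordinate zero; set $\theta_{t}^{(*,j)}=(\sqrt{d\sigma_{\text{w.o.}}^{2}/N}-q_{t})v^{(j)}$ for close sources $t\leq t_{\text{w.o.}}$ (collinear with $\theta_{0}^{(*,j)}$, exact distance $q_{t}$ by unit norm) and $\theta_{t}^{(*,j)}=(\sqrt{q_{t}^{2}-d\sigma_{\text{w.o.}}^{2}/N},0,\ldots,0)$ for far sources (orthogonal to all $v^{(j)}$, so the distance to $\theta_{0}^{(*,j)}$ is exactly $q_{t}$ by Pythagoras, yet $j$-independent). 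Collinearity of the close sources further lets you invoke a data-processing/simulation argument to show that an optimal tester samples from a single model with effective variance at least $\sigma_{\text{w.o.}}^{2}$, which is cleaner and tighter than an adaptive KL chain rule. This construction is the missing ingredient; without it the $\boldsymbol{\Psi}_{=}$ part of Case 1 does not hold.
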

Thus, if the noise variance $\nicefrac{d\sigma_{\text{w.o.}}^{2}}{N}$
of the weak oracle model dominates $q_{\text{w.o.}}^{2}$ the minimax
risk lower bound holds even for the localized set $\boldsymbol{\Psi}_{=}(\boldsymbol{q})$.
Otherwise, it holds for the single localized-source set $\boldsymbol{\Psi}_{*}(q_{\text{w.o.}})$.
The proof is in Appendix \ref{subsec:Proof-of-Theorem-second-minimax}
and utilizes Fano's method to obtain a bound that correctly depends
on $d$. The construction of $\boldsymbol{\Theta}_{\text{test}}$
is designed so that the optimal policy $\pi$ is non-adaptive. 

\section{Future research \label{sec:Summary-and-open}}

While the elimination approach is natural for CL, it would interesting
to investigate other algorithmic approaches to compete with the weak-oracle
learner. For lower bounds, it is interesting to develop and directly
utilize bounds on adaptive hypothesis testing, or to utilize Bayesian
methods. Additional interesting settings are the CL problem under
a low-dimensional common structure of the parameters (as considered
by \citet{du2021fewshot,xu2022statistical}), or other similarity
measures between parameters, \eg, the sparsity level $\|\theta_{0}-\theta_{t}\|_{0}$.
Finally, it would be interesting to develop algorithms and extend
the analysis to non-parametric and distribution-free settings.

\appendix

\paragraph*{Organization of the appendix}

In Appendix \ref{sec:Related-work}, we review related work in detail.
In Appendix \ref{sec:Notation-conventions}, we set up notation conventions.
In Appendix \ref{sec:High-probability-bounds} we derive, for completeness,
a high probability bound on the empirical mean estimator. In Appendix
\ref{sec:The-linear-regression}, we formulate the parametric linear
regression setting, considered by \citet{xu2022statistical} instead
of our similar parametric setting of Gaussian mean estimation, and
derive, for completeness, a high probability bound on the projected
LS estimator. In Appendix \ref{sec:Comments-on-Xu-Tewari}, we make
a detailed comparison with the results of \citet{xu2022statistical}.
In Appendix \ref{sec:Proofs-for-single-source-model}, we provide
the proof for Theorem \ref{thm: Single source CL algorithm}, which
bounds the loss/risk for our elimination method in the case of a single
source model $(T=1)$, and to this end we introduce the \emph{source
elimination lemma} (Lemma \ref{lem: iden}). In Appendix \ref{sec:Proofs-for-Section-algorithm},
we provide the proof of Theorem \ref{thm: Multiple source CL algorithm}
that establishes theoretical guarantees on the loss/risk of Algorithm
\ref{alg: CL multiple sources}, used for multiple source models ($T>1$).
In Appendix \ref{sec:unknown noise covariance}, we generalize Algorithm
\ref{alg: CL multiple sources} and Theorem \ref{thm: Multiple source CL algorithm}
to the case that the noise variances or covariance matrices are unknown,
and show that the loss/risk only slightly increases, assuming mild
conditions on the problem parameters hold. In Appendix \ref{sec:Simulations},
we empirically validate Algorithm \ref{alg: CL multiple sources}
on several simulated settings, and show that its risk matches the
theoretical predictions. In Appendix \ref{sec:Proofs-for-Section-Minimax},
we discuss in detail the two aspects of the minimax lower bound presented
in the main text, and then prove the two minimax lower bounds (Theorem
\ref{thm: lower bound global risk constant dimension} and Theorem
\ref{thm: lower local risk high dimension}). 

\section{Related work \label{sec:Related-work}}

\citet{bengio2009curriculum} explained the optimization benefits
of CL by attributing to them regularization capabilities that reduce
the generalization error. One way to achieve this is employing CL
as a \emph{continuation method} \citep{allgower2012numerical}, in
which the difficulty of a task is defined through the smoothness of
its objective function. A continuum of risk functions $\{L_{\lambda}(\theta)\}_{\lambda\in[0,1]}$
is defined, in which $L_{0}(\theta)$ is easiest to optimize over
$\theta$ (\eg, it is very smooth or convex), and $L_{1}(\theta)$
is the target risk function, which is difficult to optimize. The learner
first optimizes $L_{0}(\theta)$ and then increases $\lambda$ while
continuously tracking the minimizer of the risk. Alternatively, improved
generalization error can be achieved by placing weights on the training
samples. At first, larger weights assigned to easy samples (\eg,
far from the margin, or without irrelevant inputs), and then gradually
more weight is assigned to more difficult ones, until all samples
are assigned a uniform weight. The experiments by \citet{bengio2009curriculum}
were the first to suggest that models trained with a simple CL strategy
achieve better convergence rate and accuracy.

Following \citet{bengio2009curriculum}, the CL approach has been
experimented and utilized in various applications, such as computer
vision, natural language processing, computer vision, speech processing,
medical imaging, reinforcement learning and others \citep{portelas2020automatic,narvekar2020curriculum}.
However, despite its intuitive plausibility, CL is not widely used
in machine learning, and the empirical results occasionally exhibit
only moderate effectiveness, if at all. In an attempt to map the advantages
of CL, a few recent papers \citep{wang2021survey,soviany2022curriculum}
have comprehensively surveyed CL methods and their applications. Specifically,
\citet{soviany2022curriculum} classified CL methods to those applied
on the data, the model, the task and the performance measure. A taxonomy
of CL methods was created, based on seven categories representing
aspects such as determinations of the CL pace, diversity in the training
samples, the complexity of the model, and architectures such as teacher-student.
That being said, CL was successfully recently used in foundation models
\citep{brown2020language}, and this is a strong timely motivation
to theoretically explore its benefits.

The statistical benefits of CL were considered by \citet{xu2022statistical},
which introduced the parametric learning problem that we adopt here,
showed that the risk of the strong-oracle learner is a lower bound
on the risk of a CL algorithm \citep[Theorem 1]{xu2022statistical},
and then offered an improved lower bound \citep[Theorem 2]{xu2022statistical}.
Then, a single-round elimination CL algorithm was proposed. Under
a restricted setting, an upper bound on the risk was derived \citep[Theorem 3]{xu2022statistical},
which was revised by \citet[Theorem 3.3]{xu2023benefits}. We compare
the results here with the results of \citet{xu2022statistical} both
in the paper, and in Appendix \ref{sec:Comments-on-Xu-Tewari} in
detail. 

The statistical benefits of CL rely on a quantitative measure of the
similarity between the target task and the source task. Such measures
were extensively addressed in \emph{multitask learning and meta-learning}
(\emph{learning-to-learn})\emph{ }\citep{vilalta2002perspective,baxter2000model,argyriou2006mulfti,maurer2016benefit,du2020few,tripuraneni2020theory,tripuraneni2021provable,hospedales2021meta},
where similarity is obtained, \eg, via a common low-dimensional representation.
In these methods, improved performance on a target task is achieved
by sampling multiple source tasks. Similarity between tasks is also
considered in \emph{transfer learning} \citep{ben2010theory,mohri2012new,yang2013theory,germain2013pac,hanneke2019value,zhuang2020comprehensive,kpotufe2021marginal},
and \emph{domain adaptation} \citep{mansour2009domain,david2010impossibility},
in which the goal is to transfer knowledge between tasks, and the
resulting risk is measured by the task similarity. Finally, in \emph{continual
learning} \citep{allgower2012numerical,zenke2017continual,van2019three,de2021continual},
tasks are presented to the learner in succession, and it should adapt
to the new task while avoiding \emph{forgetting} of previous tasks.
CL refines these methods, as it allows optimizing the order in which
source tasks are presented to the learner (the setting of a common
low-dimensional representation was also explored by \citet{xu2022statistical}). 

The ability of the learner to use its past experience in selecting
an action -- selecting the next model to sample from in the CL setting
-- directly relates the CL problem to online decision-making problems,
such as the \emph{multi-armed bandit} (MAB) problem \citep{bubeck2012regret,slivkins2019introduction,lattimore2020bandit}.
However, in our CL setting, the outcome of the action is the sample
of the chosen model, and unlike the MAB problem, this does not define
an explicit or an immediate reward that is linearly accumulated. Also,
unlike typical MAB problems, it is not obvious \emph{a priori }that
the optimal strategy is to sample from just a single model, even if
the problem-instance is known to the learner (via an oracle). So the
CL problem is also different from a best-arm identification problem
\citep{even2006action,audibert2010best,soare2014best,jamieson2014best,russo2016simple,garivier2016optimal}.
Nonetheless, \citet{graves2017automated} proposed MAB algorithms
for mini-batch scheduling of training of deep neural networks (DNNs).
They discussed reward signals driven by accuracy prediction and network
complexity, and discussed CL for multitask representation learning.
\citet{xu2022statistical} proposed prediction-gain driven task schedulers
that are inspired by MAB approaches. In a similar spirit, the CL problem
is also related to \emph{active learning }(\eg, \citep{hanneke2013statistical,hanneke2014theory,hino2020active})\emph{,
}say, in a supervised setting, in which the learner can choose the
next input to observe a label from based on past observations. From
the perspective of the CL setting, each input defines a task, and
thus a continuum of tasks is available. In active learning, however,
the goal is to minimize the average loss over all inputs (tasks),
and typically there is no specific target task.

Direct theoretical investigations of CL methods are rather scarce.
Beyond \citet{xu2022statistical}, they include \citet{weinshall2018curriculum,hacohen2019power,weinshall2020theory}
and the recent work by \citet{cornacchia2023mathematical,saglietti2022analytical}.
In the sequence of papers \citep{weinshall2018curriculum,hacohen2019power,weinshall2020theory},
the difficulty of a training point is measured by the loss of the
optimal hypothesis, and various relations between difficulty and convergence
rates of a stochastic gradient descent (SGD) training algorithm were
derived, both for linear regression and binary classification. For
example, it was proved that the convergence rate decreases with difficulty,
and for a step size small enough and a fixed loss, the convergence
rate increases with the loss. In practice, ranking the difficulty
of training points is hardly available in advance, and so \citet{weinshall2018curriculum,hacohen2019power,weinshall2020theory}
have proposed a teacher-student architecture, in which a teacher network
transfers the training point difficulty to the student network. A
training algorithm that is a variation of SGD was proposed, in which
at the first steps the input training point is chosen randomly with
bias towards easier examples, and this bias decays with the training
iterations. They empirically showed in a DNN setup that CL increase
the rate of convergence at the beginning of training, and improves
generalization when either the task is difficult, the network is small,
or when strong regularization is enforced.

\citet{cornacchia2023mathematical} explored whether CL can learn
concepts that are computationally hard to learn without curriculum.
It thus focused on learning $k$-parity functions of $d$ uniformly
chosen (unbiased) input bits, which is difficult for iterative algorithms,
such as SGD. Nonetheless, when the input bits are biased, SGD converges
fast to low error, and thus implicitly identifies the support of the
parity function. A CL approach is proposed based on domain adaptation,
in which the input distribution is gradually shifted from deterministic
(input bit is $1$ with probability $1$) to uniform (input bit is
$1$ with probability $1/2$). It is shown that the CL algorithm reduces
the computational complexity from $d^{\Omega(k)}$ to $d^{\Omega(1)}$.
\citet{saglietti2022analytical} proposed a CL model in which the
task includes both relevant and irrelevant features, and a sample
is difficult if it has irrelevant features with large variance. Analytical
expressions were derived for the average learning trajectories of
simple neural networks on this task, and it was shown that convergence
is accelerated at the early iterations of training. A multi-stage
CL algorithm in which the iterations of each stage are regularized
with the solution of the previous stage was proposed, and the algorithm
was analyzed in the high-dimensional limit. This showed the CL is
most effective when there is a small number of relevant features.

\section{Notation conventions \label{sec:Notation-conventions}}

For a positive integer $T$ we denote the sets $[T]:=\{1,2\ldots,T\}$
as well as $\llbracket T\rrbracket:=\{0\}\cup[T]$. For $a,b\in\mathbb{R}$,
we denote $a\wedge b:=\min\{a,b\}$ and $a\vee b:=\max\{a,b\}$. We
use standard Bachmann-Landau asymptotic notation, where, specifically,
we use $\tilde{O}(\cdot)$ and $\tilde{\Omega}(\cdot)$ to hide poly-logarithmic
factors. We also use the asymptotic relation $a\lesssim b$ to indicate
that $a=O(b)$ and $a\asymp b$ to indicate that $a=\Theta(b)$. We
denote the probability of an event ${\cal E}$ by $\P[{\cal E}]$,
the indicator function for this event by $\I[{\cal E}]$, and the
complement of this event by ${\cal E}^{c}$. We denote the $d$-dimensional
Euclidean ball with radius $r$ by $\mathbb{B}^{d}(r):=\{\theta\in\mathbb{R}^{d}\colon\|\theta\|\leq r\}$
and the $(d-1)$-dimensional unit sphere by $\mathbb{S}^{d-1}:=\{\theta\in\mathbb{R}^{d}\colon\|\theta\|=1\}$.
We denote equivalence by $\equiv$ which is usually just a local simplification
of notation. For a pair of probability distributions $P$ and $Q$
on a common alphabet ${\cal X}$ with densities $p$ and $q$ \wrt
to a base measure $\nu$, we denote the total variation distance by
$\dtv(P,Q):=\frac{1}{2}\int|p-q|\d\nu$ and the Kullback-Leibler (KL)
divergence by $\Dkl(P\mid\mid Q):=\int p\log(\nicefrac{p}{q})\d\nu$.
For a pair of random variables (\rv's) $(X,Y)\sim P_{XY}$, the mutual
information is denoted by $I(X;Y):=\Dkl(P_{XY}\mid\mid P_{X}\otimes P_{Y})$. 

\section{A high probability bound on the mean estimation loss \label{sec:High-probability-bounds}}

The following lemma is standard, and bounds the error of the empirical
mean estimator. It is re-stated and proved here for completeness. 
\begin{lem}
\label{lem: high probability MSE for empirical average}Consider $N$
\IID samples $\{Y(i)\}_{i\in[N]}$ from the model 
\[
Y=\theta+\epsilon
\]
where $\epsilon\sim{\cal N}(0,\sigma^{2}\overline{\Sigma})$ where
$\theta\in\mathbb{R}^{d}$ and $\overline{\Sigma}\in\overline{\mathbb{S}}_{d}^{++}$
($\overline{\mathbb{S}}_{d}^{++}$ is the positive semidefinite cone
of matrices whose trace is normalized to $d$). Let $\overline{\theta}(N)=\frac{1}{N}\sum_{i=1}^{n}Y(i)$
be the empirical mean estimator. Then, there exists a numerical constant
$c>1$ so that
\[
\|\overline{\theta}(N)-\theta\|^{2}\leq c\log\left(\frac{e}{\delta}\right)\cdot\frac{d\sigma^{2}}{N}
\]
with probability at least  $1-\delta$, for any $\delta\in(0,1)$. 
\end{lem}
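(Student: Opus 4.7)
The plan is to reduce the claim to a concentration inequality for a weighted sum of independent chi-squared random variables, and then bound the relevant spectral quantities using the trace normalization $\Tr[\overline{\Sigma}] = d$.

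First, I would observe that $\overline{\theta}(N) - \theta = \frac{1}{N}\sum_{i=1}^{N}\epsilon_{i}$, so by independence and Gaussianity,
\[
\overline{\theta}(N) - \theta \sim {\cal N}\!\left(0, \frac{\sigma^{2}}{N}\overline{\Sigma}\right).
\]
Writing the spectral decomposition $\frac{\sigma^{2}}{N}\overline{\Sigma} = U\Lambda U^{\top}$ with $\Lambda = \diag(\lambda_{1},\ldots,\lambda_{d})$ and applying the orthogonal change of variables $U^{\top}(\overline{\theta}(N)-\theta)$, I would rewrite
\[
\|\overline{\theta}(N)-\theta\|^{2} \overset{d}{=} \sum_{k=1}^{d}\lambda_{k}\xi_{k}^{2},
\]
where $\xi_{1},\ldots,\xi_{d}$ are i.i.d.\ standard normal. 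Note that $\sum_{k}\lambda_{k} = \Tr[\Lambda] = \frac{\sigma^{2}}{N}\Tr[\overline{\Sigma}] = \frac{d\sigma^{2}}{N}$, and since all eigenvalues of $\overline{\Sigma}$ are nonnegative with trace $d$, $\|\Lambda\|_{\mathrm{op}} \leq \frac{d\sigma^{2}}{N}$, which in turn yields $\Tr[\Lambda^{2}]\leq\|\Lambda\|_{\mathrm{op}}\Tr[\Lambda] \leq (d\sigma^{2}/N)^{2}$.

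Next I would invoke the Laurent--Massart tail bound (or equivalently the Hanson--Wright inequality specialized to Gaussian quadratic forms): for any $t\geq 0$,
\[
\P\!\left[\sum_{k=1}^{d}\lambda_{k}\xi_{k}^{2} \geq \Tr[\Lambda] + 2\sqrt{\Tr[\Lambda^{2}]\,t} + 2\|\Lambda\|_{\mathrm{op}}\,t\right] \leq e^{-t}.
\]
Setting $t = \log(1/\delta)$ and plugging in the spectral bounds gives
\[
\|\overline{\theta}(N)-\theta\|^{2} \leq \frac{d\sigma^{2}}{N}\left(1 + 2\sqrt{\log(1/\delta)} + 2\log(1/\delta)\right)
\]
with probability at least $1-\delta$. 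Finally I would absorb the three terms into $c\log(e/\delta)$ for a universal constant $c>1$, using $1 + 2\sqrt{s} + 2s \leq c(1+s)$ for $s\geq 0$ with a suitable $c$.

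There is essentially no obstacle here beyond quoting a standard Gaussian quadratic form tail bound; the only care needed is in exploiting the normalization $\Tr[\overline{\Sigma}]=d$ to ensure the result depends on $d$ (through $\Tr[\Lambda]$) rather than on $\|\overline{\Sigma}\|_{\mathrm{op}}$ alone, which is what makes the bound dimension-free when $\overline{\Sigma}$ is well-conditioned and ensures the correct parametric rate $d\sigma^{2}/N$ in general.
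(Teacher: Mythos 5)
Your proof is correct and follows essentially the same approach as the paper: both reduce the claim to a Gaussian quadratic-form tail bound and then control the spectral quantities via the trace normalization $\Tr[\overline{\Sigma}]=d$. The paper cites a lecture-note statement of the bound (in terms of $\|\cdot\|_{\mathrm{op}}$ and $\|\cdot\|_{\mathrm{F}}$) and uses $\|\sigma^{2}\overline{\Sigma}\|_{\text{op}}\leq\|\sigma^{2}\overline{\Sigma}\|_{\text{F}}\leq\Tr[\sigma^{2}\overline{\Sigma}]$, whereas you diagonalize and invoke Laurent--Massart directly, which is the same inequality modulo notation.
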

\begin{proof}
It was shown by \citet[Lecture 4]{rigollet2020math} that with probability
at least $1-\delta$
\[
\left|\|\overline{\theta}(N)-\theta\|^{2}-\frac{\Tr[\sigma^{2}\overline{\Sigma}]}{N}\right|\leq c\left(\frac{\|\sigma^{2}\overline{\Sigma}\|_{\text{op}}}{N}\log\left(\frac{2}{\delta}\right)+\frac{\|\sigma^{2}\overline{\Sigma}\|_{\text{F}}}{N}\sqrt{\log\left(\frac{2}{\delta}\right)}\right)
\]
for some numerical constant $c>0$, which can be taken to be $c\geq1$.
Since $\|\sigma^{2}\overline{\Sigma}\|_{\text{op}}\leq\|\sigma^{2}\overline{\Sigma}\|_{\text{F}}\leq\Tr[\sigma^{2}\overline{\Sigma}]$
it holds that \textbf{
\begin{align*}
\|\overline{\theta}(N)-\theta\|^{2} & \leq\frac{\Tr[\sigma^{2}\overline{\Sigma}]}{N}\cdot c\left(1+\log\left(\frac{e}{\delta}\right)+\sqrt{\log\left(\frac{e}{\delta}\right)}\right)\\
 & \leq\frac{\Tr[\sigma^{2}\overline{\Sigma}]}{N}\cdot3c\log\left(\frac{e}{\delta}\right).
\end{align*}
}
\end{proof}

\section{The linear regression setting and high probability bounds on the
loss \label{sec:The-linear-regression}}

Rather than the parametric mean estimation problem considered in this
paper, \citet{xu2022statistical} considered the parametric linear
regression problem. We show that under the assumptions of \citet{du2021fewshot,xu2022statistical},
and for the purpose of the analysis in this paper, the parametric
mean estimation and linear regression problems are similar, which
allows to translate our results for the former to the latter, with
slight modifications. 

In the linear regression setting, the sample space is ${\cal Z}=\mathbb{R}^{d}\times\mathbb{R}$
and 
\[
\Phi_{t}:=\mathbb{R}^{d}\times\mathbb{R}_{+}\times\overline{\mathbb{S}}_{d}^{++}
\]
where $\phi_{t}:=(\theta_{t},\sigma_{t}^{2},\overline{\Sigma}_{t})$,
and $\overline{\mathbb{S}}_{d}^{++}$ is the positive semidefinite
cone of matrices whose trace is normalized to $d$. The $t$th model,
$t\in\llbracket T\rrbracket$, is given by 
\begin{equation}
{\cal M}_{t}\colon\quad Y_{t}=\langle X_{t},\theta_{t}\rangle+\epsilon_{t},\label{eq: linear regression models}
\end{equation}
where $X_{t}\sim{\cal N}(0,\overline{\Sigma}_{t})$ is a random feature
vector, where $\Tr[\overline{\Sigma}_{t}]=d$, $\theta_{t}\in\mathbb{\mathbb{R}}^{d}$
is the unknown parameter vector, and $\epsilon_{t}\sim{\cal N}(0,\sigma_{t}^{2})$
is a Gaussian noise. Upon samplinf from ${\cal M}_{t}$, the learner
observes the $Z_{t}=(X_{t},Y_{t})$. The loss function is given by
the squared prediction error 
\[
\ell\left(\phi,\tilde{\phi}\right):=\left(Y_{t}-\langle X_{t}^{\top},\tilde{\theta}\rangle\right)^{2}=\left(\langle X_{t}^{\top},\theta-\tilde{\theta}\rangle+\epsilon_{t}\right)^{2},
\]
and the excess risk is 
\[
L_{N}(\boldsymbol{\phi},{\cal A})=\E\left[\left(\langle X_{t}^{\top},\theta-\hat{\theta}\rangle+\epsilon_{0}\right)^{2}-\epsilon_{0}^{2}\right]=(\hat{\theta}-\theta_{0})^{\top}\Sigma_{0}(\hat{\theta}-\theta_{0}),
\]
where $\hat{\theta}\equiv\hat{\theta}((A_{i},S_{i})_{i\in[N]})$.
In addition to the assumption that $\|\theta_{t}\|\leq C_{\theta}$
for all $t\in\llbracket T\rrbracket$, we follow \citet{du2021fewshot,xu2022statistical}
and make the following assumption:
\begin{assumption}
\label{assu: condition number on Sigma}For the linear regression
setting, there exists constants $0<\underline{C}_{\Sigma}<\overline{C}_{\Sigma}$
such that
\[
\underline{C}_{\Sigma}\cdot I_{d}\preceq\Sigma_{t}\preceq\overline{C}_{\Sigma}\cdot I_{d}
\]
holds for any $t\in\llbracket T\rrbracket$. 
\end{assumption}
Under Assumption \ref{assu: condition number on Sigma} it can be
asserted that (see the proof of Lemma \ref{lem: high probability MSE for least squares}
in what follows) that
\begin{equation}
L_{N}(\boldsymbol{\phi},{\cal A})\asymp\E\left[\|\hat{\theta}-\theta_{0}\|^{2}\right],\label{eq: risk is equivalent to MSE}
\end{equation}
that is, the excess risk is proportional to the MSE of the parameter,
as in the mean estimation problem, for which $L_{N}(\boldsymbol{\phi},{\cal A})=\E[\|\hat{\theta}-\theta_{0}\|^{2}]$.
Therefore, while, in general, an accurate estimation of the parameter
\emph{suffices} for a low prediction risk, under Assumption \ref{assu: condition number on Sigma},
low prediction risk also \emph{necessitates} accurate estimation.
Therefore, we focused in this paper on the mean estimation setting
in which the risk itself is the MSE of an estimator, rather than the
linear regression setting. This estimation problem is simpler because
it does not involve the randomness of the features, only the noise. 

We next show that, similarly to the mean estimation setting, there
exists an estimator $\overline{\theta}(N)$ and $g(\delta)\colon[0,1]\to\mathbb{R}_{+}$
such that 
\[
\|\bar{\theta}(N)-\theta\|^{2}\leq g(\delta)\cdot\frac{d\sigma^{2}}{N}
\]
holds with probability at least $1-\delta$, for any $\delta\in(0,1)$.
Specifically, we will show next (Lemma \ref{lem: high probability MSE for least squares})
that if $\bar{\theta}(N)$ is the LS estimator, projected on $\mathbb{B}^{d}(C_{\theta})$,
then there exist universal constants $c,c_{1},c_{2}>0$ such that
if 
\begin{equation}
N\geq\frac{1}{c_{1}}\vee\frac{c_{0}\overline{C}_{\Sigma}}{\underline{C}_{\Sigma}}d\label{eq: minimal N for LS estimator analysis}
\end{equation}
then \eqref{eq: parametric error rate definition of c_delta} holds
with $g(\delta)=\nicefrac{c}{\underline{C}_{\Sigma}}\log^{2}(\nicefrac{4}{\delta})$.
Evidently, the error in linear regression and mean estimation behaves
roughly the same, where the former requires the condition \eqref{eq: minimal N for LS estimator analysis},
which can be assumed for the target model and all the $T$ source
models. The dependence on the error is also slightly different, and
is $\Theta(\log^{2}(\nicefrac{1}{\delta}))$ for the linear regression
setting compared to $\Theta(\log(\nicefrac{1}{\delta}))$ (sub-exponential)
for the mean estimation setting. 

To prove this, we will utilize the following high probability lower
bound on the minimal singular value of a random matrix, due to \citet[Theorem 3.1]{koltchinskii2015bounding}:
\begin{thm}[{\citet[Theorem 3.1]{koltchinskii2015bounding}}]
\label{thm:minimal singular values}Let $\boldsymbol{X}\in\mathbb{R}^{N\times d}$
be a random matrix comprised of \IID rows $\{X^{\top}(i)\}_{i\in[N]}$,
which are equal in distribution to $X\in\mathbb{R}^{N}$. Assume that
there exist constants $a,A,B$ so that for every $u\in\mathbb{S}^{d-1}$
\[
a\leq\sqrt{\E\left[\langle u,X\rangle^{2}\right]}\leq A,\quad\sqrt{\E\left[\langle u,X\rangle^{2}\right]}\leq B\cdot\E\left|\langle u,X\rangle\right|.
\]
Then, there exist constants $c_{0},c_{1},c_{2}>0$ so that for any
$N\geq c_{0}B^{4}(\frac{A}{a})^{2}d$
\[
\lambda_{\min}\left(\frac{1}{N}\boldsymbol{X}^{\top}\boldsymbol{X}\right)\geq c_{2}^{2}\frac{a^{2}}{B^{4}}
\]
with probability at least $1-\exp(-c_{1}B^{4}N)$. 
\end{thm}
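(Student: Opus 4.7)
The plan is to apply the small-ball method of Mendelson. Since $\lambda_{\min}\bigl(\tfrac{1}{N}\boldsymbol{X}^{\top}\boldsymbol{X}\bigr) = \inf_{u \in \mathbb{S}^{d-1}} \tfrac{1}{N}\sum_{i=1}^{N} \langle u, X(i)\rangle^{2}$, the goal is a uniform-in-$u$ lower bound on this empirical quadratic form. The crucial reduction is the truncation
\[
\langle u, X(i)\rangle^{2} \;\geq\; \tau^{2}\,\I\!\left\{|\langle u, X(i)\rangle| \geq \tau\right\},
\]
which converts a heavy-tailed quadratic process into a bounded one. It then suffices to show that $\inf_{u\in\mathbb{S}^{d-1}} \tfrac{1}{N}\sum_{i}\I\{|\langle u, X(i)\rangle| \geq \tau\}$ is bounded away from zero with high probability, for an appropriate $\tau$.

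The pointwise small-ball probability follows from Paley--Zygmund combined with the $L^{2}/L^{1}$ comparability hypothesis. For any $u \in \mathbb{S}^{d-1}$,
\[
\P\!\left(|\langle u, X\rangle| \geq \tfrac{1}{2}\E|\langle u, X\rangle|\right) \;\geq\; \tfrac{1}{4}\cdot\frac{(\E|\langle u, X\rangle|)^{2}}{\E\langle u, X\rangle^{2}} \;\geq\; \frac{1}{4B^{2}}.
\]
Since $\E|\langle u, X\rangle| \geq a/B$ by the hypothesis, setting $\tau := a/(2B)$ gives $\P(|\langle u, X\rangle| \geq \tau) \geq \theta_{0} := 1/(4B^{2})$ uniformly in $u$.

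To upgrade this pointwise bound to a uniform empirical bound, I would run a symmetrization-plus-contraction argument. Approximate $\I\{|\cdot|\geq\tau\}$ from below by a $(1/\tau)$-Lipschitz surrogate $\phi$ sandwiched between $\I\{|x|\geq 2\tau\}$ and $\I\{|x|\geq\tau\}$, symmetrize to pass to a Rademacher process, strip off $\phi$ via the contraction principle (picking up a factor of $1/\tau$), and then control the resulting linear Rademacher complexity on the sphere:
\[
\E\,\sup_{u\in\mathbb{S}^{d-1}} \Bigl|\tfrac{1}{N}\sum_{i=1}^{N} \epsilon_{i}\langle u, X(i)\rangle\Bigr| \;\leq\; \frac{1}{\sqrt{N}}\,\sqrt{\Tr\bigl(\E[XX^{\top}]\bigr)} \;\leq\; A\sqrt{d/N}.
\]
The expected uniform deviation is thus $\lesssim (AB/a)\sqrt{d/N}$, and an exponential tail $\exp(-c_{1}N)$ around the mean is supplied by McDiarmid's inequality, since swapping a single $X(i)$ changes the empirical indicator supremum by at most $1/N$. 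For $N \geq c_{0}B^{4}(A/a)^{2}d$ with $c_{0}$ sufficiently large, the deviation is at most $\theta_{0}/2$, so $\inf_{u}\tfrac{1}{N}\sum_{i}\I\{|\langle u, X(i)\rangle|\geq \tau\} \geq \theta_{0}/2$, and hence $\lambda_{\min}(\tfrac{1}{N}\boldsymbol{X}^{\top}\boldsymbol{X}) \geq \tau^{2}\theta_{0}/2 = \Theta(a^{2}/B^{4})$.

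The main technical obstacle is the contraction step: any Lipschitz surrogate to $\I\{|\cdot|\geq\tau\}$ has Lipschitz constant $\Theta(1/\tau)=\Theta(B/a)$, and propagating this through the Rademacher bound is precisely what forces the $B^{4}(A/a)^{2}d$ scaling in the sample-size condition. A subtler point is that the surrogate must \emph{lower bound} the indicator rather than merely approximate it, which is why one uses the two-sided sandwich above; an upper-bound-only surrogate would push the empirical process in the wrong direction and would not deliver a useful lower bound on the minimum eigenvalue.
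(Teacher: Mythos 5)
The paper does not prove this result; it is cited from \citet{koltchinskii2015bounding} and invoked directly, so there is no internal proof to compare against. Your small-ball strategy --- truncate the quadratic form to an indicator, obtain a pointwise small-ball probability from Paley--Zygmund and the $L_2/L_1$ comparability, upgrade to a uniform bound via symmetrization and Lipschitz contraction, then concentrate with McDiarmid --- is the standard argument and is structurally sound. Two issues, however. First, a threshold mismatch: Paley--Zygmund with $\theta=1/2$ gives $\P(|\langle u,X\rangle|\geq\tfrac{1}{2}\E|\langle u,X\rangle|)\geq 1/(4B^2)$, and since $\E|\langle u,X\rangle|$ may be as small as $a/B$, this only yields a bound at level $a/(2B)$, not $a/B$. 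Your surrogate satisfies $\E\phi\geq\P(|\langle u,X\rangle|\geq 2\tau)$, so with $\tau=a/(2B)$ you would need the small-ball bound at $a/B$, which you do not have; you should take $\tau=a/(4B)$ (a constant adjustment only).

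Second, and more substantively, the contraction step does not yield the stated sample-size scaling. With $\tau\asymp a/B$ the surrogate is $\Theta(B/a)$-Lipschitz, the expected uniform deviation is $\Theta\bigl((AB/a)\sqrt{d/N}\bigr)$, and requiring this to fall below $\theta_0/2\asymp 1/B^2$ forces $N\gtrsim B^6(A/a)^2 d$, not $B^4(A/a)^2 d$: the ``$c_0$ sufficiently large'' you invoke is silently $\Theta(B^2)$, which is not a universal constant. To recover the stated $B^4$ one must avoid paying the Lipschitz penalty in the Rademacher bound, for instance by observing that the class of double-halfspace indicators $\{x\mapsto\I\{|\langle u,x\rangle|\geq\tau\}:u\in\mathbb{S}^{d-1}\}$ is a VC class of dimension $O(d)$ and chaining over it directly, giving a uniform deviation $O(\sqrt{d/N})$ without the $1/\tau$ factor. (Separately, the failure probability as printed, $\exp(-c_1 B^4 N)$, is almost certainly a typo for $\exp(-c_1 N/B^4)$: since $B\geq 1$ by Cauchy--Schwarz, a worse comparability constant should weaken, not strengthen, the probability, and your McDiarmid step with deviation $\Theta(1/B^2)$ indeed produces $\exp(-cN/B^4)$.)
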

We may now prove the high probability bound on the loss of the projected
LS estimator:
\begin{lem}
\label{lem: high probability MSE for least squares}Consider $N$
\IID samples $\{X(i),Y(i)\}_{i\in[N]}$ from the model 
\[
Y=\langle X,\theta\rangle+\epsilon
\]
 where $X\sim{\cal N}(0,\overline{\Sigma})$, with $\overline{\Sigma}\in\overline{\mathbb{S}}_{d}^{++}$,
$\theta\in\mathbb{B}^{d}(C_{\theta})$, and $\epsilon\sim{\cal N}(0,\sigma^{2})$.
Let 
\[
\boldsymbol{X}=\left[\begin{array}{ccc}
- & X^{\top}(1) & -\\
- & X^{\top}(2) & -\\
- & \vdots & -\\
- & X^{\top}(N) & -
\end{array}\right]\in\mathbb{R}^{N\times d}
\]
and $\boldsymbol{Y}=(Y(1),Y(2),\cdots,Y(N))^{\top}\in\mathbb{R}^{N}$
and $\boldsymbol{\epsilon}=(\epsilon(1),\epsilon(2),\cdots,\epsilon(N))^{\top}\in\mathbb{R}^{N}$
so that \textbf{$\boldsymbol{Y}=\boldsymbol{X}\theta+\boldsymbol{\epsilon}$}.
Consider the LS estimator
\[
\overline{\theta}_{\text{LS}}(N)=\left(\boldsymbol{X}^{\top}\boldsymbol{X}\right)^{\dagger}\boldsymbol{X}^{\top}\boldsymbol{Y},
\]
and let $\overline{\theta}(N)$ be $\overline{\theta}_{\text{LS}}(N)$
projected on $\mathbb{B}^{d}(C_{\theta})$. Then, there exist numerical
constants $c>0$ and $c_{0}\vee c_{2}<1<c_{1}$ such that if 
\begin{equation}
N\geq\underline{N}:=\frac{1}{c_{1}}\vee\frac{c_{0}\overline{C}_{\Sigma}}{\underline{C}_{\Sigma}}d\label{eq: condition on minimal N}
\end{equation}
then
\begin{equation}
\|\overline{\theta}(N)-\theta\|^{2}\le\frac{c}{c_{2}^{2}\cdot\underline{C}_{\Sigma}}\log^{2}\left(\frac{4}{\delta}\right)\frac{d\sigma^{2}}{N}\label{eq: high probability upper bound on linear regression estimation}
\end{equation}
with probability at least $1-\delta$, for any $\delta\in(0,1)$.
\end{lem}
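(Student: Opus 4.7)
The plan is to analyze the unprojected least-squares (LS) estimator $\bar{\theta}_{\text{LS}}(N)$ first, and then transfer the resulting bound to the projected estimator $\bar{\theta}(N)$ via the non-expansiveness of Euclidean projection onto a convex set containing the true $\theta$. The two ingredients are (i) a lower bound on the smallest eigenvalue of the Gram matrix $\boldsymbol{X}^{\top}\boldsymbol{X}$, obtained from the small-ball method (Theorem~\ref{thm:minimal singular values}), and (ii) a chi-square concentration bound for the noise term, obtained by conditioning on $\boldsymbol{X}$.

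\emph{Step 1 (Gram matrix).} For $X\sim\mathcal{N}(0,\overline{\Sigma})$ with $\underline{C}_{\Sigma}I_{d}\preceq\overline{\Sigma}\preceq\overline{C}_{\Sigma}I_{d}$, every $u\in\mathbb{S}^{d-1}$ satisfies $\sqrt{\E\langle u,X\rangle^{2}}=\sqrt{u^{\top}\overline{\Sigma}u}\in[\sqrt{\underline{C}_{\Sigma}},\sqrt{\overline{C}_{\Sigma}}]$, and since $\langle u,X\rangle$ is centered Gaussian, $\E|\langle u,X\rangle|=\sqrt{2/\pi}\sqrt{\E\langle u,X\rangle^{2}}$. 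Hence the hypotheses of Theorem~\ref{thm:minimal singular values} hold with $a=\sqrt{\underline{C}_{\Sigma}}$, $A=\sqrt{\overline{C}_{\Sigma}}$ and the universal constant $B=\sqrt{\pi/2}$. The sample-size condition $N\gtrsim B^{4}(A/a)^{2}d\asymp(\overline{C}_{\Sigma}/\underline{C}_{\Sigma})d$ is exactly~\eqref{eq: condition on minimal N} after absorbing the factor $B^{4}=\pi^{2}/4$ into $c_{0}$. The theorem then yields
\[
\lambda_{\min}\!\left(\boldsymbol{X}^{\top}\boldsymbol{X}\right)\ge c_{2}^{2}\,\frac{\underline{C}_{\Sigma}}{B^{4}}\cdot N
\]
with probability at least $1-\exp(-c_{1}B^{4}N)$.

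\emph{Step 2 (noise term).} On this event $\boldsymbol{X}^{\top}\boldsymbol{X}$ is invertible, so $\bar{\theta}_{\text{LS}}(N)-\theta=(\boldsymbol{X}^{\top}\boldsymbol{X})^{-1}\boldsymbol{X}^{\top}\boldsymbol{\epsilon}$. Conditionally on $\boldsymbol{X}$, the vector $\boldsymbol{X}^{\top}\boldsymbol{\epsilon}$ is $\mathcal{N}(0,\sigma^{2}\boldsymbol{X}^{\top}\boldsymbol{X})$, so I would represent it as $\sigma(\boldsymbol{X}^{\top}\boldsymbol{X})^{1/2}Z$ with $Z\sim\mathcal{N}(0,I_{d})$. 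This gives the clean identity
\[
\|\bar{\theta}_{\text{LS}}(N)-\theta\|^{2}=\sigma^{2}Z^{\top}(\boldsymbol{X}^{\top}\boldsymbol{X})^{-1}Z\le\frac{\sigma^{2}\|Z\|^{2}}{\lambda_{\min}(\boldsymbol{X}^{\top}\boldsymbol{X})}.
\]
A Laurent--Massart tail bound for $\|Z\|^{2}\sim\chi_{d}^{2}$ then yields $\|Z\|^{2}\le d+2\sqrt{d\log(4/\delta)}+2\log(4/\delta)$ with probability at least $1-\delta/2$. Combining this with Step~1 and using the coarse bound $d+\log(4/\delta)\lesssim d\log^{2}(4/\delta)$ (which is the source of the squared logarithm in the statement) produces the claimed rate.

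\emph{Step 3 (projection).} Because $\theta\in\mathbb{B}^{d}(C_{\theta})$ and $\bar{\theta}(N)$ is the Euclidean projection of $\bar{\theta}_{\text{LS}}(N)$ onto the convex set $\mathbb{B}^{d}(C_{\theta})$, projection is a contraction toward any point of the set, so $\|\bar{\theta}(N)-\theta\|\le\|\bar{\theta}_{\text{LS}}(N)-\theta\|$, and~\eqref{eq: high probability upper bound on linear regression estimation} follows.

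The main obstacle will be the constant bookkeeping in the union bound: the Koltchinskii--Mendelson failure probability $\exp(-c_{1}B^{4}N)$ is (essentially) independent of $\delta$, while the Gaussian-quadratic-form bound is naturally $\delta$-dependent. Reconciling these so that the total failure probability is $\le\delta$ for every $\delta\in(0,1)$, while keeping the sample-size condition in the stated form~\eqref{eq: condition on minimal N}, requires one to either absorb a $\log(1/\delta)$ factor into the final $\log^{2}(4/\delta)$ term or tighten the constants $c_{0},c_{1},c_{2}$ inherited from Theorem~\ref{thm:minimal singular values}. Everything else is a direct Gaussian computation.
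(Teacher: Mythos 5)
Your Steps 1--2 are sound, and the representation $\bar{\theta}_{\text{LS}}(N)-\theta=(\boldsymbol{X}^{\top}\boldsymbol{X})^{-1}\boldsymbol{X}^{\top}\boldsymbol{\epsilon}=\sigma(\boldsymbol{X}^{\top}\boldsymbol{X})^{-1/2}Z$ is a clean way to reach the bound $\sigma^{2}\|Z\|^{2}/\lambda_{\min}(\boldsymbol{X}^{\top}\boldsymbol{X})$. But the obstacle you flag at the end is a genuine gap, not bookkeeping: with $B=\sqrt{\pi/2}$ fixed, the Koltchinskii--Mendelson failure probability $\exp(-c_{1}B^{4}N)$ is a \emph{constant} in $\delta$, and for any fixed $N$ there exist $\delta$ small enough that $\exp(-c_{1}B^{4}N)+\delta/2>\delta$. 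On the K-M failure event $\|\bar{\theta}_{\text{LS}}(N)-\theta\|$ can be arbitrarily large, and your Step 3 (projection is a contraction) only gives $\|\bar{\theta}(N)-\theta\|\le\|\bar{\theta}_{\text{LS}}(N)-\theta\|$, which does not control this event. So the statement ``for any $\delta\in(0,1)$'' is not established by your argument, and neither of your two suggested fixes closes the gap (tightening $c_{0},c_{1},c_{2}$ cannot make a $\delta$-independent tail $\le\delta$ for all $\delta$).

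The paper's proof closes this hole by a different device. First, it invokes the Rigollet prediction-error bound for the constrained LS directly, giving $\frac{1}{N}\|\boldsymbol{X}(\bar{\theta}(N)-\theta)\|^{2}\lesssim\frac{d\sigma^{2}}{N}\log(2/\delta)$ with probability $1-\delta/2$. Second, it applies Theorem~\ref{thm:minimal singular values} with a $\delta$-\emph{dependent} choice $B^{4}=1\vee\bigl[\tfrac{1}{c_{1}N}\log(2/\delta)\bigr]$; this forces the K-M failure probability down to $\exp(-c_{1}B^{4}N)\le\delta/2$, and the extra $B^{4}$ in the denominator $\lambda_{\min}\gtrsim c_{2}^{2}\underline{C}_{\Sigma}N/B^{4}$ is precisely what produces the $\log^{2}(4/\delta)$ in the final rate (not the coarse $d+\log(4/\delta)\lesssim d\log^{2}(4/\delta)$ you used). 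Third, when $\delta$ is so small that this enlarged $B$ violates the K-M sample-size requirement $N\ge c_{0}(\overline{C}_{\Sigma}/\underline{C}_{\Sigma})B^{4}d$, the paper falls back on the \emph{boundedness} that projection onto $\mathbb{B}^{d}(C_{\theta})$ gives, namely $\|\bar{\theta}(N)-\theta\|^{2}\le4C_{\theta}^{2}$ deterministically, and verifies that in this small-$\delta$ regime the claimed bound already exceeds $4C_{\theta}^{2}$. This is the role the projection actually plays in the proof; the contraction property you cite is correct but insufficient, since it transfers the error but not a uniform bound.
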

\begin{proof}
It was shown by \citet[Lecture 6]{rigollet2020math} that with probability
at least $1-\nicefrac{\delta}{2}$
\[
\left|\frac{1}{N}\|\boldsymbol{X}\overline{\theta}(N)-\boldsymbol{X}\theta\|^{2}-\frac{\sigma^{2}\cdot\rank(\boldsymbol{X}^{\top}\boldsymbol{X})}{N}\right|\leq c\frac{\sigma^{2}}{N}\log\left(\frac{2}{\delta}\right)
\]
for some numerical constant $c>0$, which can be taken to be $c\geq1$.
Thus, with probability at least $1-\nicefrac{\delta}{2}$
\[
\frac{1}{N}\left\Vert \boldsymbol{X}\left(\overline{\theta}(N)-\theta\right)\right\Vert ^{2}\leq2c\frac{d\sigma^{2}}{N}\log\left(\frac{2}{\delta}\right).
\]
Moreover,
\begin{align*}
\frac{1}{N}\left\Vert \boldsymbol{X}\left(\overline{\theta}(N)-\theta\right)\right\Vert ^{2} & =\frac{1}{N}\left(\overline{\theta}(N)-\theta\right)^{\top}\boldsymbol{X}^{\top}\boldsymbol{X}\left(\overline{\theta}(N)-\theta\right)\\
 & \geq\frac{1}{N}\lambda_{\text{min}}(\boldsymbol{X}^{\top}\boldsymbol{X})\cdot\left\Vert \overline{\theta}(N)-\theta\right\Vert ^{2}
\end{align*}
and so with probability at least $1-\nicefrac{\delta}{2}$
\begin{equation}
\left\Vert \overline{\theta}(N)-\theta\right\Vert ^{2}\le\frac{2c\frac{\sigma^{2}d}{N}\log\left(\frac{2}{\delta}\right)}{\frac{1}{N}\lambda_{\text{min}}(\boldsymbol{X}^{\top}\boldsymbol{X})}.\label{eq: upper bound on LS error with minimal eigenvalue}
\end{equation}
Let $\sigma_{\min}(\boldsymbol{X})$ be the minimal singular value
of a matrix $\boldsymbol{X}$. We next use \citep[Theorem 3.1]{koltchinskii2015bounding},
reproduced as Theorem \ref{thm:minimal singular values} above. To
use this bound, we first verify the qualifying conditions of \citet[Assumption 3.1]{koltchinskii2015bounding}.
For brevity, let $X$ be distributed as $X(1)\sim{\cal N}(0,\Sigma)$.
Now, first, for any $u\in\mathbb{S}^{d-1}$
\[
\E\left[\langle X,u\rangle^{2}\right]=\E\left[u^{\top}XX^{\top}u\right]=u^{\top}\Sigma_{t}u
\]
and so Assumption \ref{assu: condition number on Sigma} results
\[
\sqrt{\underline{C}_{\Sigma}}\leq\sqrt{\E\left[\langle X,u\rangle^{2}\right]}\leq\sqrt{\overline{C}_{\Sigma}}.
\]
Second, since $X_{t}\sim{\cal N}(0,\Sigma)$ it holds that $\langle X,u\rangle\sim{\cal N}(0,u^{\top}\Sigma u)$
and so $\sqrt{\E[\langle X,u\rangle^{2}]}=\sqrt{u^{\top}\Sigma u}$
and $\E|\langle X,u\rangle|=\sqrt{\frac{2}{\pi}u^{\top}\Sigma u}$.
Hence, for any $u\in\mathbb{S}^{d-1}$ and $B\geq\sqrt{\nicefrac{2}{\pi}}$
it holds that 
\[
\sqrt{\E[\langle X,u\rangle^{2}]}\leq B\cdot\E\left|\langle X,u\rangle\right|.
\]
Given these properties, \citet[Theorem 3.1]{koltchinskii2015bounding}
proved that there exist numerical constants $c_{0},c_{1},c_{2}>0$,
so that the following holds: If 
\begin{equation}
N\geq c_{0}\frac{\overline{C}_{\Sigma}}{\underline{C}_{\Sigma}}B^{4}d\label{eq: condition on minimal N for minimal singular value}
\end{equation}
then
\begin{equation}
\frac{1}{N}\lambda_{\text{min}}(\boldsymbol{X}^{\top}\boldsymbol{X})\geq\frac{c_{2}^{2}\cdot\underline{C}_{\Sigma}}{B^{4}}\label{eq: high probability lower bound on the minimal singular value}
\end{equation}
with probability at least $1-\exp(-c_{1}B^{4}N)$. 

Assume that $N\geq\underline{N}=(\nicefrac{1}{c_{1}})\vee(\nicefrac{c_{0}\overline{C}_{\Sigma}d}{\underline{C}_{\Sigma}}).$
Let us choose 
\[
B^{4}=1\vee\left[\frac{1}{c_{1}N}\log\left(\frac{2}{\delta}\right)\right]
\]
for which $B\geq\sqrt{\nicefrac{2}{\pi}}$ holds. Consider two cases
based on the value of $\delta\in(0,1)$: First, assume that $\delta$
is large enough so that
\[
N\geq\sqrt{\frac{c_{0}\overline{C}_{\Sigma}}{c_{1}\underline{C}_{\Sigma}}\log\left(\frac{2}{\delta}\right)d}.
\]
Then, the condition \eqref{eq: condition on minimal N for minimal singular value}
of \citet[Theorem 3.1]{koltchinskii2015bounding}, that is, 
\[
N\geq\frac{c_{0}\overline{C}_{\Sigma}}{\underline{C}_{\Sigma}}\left\{ 1\vee\left[\frac{1}{c_{1}N}\log\left(\frac{2}{\delta}\right)\right]\right\} d,
\]
holds and so \eqref{eq: high probability lower bound on the minimal singular value}
also holds, with probability at least $1-\nicefrac{\delta}{2}$ .
Combining with \eqref{eq: upper bound on LS error with minimal eigenvalue}
and the union bound implies that for all $N\geq\nicefrac{1}{c_{1}}$
\begin{align*}
\left\Vert \overline{\theta}(N)-\theta\right\Vert ^{2} & \le2c\frac{1\vee\left[\frac{1}{c_{1}N}\log\left(\frac{2}{\delta}\right)\right]}{c_{2}^{2}\cdot\underline{C}_{\Sigma}}\log\left(\frac{2}{\delta}\right)\frac{\sigma^{2}d}{N}\\
 & \leq\frac{2c}{c_{2}^{2}\cdot\underline{C}_{\Sigma}}\log^{2}\left(\frac{4}{\delta}\right)\frac{\sigma^{2}d}{N},
\end{align*}
with probability at least $1-\delta$. Second, assume that $\delta$
is too small, so that 
\[
N\leq\sqrt{\frac{c_{0}\overline{C}_{\Sigma}}{c_{1}\underline{C}_{\Sigma}}\log\left(\frac{2}{\delta}\right)d}.
\]
then the condition \eqref{eq: condition on minimal N for minimal singular value}
of \citet[Theorem 3.1]{koltchinskii2015bounding} does not hold. However,
this case is equivalent to 
\begin{equation}
\log\left(\frac{2}{\delta}\right)\geq\frac{N^{2}}{d}\frac{c_{1}\underline{C}_{\Sigma}}{c_{0}\overline{C}_{\Sigma}}.\label{eq: regime of small delta for linear regression analysis}
\end{equation}
Now, since $\|\theta\|\vee\|\overline{\theta}(N)\|\leq C_{\theta}$
it holds with probability $1$ that 
\[
\left\Vert \overline{\theta}(N)-\theta\right\Vert ^{2}\leq4C_{\theta}^{2}.
\]
If we further assume the condition 
\[
N\geq2\cdot\frac{c_{0}^{3/4}c_{2}^{1/2}}{c_{1}^{3/4}c^{1/2}}\frac{\overline{C}_{\Sigma}^{3/4}}{\underline{C}_{\Sigma}^{1/4}}C_{\theta}\frac{d^{1/4}}{\sigma},
\]
then, it holds that 
\begin{align*}
4C_{\theta}^{2} & \leq\frac{c_{1}^{3/2}c\underline{C}_{\Sigma}^{1/2}}{c_{0}^{3/2}c_{2}\overline{C}_{\Sigma}^{3/2}}\cdot\frac{N^{3}}{d^{3/2}}\cdot\frac{\sigma^{2}d}{N}\\
 & \leq\frac{c_{1}^{3/2}c\underline{C}_{\Sigma}^{1/2}}{c_{0}^{3/2}c_{2}\overline{C}_{\Sigma}^{3/2}}\frac{c_{0}^{3/2}\overline{C}_{\Sigma}^{3/2}}{c_{1}^{3/2}\underline{C}_{\Sigma}^{3/2}}\log^{3/2}\left(\frac{2}{\delta}\right)\cdot\frac{\sigma^{2}d}{N}\\
 & =\frac{c}{c_{2}\underline{C}_{\Sigma}}\log^{3/2}\left(\frac{2}{\delta}\right)\cdot\frac{\sigma^{2}d}{N}
\end{align*}
where the second inequality follows from \eqref{eq: regime of small delta for linear regression analysis}. 

Combining both cases, we obtain 
\[
\left\Vert \overline{\theta}(N)-\theta\right\Vert ^{2}\leq\frac{2c}{c_{2}^{2}\cdot\underline{C}_{\Sigma}}\log^{2}\left(\frac{4}{\delta}\right)\frac{\sigma^{2}d}{N}
\]
using $c_{2}<1$ (so that $\nicefrac{1}{c_{2}^{2}}\leq\nicefrac{1}{c_{2}}$).
This implies that \eqref{eq: high probability upper bound on linear regression estimation}
holds with probability at least $1-\delta$. 
\end{proof}

\section{Comments on \citet{xu2022statistical} \label{sec:Comments-on-Xu-Tewari}}

\subsection{The minimax risk lower bound of \citet[Theorem 1]{xu2022statistical}\label{subsec: Xu Theorem 1}}

\citet[Theorem 1]{xu2022statistical} derived a lower bound on the
minimax risk of the strong-oracle learner. The bound is derived for
the linear regression model, but is essentially the same for the mean
estimation problem we consider, and so we describe that result in
the former setting (using our notation). This bound addresses the
minimax risk of a CL algorithm for the class $\boldsymbol{\Psi}_{\leq}(\boldsymbol{q})$
(defined in \eqref{eq: semi-local set}). The minimax risk is derived
for the strong oracle, which knows $\{Q_{t}^{2}\}_{t\in[T]}$, where
$Q_{t}:=\|\theta_{t}-\theta_{0}\|$, and is given by 
\begin{equation}
\frac{d\sigma_{0}^{2}}{N}\wedge\min_{t\in[T]}\left\{ \frac{d\sigma_{t}^{2}}{N}+Q_{t}^{2}\right\} .\label{eq: claimed lower bound for strong oracle Xu}
\end{equation}
This manifests that it is rate optimal for a strong oracle to allocate
all samples to a single model. The proof of this result appears in
\citet[Appendix A]{xu2022statistical}, and is based on the proof
of \citet{mousavi2020minimax}, which provided a lower bound for the
risk of a target task based on a single source model, in a transfer
learning problem. Both proofs reduce the learning problem to an hypothesis
testing problem, and use Fano's method to obtain the lower bound.
Nonetheless, \citet{mousavi2020minimax} considered a transfer learning
problem, in which the number of samples allocated to each of the models
is fixed in advanced. By contrast, the strong oracle utilized in the
proof of \citet[Theorem 1]{xu2022statistical} may optimize the number
of samples allocated to each of the models based on a (partial) knowledge
of $\{\theta_{t}\}_{t\in\llbracket T\rrbracket}$. Thus, the CL problem
should be reduced to an hypothesis testing problem in which the number
of samples allocated to each of the models is determined adaptively.
In turn, Fano's lower bound should be adapted to this setting, as
its standard version \citep{cover2012elements,wainwright2019high}
provides a lower bound on the error probability in an hypothesis testing
problem with a fixed number of samples for each of the models. This
is not reflected in the proof of \citet[Appendix A]{xu2022statistical},
which writes the lower bound resulting from Fano's method as (in our
notation) 
\begin{equation}
U^{2}\left(1-\frac{\log2+\sum_{t=1}^{T}N_{t}\cdot I(J;Y_{t})}{\log K}\right)\label{eq: claimed Fano's inequality Xu}
\end{equation}
where $U^{2}$ is the claimed lower bound on the minimax rate, $K$
is the number of hypotheses, $J\sim\text{Uniform}[K]$. Importantly,
in this bound, $N_{t}$, which is the number of samples allocated
to each of the models, statistically depends on $J$, but then it
is not obvious that \eqref{eq: claimed Fano's inequality Xu} is a
valid application of Fano's bound. That being said, the specific construction
of hypotheses used in the proof, indeed results the lower bound \eqref{eq: claimed lower bound for strong oracle Xu},
and we adopted a similar construction in our proofs. In short, source
task parameters with $q_{t}\gtrsim U$ are chosen to be the same for
all hypotheses, making them independent of $J$, and source task parameters
with $q_{t}\lesssim U$ are chosen as a $U$-packing set, with equal
parameters across these ``close'' models, for each hypothesis. Thus,
the allocation of samples to these source models can be chosen \WLOG
as allocating all samples to just one of them. This reduces the problem
to a single source task, and then it can be assumed, with only a constant
factor in the lower bound, that the source task and the target task
are each allocated $\nicefrac{N}{2}$ of the samples (with probability
$1$). Thus, the lower bound is intact.

In Appendix \ref{subsec:Proof-of-Theorem- first-minimax} we compare
our minimax lower bound (Theorem \ref{thm: lower bound global risk constant dimension})
with the bound of \citet[Theorem 2]{xu2022statistical}. 

\subsection{The algorithm and risk upper bound of \citet[Theorem 3]{xu2022statistical}
\label{subsec: Xu Theorem 3}}

\citet[Section 3.3]{xu2022statistical} proposed and analyzed a two-step
CL algorithm. The setting assumes that there is at least one source
model $t^{*}$ whose parameter equals exactly to the target parameter,
namely $Q_{t^{*}}=0$. Translating their algorithm for linear regression
to our mean estimation setting, the algorithm can be described as
first allocating $\nicefrac{N}{(2T)}$ samples to each of the $T$
source models, to obtain an initial estimates $\{\overline{\theta}_{t}(\nicefrac{N}{(2T)}\}_{t\in[T]}$,
using $\nicefrac{N}{2}$ to obtain an initial estimate for the target
parameter $\overline{\theta}_{0}(\nicefrac{N}{2})$ and then choosing
\[
t^{*}=\argmin_{t\in[T]}\left\Vert \overline{\theta}_{t}(\nicefrac{N}{(2T)})-\overline{\theta}_{0}(\nicefrac{N}{2})\right\Vert 
\]
so $t^{*}$ is an \rv. This algorithm was analyzed by \citet[Appendix C]{xu2022statistical},
though its statement in \citep[Theorem 3]{xu2022statistical} is inaccurate,
and thus was revised by \citet[Theorem 3.3]{xu2023benefits} to the
bound 
\begin{equation}
\|\overline{\theta}_{t^{*}}(N/(2T)-\theta_{0}\|\lesssim\log\left(\frac{Td}{\delta}\right)\cdot\left(\frac{\sigma_{0}^{2}}{N}+\frac{d\sigma_{t^{*}}^{2}}{N/T}+\sqrt{\frac{d}{N}}\right),\label{eq: bound of Xu Thereom 3}
\end{equation}
which holds with probability at least $1-\delta$ (In \citep[Theorem 3]{xu2022statistical}
only the first two terms appear). In \eqref{eq: bound of Xu Thereom 3},
the third term, $\sqrt{\nicefrac{d}{N}}$, prevents this bound from
achieving the typical fast parametric rate $\nicefrac{d\sigma^{2}}{N}$
associated with risks under mean squared error (MSE), and one which
does not vanish when the noise variances vanish, i.e., $\max_{t\in\llbracket T\rrbracket}\sigma_{t}\downarrow0$.
The first term, $\nicefrac{\sigma_{0}^{2}}{N}$, does not appear in
our bound, and its proof of \citet[Appendix C, Lemma 4]{xu2022statistical}
is somewhat unclear. In our setting and notation, the claim therein
can be roughly rephrased as ``\emph{martingale concentration inequality
on the sum $\|\hat{\theta}_{t}-\theta_{0}^{*}\|\cdot\sum_{i=1}^{\nicefrac{N}{2}}\epsilon_{0,i}$}''
where $\{\epsilon_{0,i}\}$ are \IID samples from ${\cal N}(0,\sigma_{0}^{2})$.
However, $\sum_{i=1}^{\nicefrac{N}{2}}\epsilon_{0,i}$ are actually
bounded \WHP as $O(\nicefrac{\sigma_{0}}{\sqrt{N}})$, and not as
$O(\nicefrac{\sigma_{0}^{2}}{N})$. We next discuss the second term
in \eqref{eq: bound of Xu Thereom 3}, $\nicefrac{d\sigma_{t^{*}}^{2}}{(\nicefrac{N}{T})}$.
Due to the assumption that there exists $t$ with $Q_{t}=0$ one may
speculate that this term would generalize to 
\[
\tilde{O}\left(Q_{t^{*}}^{2}+\frac{d\sigma_{t^{*}}^{2}}{\nicefrac{N}{T}}\right)
\]
when this assumption does not hold. It is not clear what can be assured
on $t^{*}$, and so it cannot be directly compared to our bound, in
which $t^{*}$ is chosen from a subset ${\cal T}\subseteq\llbracket T\rrbracket$,
where under favorable conditions it holds that ${\cal T}={\cal T}_{\text{w.o.}}$.
More importantly, our bound is 
\[
\min_{t\in{\cal T}}\tilde{O}\left(Q_{t}^{2}+\frac{d\sigma_{t}^{2}}{N}\right)
\]
and the noise term in our bound is $\nicefrac{d\sigma_{t^{*}}^{2}}{N}$,
which is a factor of $T$ smaller than the second term in \eqref{eq: bound of Xu Thereom 3}.
Finally, we prove a high probability bound as well as a bound that
holds for the risk (expected loss).

\section{Proof of Theorem \ref{thm: Single source CL algorithm} (a single
source model $T=1$) \label{sec:Proofs-for-single-source-model}}

The success of the method proposed in Theorem \ref{thm: Single source CL algorithm}
it is based on the ability of the learner to reliably eliminate the
source model whenever $Q_{1}^{2}$ is large. This ability is proved
in the following \emph{source elimination lemma}. It is stated in
slightly greater generality than needed for the $T=1$ setting, in
the sense that it also considers the case in which the noise variance
in the source model is \emph{larger} than the noise variance in the
target model (that is, the case $\lambda_{1}^{2}>\lambda_{0}^{2}$
in what follows). This form will be necessary for the analysis of
the multiple source models $T>1$. 
\begin{lem}[The source elimination lemma]
\label{lem: iden}For $t=0,1$, let $\theta_{t},\tilde{\theta}_{t}\in\mathbb{R}^{d}$
and $\lambda_{t}\in\mathbb{R}_{+}$ be such that 
\[
\|\tilde{\theta}_{t}-\theta_{t}\|^{2}\leq\lambda_{t}^{2}.
\]
Set $Q_{1}^{2}:=\|\theta_{1}-\theta_{0}\|^{2}$ and $\lambda_{\text{max}}:=\lambda_{0}\vee\lambda_{1}$.
Then, there exists a numerical constant $\nu\in[\nicefrac{1}{27},1]$
so that

\begin{equation}
\begin{cases}
Q_{1}^{2}\geq\nu\cdot\lambda_{\text{max}}^{2}, & \text{\emph{if}}\,\|\tilde{\theta}_{0}-\tilde{\theta}_{1}\|^{2}\geq10\lambda_{\text{max}}^{2}\\
Q_{1}^{2}\leq\frac{\lambda_{\text{max}}^{2}}{\nu}, & \text{\emph{otherwise}}
\end{cases}.\label{eq: basic identification}
\end{equation}
\end{lem}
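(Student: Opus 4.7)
The plan is to prove both branches of \eqref{eq: basic identification} by two applications of the triangle inequality, one for $\|\tilde\theta_0-\tilde\theta_1\|$ and one for $Q_1$, and then to pick the largest $\nu$ that makes both branches simultaneously valid. The key algebraic fact I will lean on is the power--mean / approximate triangle inequality $(a+b+c)^2\le 3(a^2+b^2+c^2)$, which replaces the squared Euclidean norm's failure of the true triangle inequality; equivalently I can work with the unsquared norms and just apply Cauchy--Schwarz at the end. The hypothesis $\|\tilde\theta_t-\theta_t\|\le\lambda_t\le\lambda_{\max}$ for $t=0,1$ is the only input on the estimates, so everything is deterministic once we are on the good event.

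For the first branch, suppose $\|\tilde\theta_0-\tilde\theta_1\|^2\ge 10\lambda_{\max}^2$. I write
\[
\|\tilde\theta_0-\tilde\theta_1\|\le \|\tilde\theta_0-\theta_0\|+\|\theta_0-\theta_1\|+\|\theta_1-\tilde\theta_1\|\le \lambda_0+Q_1+\lambda_1\le 2\lambda_{\max}+Q_1,
\]
and then square using $(a+b)^2\le 2(a^2+b^2)$ (or the three-term version above) to obtain $\|\tilde\theta_0-\tilde\theta_1\|^2\le c_1\lambda_{\max}^2+c_2 Q_1^2$ for small explicit constants $c_1,c_2$. Combining with the hypothesis forces $Q_1^2\ge \nu_1\lambda_{\max}^2$ for some explicit $\nu_1$; a clean accounting gives $\nu_1\ge 4/3$, so this branch is comfortable.

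For the second branch, suppose $\|\tilde\theta_0-\tilde\theta_1\|^2<10\lambda_{\max}^2$. I write
\[
Q_1=\|\theta_0-\theta_1\|\le \|\theta_0-\tilde\theta_0\|+\|\tilde\theta_0-\tilde\theta_1\|+\|\tilde\theta_1-\theta_1\|\le 2\lambda_{\max}+\sqrt{10}\,\lambda_{\max}=(2+\sqrt{10})\lambda_{\max},
\]
so that $Q_1^2\le (2+\sqrt{10})^2\lambda_{\max}^2=(14+4\sqrt{10})\lambda_{\max}^2$. Since $14+4\sqrt{10}<27$, this yields $Q_1^2\le \lambda_{\max}^2/\nu_2$ with $\nu_2\ge 1/27$.

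Finally I pick any numerical $\nu\in[\nu_2,\min\{\nu_1,1\}]$; since $\nu_2\ge 1/27$ and $\nu_1\ge 4/3>1$, the choice $\nu=1/27$ (or the tighter $\nu=(2+\sqrt{10})^{-2}$) satisfies both inequalities simultaneously and lies in $[1/27,1]$ as claimed. There is no real obstacle here: the whole argument is the interplay between the two triangle inequalities and the chosen threshold $10\lambda_{\max}^2$, and the mild slack between $14+4\sqrt{10}$ and $27$ is exactly what fixes the constant $1/27$ in the statement. The only place to be slightly careful is to use the same $\lambda_{\max}$ on both sides so that a single $\nu$ works uniformly in the two scenarios $\lambda_0\ge\lambda_1$ and $\lambda_1\ge\lambda_0$, which is immediate from $\lambda_t\le\lambda_{\max}$.
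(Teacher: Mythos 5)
Your proof is correct and takes essentially the same route as the paper: both arguments consist of two applications of the triangle inequality, one to upper-bound $\|\tilde\theta_{0}-\tilde\theta_{1}\|$ by $Q_{1}+\lambda_{0}+\lambda_{1}$ and one to upper-bound $Q_{1}$ by $\|\tilde\theta_{0}-\tilde\theta_{1}\|+\lambda_{0}+\lambda_{1}$. The cosmetic difference is that the paper cases on $Q_{1}^{2}$ and proves each branch by contrapositive (using $(a+b)^{2}\le 2a^{2}+2b^{2}$ twice, giving $2\lambda_{0}^{2}+4\lambda_{1}^{2}+4Q_{1}^{2}\le10\lambda_{\max}^{2}$), while you case directly on $\|\tilde\theta_{0}-\tilde\theta_{1}\|^{2}$ and use the three-term bound; the resulting constants agree. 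One small slip in your closing sentence: the second branch requires $\nu\le\nu_{2}=(2+\sqrt{10})^{-2}$, not $\nu\ge\nu_{2}$, so the admissible interval is $[1/27,\,(2+\sqrt{10})^{-2}]$ rather than $[\nu_{2},\min\{\nu_{1},1\}]$; the actual value you select, $\nu=1/27<(2+\sqrt{10})^{-2}$, is nonetheless valid, so the conclusion stands.
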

\begin{proof}
We consider three cases. First suppose that $Q_{1}^{2}\leq\lambda_{\text{max}}^{2}$.
Then, 
\begin{align*}
\|\tilde{\theta}_{0}-\tilde{\theta}_{1}\|^{2} & =\|\tilde{\theta}_{0}-\theta_{0}+\theta_{1}-\tilde{\theta}_{1}+\theta_{0}-\theta_{1}\|^{2}\\
 & \trre[\leq,*]2\|\tilde{\theta}_{0}-\theta_{0}\|^{2}+2\|\theta_{1}-\tilde{\theta}_{1}+\theta_{0}-\theta_{1}\|^{2}\\
 & \trre[\leq,*]2\|\tilde{\theta}_{0}-\theta_{0}\|^{2}+4\|\theta_{1}-\tilde{\theta}_{1}\|^{2}+4\|\theta_{0}-\theta_{1}\|^{2}\\
 & \leq2\lambda_{0}^{2}+4\lambda_{1}^{2}+4Q_{1}^{2}\\
 & \leq10\lambda_{\text{max}}^{2},
\end{align*}
using $(a+b)^{2}\leq2a^{2}+2b^{2}$ in $(*)$.\footnote{The constant can be slightly improved if we use the refined inequality
$(a+b)^{2}\leq(1+\zeta^{-1})a^{2}+(1+\zeta)b^{2}$ and optimize over
$\zeta>0$. Note also that $\E[\|\tilde{\theta}_{0}-\tilde{\theta}_{1}\|^{2}]=\lambda_{0}^{2}+\lambda_{1}^{2}+Q_{1}^{2}\leq3\lambda_{\text{max}}^{2}$,
and since $\langle\tilde{\theta}_{0}-\theta_{0},\tilde{\theta}_{1}-\theta_{1}\rangle\lesssim\nicefrac{\lambda_{0}^{2}}{\sqrt{d}}$
\WHP (and similarly for the other mixed terms), the random error
concentrates fast around this expected value. Thus, the constant $10$
appearing in the bound can be improved by at most a factor of $\nicefrac{10}{3}$.} This shows that the first case in \eqref{eq: basic identification}
holds even for $\nu=1$. Second, suppose that $Q_{1}^{2}\geq27\cdot\lambda_{\text{max}}^{2}$.
Then,
\begin{align*}
\|\tilde{\theta}_{0}-\tilde{\theta}_{1}\| & \trre[\geq,a]\|\tilde{\theta}_{1}-\theta_{0}\|-\|\tilde{\theta}_{0}-\theta_{0}\|\\
 & \trre[\geq,a]\|\theta_{1}-\theta_{0}\|-\|\tilde{\theta}_{1}-\theta_{1}\|-\|\tilde{\theta}_{0}-\theta_{0}\|\\
 & \geq Q_{1}-\lambda_{1}-\lambda_{0}\\
 & \trre[\geq,b]\sqrt{27}\cdot\lambda_{\text{max}}-2\lambda_{\text{max}}\\
 & \geq\sqrt{10}\lambda_{\text{max}},
\end{align*}
where $(a)$ follows from the triangle inequality (twice), $(b)$
follows since $Q_{1}^{2}\geq27\cdot\lambda_{\text{max}}^{2}$ was
assumed. This shows that the second case in \eqref{eq: basic identification}
holds for $\nu=\nicefrac{1}{27}$. Thirdly, if $\lambda_{\text{max}}^{2}\leq Q_{1}^{2}\leq27\lambda_{\text{max}}^{2}$
then both statements in \eqref{eq: basic identification} hold (for
$\nu=\nicefrac{1}{27}$). Note that the numerical constant $\nu$
is not optimized, and so there exists some maximal $\nu\in[\nicefrac{1}{27},1]$
for which the statement also holds. 
\end{proof}
We next prove Theorem \ref{thm: Single source CL algorithm} using
the source elimination lemma, and show that the estimated output essentially
achieves the loss \eqref{eq: strong oracle} of the strong-oracle
learner.
\begin{proof}[of Theorem \ref{thm: Single source CL algorithm}]
Let ${\cal E}_{\delta}$ be the event in which 
\[
\|\bar{\theta}_{t}(\nicefrac{N}{2})-\theta\|^{2}=\|\tilde{\theta}_{t}-\theta\|^{2}\leq g\left(\frac{\delta}{2}\right)\frac{d\sigma_{t}^{2}}{\nicefrac{N}{2}}=:\lambda_{t}^{2}
\]
holds for both $t=0$ and $t=1$. By the union bound and the definition
of $g(\cdot)$, it holds that $\P[{\cal E}_{\delta}]\geq1-\delta$.
Under our assumptions $\lambda_{\text{max}}=\lambda_{0}\geq\lambda_{1}$.
Assuming ${\cal E}_{\delta}$ occur, Lemma \ref{lem: iden} implies
that if $\|\tilde{\theta}_{0}-\tilde{\theta}_{1}\|^{2}\geq10\lambda_{0}^{2}$
then $Q_{1}^{2}+\lambda_{1}^{2}\geq\nu\cdot\lambda_{0}^{2}$ and so
by the choice in \eqref{eq: choice of final estimate for single source algorithm}
$\hat{\theta}=\tilde{\theta}_{0}$ and 
\[
\|\hat{\theta}-\theta_{0}\|^{2}=\|\tilde{\theta}_{0}-\theta_{0}\|^{2}\leq\lambda_{0}^{2}=\lambda_{0}^{2}\wedge\frac{Q_{1}^{2}+\lambda_{1}^{2}}{\nu}\leq\frac{1}{\nu}\left[\lambda_{0}^{2}\wedge(Q_{1}^{2}+\lambda_{1}^{2})\right].
\]
Otherwise, if $\|\tilde{\theta}_{0}-\tilde{\theta}_{1}\|^{2}\leq10\lambda_{0}^{2}$
then Lemma \ref{lem: iden} implies that $\lambda_{0}^{2}\geq\nu Q_{1}^{2}$
and so $2\lambda_{0}^{2}\geq\nu(Q_{1}^{2}+\lambda_{1}^{2})$. So by
the choice in \eqref{eq: choice of final estimate for single source algorithm}
$\hat{\theta}=\tilde{\theta}_{1}$ and so 
\begin{align*}
\|\hat{\theta}-\theta_{0}\|^{2} & =\|\tilde{\theta}_{1}-\theta_{0}\|^{2}\\
 & =\|\theta_{1}-\theta_{0}+\tilde{\theta}_{1}-\theta_{1}\|^{2}\\
 & \leq2\|\theta_{1}-\theta_{0}\|^{2}+2\|\tilde{\theta}_{1}-\theta_{1}\|^{2}\\
 & \leq2Q_{1}^{2}+2\lambda_{1}^{2}\\
 & \leq\frac{4}{\nu}\left[\lambda_{0}^{2}\wedge(Q_{1}^{2}+\lambda_{1}^{2})\right].
\end{align*}
Combining both cases shows the claimed bound in \eqref{eq: single source high probability bound}.

We next prove the bound on the risk (expected loss). It holds by Lemma
\ref{lem: high probability MSE for empirical average} that 
\[
\|\tilde{\theta}_{t}-\theta_{t}\|^{2}=\|\bar{\theta}_{t}(\nicefrac{N}{2})-\theta_{t}\|^{2}\leq c\log\left(\frac{e}{\delta}\right)\cdot\frac{d\sigma_{t}^{2}}{\nicefrac{N}{2}}
\]
with probability at least $1-\delta$, both for $t\in\{0,1\}$. Thus,
both are sub-exponential \rv's. Consequently, it holds that $\E[\|\tilde{\theta}_{t}-\theta_{t}\|^{4}]\lesssim[\nicefrac{d\sigma_{t}^{2}}{(\nicefrac{N}{2})}]^{2}$.
More accurately, note that for any $r\geq0$ 
\[
\P\left[\|\tilde{\theta}_{t}-\theta_{t}\|^{4}\geq r\right]\leq e\exp\left[-\frac{\sqrt{r}}{c}\frac{\nicefrac{N}{2}}{d\sigma_{t}^{2}}\right],
\]
and so
\begin{align}
\E\left[\|\tilde{\theta}_{t}-\theta_{t}\|^{4}\right] & \trre[=,a]\int_{0}^{\infty}\d r\cdot\P\left[\|\tilde{\theta}_{t}-\theta_{t}\|^{4}\geq r\right]\nonumber \\
 & \leq e\int_{0}^{\infty}\d r\cdot\exp\left[-\frac{\sqrt{r}}{c}\frac{\nicefrac{N}{2}}{d\sigma_{t}^{2}}\right]\nonumber \\
 & \trre[=,b]e\left(\frac{cd\sigma_{t}^{2}}{\nicefrac{N}{2}}\right)^{2}2\int_{0}^{\infty}\d s\cdot s\exp\left[-s\right]\nonumber \\
 & =2ec^{2}\left(\frac{d\sigma_{t}^{2}}{\nicefrac{N}{2}}\right)^{2},\label{eq: fourth moment of estimation error}
\end{align}
where $(a)$ is by tail probability integration, and $(b)$ is using
the change of variables 
\[
s=\frac{\sqrt{r}}{c}\frac{\nicefrac{N}{2}}{d\sigma_{t}^{2}},
\]
 for which 
\[
\frac{\d s}{\d r}=\frac{\nicefrac{N}{2}}{cd\sigma_{t}^{2}}\frac{1}{2\sqrt{r}}=\left(\frac{\nicefrac{N}{2}}{cd\sigma_{t}^{2}}\right)^{2}\frac{1}{2s}.
\]
Hence, 
\begin{align*}
\E\left[\|\tilde{\theta}_{1}-\theta_{0}\|^{4}\right] & =\E\left[\|\tilde{\theta}_{1}-\theta_{1}+\theta_{1}-\theta_{0}\|^{4}\right]\\
 & \trre[\leq,a]8\E\left[\|\tilde{\theta}_{1}-\theta_{1}\|^{4}\right]+8\E\left[\|\theta_{1}-\theta_{0}\|^{4}\right]\\
 & \trre[\leq,b]48c^{2}\left(\frac{d\sigma_{1}^{2}}{\nicefrac{N}{2}}\right)^{2}+8Q_{1}^{4},
\end{align*}
where $(a)$ follows from $(a+b)^{4}\leq8a^{4}+8b^{4}$ (for $a,b\in\mathbb{R})$,
and $(b)$ follows from \eqref{eq: fourth moment of estimation error}.
Since $\sqrt{a+b}\leq\sqrt{a}+\sqrt{b}$ (for $a,b\geq0)$ we also
get
\begin{equation}
\sqrt{\E\left[\|\tilde{\theta}_{1}-\theta_{0}\|^{4}\right]}\leq\sqrt{48c^{2}\left(\frac{d\sigma_{1}^{2}}{\nicefrac{N}{2}}\right)^{2}+8Q_{1}^{2}}\leq7c\left[\frac{d\sigma_{1}^{2}}{\nicefrac{N}{2}}+Q_{1}^{2}\right].\label{eq: upper bound on the fourth moment square root of empirical source}
\end{equation}
Next we note that it holds with probability $1$ that $\hat{\theta}$
is either $\tilde{\theta}_{0}$ or $\tilde{\theta}_{1}$, and so 
\begin{align*}
\E\left[\|\hat{\theta}-\theta_{0}\|^{4}\right] & =\E\left[\|\tilde{\theta}_{0}-\theta_{0}\|^{4}\cdot\I\{\hat{\theta}=\tilde{\theta}_{0}\}\right]+\E\left[\|\tilde{\theta}_{1}-\theta_{0}\|^{4}\cdot\I\{\hat{\theta}=\tilde{\theta}_{1}\}\right]\\
 & \leq\E\left[\|\tilde{\theta}_{0}-\theta_{0}\|^{4}\right]+\E\left[\|\tilde{\theta}_{1}-\theta_{0}\|^{4}\right].
\end{align*}
Thus, 
\begin{align}
\sqrt{\E\left[\|\hat{\theta}-\theta_{0}\|^{4}\right]} & \leq\sqrt{\E\left[\|\tilde{\theta}_{0}-\theta_{0}\|^{4}\right]}+\sqrt{\E\left[\|\tilde{\theta}_{1}-\theta_{0}\|^{4}\right]}\nonumber \\
 & \trre[\leq,a]2c\left(\frac{d\sigma_{0}^{2}}{\nicefrac{N}{2}}\right)+7c\left[\frac{d\sigma_{1}^{2}}{\nicefrac{N}{2}}+Q_{1}^{2}\right]\nonumber \\
 & \leq14c\left(\frac{d\sigma_{0}^{2}}{N}+\frac{d\sigma_{1}^{2}}{N}+Q_{1}^{2}\right),\label{eq: upper bound on square root of fourth momen for alg estimator}
\end{align}
where $(a)$ follows from \eqref{eq: fourth moment of estimation error}
(with $t=0$) and \eqref{eq: upper bound on the fourth moment square root of empirical source}.
Using the above, we decompose

\begin{align}
\E\left[\|\hat{\theta}-\theta_{0}\|^{2}\right] & =\E\left[\|\hat{\theta}-\theta_{0}\|^{2}\cdot\I[{\cal E}_{\delta}]\right]+\E\left[\|\hat{\theta}-\theta_{0}\|^{2}\cdot\I[{\cal E}_{\delta}^{c}]\right]\nonumber \\
 & \trre[\leq,a]\E\left[\|\hat{\theta}-\theta_{0}\|^{2}\cdot\I[{\cal E}_{\delta}]\right]+\sqrt{\E\left[\|\hat{\theta}-\theta_{0}\|^{4}\right]\P[{\cal E}_{\delta}^{c}]}\nonumber \\
 & \trre[\leq,b]\frac{14c}{\nu}\left[\log(\frac{2}{\delta})\cdot\min\left\{ Q_{1}^{2}+\frac{d\sigma_{1}^{2}}{N},\frac{d\sigma_{0}^{2}}{N}\right\} +\left(\frac{d\sigma_{0}^{2}}{N}+\frac{d\sigma_{1}^{2}}{N}+Q_{1}^{2}\right)\cdot\sqrt{\delta}\right],\label{eq: decomposition of expected loss to high and low prob events}
\end{align}
where $(a)$ holds by Cauchy-Schwarz inequality, $(b)$ holds by the
high probability bound and \eqref{eq: upper bound on square root of fourth momen for alg estimator}.
Choosing $\delta\leq\delta_{0}$ where 
\[
\delta_{0}:=\left[\frac{\min\left\{ Q_{1}^{2}+\frac{d\sigma_{1}^{2}}{N},\frac{d\sigma_{0}^{2}}{N}\right\} }{\frac{d\sigma_{0}^{2}}{N}+\frac{d\sigma_{1}^{2}}{N}+Q_{1}^{2}}\right]^{2}
\]
assures that 
\begin{equation}
\E\left[\|\hat{\theta}-\theta_{0}\|^{2}\right]\leq\frac{14c}{\nu}\left(\log\left(\frac{2}{\delta}\right)+1\right)\cdot\min\left\{ Q_{1}^{2}+\frac{d\sigma_{1}^{2}}{N},\frac{d\sigma_{0}^{2}}{N}\right\} .\label{eq: bound in expectation single source final bound proof}
\end{equation}
It holds that 
\begin{align*}
\sqrt{\delta_{0}} & =\min\left\{ \frac{Q_{1}^{2}+\frac{d\sigma_{1}^{2}}{N}}{\frac{d\sigma_{0}^{2}}{N}+\frac{d\sigma_{1}^{2}}{N}+Q_{1}^{2}},\frac{\frac{d\sigma_{0}^{2}}{N}}{\frac{d\sigma_{0}^{2}}{N}+\frac{d\sigma_{1}^{2}}{N}+Q_{1}^{2}}\right\} \\
 & \geq\min\left\{ \frac{\sigma_{1}^{2}}{\sigma_{0}^{2}+\sigma_{1}^{2}},\frac{1}{4},\frac{\frac{d\sigma_{0}^{2}}{N}}{2Q_{1}^{2}}\right\} \\
 & \geq\min\left\{ \frac{\sigma_{1}^{2}}{\sigma_{0}^{2}+\sigma_{1}^{2}},\frac{1}{4},\frac{d\sigma_{0}^{2}}{8C_{\theta}^{2}N}\right\} \\
 & =\sqrt{\delta_{*}},
\end{align*}
where the first inequality holds since $\nicefrac{(a_{1}+a_{2})}{(a_{3}+a_{4})}\geq(\nicefrac{a_{1}}{a_{3}})\wedge(\nicefrac{a_{2}}{a_{4}})$
for $a_{1},a_{2},a_{3},a_{4}\geq0$ and since $\sigma_{1}^{2}\leq\sigma_{0}^{2}$.
Inserting $\delta_{*}$ to \eqref{eq: bound in expectation single source final bound proof}
and simplifying leads to the claim bound \eqref{eq: single sources bound in expectation}. 
\end{proof}

\section{Proof of Theorem \ref{thm: Multiple source CL algorithm} (multiple
source models $T>1$) \label{sec:Proofs-for-Section-algorithm}}
\begin{proof}[of Theorem \ref{thm: Multiple source CL algorithm}]
Algorithm \ref{alg: CL multiple sources} uses at most $N$ samples
since we set $\bar{N}=\nicefrac{N}{\bar{r}+2}$. Assume that the event
${\cal E}_{\delta}$ occurs, in which the empirical averages $\bar{\theta}_{0}(\bar{N})$
and $\bar{\theta}_{t}(\bar{N}_{t,r})$ are close to $\theta_{0}$
and $\theta_{t}$, respectively, as
\[
\|\bar{\theta}_{0}(\bar{N})-\theta_{0}\|^{2}\leq g(\bar{\delta})\frac{d\sigma_{0}^{2}}{\bar{N}}
\]
\[
\|\bar{\theta}_{t}(\bar{N}_{t,r})-\theta_{t}\|^{2}\leq g(\bar{\delta})\frac{d\sigma_{t}^{2}}{\bar{N}_{t,r}},
\]
for all $\bar{r}$ rounds, and all $t\in[T]$, and similarly, that
the last round estimate is close to $\theta_{t^{*}}$, that is
\[
\|\overline{\theta}_{t^{*}}(\bar{N})-\theta_{t^{*}}\|^{2}\leq g(\bar{\delta})\frac{d\sigma_{t^{*}}^{2}}{\bar{N}}.
\]
By the union bound, all these $T\bar{r}+2$ events hold with probability
at least $1-\delta$. We continue the analysis assuming that ${\cal E}_{\delta}$
occurs. Under this event, Corollary \ref{cor: source identification}
implies that after the first round only the source models $[T_{1}]$
with $T_{1}=T\beta_{\delta}(1)$ are retained, after the second round
only $[T_{2}]$ with $T_{2}=T\beta_{\delta}(\beta_{\delta}(1))$,
and so on. After $\bar{r}$ rounds, it holds that ${\cal T}_{\text{alg}}=[T_{\bar{r}}]$.
The algorithm then chooses an arbitrary $t^{*}\in[T_{\bar{r}}]$ and
the final high probability bound \eqref{eq: MSE multiple source high probability}
stems from
\begin{align*}
\|\hat{\theta}-\theta_{0}\|^{2} & =\|\overline{\theta}_{t^{*}}(\bar{N})-\theta_{0}\|^{2}\\
 & =\|\overline{\theta}_{t^{*}}(\bar{N})-\theta_{0}+\overline{\theta}_{t^{*}}(\bar{N})-\theta_{t^{*}}\|^{2}\\
 & \trre[\leq,a]2\|\theta_{t^{*}}-\theta_{0}\|^{2}+2\|\overline{\theta}_{t^{*}}(\bar{N})-\theta_{t^{*}}\|^{2}\\
 & \leq2Q_{t^{*}}^{2}+2g(\bar{\delta})\cdot\frac{d\sigma_{t^{*}}^{2}}{\bar{N}}.
\end{align*}
We next consider the bound in expectation, for which we set $g(\delta)=c\log(\nicefrac{e}{\delta})$.
The proof of follows the same lines as the proof in the single source
case (Theorem \ref{thm: Single source CL algorithm}), and so we briefly
outline it for the multiple source case. In general, we will be rather
loose with the upper bounds, since they only affect a logarithmic
factor. Note that the final empirical estimator uses $\bar{N}$ samples.
So, if we let $\tilde{\theta}_{t}=\bar{\theta}_{t}(\bar{N})$ be the
empirical estimator in the last round in which model $t$ is retained,
it holds that 
\[
\E\left[\|\tilde{\theta}_{t}-\theta_{t}\|^{4}\right]\leq2ec^{2}\left(\frac{d\sigma_{t}^{2}}{\bar{N}}\right)^{2}.
\]
Hence, similarly to the derivations leading to \eqref{eq: upper bound on the fourth moment square root of empirical source},
it holds for any $t\in[T]$ that
\[
\sqrt{\E\left[\|\tilde{\theta}_{t}-\theta_{0}\|^{4}\right]}\leq48c^{2}\left(\frac{d\sigma_{t}^{2}}{\bar{N}}\right)^{2}+8Q_{t}^{4}.
\]
Next we note that it holds with probability $1$ that $\hat{\theta}$
is from the set $\{\tilde{\theta}_{t}\}_{t\in\llbracket T\rrbracket}$,
and so 
\[
\E\left[\|\hat{\theta}-\theta_{0}\|^{4}\right]\leq\sum_{t\in\llbracket T\rrbracket}\E\left[\|\tilde{\theta}_{t}-\theta_{0}\|^{4}\right]
\]
Since $\sqrt{\sum_{i}a_{i}}\leq\sum_{i}\sqrt{a_{i}}$ we get
\[
\sqrt{\E\left[\|\hat{\theta}-\theta_{0}\|^{4}\right]}\leq\sum_{t\in\llbracket T\rrbracket}7c\left[\frac{d\sigma_{t}^{2}}{\bar{N}}+Q_{t}^{2}\right]
\]
(with $Q_{0}=0$). Using the above, we decompose as in \eqref{eq: decomposition of expected loss to high and low prob events}
and rearrange to obtain

\begin{align*}
\E\left[\|\hat{\theta}-\theta_{0}\|^{2}\right] & =\E\left[\|\hat{\theta}-\theta_{0}\|^{2}\cdot\I[{\cal E}_{\delta}]\right]+\E\left[\|\hat{\theta}-\theta_{0}\|^{2}\cdot\I[{\cal E}_{\delta}^{c}]\right]\\
 & \leq8c(\bar{r}+2)\cdot\log\left(\frac{(T\bar{r}+2)}{\delta}\right)\cdot\left[\frac{d\sigma_{\overline{t}({\cal T}_{\text{alg}})}^{2}}{N}+Q_{\overline{t}({\cal T}_{\text{alg}})}^{2}+\left(\sum_{t\in\llbracket T\rrbracket}\left[\frac{d\sigma_{t}^{2}}{\bar{N}}+Q_{t}^{2}\right]\right)\sqrt{\delta}\right],
\end{align*}
Choosing $\delta\leq\delta_{0}$ where now
\[
\delta_{0}:=\left[\frac{\frac{d\sigma_{\overline{t}({\cal T}_{\text{alg}})}^{2}}{N}+Q_{\overline{t}({\cal T}_{\text{alg}})}^{2}}{\sum_{t\in\llbracket T\rrbracket}\left[\frac{d\sigma_{t}^{2}}{\bar{N}}+Q_{t}^{2}\right]}\right]^{2}
\]
assures that 
\begin{equation}
\E\left[\|\hat{\theta}-\theta_{0}\|^{2}\right]\leq16c(\bar{r}+2)\cdot\log\left(\frac{(T\bar{r}+2)}{\delta}\right)\cdot\left[\frac{d\sigma_{\overline{t}({\cal T}_{\text{alg}})}^{2}}{N}+Q_{\overline{t}({\cal T}_{\text{alg}})}^{2}\right].\label{eq: bound in expectation multiple source final bound proof}
\end{equation}
It holds that 
\begin{align*}
\sqrt{\delta_{0}} & =\frac{\frac{d\sigma_{\overline{t}({\cal T}_{\text{alg}})}^{2}}{N}+Q_{\overline{t}({\cal T}_{\text{alg}})}^{2}}{\sum_{t\in\llbracket T\rrbracket}\left[\frac{d\sigma_{t}^{2}}{\bar{N}}+Q_{t}^{2}\right]}\\
 & \geq\frac{\min_{t\in\llbracket T\rrbracket}\frac{d\sigma_{t}^{2}}{N}+Q_{t}^{2}}{\sum_{t\in\llbracket T\rrbracket}\left[\frac{d\sigma_{t}^{2}}{\bar{N}}+Q_{t}^{2}\right]}\\
 & \geq\frac{1}{(T+1)}\cdot\frac{\min_{t\in\llbracket T\rrbracket}\frac{d\sigma_{t}^{2}}{N}+Q_{t}^{2}}{\max_{t\in\llbracket T\rrbracket}\frac{d\sigma_{t}^{2}}{\bar{N}}+Q_{t}^{2}}\\
 & \geq\frac{1}{(T+1)}\cdot\frac{\min_{t\in\llbracket T\rrbracket}\frac{d\sigma_{t}^{2}}{N}+Q_{t}^{2}}{\max_{t\in\llbracket T\rrbracket}\frac{d\sigma_{t}^{2}}{\bar{N}}+4C_{\theta}^{2}}\\
 & \geq\frac{1}{(T+1)}\cdot\frac{\min_{t\in\llbracket T\rrbracket}\frac{d\sigma_{t}^{2}}{N}}{\max_{t\in\llbracket T\rrbracket}\frac{d\sigma_{t}^{2}}{\bar{N}}+4C_{\theta}^{2}}\\
 & \geq\frac{1}{2(T+1)}\cdot\left[\min_{t,t'\in\llbracket T\rrbracket}\frac{\sigma_{t}^{2}}{\sigma_{t'}^{2}}\vee\min_{t\in\llbracket T\rrbracket}\frac{d\sigma_{t}^{2}}{8NC_{\theta}^{2}}\right]\\
 & =\sqrt{\delta_{*}}.
\end{align*}
Inserting $\delta_{*}$ to \eqref{eq: bound in expectation multiple source final bound proof}
and simplifying leads to the claim bound \eqref{eq: multiple sources bound in expectation}.
\end{proof}

\section{An extension of Algorithm \ref{alg: CL multiple sources} to unknown
noise covariance matrices \label{sec:unknown noise covariance}}

The covariance matrix of the noise in the mean estimation model ${\cal M}_{t}$
is $\Sigma_{t}=\sigma_{t}^{2}\cdot\overline{\Sigma}_{t}$ where $\overline{\Sigma}_{t}\in\overline{\mathbb{S}}_{d}^{++}$.
We have assumed thus far that $\{\Sigma_{t}\}_{t\in\llbracket T\rrbracket}$
are known to the learner. In this appendix, we consider the case in
which the learner knows that $\overline{\Sigma}_{t}=I_{d}$ for all
$t\in\llbracket T\rrbracket$, but not $\{\sigma_{t}^{2}\}_{t\in\llbracket T\rrbracket}$,
and then the case that $\{\Sigma_{t}\}_{t\in\llbracket T\rrbracket}$
are completely unknown. We show that under rather mild conditions,
Algorithm \ref{alg: CL multiple sources} can be further extended
to these settings too. We focus on the mean estimation setting, though
a similar extension can be made in the linear regression setting too
(Appendix \ref{sec:The-linear-regression}).

The idea is straightforward and is based on a preliminary step of
estimating either $\{\sigma_{t}^{2}\}_{t\in\llbracket T\rrbracket}$
(for the first setting) or $\{\Sigma_{t}\}_{t\in\llbracket T\rrbracket}$
(for the second setting), and then plugging in the estimated values
into Algorithm \ref{alg: CL multiple sources}, instead of the true
values. Additional minor modifications then guarantee an upper bound
on the loss which is of the same order as in the case the noise covariance
is known to the learner. The aforementioned conditions assure that
the additional error due to the use of inaccurate values of $\{\sigma_{t}^{2}\}_{t\in\llbracket T\rrbracket}$
or $\{\Sigma_{t}\}_{t\in\llbracket T\rrbracket}$ is negligible compared
to the loss of the algorithm. 

For the case of unknown $\{\sigma_{t}^{2}\}_{t\in\llbracket T\rrbracket}$,
the condition is provided in Proposition \ref{prop: Algorithm and guarantees for unknown variances},
and requires that $dN\gtrsim\bar{r}T$. In the regime discussed above
in Proposition \ref{prop: beta strictly bounded } it is required
that $\bar{r}=\Theta(\log T)$, and then the qualifying condition
is $dN=\Omega(T\log T)$. This is a fairly mild condition since $N\geq T$
is assumed, and typically one would like to have in practice $N\gg T$
to allow exploration of all source models. The proof of Proposition
\ref{prop: Algorithm and guarantees for unknown variances} is based
on Lemma \ref{lem:empirical variance estimation scaling}, which provides
a \emph{multiplicative} confidence interval for an estimator $\hat{\sigma^{2}}(K)$
that uses $K$ samples to estimate the variance $\sigma^{2}$. It
is shown that $\frac{1}{2}\sigma^{2}\leq\hat{\sigma^{2}}(K)\leq\frac{3}{2}\sigma^{2}$
with probability at least $1-2e^{-d(K-1)/24}$, and this result is
used with $K=\nicefrac{N}{T}$. 

For the case of unknown $\{\Sigma_{t}\}_{t\in\llbracket T\rrbracket}$,
the idea is similar, and compared to the case of unknown $\{\sigma_{t}^{2}\}_{t\in\llbracket T\rrbracket}$,
only Lemma \ref{lem:empirical variance estimation scaling} is adapted.
As shown in Lemma \ref{lem: high probability MSE for empirical average}
(Appendix \ref{sec:High-probability-bounds}), the noise covariance
matrix $\Sigma_{t}$ affects the estimation error of the empirical
mean estimators $\overline{\theta}_{t}(N)$ error via $\varsigma_{t}^{2}:=\frac{1}{d}\Tr[\Sigma_{t}]$.
Lemma \ref{lem:empirical trace estimation scaling} then replaces
the confidence interval on the estimator of $\sigma_{t}^{2}$ with
a similar confidence interval on $\varsigma_{t}^{2}$. It is shown
that $\frac{1}{2}\varsigma^{2}\leq\hat{\varsigma^{2}}(K)\leq\frac{3}{2}\varsigma^{2}$
with probability at least $1-2e^{-[(K-1)/24-\log d]}$, and this result
is used with $K=\nicefrac{N}{T}$. Given Lemma \ref{lem:empirical trace estimation scaling},
the required modifications to Proposition \ref{prop: Algorithm and guarantees for unknown variances}
are minor and thus not explicitly stated. The main modification is
that the condition \eqref{eq: condition for reliability of variance estimation}
is replaced by the more stringent condition $dN\gtrsim d\log d\cdot\bar{r}T.$
That is, $N$ in the case of unknown covariance matrix should be larger
by a factor of $d\log d$ compared to the case of unknown variance.

\subsection{Unknown noise variances}

We employ a preliminary step to Algorithm \ref{alg: CL multiple sources}
in which $\{\sigma_{t}^{2}\}_{t\in\llbracket T\rrbracket}$ are estimated.
Then, the estimated values are plugged into Algorithm \ref{alg: CL multiple sources}
instead of the true values. Since the performance guarantees of Algorithm
\ref{alg: CL multiple sources} are based on the order $\Theta(\nicefrac{d\sigma_{t}^{2}}{N})$,
it suffices that the estimated value of $\sigma_{t}^{2}$ should match
the true value up to multiplicative constants. As we next show, under
mild conditions, the additional error of Algorithm \ref{alg: CL multiple sources}
due to this preliminary estimation step is negligible. The necessary
conditions and modifications are stated in the following proposition: 
\begin{prop}
\label{prop: Algorithm and guarantees for unknown variances}Let $\delta\in(0,1)$
be given. Assume that $N\geq2(\bar{r}+3)(T+1)$ and that 
\begin{equation}
dN\geq48(\bar{r}+3)(T+1)\cdot\log\frac{2}{\bar{\delta}}.\label{eq: condition for reliability of variance estimation}
\end{equation}
Suppose that Algorithm \ref{alg: CL multiple sources} is run with
the following changes:
\begin{enumerate}
\item Line \ref{line: algorithm delta_bar}: Set 
\[
\bar{N}\leftarrow\frac{N}{\bar{r}+3}
\]
 and 
\[
\bar{\delta}\leftarrow\frac{\delta}{T(\bar{r}+1)+3}.
\]
\item The values $\{\sigma_{t}^{2}\}_{t\in\llbracket T\rrbracket}$ are
not input to the algorithm, and instead, there is an additional step
after line \ref{line: algorithm delta_bar}: The noise variances are
estimated as $\hat{\sigma_{t}^{2}}(\nicefrac{\bar{N}}{(T+1)})$ for
$t\in\llbracket T\rrbracket$.
\item \label{enu: changes to unknown variance - sample allocation}Line
\ref{line: algorithm sample allocation}: The number of samples allocated
in the $r$th round to the $t\in{\cal T}_{r-1}$ is 
\[
\hat{N}_{r,t}:=\frac{\bar{N}}{T_{r-1}}\cdot\frac{\hat{\sigma_{t}^{2}}\left(\frac{\bar{N}}{T+1}\right)}{\frac{1}{T_{r-1}}\sum_{t=1}^{T_{r-1}}\hat{\sigma_{t}^{2}}\left(\frac{\bar{N}}{T+1}\right)}.
\]
\item \label{enu: changes to unknown variance - elimination rule}Line \ref{line: algorithm elimination rule}:
Model $t\in{\cal T}_{r-1}$ is eliminated at the $r$th round if 
\[
\|\tilde{\theta}_{0}-\tilde{\theta}_{t}\|^{2}\geq10g(\bar{\delta})\left[\frac{d\cdot\frac{1}{T_{r-1}}\sum_{t=1}^{T_{r-1}}\hat{\sigma_{t}^{2}}\left(\frac{\bar{N}}{T+1}\right)}{\bar{N}/T_{r-1}}\vee\frac{d\hat{\sigma_{0}^{2}}}{\bar{N}}\right].
\]
\end{enumerate}
Then, the performance guarantees of Theorem \ref{thm: Multiple source CL algorithm}
hold with the pre-factor changed from $\bar{r}+2$ to $\bar{r}+3$
and in the definition of $\beta_{\delta}(\tau)$, the constant $\nu$
is replaced by $\nicefrac{\nu}{6}$. 
\end{prop}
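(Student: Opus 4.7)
The plan is to reduce the proof to Theorem \ref{thm: Multiple source CL algorithm} by isolating the variance-estimation step and showing that the multiplicative error in $\{\hat{\sigma_t^2}\}$ only degrades the effective constants in that theorem. First, I would account for the sample budget: with $\bar{N}=\nicefrac{N}{(\bar{r}+3)}$, the preliminary variance-estimation phase consumes exactly $\bar{N}$ samples in total (``one round's worth''), so the remaining $(\bar{r}+2)\bar{N}$ samples support the initial target estimate, the $\bar{r}$ elimination rounds, and the final estimate, matching the sample structure of Algorithm \ref{alg: CL multiple sources}. The events that must hold with probability at least $1-\bar{\delta}$ are the $T+1$ variance estimates, the $T\bar{r}$ mean estimates in the elimination loop, the initial target estimate, and the final estimate, totaling $T(\bar{r}+1)+3$ events; a union bound together with the choice $\bar{\delta}=\delta/(T(\bar{r}+1)+3)$ yields overall reliability $1-\delta$.

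Next, I would establish that condition \eqref{eq: condition for reliability of variance estimation} together with $N\geq 2(\bar{r}+3)(T+1)$ implies that the number of samples $K=\nicefrac{\bar{N}}{(T+1)}$ allocated to each variance estimate is large enough for the empirical-variance concentration lemma (Lemma \ref{lem:empirical variance estimation scaling}) to yield
\[
\tfrac{1}{2}\sigma_{t}^{2}\leq\hat{\sigma_{t}^{2}}(K)\leq\tfrac{3}{2}\sigma_{t}^{2}
\]
with probability at least $1-\bar{\delta}$ per $t\in\llbracket T\rrbracket$. I condition from here on the intersection of this good event with the mean-concentration events.

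The core step is propagation of this multiplicative perturbation through the elimination dynamics. Let $\hat{N}_{t,r}$ denote the plug-in allocation of Item \ref{enu: changes to unknown variance - sample allocation} and $\bar{N}_{t,r}$ the true-variance allocation. Since each ratio $\hat{\sigma_{t}^{2}}/\sigma_{t}^{2}$ lies in $[\nicefrac{1}{2},\nicefrac{3}{2}]$, the numerator of $\hat{N}_{t,r}$ is within a factor of $\nicefrac{3}{2}$ of $\sigma_{t}^{2}$ and its denominator within the same factor of $\overline{\sigma}^{2}(\mathcal{T}_{r-1})$, whence $\bar{N}_{t,r}/\hat{N}_{t,r}\in[\nicefrac{1}{3},3]$; analogously, the elimination threshold in Item \ref{enu: changes to unknown variance - elimination rule} lies within a factor of $\nicefrac{3}{2}$ of its true-variance counterpart. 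Inserting these perturbations into the source-elimination lemma (Corollary \ref{cor: source identification}) shows that the effective barrier $\lambda_{t}^{2}/\nu$ is inflated by at most a multiplicative factor of $6$; equivalently, every elimination decision the true-variance algorithm would make based on $\nu$ is also made by the plug-in algorithm based on $\nicefrac{\nu}{6}$, which is precisely the stated substitution in the definition of $\beta_{\delta}(\tau)$.

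The rest of the argument transcribes the proof of Theorem \ref{thm: Multiple source CL algorithm} with $\bar{r}+2$ replaced by $\bar{r}+3$ (reflecting the extra round spent on variance estimation) and $\nu$ replaced by $\nicefrac{\nu}{6}$ throughout the elimination analysis, with the reliability budget $\bar{\delta}$ updated as above. The main obstacle is the bookkeeping in the propagation step: one must separately track how the multiplicative error in $\hat{\sigma_{t}^{2}}$ affects the per-model sample allocation, the elimination threshold, and the stopping criterion in line \ref{line: stopping criterion} (where $\nicefrac{\sigma_{0}^{2}}{\overline{\sigma}^{2}(\mathcal{T}_{r-1})}$ is replaced by its plug-in estimate), and verify that all three distortions compose into at most a factor of $6$ in the effective $\nu$. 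Everything else is routine once the good event is in force.
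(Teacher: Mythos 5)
Your proposal is correct and follows essentially the same route as the paper's own proof: reallocate one round's worth of samples to a preliminary variance-estimation phase (hence $\bar{N}=N/(\bar{r}+3)$), union-bound over the $T(\bar{r}+1)+3$ accuracy events (hence the new $\bar{\delta}$), invoke Lemma \ref{lem:empirical variance estimation scaling} under condition \eqref{eq: condition for reliability of variance estimation} to get $\hat{\sigma_t^2}/\sigma_t^2\in[\nicefrac{1}{2},\nicefrac{3}{2}]$, and propagate this multiplicative perturbation through the plug-in allocation $\hat{N}_{r,t}$ and the plug-in elimination threshold to show they match the true-variance counterparts up to a factor that inflates the elimination barrier by at most $6$, yielding the $\nu\to\nicefrac{\nu}{6}$ substitution. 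The only minor difference is in the bookkeeping of the constant $6$: the paper splits it as $2\cdot 3$ (a factor $2$ to absorb the gap between the threshold stated in terms of true $\sigma_t^2$ and the one the algorithm evaluates using $\hat{\sigma_t^2}$, and a factor $3$ from the ratio $\nicefrac{\sigma_t^2}{\hat{\sigma_t^2}}\cdot\nicefrac{\sum\hat{\sigma_t^2}}{\sum\sigma_t^2}\leq 2\cdot\nicefrac{3}{2}$), whereas your phrasing attributes it to the allocation ratio being in $[\nicefrac{1}{3},3]$ and the threshold ratio in $[\nicefrac{2}{3},\nicefrac{3}{2}]$; this is an equivalent accounting and arrives at the same conclusion, but if you write it out you should make the composition of the two effects explicit rather than asserting "at most a factor of $6$" in one step.
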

It should be stressed that items \ref{enu: changes to unknown variance - sample allocation}
and \ref{enu: changes to unknown variance - elimination rule} can
indeed be computed by the learner, since they do not depend on the
unknown $\sigma_{t}^{2}$; only on their estimated values. As long
as the mild condition \eqref{eq: condition for reliability of variance estimation}
holds, the modified algorithm is efficient, and performs as well as
Algorithm \ref{alg: CL multiple sources}, up to numerical constants.

The proof of Proposition \ref{prop: Algorithm and guarantees for unknown variances}
is based on the following lemma:
\begin{lem}
\label{lem:empirical variance estimation scaling}Assume that $Y=\theta+\epsilon$
where $\theta\in\mathbb{\mathbb{R}}^{d}$ and $\epsilon\sim{\cal N}(0,\sigma^{2}\cdot I_{d})$.
Let $\delta\in(0,1)$ be given. Consider an estimator for $\sigma^{2}$
based on $K$ \IID samples $\{Y(i)\}_{i\in[K]}$ given by 
\begin{equation}
\hat{\sigma^{2}}(K):=\frac{1}{d(K-1)}\sum_{i=1}^{K}\left\Vert Y(i)-\frac{1}{K}\sum_{i'=1}^{K}Y(i')\right\Vert ^{2}.\label{eq: estimator for sigma square}
\end{equation}
Then, with probability at least $1-2e^{-d(K-1)/24}$
\[
\frac{1}{2}\sigma^{2}\leq\hat{\sigma^{2}}(K)\leq\frac{3}{2}\sigma^{2}.
\]
\end{lem}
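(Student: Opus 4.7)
The plan is to reduce the estimator to a single chi-squared random variable of degree $d(K-1)$, and then apply the standard Laurent--Massart two-sided tail bound.

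First, since $Y(i)-\bar{Y}=\epsilon(i)-\bar{\epsilon}$, where $\bar{\epsilon}:=\frac{1}{K}\sum_{i'=1}^K \epsilon(i')$, the unknown mean $\theta$ cancels and
\[
\hat{\sigma^{2}}(K)=\frac{1}{d(K-1)}\sum_{i=1}^{K}\|\epsilon(i)-\bar{\epsilon}\|^{2}=\frac{1}{d(K-1)}\sum_{j=1}^{d}\sum_{i=1}^{K}\bigl(\epsilon_{j}(i)-\bar{\epsilon}_{j}\bigr)^{2}.
\]
For each fixed coordinate $j\in[d]$, the $K$ variables $\epsilon_{j}(1),\ldots,\epsilon_{j}(K)$ are i.i.d.\ $\mathcal{N}(0,\sigma^{2})$, so the classical sample-variance identity gives $\sigma^{-2}\sum_{i=1}^{K}(\epsilon_{j}(i)-\bar{\epsilon}_{j})^{2}\sim\chi^{2}_{K-1}$. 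Moreover, the coordinates of $\epsilon(i)$ are independent because $\overline{\Sigma}_t=I_d$, so these $d$ chi-squared variables are mutually independent and their sum $Z:=d(K-1)\hat{\sigma^{2}}(K)/\sigma^{2}$ is distributed as $\chi^{2}_{n}$ with $n:=d(K-1)$.

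Next, I apply the Laurent--Massart tail bounds for $Z\sim\chi^{2}_{n}$: for every $x>0$,
\[
\mathbb{P}\bigl[Z\geq n+2\sqrt{nx}+2x\bigr]\leq e^{-x},\qquad \mathbb{P}\bigl[Z\leq n-2\sqrt{nx}\bigr]\leq e^{-x}.
\]
To bound the upper tail at $Z\geq\tfrac{3}{2}n$, choose $x=n/24$; then $2\sqrt{nx}+2x=\frac{2}{\sqrt{24}}n+\frac{n}{12}<\frac{n}{2}$, so $\mathbb{P}[Z\geq\tfrac{3}{2}n]\leq e^{-n/24}$. To bound the lower tail at $Z\leq\tfrac{1}{2}n$, choose $x=n/16$, for which $2\sqrt{nx}=n/2$, yielding $\mathbb{P}[Z\leq\tfrac{1}{2}n]\leq e^{-n/16}\leq e^{-n/24}$. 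A union bound then gives
\[
\mathbb{P}\Bigl[\hat{\sigma^{2}}(K)\notin\bigl[\tfrac{1}{2}\sigma^{2},\tfrac{3}{2}\sigma^{2}\bigr]\Bigr]\leq 2e^{-d(K-1)/24},
\]
which is the claim. The only nontrivial step is matching the numerical constants in the Laurent--Massart inequality so that both deviation directions fit under the common exponent $n/24$; everything else is routine.
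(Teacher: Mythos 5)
Your proof is correct and follows essentially the same route as the paper's: reduce the estimator to a single $\chi^2_{d(K-1)}$ variable by summing the per-coordinate sample variances, then apply the Laurent--Massart tail bound with a deviation of $n/2$. The only cosmetic difference is that you apply the two one-sided Laurent--Massart inequalities with separate choices of $x$ (namely $n/24$ and $n/16$) and take the weaker exponent, whereas the paper uses a single two-sided form with $t=n/24$ and checks $2\sqrt{nt}+2t\leq n/2$; both yield the stated $1-2e^{-d(K-1)/24}$.
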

\begin{proof}
Let $Y(i)=(Y(i;1),\ldots,Y(i;d))$. As is well known, (\eg, \citep[Theorem 8.3.1]{schervish2014probability})
it holds for any $j\in[d]$ that 
\[
\tilde{\sigma^{2}}(j)=\frac{1}{K-1}\sum_{i=1}^{K}\left(Y(i;j)-\frac{1}{K}\sum_{i'=1}^{K}Y(i';j)\right)^{2}
\]
is an unbiased estimator of $\sigma^{2}$ (that is, $\E[\tilde{\sigma^{2}}(j)]=\sigma^{2}$),
and moreover, that 
\[
\frac{(K-1)}{\sigma^{2}}\cdot\tilde{\sigma^{2}}(j)\sim\chi_{K-1}^{2}.
\]
Now, $\{\tilde{\sigma^{2}}(j)\}_{j\in[d]}$ are \IID, and so 
\[
\sum_{j=1}^{d}\frac{(K-1)}{\sigma^{2}}\cdot\tilde{\sigma^{2}}(j)\sim\chi_{d(K-1)}^{2}.
\]
Hence,
\[
\frac{d(K-1)}{\sigma^{2}}\cdot\hat{\sigma^{2}}(K)=\frac{1}{\sigma^{2}}\sum_{i=1}^{K}\sum_{j=1}^{d}\left(Y(i;j)-\frac{1}{K}\sum_{i'=1}^{K}Y(i';j)\right)^{2}\sim\chi_{d(K-1)}^{2}.
\]
By \citet[Lemma 1]{laurent2000adaptive} or \citep[Remark 2.11]{boucheron2013concentration}
(note that $\E[\chi_{\ell}^{2}]=\ell$) it holds that 
\[
\P\left[\left|\chi_{\ell}^{2}-\ell\right|\geq2\sqrt{\ell t}+2t\right]\leq2e^{-t}.
\]
and so 
\[
\P\left[\left|\hat{\sigma^{2}}(K)-\sigma^{2}\right|\geq\left[2\sqrt{\frac{t}{d(K-1)}}+\frac{2t}{d(K-1)}\right]\cdot\sigma^{2}\right]\leq2e^{-t}.
\]
Choosing $t=\nicefrac{d(K-1)}{24}$ we get that the deviation pre-factor
is 
\[
\frac{2\sqrt{\ell t}+2t}{d(K-1)}\leq\frac{1}{2}
\]
which, in turn, implies that 
\[
\frac{1}{2}\sigma^{2}\leq\hat{\sigma^{2}}(K)\leq\frac{3}{2}\sigma^{2}
\]
with probability at least $1-2\exp[-\nicefrac{d(K-1)}{24}]$.
\end{proof}
We may now prove Proposition \ref{prop: Algorithm and guarantees for unknown variances}. 
\begin{proof}[of Proposition \ref{prop: Algorithm and guarantees for unknown variances}]
In order to accommodate for the unknown variances, we need to consider
their influence on a few steps of the algorithm. First, their influence
on the additional samples required for variance estimation and the
additional possible events that one of these estimated values is inaccurate.
Second, their influence on the allocation of samples at each round,
that is, the computation of $\bar{N}_{t,r}$. This, in turn, affects
the condition used by the algorithm to eliminate models, and on the
models which satisfy this condition. 

\uline{The number of samples and the reliability:} Instead of $\bar{r}+2$
batches of equal size samples (combined from all the models) we now
have $\bar{r}+3$ including the one needed for variance estimation.
So we redefine $\bar{N}=\nicefrac{N}{(\bar{r}+3)}$. We add to the
union bound over $T\bar{r}+2$ events, additional $T+1$ events, which
assure that $\overline{\sigma_{t}^{2}}$ are on the same scale as
$\sigma_{t}^{2}$ for all $t\in\llbracket T\rrbracket$. To handle
this, we redefine $\bar{\delta}=\nicefrac{\delta}{T(\bar{r}+1)+3}$,
and require that 
\begin{equation}
\frac{1}{2}\sigma_{t}^{2}\leq\hat{\sigma_{t}^{2}}\left(\frac{\bar{N}}{T+1}\right)\leq\frac{3}{2}\sigma_{t}^{2}\label{eq: high probability event for noise variances}
\end{equation}
with probability at least $1-\bar{\delta}$. By Lemma \ref{lem:empirical variance estimation scaling},
this holds if 
\[
2e^{-d(\bar{N}/(T+1)-1)/24}\leq\bar{\delta},
\]
or, equivalently
\[
d\left(\frac{N}{(\bar{r}+3)(T+1)}-1\right)\geq24\cdot\log\frac{2}{\bar{\delta}}.
\]
Under the assumption of the lemma, a simple (looser) sufficient condition
for this to hold is \eqref{eq: condition for reliability of variance estimation}. 

\uline{The sample allocation at each round and the elimination
condition:} Algorithm \ref{alg: CL multiple sources} uses the allocation
\[
\bar{N}_{t,r}=\frac{\bar{N}}{T_{r-1}}\cdot\frac{\sigma_{t}^{2}}{\overline{\sigma^{2}}({\cal T}_{r-1})}
\]
and thus depends on the noise variances. One may replace the exact
variances with their estimates as 
\begin{equation}
\hat{N}_{r,t}:=\frac{\bar{N}}{T_{r-1}}\cdot\frac{\hat{\sigma_{t}^{2}}\left(\frac{\bar{N}}{T+1}\right)}{\frac{1}{T_{r-1}}\sum_{t=1}^{T_{r-1}}\hat{\sigma_{t}^{2}}\left(\frac{\bar{N}}{T+1}\right)}.\label{eq: proof unknown variances - empirical sample allocation}
\end{equation}
Corollary \ref{cor: source identification} guarantees that if 
\begin{equation}
Q_{t}^{2}\geq2\cdot\frac{10g(\bar{\delta})}{\nu}\left[\frac{d\cdot\sigma_{t}^{2}}{\hat{N}_{r,t}}\vee\frac{d\sigma_{0}^{2}}{\bar{N}}\right]\label{eq: proof unknown variances - elimination condition on squared distance first}
\end{equation}
then
\begin{equation}
\|\tilde{\theta}_{0}-\tilde{\theta}_{t}\|^{2}\geq20g(\bar{\delta})\left[\frac{d\cdot\sigma_{t}^{2}}{\hat{N}_{r,t}}\vee\frac{d\sigma_{0}^{2}}{\bar{N}}\right]\label{eq: proof unknown variances - elimination rule first}
\end{equation}
will occur \WHP, and then ${\cal M}_{t}$ will be eliminated. Note
that the above statement uses the actual number of samples $\hat{N}_{r,t}$
taken by the algorithm, yet the exact values of the noise variances
$\sigma_{0}^{2},\sigma_{t}^{2}$ affecting the empirical estimates
$\tilde{\theta}_{0}$ and $\tilde{\theta}_{t}$. Also note the additional
factor of $2$ to be used. Now, under the high probability event \eqref{eq: high probability event for noise variances},
which is assumed to hold, the event in \eqref{eq: proof unknown variances - elimination rule first}
also implies that 
\begin{align*}
\|\tilde{\theta}_{0}-\tilde{\theta}_{t}\|^{2} & \geq10g(\bar{\delta})\left[\frac{d\cdot\hat{\sigma_{t}^{2}}\left(\frac{\bar{N}}{T+1}\right)}{\hat{N}_{r,t}}\vee\frac{d\hat{\sigma_{0}^{2}}}{\bar{N}}\right]\\
 & =10g(\bar{\delta})\left[\frac{d\cdot\frac{1}{T_{r-1}}\sum_{t=1}^{T_{r-1}}\hat{\sigma_{t}^{2}}\left(\frac{\bar{N}}{T+1}\right)}{\bar{N}/T_{r-1}}\vee\frac{d\hat{\sigma_{0}^{2}}}{\bar{N}}\right],
\end{align*}
which is exactly the condition for elimination in the modified algorithm
(see item \ref{enu: changes to unknown variance - elimination rule}
in the statement of the proposition). Thus, if \eqref{eq: proof unknown variances - elimination condition on squared distance first}
holds then ${\cal M}_{t}$ is eliminated. Now, using the expression
for $\hat{N}_{r,t}$ in \eqref{eq: proof unknown variances - empirical sample allocation}
we note that \eqref{eq: proof unknown variances - elimination condition on squared distance first}
is equivalent to 
\begin{equation}
Q_{t}^{2}\geq2\cdot\frac{10g(\bar{\delta})}{\nu}\left[\frac{d\cdot\frac{1}{T_{r-1}}\sum_{t=1}^{T_{r-1}}\hat{\sigma_{t}^{2}}\left(\frac{\bar{N}}{T+1}\right)}{\bar{N}/T_{r-1}}\cdot\frac{\sigma_{t}^{2}}{\hat{\sigma_{t}^{2}}\left(\frac{\bar{N}}{T+1}\right)}\vee\frac{d\sigma_{0}^{2}}{\bar{N}}\right].\label{eq: proof unknown variances - elimination condition on squared distance second}
\end{equation}
In turn, the high probability event \eqref{eq: high probability event for noise variances}
also implies that if 
\[
Q_{t}^{2}\geq3\cdot2\cdot\frac{10g(\bar{\delta})}{\nu}\left[\frac{d\cdot\frac{1}{T_{r-1}}\sum_{t=1}^{T_{r-1}}\sigma_{t}^{2}}{\bar{N}/T_{r-1}}\vee\frac{d\sigma_{0}^{2}}{\bar{N}}\right]
\]
then \eqref{eq: proof unknown variances - elimination condition on squared distance second},
and so also \eqref{eq: proof unknown variances - elimination condition on squared distance first},
hold. Evidently, this is the same condition as for Algorithm \ref{alg: CL multiple sources},
where the numerical constant $\nu$ is replaced by $\frac{\nu}{6}$. 
\end{proof}

\subsection{Unknown noise covariance matrices }

Next assume that the covariance matrices $\{\Sigma_{t}\}_{t\in\llbracket T\rrbracket}$
are unknown to the learner, where $\Sigma_{t}:=\sigma_{t}^{2}\cdot\overline{\Sigma}_{t}$.
An efficient estimator is still the empirical mean $\bar{\theta}_{t}(N)$,
and this estimator does not depend on the unknown covariance matrix
$\Sigma_{t}$ (Lemma \ref{lem: high probability MSE for empirical average}).
For Algorithm \ref{alg: CL multiple sources}, the required unknown
parameter is then $\varsigma_{t}^{2}=\frac{1}{d}\Tr[\Sigma_{t}]$
(replacing $\sigma_{t}^{2})$. Since $\Tr[\Sigma_{t}]$ equals to
the sum of its diagonal elements, the estimator \eqref{eq: estimator for sigma square}
proposed for the case $\Sigma_{t}=\sigma_{t}^{2}\cdot I_{d}$ is still
intact. Nonetheless, in that case, the $d$ coordinates are independent
and reduce the variance in estimating $\sigma^{2}$, whereas here
they are not independent. The following lemma modifies Lemma \ref{lem:empirical variance estimation scaling}
to an estimator of $\varsigma^{2}:=\frac{1}{d}\Tr[\Sigma]$. 
\begin{lem}
\label{lem:empirical trace estimation scaling}Assume that $Y=\theta+\epsilon$
where $\theta\in\mathbb{\mathbb{R}}^{d}$ and $\epsilon\sim{\cal N}(0,\Sigma)$
with $\Sigma\in\mathbb{S}_{d}^{++}$. Let $\delta\in(0,1)$ be given.
Consider an estimator for $\varsigma^{2}:=\frac{1}{d}\Tr[\Sigma]$
based on $K$ \IID samples $\{Y(i)\}_{i\in[K]}$ given by 
\begin{equation}
\hat{\varsigma^{2}}(K):=\frac{1}{d(K-1)}\sum_{i=1}^{K}\left\Vert Y(i)-\frac{1}{K}\sum_{i'=1}^{K}Y(i')\right\Vert ^{2}.\label{eq: estimator for sigma square-1}
\end{equation}
Then, with probability at least $1-2e^{-[(K-1)/24-\log d]}$
\begin{equation}
\frac{1}{2}\varsigma^{2}\leq\hat{\varsigma^{2}}(K)\leq\frac{3}{2}\varsigma^{2}.\label{eq: good scaling for the trace}
\end{equation}
\end{lem}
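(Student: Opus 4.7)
\textbf{Proof plan for Lemma \ref{lem:empirical trace estimation scaling}.} The strategy is to reduce to a sum of independent (but weighted) chi-squared random variables by diagonalizing $\Sigma$, and then obtain a deviation inequality by union bound over the eigen-coordinates, losing only a $\log d$ factor in the exponent relative to Lemma \ref{lem:empirical variance estimation scaling}. The estimator $\hat{\varsigma^2}(K)$ depends only on the differences $Y(i)-\overline{Y}$, hence only on $\epsilon(i)-\overline{\epsilon}$, so $\theta$ drops out and we may assume $\theta=0$.

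Write the spectral decomposition $\Sigma=U\Lambda U^\top$ with $\Lambda=\mathrm{diag}(\lambda_1,\ldots,\lambda_d)$ and $\Tr[\Sigma]=\sum_j\lambda_j=d\varsigma^2$. Since $U$ is orthogonal, $\|\epsilon(i)-\overline{\epsilon}\|^2=\|\widetilde\epsilon(i)-\overline{\widetilde\epsilon}\|^2$ for the rotated noise $\widetilde\epsilon(i)=U^\top\epsilon(i)\sim\mathcal{N}(0,\Lambda)$, whose coordinates $\{\widetilde\epsilon(i;j)\}_{j\in[d]}$ are independent Gaussians with variances $\lambda_j$. Exactly as in the proof of Lemma \ref{lem:empirical variance estimation scaling}, for each coordinate $j$ the per-coordinate sample variance
\[
X_j:=\frac{1}{\lambda_j}\sum_{i=1}^{K}\bigl(\widetilde\epsilon(i;j)-\tfrac1K\textstyle\sum_{i'}\widetilde\epsilon(i';j)\bigr)^2\sim\chi^2_{K-1},
\]
and the $X_j$ are independent across $j$ because the $d$ coordinates of $\widetilde\epsilon$ are independent. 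Therefore
\[
\hat{\varsigma^2}(K)=\frac{1}{d(K-1)}\sum_{j=1}^d\lambda_j X_j,\qquad \E\bigl[\hat{\varsigma^2}(K)\bigr]=\varsigma^2.
\]

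Next, apply the chi-squared tail bound $\P[|X_j-(K-1)|\ge 2\sqrt{(K-1)t}+2t]\le 2e^{-t}$ from \citet[Lemma 1]{laurent2000adaptive} to each coordinate, and take a union bound over $j\in[d]$: with probability at least $1-2d\,e^{-t}$, the inequality $|X_j-(K-1)|\le 2\sqrt{(K-1)t}+2t$ holds simultaneously for all $j$. On this event,
\[
\bigl|\hat{\varsigma^2}(K)-\varsigma^2\bigr|\le\frac{1}{d(K-1)}\sum_j\lambda_j\bigl[2\sqrt{(K-1)t}+2t\bigr]=\varsigma^2\cdot\frac{2\sqrt{(K-1)t}+2t}{K-1}.
\]
Choosing $t=(K-1)/24$ makes the right-hand deviation prefactor at most $\tfrac12$, yielding the multiplicative bound \eqref{eq: good scaling for the trace}, while the failure probability becomes $2d\,e^{-(K-1)/24}=2\exp[-((K-1)/24-\log d)]$, matching the stated probability.

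The only real difference from Lemma \ref{lem:empirical variance estimation scaling} is that in the isotropic case one could sum the $d$ independent $\chi^2_{K-1}$'s into a single $\chi^2_{d(K-1)}$, enjoying the $\sqrt{d}$ improvement in concentration; under a general $\Sigma$ the $\lambda_j$'s are heterogeneous weights so this collapse fails and the union bound is the natural substitute. This is where the extra $\log d$ appears in the exponent; it is unavoidable without Bernstein/Hanson--Wright refinements, and is anyway harmless because it translates into the mild condition $dN\gtrsim d\log d\cdot\bar r T$ noted just before the lemma statement.
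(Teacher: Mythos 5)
Your proof is correct and reaches the same probability bound as the paper, but the route is genuinely different. The paper works in the original coordinates: it notes that the per-coordinate sample variances $\tilde{\sigma^2}(j)$ are still marginally scaled $\chi^2_{K-1}$, observes that they are \emph{not} independent when $\Sigma$ is non-diagonal, and so applies the per-coordinate multiplicative bound from Lemma~\ref{lem:empirical variance estimation scaling} and a union bound over $j\in[d]$; the simultaneous multiplicative bounds on all $\tilde{\sigma^2}(j)$ then pass directly to their average $\hat{\varsigma^2}(K)$. You instead diagonalize $\Sigma=U\Lambda U^\top$, exploit the rotation invariance of $\|\cdot\|$ to write $\hat{\varsigma^2}(K)=\frac{1}{d(K-1)}\sum_j\lambda_j X_j$ with \emph{independent} $X_j\sim\chi^2_{K-1}$, and then union-bound the deviations and use the triangle inequality together with $\sum_j\lambda_j=d\varsigma^2$. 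Both proofs use the same tail inequality (\citet[Lemma 1]{laurent2000adaptive}) and the same $t=(K-1)/24$ choice, and the union bound of course does not require independence, so the final bounds coincide; what your diagonalization buys is a sharper diagnosis of where Lemma~\ref{lem:empirical variance estimation scaling} breaks. The paper attributes the failure to collapse into a single $\chi^2_{d(K-1)}$ to lack of independence of the per-coordinate variances, whereas your version shows independence can always be recovered by rotation and the genuine obstruction is the heterogeneity of the weights $\lambda_j$ in the sum. That is a cleaner picture and also makes transparent why a Bernstein or Hanson--Wright argument for weighted quadratic forms would remove the $\log d$ when the effective rank of $\Sigma$ is small, a remark the paper does not make.
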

If we make the mild assumption $(K-1)\geq48\log d$ then the probability
guaranteed by Lemma \ref{lem:empirical trace estimation scaling}
is $1-e^{-\Theta(K)}$ compared to the larger probability $1-e^{-\Theta(dK)}$
guaranteed by Lemma \ref{lem:empirical trace estimation scaling}.
\begin{proof}
We follow the proof of Lemma \ref{lem:empirical variance estimation scaling}.
Now, $\sigma^{2}(j)=\Sigma(j,j)$ is the $j$th diagonal coordinate
of $\Sigma$, and $\tilde{\sigma^{2}}(j)$ defined therein (the sample
variance of $\{Y(i;j)\}_{i\in[K]}$) is an unbiased estimator of $\sigma^{2}(j)$
and moreover, it holds that
\[
\frac{(K-1)}{\sigma^{2}(j)}\cdot\tilde{\sigma^{2}(j)}\sim\chi_{K-1}^{2}.
\]
However, unlike the proof of Lemma \ref{lem:empirical variance estimation scaling},
now $\{\tilde{\sigma^{2}}\}_{j\in[d]}$ are not necessarily independent,
and thus their sum is not necessarily distributed as a chi-squared.
To overcome this, we require the estimate of $\tilde{\sigma^{2}(j)}$
will be accurate for all $d$ coordinates. Specifically, using the
same arguments used in the proof of Lemma \ref{lem:empirical variance estimation scaling}
(setting $d=1$ therein), it holds that 
\begin{equation}
\frac{1}{2}\sigma^{2}(j)\leq\tilde{\sigma^{2}(j)}\leq\frac{3}{2}\sigma^{2}(j)\label{eq: good scaling separately for each coordinate}
\end{equation}
with probability at least  $1-2e^{-(K-1)/24}$. By the union bound,
this holds for all $j\in[d]$ with probability at least
\[
1-2de^{-(K-1)/24}=1-2e^{-[(K-1)/24-\log d]}.
\]
The event that \eqref{eq: good scaling separately for each coordinate}
holds for all $j\in[d]$ implies that \eqref{eq: good scaling for the trace}
holds too. 
\end{proof}

\section{Empirical experiments \label{sec:Simulations}}

In this appendix, we run an implementation of Algorithm \ref{alg: CL multiple sources}
on simple simulated estimation settings, and empirically demonstrate
the various properties of this elimination algorithm we have discussed
during the theoretical analysis. Python code for these simulations
can be found in \url{https://anonymous.4open.science/r/CurriculumLearning-CAD5/}.
In what follows we will use $\tilde{Q}_{t}^{2}:=\nicefrac{Q_{t}^{2}}{(\nicefrac{d\sigma_{0}^{2}}{N})}$
to denote normalized square distances. 

In the first experiment, a two task setting $T=2$ is considered.
We fix the parameter of the first source model $t=1$ at distance
$Q_{1}=0$, and sweep over $Q_{2}$, the distance of parameter of
the second source to the target parameter, where we normalize $Q_{2}^{2}$
by the typical MSE of the target task $\nicefrac{d\sigma_{0}^{2}}{N}$.
Figure \ref{fig:simulation 1} shows the loss of the algorithm on
a logarithmic scale, and the empirical probability that the algorithm
chooses task $2$. The outcomes, as depicted in Figure \ref{fig:simulation 1},
reveal two discernible regimes. In the initial regime, approximately
when $\tilde{Q_{2}^{2}}\in(0,100)$, the set ${\cal T}_{\text{alg}}$
is consistently non-empty and the probability to choose task $t=2$
is roughly $0.5$. This implies that, within this regime, Algorithm
\ref{alg: CL multiple sources} correctly identifies that both source
models are better than the target model for estimating the target
parameter, however, the algorithm is unable to consistently eliminate
task $2$. The observed increase in error aligns with the characteristics
of a weak oracle error, as $Q_{2}^{2}\lesssim\nicefrac{d\sigma_{0}^{2}}{N}$.
In the subsequent regime, when $\tilde{Q}_{2}^{2}\in$$(100,300)$,
the probability of selecting task $t=2$ gradually diminishes until
reaching $0$, concurrently with a reduction in the error as $\tilde{Q}_{2}^{2}$
increases. This stems from the fact that $Q_{2}^{2}\gtrsim\nicefrac{d\sigma_{0}^{2}}{N}$
and is the result of the improved ability of Algorithm \ref{alg: CL multiple sources}
to eliminate the source model $t=2$, and then use the better source
model $t=1$ for estimation of the target parameter. 

\begin{figure}
\begin{centering}
\includegraphics[scale=0.8]{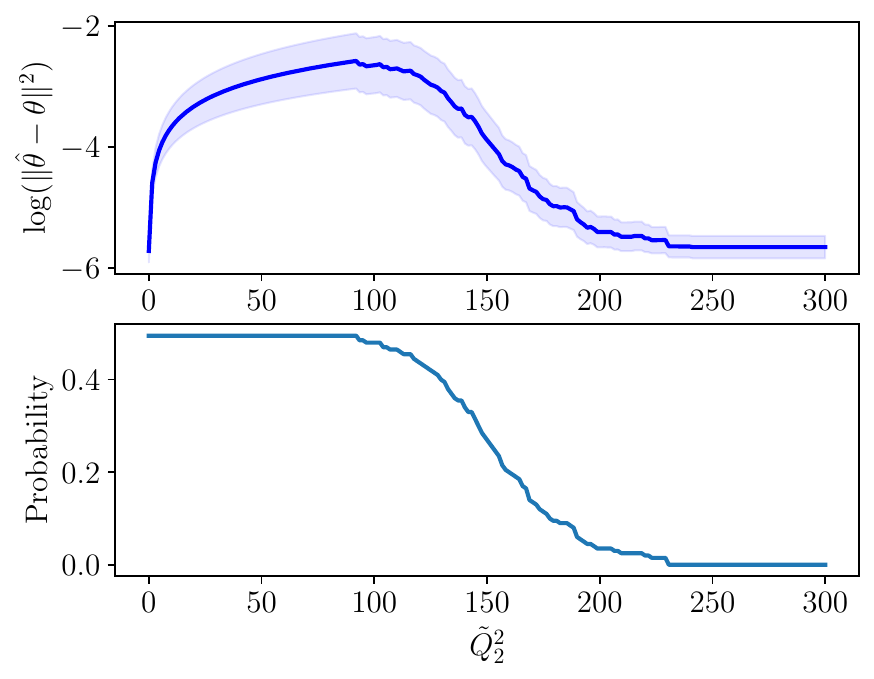}
\par\end{centering}
\caption{\label{fig:simulation 1}The first experiment: Runs of Algorithm \ref{alg: CL multiple sources}
over $200$ repetitions. Parameters are $T=2,\;N=1000,\;d=2,\;\sigma^{2}=1,\;\sigma_{0}^{2}=10$.}
\end{figure}

In the second experiment, we consider settings with large number of
models. We define three types of parameter locations, relative to
the target parameter, designated as ``close'', ``medium'', and
``far''. We consider a sequence of problem settings, in which we
first set one source parameter at each of the location types. Then,
we add more models from one of the types. As can be seen in Figure
\ref{fig:simulation 2}, Algorithm \ref{alg: CL multiple sources}
behaves differently in response to the addition of models in different
location types. The inclusion of models in proximity to the target
task facilitates better estimation error, a phenomenon that aligns
with intuitive expectations. Conversely, in scenarios with a substantial
number of models categorized as ``medium,'' the algorithm struggles
to eliminate those models and thus achieve high MSE. Notably, when
models are introduced in the ``far'' location the algorithm eliminates
them effectively and the addition of models from this category does
not affect the algorithm's loss. The experiment appearing in Section
\ref{sec:A-CL-elimination-algorithm} generalizes this experiment
to other mixtures of types of source models.

\begin{figure}
\begin{centering}
\includegraphics[scale=0.8]{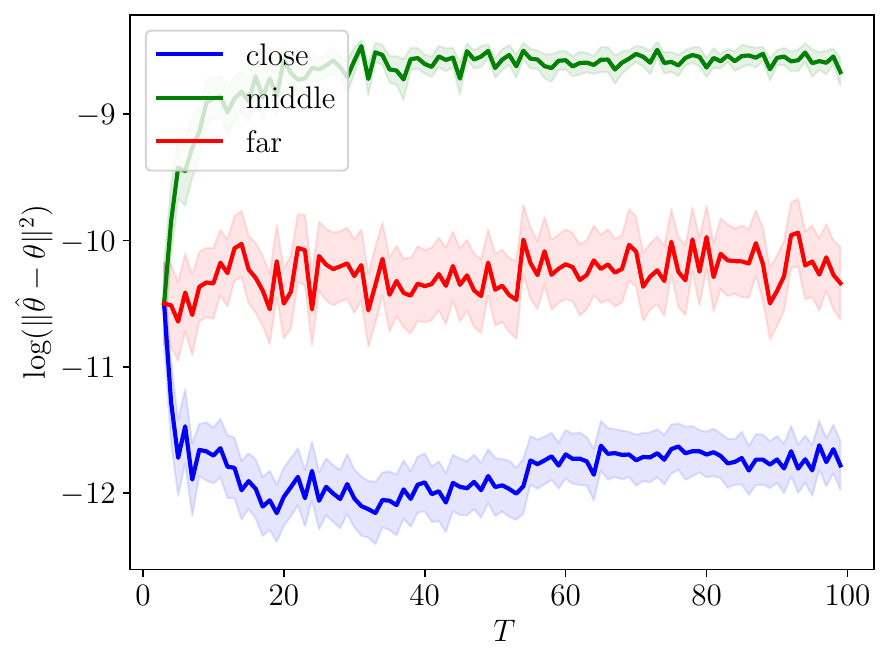}
\par\end{centering}
\caption{\label{fig:simulation 2}The second experiment: Runs of Algorithm
\ref{alg: CL multiple sources} over $200$ repetitions. Parameters
are $N=10^{5},\;d=2,\;\sigma^{2}=0.1,\;\sigma_{0}^{2}=1,\;\tilde{Q}_{\text{close}}^{2}=0,\;\tilde{Q}_{\text{medium}}^{2}=10,\;\tilde{Q}_{\text{far}}^{2}=2\cdot10^{4}$.}
\end{figure}

In the third experiment, we set $T=100$ source models, and choose
\[
\tilde{Q}_{t}^{2}=\begin{cases}
0, & t\in[10]\\
(t-10)^{\gamma}, & t\in\{11,12,\ldots,100\}
\end{cases},
\]
for which various values of $\gamma$ were examined. For each value
we generated the corresponding \textbf{$\beta_{\delta}(\tau)$}, in
order to demonstrate the connection between $\beta_{\delta}(\tau)$
and the algorithm's efficacy in model elimination. To this end, we
record for each run of Algorithm \ref{alg: CL multiple sources} which
models were eliminated and which were not. We evaluate the algorithm
performance using the \emph{precision} score, defined as

\[
\precision:=\frac{\left|{\cal T}_{\text{alg}}\cap{\cal T}_{\text{w.o.}}\right|}{\left|{\cal T}_{\text{alg}}\right|},
\]
and the \emph{recall} score, defined as
\[
\recall:=\frac{\left|{\cal T}_{\text{alg}}\cap{\cal T}_{\text{w.o.}}\right|}{\left|{\cal T}_{\text{w.o.}}\right|}.
\]
 As seen in Figure \ref{fig:simulation 4}, as $\gamma$ is increases,
the location of the fixed point $\beta_{\delta}(\tau)=\tau$ decreases,
which should result a better precision in eliminating models. This
is simply because increasing $\gamma$ also increases the number of
tasks at distant locations, which are easier for the algorithm to
eliminate in the first round, thus allowing for further elimination
in second round, and so on. In all our experiments, Algorithm \ref{alg: CL multiple sources}
has achieved perfect recall, that is, all models in the weak oracle
set are identified as such by the algorithm, and as seen in Figure
\ref{fig:simulation 4}, the precision improves with increasing $\gamma$. 

\begin{figure}
\begin{centering}
\includegraphics[scale=0.6]{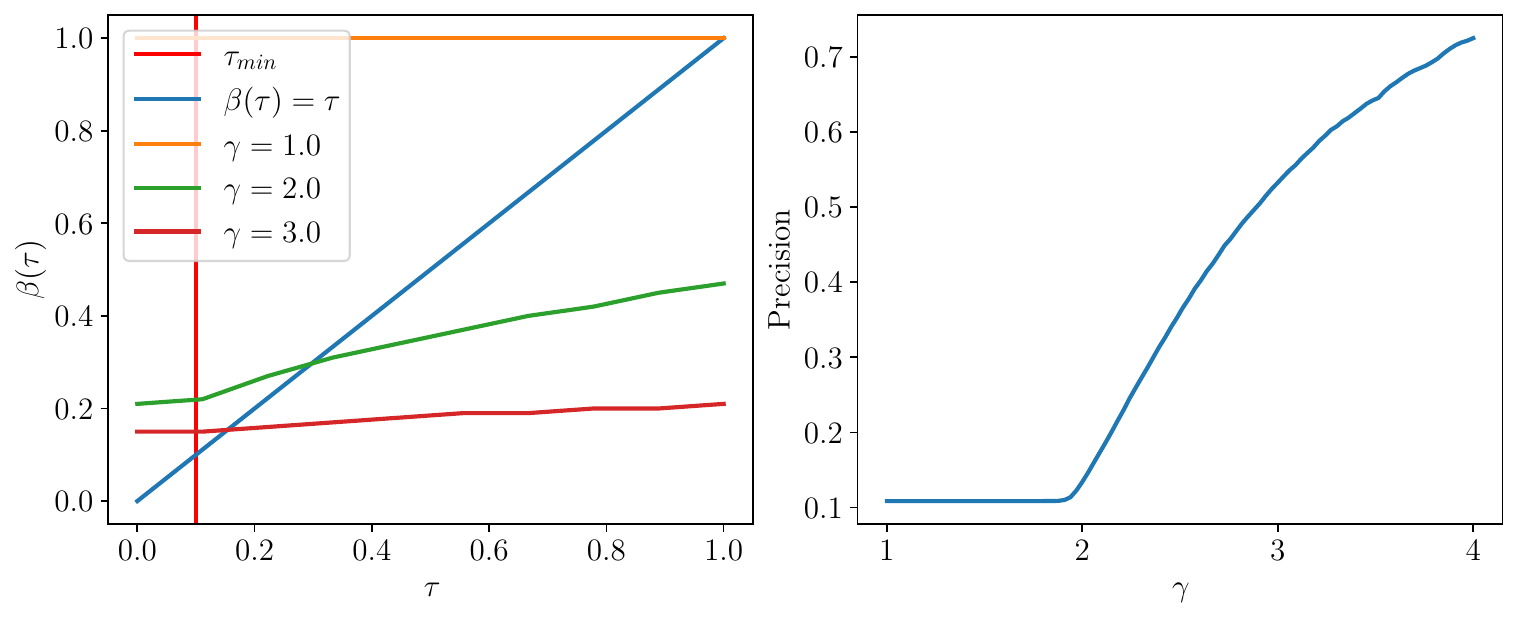}
\par\end{centering}
\caption{\label{fig:simulation 4}The third experiment: Parameters are $N=10^{5},\;d=2,\;\sigma^{2}=0.1,\;\sigma_{0}^{2}=1,\;\delta=0.05$.}
\end{figure}

\section{Minimax lower bounds: Extended discussion and proofs for Section
\ref{sec:Minimax-risk-lower} \label{sec:Proofs-for-Section-Minimax}}

\subsection{A general reduction to hypothesis testing \label{subsec:A-general-reduction}}

For a CL algorithm, the $N$ observed samples are not \IID, because
the learner may select the model from which the next sample is collected
based on previous samples. In accordance, the reduction to hypothesis
testing should be made to a similarly adaptive tester. For completeness,
we next formally define the hypothesis testing problem and then the
reduction (Proposition \ref{prop: variation of reduction from estimation to HT}).
The problem formulation is similar to the general learning CL setting
in Section \ref{sec:setting}, except that the set of possible models
is of finite cardinality $K$, and one of the models is chosen by
Nature with a uniform probability. The learner should then collect
samples in order to identify the model, and its performance is judged
by the error probability. Let $K$ be a positive integer. Let $\boldsymbol{\Theta}_{\text{test}}:=\{\theta^{(j)}\}_{j\in[K]}\subset\boldsymbol{\Psi}\subset(\mathbb{R}^{d})^{T+1}$
be a \emph{testing set} of parameters, each one of them $\theta^{(j)}=(\theta_{0}^{(j)},\theta_{1}^{(j)},\ldots,\theta_{T}^{(j)})$
is comprised of parameters for the $T+1$ models, where the index
$j$ represents the hypothesis. Let $J\sim\text{Uniform}[K]$ be an
\rv that chooses the hypothesis. Given that $J=j$, the samples of
the $T+1$ models are obtained from the model ${\cal M}_{t}$ with
parameter $\theta_{t}^{(j)}$, for all $t\in\llbracket T\rrbracket$.
A CL hypothesis tester collects samples as a general CL learner, described
in Section \ref{sec:setting}. Given the collected samples $(A_{i},S_{i})_{i\in[N]}$,
the tester produces a guess of $J$, given by a testing function $\hat{J}\colon((\llbracket T\rrbracket)\times{\cal Z})^{N}\to[K]$.
A CL testing algorithm is the pair ${\cal A}:=(\pi,\hat{J})$, comprised
of a sampling policy and a testing function. The expected error probability
of the tester when the testing set is $\boldsymbol{\Theta}_{\text{test}}$
and it uses an algorithm ${\cal A}$ is
\[
e_{N}(\boldsymbol{\Phi}_{\text{test}},{\cal A}):=\P_{J,{\cal A}}\left[\hat{J}\left((A_{i},S_{i})_{i\in[N]}\right)\neq J\right],
\]
where $\boldsymbol{\Phi}_{\text{test}}:=\{\phi^{(j)}\}_{j\in[K]}$,
$\phi^{(j)}=(\phi_{0}^{(j)},\phi_{1}^{(j)},\ldots,\phi_{T}^{(j)})$
and $\phi_{t}^{(j)}:=(\theta_{t}^{(j)},\sigma_{t}^{2},I_{d})$. It
is well known that the optimal testing function in this setting is
the \emph{maximum likelihood} test, which satisfies $\hat{J}=j$ only
if 
\[
\P\left[(A_{i},S_{i})_{i\in[N]}\mid j\right]\geq\max_{j'\in[K]}\P\left[(A_{i},S_{i})_{i\in[N]}\mid j'\right].
\]
As is also well known, the estimation problem is more difficult than
hypothesis testing, and this reduction is formulated below:
\begin{prop}[{A variation of \citet[Proposition 15.1]{wainwright2019high}}]
\label{prop: variation of reduction from estimation to HT}Let $\boldsymbol{\Phi}_{\text{test}}$
be given such that $\boldsymbol{\Theta}_{\text{test}}\subset\boldsymbol{\Psi}$,
and let 
\begin{equation}
\eta\equiv\eta(\boldsymbol{\Theta}_{\text{test}}):=\min_{j_{1},j_{2}\in[K]\colon j_{1}\neq j_{2}}\frac{1}{2}\|\theta_{0}^{(j_{1})}-\theta_{0}^{(j_{2})}\|.\label{eq: packing radius}
\end{equation}
Then, for any CL testing algorithm ${\cal A}=(\pi,\hat{\phi})$
\begin{equation}
L_{N,d}(\boldsymbol{\Psi},\boldsymbol{\Gamma})\geq\eta^{2}\cdot\min_{{\cal A}}e_{N}(\boldsymbol{\Phi}_{\text{test}},{\cal A}).\label{eq: lower bounding risk with error probability}
\end{equation}
\end{prop}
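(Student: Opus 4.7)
The statement is a standard estimation-to-testing reduction, and my plan is to build a testing algorithm out of an arbitrary estimation algorithm and then use the $2\eta$-separation guaranteed by \eqref{eq: packing radius}. Concretely, given any CL estimation algorithm ${\cal A}=(\pi,\hat{\theta})$, I form a CL testing algorithm ${\cal A}'=(\pi,\hat{J})$ that reuses the same sampling policy $\pi$ and then outputs
\[
\hat{J}\left((A_{i},S_{i})_{i\in[N]}\right):=\argmin_{j\in[K]}\left\Vert \hat{\theta}\left((A_{i},S_{i})_{i\in[N]}\right)-\theta_{0}^{(j)}\right\Vert .
\]
Since $\pi$ only depends on the history and not on which $\phi^{(j)}$ governs the observations, ${\cal A}'$ is a well-defined CL testing algorithm.

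Next I argue the key inclusion of events: under the realization $J=j$, if $\hat{J}=j'\neq j$ then by definition of $\hat{J}$ we have $\|\hat{\theta}-\theta_{0}^{(j')}\|\leq\|\hat{\theta}-\theta_{0}^{(j)}\|$, and by the triangle inequality together with \eqref{eq: packing radius},
\[
2\eta\leq\left\Vert \theta_{0}^{(j)}-\theta_{0}^{(j')}\right\Vert \leq\left\Vert \hat{\theta}-\theta_{0}^{(j)}\right\Vert +\left\Vert \hat{\theta}-\theta_{0}^{(j')}\right\Vert \leq2\left\Vert \hat{\theta}-\theta_{0}^{(j)}\right\Vert ,
\]
so that $\{\hat{J}\neq J\}\subseteq\{\|\hat{\theta}-\theta_{0}^{(J)}\|\geq\eta\}$. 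Markov's inequality then yields $\P_{J,{\cal A}'}[\hat{J}\neq J]\leq\eta^{-2}\E_{J,{\cal A}}[\|\hat{\theta}-\theta_{0}^{(J)}\|^{2}]$.

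Finally, since $J$ is uniform on $[K]$ and every $\theta^{(j)}\in\boldsymbol{\Theta}_{\text{test}}\subset\boldsymbol{\Psi}$,
\[
\E_{J,{\cal A}}\left[\|\hat{\theta}-\theta_{0}^{(J)}\|^{2}\right]=\frac{1}{K}\sum_{j\in[K]}\E_{\phi^{(j)},{\cal A}}\left[\|\hat{\theta}-\theta_{0}^{(j)}\|^{2}\right]\leq\sup_{\boldsymbol{\theta}\in\boldsymbol{\Psi}}\E_{\boldsymbol{\phi},{\cal A}}\left[\|\hat{\theta}-\theta_{0}\|^{2}\right].
\]
Combining the last two displays and noting that $\min_{{\cal A}}e_{N}(\boldsymbol{\Phi}_{\text{test}},{\cal A})\leq e_{N}(\boldsymbol{\Phi}_{\text{test}},{\cal A}')$ gives
\[
\min_{{\cal A}}e_{N}(\boldsymbol{\Phi}_{\text{test}},{\cal A})\leq\eta^{-2}\sup_{\boldsymbol{\theta}\in\boldsymbol{\Psi}}\E_{\boldsymbol{\phi},{\cal A}}\left[\|\hat{\theta}-\theta_{0}\|^{2}\right],
\]
and taking the infimum over estimation algorithms ${\cal A}$ on the right-hand side yields \eqref{eq: lower bounding risk with error probability}. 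There is no real obstacle here; the only subtlety worth flagging in writing is that the reduction preserves adaptivity of the sampling policy, so that the $\min$ on the right of \eqref{eq: lower bounding risk with error probability} is indeed taken over the same adaptive class of CL algorithms rather than only over non-adaptive testers.
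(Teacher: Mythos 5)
Your proof is correct and follows essentially the same reduction as the paper's: build a nearest-neighbor test from the estimator while reusing its sampling policy, use the $2\eta$-separation together with Markov's inequality to relate test error to estimation risk, and then take the infimum over algorithms. The paper's proof is terser, fixing an optimal CL algorithm and invoking \citep[Proposition 15.1]{wainwright2019high} for the key inequality, whereas you spell out the underlying triangle-inequality and Markov argument and work with an arbitrary estimator, which also sidesteps any concern about whether the infimum defining the minimax risk is attained.
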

\begin{proof}
Let ${\cal A}_{*}=(\pi_{*},\hat{\theta}_{*})$ be the optimal policy
and estimator in the CL setting, and let $\hat{J}$ be an arbitrary
testing function. Then, 
\begin{align*}
L_{N,d}(\boldsymbol{\Psi},\boldsymbol{\Gamma}) & =\sup_{\boldsymbol{\theta}\in\boldsymbol{\Psi}}\E_{\boldsymbol{\phi},{\cal A}_{*}}\left[\left\Vert \hat{\theta}_{*}\left((A_{i},S_{i})_{i\in[N]}\right)-\theta_{0}\right\Vert ^{2}\right]\\
 & \trre[\geq,*]\eta^{2}\cdot e_{N}(\boldsymbol{\Phi}_{\text{test}},(\pi_{*},\hat{J}))\\
 & \geq\eta^{2}\cdot\min_{{\cal A}}e_{N}(\boldsymbol{\Phi}_{\text{test}},{\cal A}),
\end{align*}
where $(*)$ holds since when the policy is fixed to $\pi_{*}$, the
CL learner estimates the parameter, or tests the hypothesis from a
standard statistical model, for which \eqref{eq: lower bounding risk with error probability}
was originally derived in \citep[Proposition 15.1]{wainwright2019high}.
\end{proof}
Utilizing Proposition \ref{prop: variation of reduction from estimation to HT},
the goal is now to derive a lower bound on the error probability of
CL tests. This is, however, challenging, since lower bounds for adaptive
tests are not as widely available, compared to Le-Cam's and Fano's
bounds for non-adaptive testers. We next propose two simple ways to
circumvent the need for bounds on the error probability of adaptive
testers. One idea was used in the proof of \citet[Theorem 1]{xu2022statistical},
where in Appendix \ref{subsec: Xu Theorem 1} we have discussed in
detail its proof, and highlighted a few subtleties related to it. 

First, the CL problem is easier if the learner can collect $N$ samples
from the target task and from each of the source models, and thus
a total of $N\cdot(T+1)$ samples. We state this simple idea as a
general claim. 
\begin{claim}
\label{claim: From adaptive sampling to N samples each}Let $\boldsymbol{Y}:=\{Y_{i,t}\}_{i\in[N],t\in\llbracket T\rrbracket}$
be a collection of $N\cdot(T+1)$ independent \rv's so that $Y_{i,t}$
is drawn from the model ${\cal M}_{t}$ in \eqref{eq: mean estimation model}.
Let $\hat{\vartheta}$ be an estimator for $\theta_{0}$ based on
$\boldsymbol{Y}$. Then
\[
L_{N,d}(\boldsymbol{\Psi},\boldsymbol{\Gamma})\geq\inf_{\hat{\vartheta}}\sup_{\boldsymbol{\theta}\in\boldsymbol{\Psi}}\E_{\boldsymbol{Y}\sim\boldsymbol{\theta}}\left[\left\Vert \hat{\vartheta}\left(\boldsymbol{Y}\right)-\theta_{0}\right\Vert ^{2}\right]:=\tilde{L}_{N,d}(\boldsymbol{\Psi},\boldsymbol{\Gamma}).
\]
\end{claim}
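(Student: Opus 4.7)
My plan is to exploit the random-table formulation of the sampling model introduced in Section~\ref{sec:setting}: the CL learner's observations are, by construction, of the form $S_i = Z_{i,A_i}$ where $\boldsymbol{Z} = \{Z_{i,t}\}_{i\in[N], t\in\llbracket T\rrbracket}$ is a collection of $N(T+1)$ independent variables with the \emph{same} joint law as $\boldsymbol{Y}$. Thus the CL algorithm can already be viewed as an algorithm that interacts with a full table of samples, reading only the entry $(i, A_i)$ at step $i$. An estimator that is given the whole table $\boldsymbol{Y}$ should therefore be at least as powerful as any CL algorithm. Formally, I will show that any CL algorithm can be emulated by an estimator acting on $\boldsymbol{Y}$, with matching risk.

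First, I would fix an arbitrary CL algorithm ${\cal A} = (\pi, \hat{\phi})$ and construct from it an estimator $\hat{\vartheta}_{\cal A}$ that acts on $\boldsymbol{Y}$ by internally simulating the adaptive sampling. Concretely, sample $A_1 \sim \pi_1$ using external randomness, set $S_1 := Y_{1,A_1}$, then iteratively draw $A_i \sim \pi_i(\cdot \mid H_{i-1})$ with $H_{i-1} = (A_1, S_1, \ldots, A_{i-1}, S_{i-1})$ and set $S_i := Y_{i, A_i}$, and finally output $\hat{\vartheta}_{\cal A}(\boldsymbol{Y}) := \hat{\phi}((A_i, S_i)_{i\in[N]})$. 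Since $A_i$ is $\sigma(H_{i-1})$-measurable by the definition of an admissible policy, the map $\boldsymbol{Y} \mapsto \hat{\vartheta}_{\cal A}(\boldsymbol{Y})$ is well-defined and measurable.

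Next, I would observe that under the trivial coupling $\boldsymbol{Z} = \boldsymbol{Y}$, the simulated history $(A_i, S_i)_{i\in[N]}$ has exactly the same joint law as the history generated by running ${\cal A}$ on the original problem with parameter $\boldsymbol{\phi}$. Consequently, for every $\boldsymbol{\phi}$ with $\boldsymbol{\theta} \in \boldsymbol{\Psi}$,
\[
\E_{\boldsymbol{Y}\sim\boldsymbol{\theta}}\bigl[\|\hat{\vartheta}_{\cal A}(\boldsymbol{Y}) - \theta_0\|^2\bigr] = \E_{\boldsymbol{\phi},{\cal A}}\bigl[\|\hat{\phi}((A_i,S_i)_{i\in[N]}) - \theta_0\|^2\bigr].
\]
Taking the supremum over $\boldsymbol{\theta}\in\boldsymbol{\Psi}$, using the bound
\[
\inf_{\hat{\vartheta}} \sup_{\boldsymbol{\theta}\in\boldsymbol{\Psi}} \E_{\boldsymbol{Y}\sim\boldsymbol{\theta}}\bigl[\|\hat{\vartheta}(\boldsymbol{Y}) - \theta_0\|^2\bigr] \leq \sup_{\boldsymbol{\theta}\in\boldsymbol{\Psi}} \E_{\boldsymbol{Y}\sim\boldsymbol{\theta}}\bigl[\|\hat{\vartheta}_{\cal A}(\boldsymbol{Y}) - \theta_0\|^2\bigr],
\]
and finally taking the infimum over CL algorithms ${\cal A}$ yields $\tilde{L}_{N,d}(\boldsymbol{\Psi},\boldsymbol{\Gamma}) \leq L_{N,d}(\boldsymbol{\Psi},\boldsymbol{\Gamma})$.

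The main (mild) technicality is ensuring the measurability of the simulated adaptive sampling as a function of $\boldsymbol{Y}$ together with any external randomization used by the policy; this is immediate from the definition of a policy in Section~\ref{sec:setting}. Beyond that, the result is essentially a restatement of the random-table formulation, since freely observing $N$ independent samples from every model dominates observing any adaptively selected subset of $N$ of them.
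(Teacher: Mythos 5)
Your proof is correct and follows the same data-processing/simulation argument used in the paper: any CL algorithm can be emulated by an estimator that receives the full table $\boldsymbol{Y}$, runs the policy internally on it, and discards the unread entries. You have simply spelled out the coupling and the passage to suprema and infima that the paper leaves implicit.
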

\begin{proof}
The proof is trivial data-processing argument since any adaptive CL
algorithm ${\cal A}=(\pi,\hat{\theta})$ that collects a total of
$N$ samples may first collect the $N(T+1)$ samples of $\boldsymbol{Y}$,
run the sampling policy $\pi$, and only use the $N$ samples collected
by the policy to compute the estimator estimate $\hat{\theta}((A_{i},S_{i})_{i\in[N]})$. 
\end{proof}
This method does not capture the correct dependence of the risk on
$T$, and so is effective whenever the number of source models is
fixed. 

Second, under proper assumptions, it can be argued that an optimal
learner does not sample from some of the models. For example, source
models whose parameter is known to be at large distance from the target
parameter, or that are known to have large noise, may be ignored by
the learner. This is the origin of the claim by \citet[Theorem 1]{xu2022statistical},
that a strong-oracle learner, which is fully informed of $\{Q_{t}^{2}\}_{t\in[T]}$,
may only sample from the model with minimal $Q_{t}^{2}$ (assuming,
for simplicity, that the noise variance of all source models is the
same). In general, it appears to be difficult to rigorously prove
this claim in an estimation context, however, it can be proved for
a test, that is, after the reduction to a finite set of hypotheses
in $\boldsymbol{\Theta}_{\text{test}}$. Concretely, will use the
following two arguments:
\begin{claim}
\label{claim: Optimal tester ignore models}Let $\boldsymbol{\Theta}_{\text{test}}$
be a given testing set. 
\begin{itemize}
\item If $\theta_{t}^{(j)}$ is the same for all $j\in[K]$ then an optimal
policy in the CL hypothesis testing problem does not sample from ${\cal M}_{t}$.
\item If there exists $\alpha\in\mathbb{R}$ so that $\theta_{t_{1}}^{(j)}=\alpha\theta_{t_{2}}^{(j)}$
for some $t_{1},t_{2}\in[T]$ and all $j\in[K]$ and $\nicefrac{\sigma_{t_{2}}^{2}}{\alpha^{2}}\geq\sigma_{t_{1}}^{2}$
then an optimal tester does not sample from ${\cal M}_{t_{2}}$. 
\end{itemize}
\end{claim}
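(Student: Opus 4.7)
The plan is to prove both parts via the same coupling/simulation argument: given any CL tester $\mathcal{A} = (\pi, \hat{J})$ that queries the model we want to avoid, I would construct an alternative algorithm $\mathcal{A}' = (\pi', \hat{J})$ that never queries that model and whose observed history $H_N$ has, conditional on each hypothesis $J=j$, the same joint distribution as $H_N$ under $\mathcal{A}$. Since the testing function $\hat{J}$ depends only on $H_N$, the conditional error probabilities (and hence the expected error probability) of $\mathcal{A}'$ and $\mathcal{A}$ will coincide, so some optimal tester does not sample from the model in question.

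For part (i), the assumption $\theta_t^{(j)} \equiv \theta_t$ for all $j \in [K]$ means that the law of a sample from $\mathcal{M}_t$, namely $\mathcal{N}(\theta_t, \sigma_t^2 I_d)$, is known to the learner in advance and does not depend on $J$. Whenever $\pi$ prescribes $A_i = t$, I would have $\pi'$ draw an independent synthetic sample $\tilde{S}_i \sim \mathcal{N}(\theta_t, \sigma_t^2 I_d)$ from auxiliary randomness and feed it to subsequent iterations of $\pi$ and to $\hat{J}$ in place of a genuine query. The real-sample budget freed this way can be discarded (or reallocated to $\mathcal{M}_0$, only weakly helping). An induction on $i$ then establishes that, under this coupling, the synthetic history is equal in distribution to the true one under each hypothesis.

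For part (ii), the key identity I would exploit is a noise-inflation rewrite: if $Y \sim \mathcal{N}(\theta_{t_1}^{(j)}, \sigma_{t_1}^2 I_d)$ and $\xi \sim \mathcal{N}(0, (\sigma_{t_2}^2/\alpha^2 - \sigma_{t_1}^2) I_d)$ is independent of $Y$, then
\[
\alpha(Y+\xi) \sim \mathcal{N}(\alpha \theta_{t_1}^{(j)}, \sigma_{t_2}^2 I_d) = \mathcal{N}(\theta_{t_2}^{(j)}, \sigma_{t_2}^2 I_d),
\]
which is exactly the law of a sample from $\mathcal{M}_{t_2}$ under hypothesis $j$. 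The variance of $\xi$ is nonnegative precisely when the stated hypothesis $\sigma_{t_2}^2/\alpha^2 \geq \sigma_{t_1}^2$ holds, so the construction is well defined. Therefore $\mathcal{A}'$ simulates every query to $\mathcal{M}_{t_2}$ by sampling $Y$ from $\mathcal{M}_{t_1}$, drawing an independent $\xi$, and returning $\alpha(Y+\xi)$, at the cost of one sample from $\mathcal{M}_{t_1}$ per simulated sample.

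The main step requiring care, though not conceptually deep, is verifying that the induced history under $\mathcal{A}'$ is equivalent in law to that under $\mathcal{A}$ conditional on $J=j$: at each round $i$, because $\pi$ is $\sigma(H_{i-1})$-measurable and is fed the (inductively identically distributed) synthetic history, it prescribes an action with the correct conditional law, and the synthetic sample supplied in response matches the conditional law of a true sample. This coupling/induction step is the only place where one must be explicit; once it is in hand, both statements of the claim follow immediately from the equality of error probabilities.
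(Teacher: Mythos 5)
Your overall approach is correct and matches the paper's: part (i) is handled by observing the samples from $\mathcal{M}_{t}$ carry no information about $J$ (so they can be simulated privately), and part (ii) by a coupling/simulation argument in which genuine queries to $\mathcal{M}_{t_2}$ are replaced by a scaled-and-noise-inflated sample from $\mathcal{M}_{t_1}$. The paper's own proof is a single sentence; your explicit induction on the history and the observation that the testing function $\hat J$ depends only on $H_N$ is a useful, careful elaboration of exactly that idea.

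However, the algebra in your noise-inflation identity is inconsistent with the proportionality as stated. The claim posits $\theta_{t_1}^{(j)} = \alpha\,\theta_{t_2}^{(j)}$, i.e.\ $\theta_{t_2}^{(j)} = \theta_{t_1}^{(j)}/\alpha$. Your transform $\alpha(Y + \xi)$ has mean $\alpha\,\theta_{t_1}^{(j)} = \alpha^2\theta_{t_2}^{(j)}$, which equals $\theta_{t_2}^{(j)}$ only when $\alpha^2 = 1$, so the equality $\mathcal{N}(\alpha\theta_{t_1}^{(j)}, \sigma_{t_2}^2 I_d) = \mathcal{N}(\theta_{t_2}^{(j)}, \sigma_{t_2}^2 I_d)$ does not follow from the stated hypothesis. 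Under the literal reading, the correct simulation is $(Y+\xi)/\alpha$ with $\xi \sim \mathcal{N}(0, (\alpha^2\sigma_{t_2}^2 - \sigma_{t_1}^2) I_d)$, which gives mean $\theta_{t_1}^{(j)}/\alpha = \theta_{t_2}^{(j)}$ and variance $\sigma_{t_2}^2$ but requires the condition $\alpha^2\sigma_{t_2}^2 \geq \sigma_{t_1}^2$ rather than the claim's $\sigma_{t_2}^2/\alpha^2 \geq \sigma_{t_1}^2$. It is worth noting that your derivation is internally consistent once one reads the proportionality as $\theta_{t_2}^{(j)} = \alpha\theta_{t_1}^{(j)}$ (under which $\alpha Y$ already has the right mean and the variance condition $\sigma_{t_2}^2/\alpha^2 \geq \sigma_{t_1}^2$ is precisely what is needed), which suggests the claim's statement may have $t_1$ and $t_2$ transposed in the proportionality; the paper's downstream use of the claim in the proof of Lemma~\ref{lem: a reduction to a single task lower bound} is phrased in terms of scaled variances and is unaffected. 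Nonetheless, as written your mean computation and the variance condition do not both follow from the claim's stated hypotheses, so you should reconcile them explicitly.
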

\begin{proof}
In the first case, the observations of ${\cal M}_{t}$ are independent
of $J$. In the second case, the learner can simulate samples from
${\cal M}_{t_{2}}$ by sampling from ${\cal M}_{t_{1}}$, scaling,
and adding independent Gaussian noise with proper variance.
\end{proof}

\subsection{The choice of the localization set and associated challenges \label{subsec:The-choice-of}}

We have discussed in Section \ref{sec:Minimax-risk-lower} the choice
of the set $\boldsymbol{\Psi}$, and showed that choosing it to be
too small may allow the learner to bypass the actual difficulty of
the CL problem, because given that $\boldsymbol{\theta}\in\boldsymbol{\Psi}$
may already reveal too much information to the learner, and allow
it to obtain unrealistically low risk. Thus, the choice of the set
$\boldsymbol{\Psi}$ should be judicious in order to obtain a non-trivial
localized bound. We have provided one example, in which the choice
of the set $\boldsymbol{\Psi}_{=}(\boldsymbol{q})$ with a specific
choice of $\boldsymbol{q}$ allowed the learner to use a majority
rule, which obtains an unrealistic zero risk. This shows that no trivial
lower bound can be derived. Here we continue presenting such examples,
and show that in some cases the correct dependency of the lower bound
on $d$ cannot be obtained, simply because the learner can effectively
reduce the dimension of the problem to a lower effective dimension. 
\begin{example}
\label{exa:learning by dimension reduction}Assume noiseless source
models similarly to the example in Section \ref{sec:Minimax-risk-lower},
but that $T=2$ and that $q_{0}^{2}=q_{1}^{2}=0<q_{2}^{2}\lesssim\nicefrac{d\sigma_{0}^{2}}{N}$.
The loss of the weak oracle in \eqref{eq: weak oracle} is given by
$\|\hat{\theta}_{\text{w.o.}}-\theta_{0}\|^{2}\lesssim q_{2}^{2}$.
Contrary to example in Section \ref{sec:Minimax-risk-lower}, a learner
which collects one sample from each source model cannot decide which
one of them is located at the target parameter and which one of them
is at distance $q_{2}$ from the target parameter. However, let $\tilde{\theta}$
be such that $\|\tilde{\theta}-\theta_{0}\|=q_{2}$. Then, given the
two samples, say $Y_{1},Y_{2}$, the learner can sample directly from
the target model, and project the samples in the direction of $Y_{1}-Y_{2}=\pm(\theta_{0}-\tilde{\theta})$.
This reduces the noise variance to $\frac{\sigma_{0}^{2}}{N}$. So,
if $q_{2}^{2}\gtrsim\nicefrac{\sigma_{0}^{2}}{N}$ then the learner
can identify whether $Y_{1}=\theta_{0}$ or $Y_{2}=\theta_{0}$. The
resulting lower bound is smaller by a factor of $d$ compared to the
desired lower bound. 
\end{example}
The example in Section \ref{sec:Minimax-risk-lower} and Example \ref{exa:learning by dimension reduction}
appear to be specialized to cases in which there exists a source parameter
that exactly equals the target parameter, and the noise variance of
the source models is zero. However, this is not the case, as we next
explain. To this end, recall that the testing set is chosen as a packing
(or separated) set of target parameters, for which the distance between
each pair of parameters is larger than some $q$. Then, if it can
be shown that the error probability in this hypothesis testing problem
is bounded away from zero, this implies a lower bound of $\Theta(q^{2})$
on the minimax risk. If we let $t_{\text{w.o.}}$ be the maximal source
model index in the weak-oracle set, that is, the largest $t$ so that
$Q_{t}^{2}\lesssim\nicefrac{d\sigma_{0}^{2}}{N}$, then ideally we
would like to set $q=q_{\text{w.o.}}$ in order to establish the weak-oracle
learner risk as the lower bound on the minimax risk. In this case,
due to the triangle inequality, source parameters which are at distance
less than $\nicefrac{q}{2}$ from the target parameter are effectively
the same as ones at zero distance, in the sense that if the learner
knows that a source parameter is at distance $Q_{t}\leq\nicefrac{q}{2}$,
it can identify the target parameter in the packing set, and thus
identify the hypothesis with zero error probability. This leads to
a zero lower bound on the estimation loss. This will continue to hold
if the noise variance is non-zero, yet sufficiently small. Thus, the
aspects illuminated in Section \ref{sec:Minimax-risk-lower} and Example
\ref{exa:learning by dimension reduction} depict the actual challenges
of deriving minimax lower bounds by a reduction to a finite set of
hypotheses for the CL problem.  

For the simple case of a one-dimensional parameter $d=1$, the problem
is even more severe, since if the learner knows the distance between
$\theta_{t}$ and $\theta_{0}$ this already reduces the ambiguity
of the learner to just two possibilities. This is demonstrated in
the following example. 
\begin{example}
\label{exa:learning by identification}Assume that $d=1$, and $T=1$.
In this case, the source parameter may be just one of two possibilities
$\theta_{1}=\theta_{0}+sq_{1}$, where $s\in\{\pm1\}$ is an unknown
sign. The learner can actually achieve an error on the order of $\nicefrac{\sigma_{1}^{2}}{N}$
(which is better than the weak oracle) even if $q_{1}^{2}\gtrsim\nicefrac{\sigma_{0}^{2}}{N}\gtrsim\nicefrac{\sigma_{1}^{2}}{N}$.
To achieve this, the CL learner will estimate $\theta_{0}$ (resp.
$\theta_{1}$) via an empirical mean estimator $\overline{\theta}_{0}(N)$
(resp. $\overline{\theta}_{1}(N)$), to obtain an error of $\Theta(\nicefrac{\sigma_{0}}{\sqrt{N}})$
(resp. $\Theta(\nicefrac{\sigma_{1}}{\sqrt{N}})$); both error bounds
hold \WHP. Under the assumption on $q_{1}$, and under the high probability
event, the learner can guess the sign $\hat{s}:=\sgn(\overline{\theta}_{1}(N)-\overline{\theta}_{0}(N))$,
and succeed \WHP. Then, it will estimate the target parameter as
$\overline{\theta}_{1}(N)-\hat{s}\cdot q_{1}$. Such a learner will
achieve a squared error loss of $\Theta(\nicefrac{\sigma_{1}^{2}}{N})$,
which is better than the weak oracle rate given by $\Theta(\nicefrac{\sigma_{0}^{2}}{N})$. 
\end{example}
Evidently, the initial uncertainty of a CL algorithm goes beyond an
unknown sign. One may claim that this is a result of the learner knowing
exactly that $|\theta_{1}-\theta_{0}|=q_{1}$, but the same method
can be used even if $q_{1}$ is only known approximately. 

\subsection{A comparison of Theorem \ref{thm: lower bound global risk constant dimension}
with the lower bound of \citet{xu2022statistical} and its proof \label{subsec:Proof-of-Theorem- first-minimax}}

\paragraph{Comparison to \citet[Theorem 2]{xu2022statistical}}

The lower bound of \citet[Theorem 2]{xu2022statistical} is developed
under the simplifying assumption that $q_{1}=0$ and $q_{t}=\overline{q}$
for $t\in[T]\backslash\{1\}$ and is given by
\begin{equation}
L_{N,d}\left(\boldsymbol{\Psi}_{\leq}(\boldsymbol{q}),\boldsymbol{\Gamma}\right)\gtrsim\frac{\sigma_{0}^{2}\log T}{N}\wedge\overline{q}^{2}+\min_{t\in[T]}\frac{d\sigma_{t}^{2}}{N}.\label{eq: lower bound of Xu Theorem 2}
\end{equation}
The term $\min_{t\in[T]}\nicefrac{d\sigma_{t}^{2}}{N}$ comes from
the strong-oracle lower bound of \citet[Theorem 1]{xu2022statistical},
and so we next focus on the case in which the first term dominates.
Thus we focus the comparison on $\sigma_{t}^{2}=0$ for all $t\in[T]$,
for which the lower bound is $\Omega(\nicefrac{\sigma_{0}^{2}\log T}{N}\wedge\overline{q}^{2})$.
For the case at hand, and ignoring logarithmic terms, if $\nicefrac{\sigma_{0}^{2}}{N}\lesssim\overline{q}^{2}\lesssim\nicefrac{d\sigma_{0}^{2}}{N}$
then the lower bound of Theorem \ref{thm: lower bound global risk constant dimension}
is $\Omega(\overline{q}^{2})$ which matches the weak-oracle learner
risk, and is better than the risk lower bound $\tilde{\Omega}(\nicefrac{\sigma_{0}^{2}}{N})$
of \citet[Theorem 2]{xu2022statistical}. If $\overline{q}^{2}\lesssim\nicefrac{\sigma_{0}^{2}}{N}$
then both bounds are of the same order, and match the weak-oracle
learner risk. Finally, if $\overline{q}^{2}\gtrsim\nicefrac{d\sigma_{0}^{2}}{N}$
then the construction of \citet[Appendix B]{xu2022statistical} is
such that the distance between the distant models parameter and the
target parameter is $q_{t}\asymp\nicefrac{\sigma_{0}}{\sqrt{N}}\lesssim\overline{q}$
for $t\in[T]\backslash\{1\}$. Thus, all models are in ${\cal T}_{\text{w.o.}}$,
and in turn, all bounds are on the order $\tilde{\Omega}(\nicefrac{\sigma_{0}^{2}}{N})$.\footnote{The bound \eqref{eq: lower bound of Xu Theorem 2} of \citet{xu2022statistical}
in case the first term dominates is interpreted as ``\emph{a potential
improvement of a factor of $\nicefrac{d}{\log T}$ when $\overline{q}$
and $\sigma_{t}^{2}$ are small}'' \citep[p. 4, after Theorem 2]{xu2022statistical};
this is due to the optimal risk of $\Theta(\nicefrac{d\sigma_{0}^{2}}{N})$
obtained by sampling only from the target model (when no source models
are available), and then taking the ratio $\nicefrac{(\nicefrac{\sigma_{0}^{2}\log T}{N})}{(\nicefrac{d\sigma_{0}^{2}}{N})}$.
However, more carefully stated, this is the gain only when $\overline{q}^{2}\gtrsim\nicefrac{\sigma_{0}^{2}\log T}{N}\gtrsim\min_{t\in[T]}\nicefrac{d\sigma_{t}^{2}}{N}$.
Otherwise, the ratio between the \RHS of \eqref{eq: lower bound of Xu Theorem 2}
and $\nicefrac{d\sigma_{0}^{2}}{N}$ is different from\emph{ $\nicefrac{d}{\log T}$}.}

\paragraph{Proof outline}

Following Claim \ref{claim: From adaptive sampling to N samples each},
we focus on bounding $\tilde{L}_{N,d}(\boldsymbol{\Psi},\boldsymbol{\Gamma})$
rather than its adaptive-sampling counterpart. We then prove Theorem
\ref{thm: lower bound global risk constant dimension} in a few gradual
steps. For a fixed dimension, there is no need to invoke Fano's method
(as \citet{mousavi2020minimax} did), and the simpler Le-Cam's two-point
method suffices to obtain lower bounds. As a warm-up, we recall this
method for the standard mean estimation case. We then consider the
single source task case $T=1$ (considered by \citet{mousavi2020minimax}),
and provide a self-contained proof, in a form that will allow us to
generalize it to $T>1$. We then extend this proof to $T=2$ source
models, which provides the clearest intuition on why it is the performance
of the weak-oracle learner that is achieved rather than that of the
strong-oracle learner. Then, in order to illuminate the challenge
of extending such an argument to a general $T>2$, we explicitly consider
the case $T=3$. We aim to provide a construction of an hypothesis
testing problem which naturally generalizes the $T=2$ case, but then
explicitly point out the problematic aspect of such an extension.
We then prove Theorem \ref{thm: lower bound global risk constant dimension}
by reducing $T>2$ to the $T=2$ case, which is already solved.

We will repeatedly use the following expression for the KL divergence
between Gaussian \rv's: 
\begin{fact}[{\citep[Exercise 15.13]{wainwright2019high}}]
\label{fact: KL for Gaussians}For $\mu_{1},\mu_{2}\in\mathbb{R}^{d}$
and $\Sigma_{1},\Sigma_{2}\in\mathbb{S}_{++}^{d}$ it holds that
\[
D_{\text{KL}}\left({\cal N}(\mu_{1},\Sigma_{1})\mid\mid{\cal N}(\mu_{2},\Sigma_{2})\right)=\frac{1}{2}\log\frac{\left|\Sigma_{2}\right|}{\left|\Sigma_{1}\right|}-\frac{d}{2}+\frac{1}{2}\Tr\left[\Sigma_{1}\Sigma_{2}^{-1}\right]+\frac{1}{2}\left(\mu_{2}-\mu_{1}\right)^{\top}\Sigma_{2}^{-1}\left(\mu_{2}-\mu_{1}\right).
\]
\end{fact}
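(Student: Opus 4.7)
The plan is to compute the KL divergence directly from its definition, using the Gaussian density and elementary linear-algebra identities. Write $p_i(x) = (2\pi)^{-d/2}|\Sigma_i|^{-1/2}\exp\bigl(-\tfrac{1}{2}(x-\mu_i)^\top\Sigma_i^{-1}(x-\mu_i)\bigr)$ for $i=1,2$, and start from
\[
D_{\mathrm{KL}}({\cal N}(\mu_1,\Sigma_1)\,\|\,{\cal N}(\mu_2,\Sigma_2)) = \E_{X\sim{\cal N}(\mu_1,\Sigma_1)}\!\left[\log\frac{p_1(X)}{p_2(X)}\right].
\]
Taking the log-ratio immediately yields the constant term $\tfrac12\log(|\Sigma_2|/|\Sigma_1|)$ plus the expectation of $\tfrac12[(X-\mu_2)^\top\Sigma_2^{-1}(X-\mu_2) - (X-\mu_1)^\top\Sigma_1^{-1}(X-\mu_1)]$. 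So the whole task reduces to evaluating the expectations of two quadratic forms under $X\sim{\cal N}(\mu_1,\Sigma_1)$.

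For the first quadratic form I would use the trace trick: $(X-\mu_1)^\top\Sigma_1^{-1}(X-\mu_1) = \Tr[\Sigma_1^{-1}(X-\mu_1)(X-\mu_1)^\top]$, and since $\E[(X-\mu_1)(X-\mu_1)^\top]=\Sigma_1$, the expectation equals $\Tr[\Sigma_1^{-1}\Sigma_1] = \Tr[I_d] = d$. This accounts for the $-d/2$ term in the claimed formula.

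For the second quadratic form, decompose $X-\mu_2 = (X-\mu_1) + (\mu_1-\mu_2)$ and expand:
\[
(X-\mu_2)^\top\Sigma_2^{-1}(X-\mu_2) = (X-\mu_1)^\top\Sigma_2^{-1}(X-\mu_1) + 2(\mu_1-\mu_2)^\top\Sigma_2^{-1}(X-\mu_1) + (\mu_1-\mu_2)^\top\Sigma_2^{-1}(\mu_1-\mu_2).
\]
The cross term vanishes in expectation because $\E[X-\mu_1]=0$; the first term becomes $\Tr[\Sigma_2^{-1}\Sigma_1]$ by the same trace trick; and the third is a deterministic quadratic in $\mu_2-\mu_1$. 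Summing the three contributions with the constant log-determinant term gives exactly
\[
\tfrac12\log\tfrac{|\Sigma_2|}{|\Sigma_1|} - \tfrac{d}{2} + \tfrac12\Tr[\Sigma_1\Sigma_2^{-1}] + \tfrac12(\mu_2-\mu_1)^\top\Sigma_2^{-1}(\mu_2-\mu_1),
\]
as claimed. There is no real obstacle here — every step is either the definition of KL, the Gaussian density, linearity of expectation, cyclicity of the trace, or the identity $\E[(X-\mu_1)(X-\mu_1)^\top]=\Sigma_1$; the only care needed is the bookkeeping of the $\tfrac12$ factors and confirming that $\Tr[\Sigma_2^{-1}\Sigma_1] = \Tr[\Sigma_1\Sigma_2^{-1}]$ by cyclicity so that the sign convention in the stated formula matches.
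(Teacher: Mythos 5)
Your derivation is correct and complete; it is the canonical direct computation from the KL definition, using the trace trick and the mean-shift decomposition of the quadratic form. The paper itself gives no proof for this fact (it is cited as \citep[Exercise 15.13]{wainwright2019high}), and your argument is exactly the standard one the referenced textbook exercise intends.
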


\paragraph{Warm-up: Le-Cam's method for the Gaussian location model}

Recall Le-Cam's two-point method: A family of probability measures
$\{P_{\theta}\}_{\theta\in\Theta}$ is given, where $\theta$ is a
parameter in the class $\Theta$. Assume for simplicity that $\Theta\subset\mathbb{R}$
and the squared loss function. Le-Cam's method is based on reducing
the estimation problem of $\theta$ based on a sample from $P_{\theta}$,
to a binary hypothesis problem between a pair $\theta^{(0)},\theta^{(1)}\in\Theta$.
If the optimal test between $P_{\theta^{(0)}}$ and $P_{\theta^{(1)}}$
with equal prior has large average error probability (bounded away
from zero), then the estimation error must be at least on the order
of $(\theta^{(0)}-\theta^{(1)})^{2}$. In turn, the error probability
is characterized using the total variation between $P_{\theta^{(0)}}$
and $P_{\theta^{(1)}}$ (which can later be bounded using the Hellinger
or the KL divergence, which are simpler to analyze). The resulting
bound is given, \eg, in \citep[Eq. (15.14)]{wainwright2019high},
\[
\inf_{\hat{\theta}}\sup_{\theta\in\Theta}\E\left[\|\theta-\hat{\theta}\|^{2}\right]\geq\frac{(\theta^{(0)}-\theta^{(1)})^{2}}{2}\cdot\left[1-\dtv(P_{\theta^{(0)}},P_{\theta^{(1)}})\right],
\]
which can be optimized over the choice of the pair $\theta^{(0)}$
and $\theta^{(1)}$. Using Pinsker's inequality \citep[Eq. (3.59)]{wainwright2019high},
it holds for a Gaussian location model ${\cal N}(\theta,\nicefrac{\sigma^{2}}{N})$
(say $P_{\theta}$ represents $N$ independent measurements from the
same model) that (Fact \ref{fact: KL for Gaussians})
\[
\dtv(P_{\theta^{(0)}},P_{\theta^{(1)}})\leq\sqrt{\frac{1}{2}\Dkl(P_{\theta^{(0)}},P_{\theta^{(1)}})}=\frac{|\theta^{(0)}-\theta^{(1)}|}{2\sigma/\sqrt{N}}.
\]
Thus, choosing $\theta^{(0)}$ and $\theta^{(1)}$ to be any pair
satisfying $|\theta^{(0)}-\theta^{(1)}|=\nicefrac{\sigma}{N}$, implies
that 
\[
\inf_{\hat{\theta}}\sup_{\theta_{0}\in\Theta}\E\left[\|\theta-\hat{\theta}\|^{2}\right]\geq\frac{1}{8}\frac{\sigma^{2}}{N}.
\]

\paragraph{A single source model ($T=1$)}

We next consider the single source model setting, that is $T=1$.
While this model was considered by \citet{mousavi2020minimax}, we
provide here a self-contained proof, which is tailored to the CL setting,
and which will be generalized later on for $T>1$. In the case of
$T=1$, the learner knows that the parameter of the model ${\cal M}_{1}$
is at squared distance at most $q_{1}^{2}$ from the true parameter. 
\begin{prop}
If $\sigma_{0}^{2}\geq\sigma_{1}^{2}$ then
\[
L_{N,1}\left(\boldsymbol{\Psi}_{\leq}(q_{1}),\boldsymbol{\Gamma}\right)\geq\frac{1}{20}\cdot\min\left\{ \frac{\sigma_{0}^{2}}{N},q_{1}^{2}+\frac{\sigma_{1}^{2}}{N}\right\} .
\]
\end{prop}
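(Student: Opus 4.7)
The strategy is the Le Cam two-point method in the warm-up form, after reducing to non-adaptive sampling. By Claim~\ref{claim: From adaptive sampling to N samples each}, it suffices to lower bound the non-adaptive risk $\tilde L_{N,1}(\boldsymbol{\Psi}_{\leq}(q_1),\boldsymbol{\Gamma})$ in which the learner receives $N$ independent samples from each of $\mathcal{M}_0$ and $\mathcal{M}_1$. I then reduce to a binary test between two instances $\boldsymbol{\theta}^{(0)},\boldsymbol{\theta}^{(1)}\in\boldsymbol{\Psi}_{\leq}(q_1)$, bound the total variation between the induced product distributions by Pinsker from the joint KL, and conclude via the Le Cam inequality $\tilde L_{N,1} \geq (\theta_0^{(0)}-\theta_0^{(1)})^{2}(1-\dtv)/2$.

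\textbf{Hypothesis construction.} Take $\theta_0^{(0)}=0$ and $\theta_0^{(1)}=\Delta$ with $\Delta>0$ to be tuned, and choose each $\theta_1^{(j)}$ to be the point in $[\theta_0^{(j)}-q_1,\theta_0^{(j)}+q_1]$ closest to the opposite hypothesis. When $\Delta\leq 2q_1$ the two intervals overlap and we set $\theta_1^{(0)}=\theta_1^{(1)}=\Delta/2$ (Construction~A); when $\Delta>2q_1$ we set $\theta_1^{(0)}=q_1$ and $\theta_1^{(1)}=\Delta-q_1$ (Construction~B). In either case $|\theta_1^{(j)}-\theta_0^{(j)}|\leq q_1$ and $|\theta_1^{(0)}-\theta_1^{(1)}|=\max(0,\Delta-2q_1)$. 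Independence of the $2N$ Gaussian samples and Fact~\ref{fact: KL for Gaussians} yield
\[
\mathrm{KL}\;=\;\frac{N\Delta^{2}}{2\sigma_0^{2}}\;+\;\frac{N\,\max(0,\Delta-2q_1)^{2}}{2\sigma_1^{2}}.
\]

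\textbf{Tuning $\Delta$ and concluding.} Set $\Delta^{2}=\frac{1}{5}\min\{\sigma_0^{2}/N,\,q_1^{2}+\sigma_1^{2}/N\}$. The first term in $\mathrm{KL}$ is bounded by $1/10$ from $\Delta^{2}\leq\sigma_0^{2}/(5N)$. The second term is nonzero only when $\Delta>2q_1$; in that sub-case $4q_1^{2}<\Delta^{2}\leq(q_1^{2}+\sigma_1^{2}/N)/5$, forcing $q_1^{2}<\sigma_1^{2}/(19N)$ and hence $\Delta^{2}\leq 4\sigma_1^{2}/(19N)$, so the second term is at most $2/19$. Thus $\mathrm{KL}\leq 1/10+2/19<1/2$, Pinsker gives $\dtv\leq 1/2$, and Le Cam yields
\[
L_{N,1}\;\geq\;\tilde L_{N,1}\;\geq\;\frac{\Delta^{2}}{2}(1-\dtv)\;\geq\;\frac{\Delta^{2}}{4}\;=\;\frac{1}{20}\min\!\left\{\frac{\sigma_0^{2}}{N},\,q_1^{2}+\frac{\sigma_1^{2}}{N}\right\}.
\]

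\textbf{Main obstacle.} The subtle step is controlling the second KL term in the sub-case $\Delta>2q_1$: the naive bound $\max(0,\Delta-2q_1)^{2}\leq\Delta^{2}$ combined with $\Delta^{2}\leq\sigma_0^{2}/(5N)$ gives $\frac{N\Delta^{2}}{2\sigma_1^{2}}$, which can blow up when $\sigma_1\ll\sigma_0$. The resolution is that the sub-case $\Delta>2q_1$ itself forces $q_1$ to be small relative to $\sigma_1/\sqrt{N}$, which in turn confines $\Delta^{2}$ to a small multiple of $\sigma_1^{2}/N$. This dichotomy between the two constructions is what produces the sum $q_1^{2}+\sigma_1^{2}/N$ rather than treating $q_1^{2}$ and $\sigma_1^{2}/N$ separately: Construction~A governs the regime $q_1\gtrsim\sigma_1/\sqrt{N}$ and contributes the $q_1^{2}\wedge\sigma_0^{2}/N$ part, while Construction~B governs $q_1\lesssim\sigma_1/\sqrt{N}$ and contributes the $\sigma_1^{2}/N$ part.
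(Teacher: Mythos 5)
Your proof is correct, and it follows the same overall route as the paper's -- a reduction to non-adaptive sampling via Claim \ref{claim: From adaptive sampling to N samples each}, followed by a Le Cam two-point bound using the Gaussian KL formula and Pinsker -- but the two-point construction and the case logic differ in a way worth noting.

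The paper fixes $V := q_1 + \sigma_1/(4\sqrt{N})$, places the target parameters at $\pm V$ and the source parameters at $\pm(V-q_1)$ (so the source is always at the boundary distance $q_1$), and must then split into the cases $V^2 \le \sigma_0^2/(4N)$ and $V^2 > \sigma_0^2/(4N)$, handling the latter by monotonicity of $\tilde L_{N,1}$ in $q_1$. You instead parametrize by the target separation $\Delta$, set $\Delta^2 = \tfrac{1}{5}\min\{\sigma_0^2/N,\; q_1^2 + \sigma_1^2/N\}$ up front, and choose the source points so as to \emph{minimize} the source KL contribution subject to the constraint $|\theta_1^{(j)} - \theta_0^{(j)}| \le q_1$: coincident source points when $\Delta \le 2q_1$ (zero source KL), symmetric otherwise (source separation $\Delta - 2q_1$). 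This eliminates the paper's monotonicity step, since the tuning of $\Delta$ already ensures the KL is small in all regimes, and the interesting observation that the regime $\Delta > 2q_1$ forces $q_1^2 < \sigma_1^2/(19N)$, hence $\Delta^2 \le 4\sigma_1^2/(19N)$, cleanly controls the source term -- this is exactly the structural point that in the paper is absorbed into the definition of $V$. Your construction is arguably more transparent about \emph{why} the lower bound has the form $q_1^2 + \sigma_1^2/N$: Construction~A shows that nothing forces the estimator to pay more than $\Theta(q_1^2)$ when $q_1$ dominates, while Construction~B shows the irreducible $\Theta(\sigma_1^2/N)$ cost when $q_1$ is negligible. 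The constants come out identically. (Both you and the paper invoke the two-point inequality in the form $\tfrac{\delta^2}{2}(1-\dtv)$ rather than the $\tfrac{\delta^2}{8}(1-\dtv)$ that one gets from applying \citet[Prop.~15.1]{wainwright2019high} with the test-reduction; this affects only the absolute constant and is inherited from the paper's warm-up, not introduced by you.)
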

The proof of the lower bound in the most interesting regime $q_{1}^{2}\lesssim\nicefrac{\sigma_{0}^{2}}{N}$
(the source is useful) is based on the construction of a binary hypothesis
testing between the parameters shown in Figure \ref{fig:binary hypothesis single source}.
In this case, the distances $|\theta_{t}^{(0)}-\theta_{t}^{(1)}|$
are below the noise standard error $\nicefrac{\sigma_{t}}{\sqrt{N}}$
for both $t=0$ (target) and $t=1$ (source). As a result, the error
in the binary hypothesis testing is large, which leads to squared
loss of $|\theta_{0}^{(0)}-\theta_{0}^{(1)}|\asymp q_{1}^{2}+\nicefrac{\sigma_{1}^{2}}{N}$.
The proof in the regime in which the source is useless since $q_{1}^{2}\gtrsim\nicefrac{\sigma_{0}^{2}}{N}$
is based on the monotonicity of $\tilde{L}_{N,d}(\boldsymbol{\Psi}_{\leq}(q_{1}),\boldsymbol{\Gamma})$
in $\boldsymbol{Q}$. 

\begin{figure}
\begin{centering}
\includegraphics{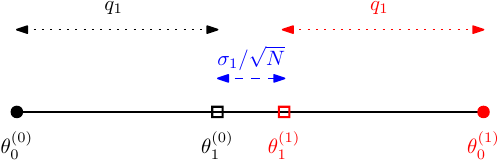}
\par\end{centering}
\caption{The parameter hypotheses for $T=1$. One of the hypotheses is in black
and the other in red. The target parameter is designated by a disc
and the source parameter by a square. \label{fig:binary hypothesis single source}}
\end{figure}

\begin{proof}
Let $V:=q_{1}+\nicefrac{\sigma_{1}}{(4\sqrt{N})}$ for brevity, and
consider the case $V^{2}\leq\nicefrac{\sigma_{0}^{2}}{(4N)}$. We
construct the following binary hypothesis testing problem between
\[
\boldsymbol{\theta}^{(0)}:=(\theta_{0}^{(0)},\theta_{1}^{(0)})=(-V,-V+q_{1})
\]
 and 
\[
\boldsymbol{\theta}^{(1)}:=(\theta_{0}^{(1)},\theta_{1}^{(1)})=(V,V-q_{1})
\]
using $N$ samples from each of the models. Using Fact \ref{fact: KL for Gaussians},
it holds that 
\begin{align*}
\Dkl(P_{\boldsymbol{\theta}^{(0)}},P_{\boldsymbol{\theta}^{(1)}}) & =N\cdot\Dkl(P_{\theta_{0}^{(0)}},P_{\theta_{0}^{(1)}})+N\cdot\Dkl(P_{\theta_{1}^{(0)}},P_{\theta_{1}^{(1)}})\\
 & =N\cdot\frac{2V^{2}}{\sigma_{0}^{2}}+N\frac{2\left(\frac{\sigma_{1}}{4\sqrt{N}}\right)^{2}}{\sigma_{1}^{2}}\\
 & \leq\frac{1}{2}+\frac{1}{8}=\frac{5}{8},
\end{align*}
where the inequality follows from the assumptions. By Pinsker's inequality
\citep[Eq. (3.59)]{wainwright2019high}
\[
\dtv(P_{\boldsymbol{\theta}^{(0)}},P_{\boldsymbol{\theta}^{(1)}})\leq\sqrt{\frac{1}{2}\Dkl(P_{\boldsymbol{\theta}^{(0)}},P_{\boldsymbol{\theta}^{(1)}})}\leq\frac{\sqrt{5}}{4}.
\]
Then, Le-Cam's method implies that
\begin{align}
\inf_{\hat{\vartheta}}\sup_{\boldsymbol{\theta}\in\boldsymbol{\Psi}_{\leq}(q_{1},\boldsymbol{\Gamma})}\E_{\boldsymbol{Y}\sim\boldsymbol{\phi}}\left[\left|\hat{\vartheta}\left(\boldsymbol{Y}\right)-\theta_{0}\right|^{2}\right] & \geq\frac{(\theta_{0}^{(0)}-\theta_{0}^{(1)})^{2}}{2}\cdot\left[1-\dtv(P_{\boldsymbol{\theta}^{(0)}},P_{\boldsymbol{\theta}^{(1)}})\right]\nonumber \\
 & \geq\frac{(\theta^{(0)}-\theta^{(1)})^{2}}{5}\nonumber \\
 & =\frac{4\left(q_{1}+\frac{\sigma_{1}}{4\sqrt{N}}\right)^{2}}{5}.\label{eq: proof lower bound one source one dimension first bound}
\end{align}
The bound \eqref{eq: proof lower bound one source one dimension first bound}
implies that for $V^{2}=\nicefrac{\sigma_{0}^{2}}{(4N)}$ it holds
that 
\[
\inf_{\hat{\vartheta}}\sup_{\boldsymbol{\theta}\in\boldsymbol{\Psi}_{\leq}(q_{1},\boldsymbol{\Gamma})}\E_{\boldsymbol{Y}\sim\boldsymbol{\phi}}\left[\left|\hat{\vartheta}\left(\boldsymbol{Y}\right)-\theta_{0}\right|^{2}\right]\geq\frac{1}{5}\cdot\frac{\sigma_{0}^{2}}{N},
\]
which together with the monotonicity of $\tilde{L}_{N,d}(\boldsymbol{\Psi}_{\leq}(q_{1}),\boldsymbol{\Gamma})$
in $q_{1}$ and \eqref{eq: proof lower bound one source one dimension first bound}
imply the lower bound 
\[
\tilde{L}_{N,d}(\boldsymbol{\Psi}_{\leq}(q_{1}),\boldsymbol{\Gamma})\geq\min\left\{ \frac{4\left(q_{1}+\frac{\sigma_{1}}{4\sqrt{N}}\right)^{2}}{5},\frac{1}{5}\cdot\frac{\sigma_{0}^{2}}{N}\right\} .
\]
This is then further weakened to obtain the stated lower bound.
\end{proof}

\paragraph{Two source models ($T=2$)}

We next consider the setting of two source models, $T=2$. In this
setting, the learner has to decide if either of the two source models
is useful for the estimation of the target parameter. We show that
in this case, the lower bound on the risk matches the risk of the
weak-oracle learner.
\begin{prop}
\label{prop: Lower bound d=00003D1 T=00003D2}If $q_{0}^{2}=0\leq q_{1}^{2}\leq q_{2}^{2}$
and $\sigma_{0}^{2}\geq\sigma_{1}^{2}\geq\sigma_{2}^{2}$ then
\begin{align}
L_{N,1}\left(\boldsymbol{\Psi}_{\leq}(\boldsymbol{q}),\boldsymbol{\Gamma}\right) & \geq\begin{cases}
\frac{1}{24}\cdot\left(q_{2}^{2}+\frac{\sigma_{2}^{2}}{N}\right), & q_{1}^{2}\leq q_{2}^{2}\leq\frac{1}{16}\frac{\sigma_{0}^{2}}{N}\\
\frac{1}{720}\cdot\left(q_{1}^{2}+\frac{\sigma_{1}^{2}}{N}\right), & q_{1}^{2}\leq\frac{1}{16}\frac{\sigma_{0}^{2}}{N}\leq q_{2}^{2}\\
\frac{1}{45}\cdot\frac{\sigma_{0}^{2}}{N}, & q_{2}^{2}\geq q_{1}^{2}\geq\frac{1}{16}\frac{\sigma_{0}^{2}}{N}
\end{cases}.\label{eq: lower bound d=00003D1 T=00003D2}\\
 & \geq\frac{1}{720}\cdot\left\{ \frac{d\sigma_{\text{w.o.}}^{2}}{N}+q_{\text{w.o.}}^{2}\right\} \label{eq: lower bound d=00003D1 T=00003D2 alternative}
\end{align}
The final lower bound matches the risk of the weak-oracle learner.
\end{prop}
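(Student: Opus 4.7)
I invoke Claim~\ref{claim: From adaptive sampling to N samples each} to pass to the non-adaptive setting in which the tester sees $N$ \IID samples from each of the three models, and then apply Le Cam's two-point method (in the form used in the $T=1$ proof) separately in each of the three regimes of \eqref{eq: lower bound d=00003D1 T=00003D2}. In each regime I exhibit two hypotheses $\boldsymbol{\theta}^{(0)},\boldsymbol{\theta}^{(1)}\in\boldsymbol{\Psi}_{\le}(\boldsymbol{q})$ with target separation $\Delta:=|\theta_0^{(0)}-\theta_0^{(1)}|$ of the claimed order, use Fact~\ref{fact: KL for Gaussians} together with Pinsker's inequality to bound $\dtv\le \sqrt{\sum_{t=0}^2 N(\theta_t^{(0)}-\theta_t^{(1)})^2/(4\sigma_t^2)}$, and conclude via $L\ge \Delta^2(1-\dtv)/2$.

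\textbf{Cases 2 and 3.}
In Case~3 ($q_1^2\wedge q_2^2\ge \tfrac{1}{16}\sigma_0^2/N$) I take $\theta_0^{(j)}=\pm V$ with $V=\sigma_0/(4\sqrt N)$ and hide both sources by setting $\theta_t^{(0)}=\theta_t^{(1)}$ in $[-V-q_t,-V+q_t]\cap[V-q_t,V+q_t]$, which is nonempty since $V\le q_1\le q_2$; only the target contributes to the KL, and Le Cam directly delivers $\sigma_0^2/(45N)$. In Case~2 ($q_1^2\le \tfrac{1}{16}\sigma_0^2/N\le q_2^2$) I copy the $T=1$ construction on target and source~1 with $V=q_1+\sigma_1/(4\sqrt N)$ and $\theta_1^{(j)}=\pm(V-q_1)$, and use the identity permutation to hide source~2 whenever $V\le q_2$; in the boundary subregime $q_2<V\le\sigma_0/(2\sqrt N)$ I shrink $V$ to $q_2$, after which $q_2-q_1\le\sigma_1/(4\sqrt N)$ leaves source~1 with an $O(1)$ KL contribution, yielding the middle case of \eqref{eq: lower bound d=00003D1 T=00003D2}.

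\textbf{Case 1 (where the permutation freedom is essential).}
When $q_2^2\le \tfrac{1}{16}\sigma_0^2/N$ the symmetric $T=1$-style construction breaks: source~1's separation $2|V-q_1|$ may be of order $q_2-q_1\sim q_2$ while $\sigma_1$ may be as small as $\sigma_2$, so that its KL blows up. The remedy is an \emph{asymmetric} two-point pair whose two hypotheses are witnessed by \emph{different} permutations in \eqref{eq: semi-local set}:
\[
\boldsymbol{\theta}^{(0)}=(0,\,q_1,\,q_2)\ (\zeta=\mathrm{id}),\qquad \boldsymbol{\theta}^{(1)}=(W,\,W-q_2,\,W-q_1)\ (\zeta=(1\,2)),
\]
with $W=q_1+q_2+\sigma_2/(c\sqrt N)$ for a numerical constant $c$ chosen large enough. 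Both hypotheses lie in $\boldsymbol{\Psi}_{\le}(\boldsymbol{q})$: the swap permutation for $\boldsymbol{\theta}^{(1)}$ requires source~1 and source~2 to lie at distances $q_2$ and $q_1$ from the target respectively, which is exactly what the construction achieves. The two source separations both equal $|W-q_1-q_2|=\sigma_2/(c\sqrt N)$, and $W\lesssim \sigma_0/\sqrt N$ keeps the target KL bounded; using $\sigma_1\ge\sigma_2$ each source contributes $O(1/c^2)$ to the KL. Le Cam then yields $L\gtrsim W^2\ge q_2^2+\sigma_2^2/(c^2 N)$, matching the first case of \eqref{eq: lower bound d=00003D1 T=00003D2} for a suitable $c$.

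\textbf{Main obstacle and assembly.}
The main difficulty is the Case~1 construction, and this is precisely the step where the multi-source structure of the problem (through the permutation freedom in $\boldsymbol{\Psi}_{\le}(\boldsymbol{q})$) is forced into the lower bound: neither permutation alone controls both source separations, because each one leaves one of them proportional to $q_2-q_1$ instead of to $\sigma_2/\sqrt N$. Pairing the two permutations asymmetrically across the two hypotheses lets both source separations collapse to a common, tunable $O(\sigma_2/\sqrt N)$ scale. Once the three pointwise bounds are established, \eqref{eq: lower bound d=00003D1 T=00003D2 alternative} follows by a direct case-check: $t_{\mathrm{w.o.}}=2$ in Case~1 (so $q_{\mathrm{w.o.}}^2+\sigma_{\mathrm{w.o.}}^2/N=q_2^2+\sigma_2^2/N$), $t_{\mathrm{w.o.}}=1$ in Case~2 (so the middle piece coincides up to constants with $q_1^2+\sigma_1^2/N$), and $\mathcal{T}_{\mathrm{w.o.}}=\emptyset$ in Case~3 (so the weak-oracle learner's reference is $\sigma_0^2/N$).
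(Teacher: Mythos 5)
Your construction for Case~1 is, up to a rigid translation and a relabeling of the constant, exactly the one the paper uses: writing the paper's pair $(\pm U,\mp U\pm q_1,\mp U\pm q_2)$ with $U=\tfrac{q_1+q_2}{2}+\tfrac{\sigma_2}{4\sqrt N}$ and shifting by $+U$ recovers your $(0,q_1,q_2)$ vs.\ $(W,W-q_2,W-q_1)$ with $W=2U$, i.e.\ $c=2$; both exploit the permutation freedom in $\boldsymbol{\Psi}_{\le}(\boldsymbol{q})$ so that the two source separations collapse to the common scale $\sigma_2/\sqrt N$ while each hypothesis individually respects the distance constraints. The reduction via Claim~\ref{claim: From adaptive sampling to N samples each}, the Le Cam/Pinsker machinery, and the interpretation of the three regimes via $t_{\mathrm{w.o.}}$ also match the paper's proof, so this is essentially the same argument. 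The only places you diverge are cosmetic: in Case~2 the paper keeps $V=q_1+\sigma_1/(8\sqrt N)$ fixed and rescales the whole configuration by $1/3$ (then checks $V/3\le q_2$ using $V\le U$), whereas you keep the scale and split into $V\le q_2$ versus the boundary subregime where you shrink $V$ to $q_2$ and argue $q_2-q_1\le\sigma_1/(4\sqrt N)$; and in Case~3 the paper invokes elementwise monotonicity of $\tilde L_{N,1}$ in $\boldsymbol q$ to shrink $q_1$ down to the threshold, while you build a fresh two-point pair with both sources pinned in the nonempty intersection $[-V-q_t,-V+q_t]\cap[V-q_t,V+q_t]$. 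Both variants close the same gaps (I checked, e.g., that in your Case~2 boundary subregime $q_2^2\gtrsim q_1^2+\sigma_1^2/N$ still holds, so shrinking $V$ does not lose the target bound), so your proof is sound; you would just need to fix a concrete $c$ and track the constants to land exactly on $1/24$, $1/720$, $1/45$.
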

The proof of the lower bound in the case $T=2$ is more complicated
than the proof in the single source case $(T=1$), yet it is conceptually
similar. In the most insightful regime, in which $q_{2}^{2}\lesssim\nicefrac{\sigma_{0}^{2}}{N}$,
the proof is based on the construction of a binary hypothesis testing
between the parameters shown in Figure \ref{fig:binary hypothesis two sources}.
Again, the distances $|\theta_{t}^{(0)}-\theta_{t}^{(1)}|$ are below
the noise standard error $\nicefrac{\sigma_{t}}{\sqrt{N}}$ for $t\in\{0,1,2\}$
(target and two sources). As a result, the error in the binary hypothesis
testing is large, which leads to squared loss of $|\theta_{0}^{(0)}-\theta_{0}^{(1)}|\asymp q_{1}^{2}+q_{2}^{2}+\nicefrac{\sigma_{2}^{2}}{N}\asymp q_{2}^{2}+\nicefrac{\sigma_{2}^{2}}{N}$.
The proof in the regime in which the second source is useless but
the first one is useful since $q_{1}^{2}\lesssim\nicefrac{\sigma_{0}^{2}}{N}\lesssim q_{2}^{2}$
is based on a construction similar to the $T=1$ case (\ie, the second
source can effectively be ignored), and the proof in the last regime,
in which no source is useful because $q_{2}^{2}\gtrsim q_{1}^{2}\gtrsim\nicefrac{\sigma_{0}^{2}}{N}$
is based again on the monotonicity of $L_{N,1}(\boldsymbol{\Psi}_{\leq}(\boldsymbol{q}),\boldsymbol{\Gamma})$
in $\boldsymbol{q}$. 

\begin{figure}
\begin{centering}
\includegraphics{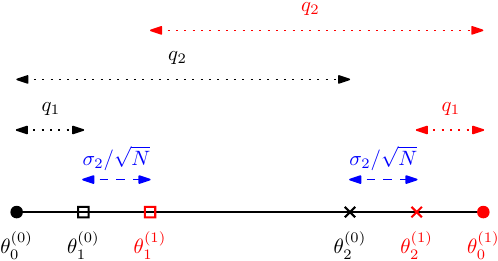}
\par\end{centering}
\caption{The parameter hypotheses for $T=2$. One of the hypotheses is in black
and the other in red. The target parameter is designated by a disc,
the first source parameter by a square, and the second by a cross.
\label{fig:binary hypothesis two sources}}
\end{figure}

\begin{proof}
Denote for brevity $U:=\frac{q_{1}}{2}+\frac{q_{2}}{2}+\nicefrac{\sigma_{2}}{4\sqrt{N}}$.
First consider the case $U^{2}\leq\nicefrac{\sigma_{0}^{2}}{(4N)}$.
We construct the following binary hypothesis testing problem between
\[
\boldsymbol{\theta}^{(0)}:=(\theta_{0}^{(0)},\theta_{1}^{(0)},\theta_{2}^{(0)})=(-U,-U+q_{1},-U+q_{2})
\]
 and 
\[
\boldsymbol{\theta}^{(1)}:=(\theta_{0}^{(1)},\theta_{1}^{(1)},\theta_{2}^{(1)})=(U,U-q_{2},U-q_{1})
\]
using $N$ samples from each of the models. This pair of hypotheses
is constructed so that: (1) Under each of the hypotheses, there is
one source parameter at distance $q_{1}$ from the target parameter
$\theta_{0}$ and one at distance $q_{2}$ (though the source model
with the closer parameter is different for each hypotheses). (2) The
parameters $\theta_{t}^{(0)}$ and $\theta_{t}^{(1)}$ are indistinguishable
using observations from the model ${\cal M}_{t}$. Indeed, using Fact
\ref{fact: KL for Gaussians}, it holds that 
\begin{align*}
\Dkl(P_{\boldsymbol{\theta}^{(0)}},P_{\boldsymbol{\theta}^{(1)}}) & =N\cdot\Dkl(P_{\theta_{0}^{(0)}},P_{\theta_{0}^{(1)}})+N\cdot\Dkl(P_{\theta_{1}^{(0)}},P_{\theta_{1}^{(1)}})+N\cdot\Dkl(P_{\theta_{2}^{(0)}},P_{\theta_{2}^{(1)}})\\
 & =N\cdot\frac{2U^{2}}{\sigma_{0}^{2}}+N\frac{\left(2U-q_{2}-q_{1}\right)^{2}}{2\sigma_{1}^{2}}+N\frac{\left(2U-q_{2}-q_{1}\right)^{2}}{2\sigma_{2}^{2}}\\
 & \leq N\cdot\frac{2U^{2}}{\sigma_{0}^{2}}+N\frac{\left(2U-q_{2}-q_{1}\right)^{2}}{2\sigma_{2}^{2}}+N\frac{\left(2U-q_{2}-q_{1}\right)^{2}}{2\sigma_{2}^{2}}\\
 & \leq\frac{1}{2}+\frac{1}{8}+\frac{1}{8}=\frac{3}{4},
\end{align*}
where the inequality holds under the assumptions $U^{2}\leq\nicefrac{\sigma_{0}^{2}}{(4N)}$
and $\sigma_{1}^{2}\geq\sigma_{2}^{2}$. By Pinsker's inequality \citep[Eq. (3.59)]{wainwright2019high}
\[
\dtv(P_{\boldsymbol{\theta}^{(0)}},P_{\boldsymbol{\theta}^{(1)}})\leq\sqrt{\frac{1}{2}\Dkl(P_{\boldsymbol{\theta}^{(0)}},P_{\boldsymbol{\theta}^{(1)}})}\leq\sqrt{\frac{3}{8}}.
\]
Then, Le-Cam's method implies that
\begin{align}
\tilde{L}_{N,1}\left(\boldsymbol{\Psi}_{\leq}(\boldsymbol{q}),\boldsymbol{\Gamma}\right) & \geq\frac{(\theta_{0}^{(0)}-\theta_{0}^{(1)})^{2}}{2}\cdot\left[1-\dtv(P_{\boldsymbol{\theta}^{(0)}},P_{\boldsymbol{\theta}^{(1)}})\right]\nonumber \\
 & \geq\frac{4U^{2}}{2}\cdot\left[1-\sqrt{\frac{3}{8}}\right]\nonumber \\
 & \geq\frac{30}{45}U^{2}.\label{eq: proof lower bound two sources one dimension first bound}
\end{align}
Next, denote for brevity $V:=q_{1}+\nicefrac{\sigma_{1}}{(8\sqrt{N})}$
and consider the case $U^{2}\geq\nicefrac{\sigma_{0}^{2}}{(4N)}$
yet $V^{2}\leq\nicefrac{\sigma_{0}^{2}}{(4N)}$. Note that these conditions
imply that $V\leq U$ and consequently 
\begin{equation}
\frac{\sigma_{1}}{8\sqrt{N}}\leq q_{2}.\label{eq: lower bound two source intermediate regime condition}
\end{equation}
We construct a pair of hypotheses which are similar to the $T=1$
setting for the target model and the first source model, and have
\emph{identical }parameter for the second source model, to wit
\[
\boldsymbol{\theta}^{(0)}:=(\theta_{0}^{(0)},\theta_{1}^{(0)},\theta_{2}^{(0)})=\frac{1}{3}\cdot(-V,-V+q_{1},0)
\]
 and 
\[
\boldsymbol{\theta}^{(1)}:=(\theta_{0}^{(1)},\theta_{1}^{(1)},\theta_{2}^{(1)})=\frac{1}{3}\cdot(V,V-q_{1},0).
\]
This set of hypotheses is in $\boldsymbol{\Psi}_{\leq}(\boldsymbol{q},\boldsymbol{\Gamma})$
since 
\[
\left|\theta_{0}^{(0)}-\theta_{1}^{(0)}\right|^{2}=\left|\theta_{0}^{(1)}-\theta_{1}^{(1)}\right|^{2}=\frac{q_{1}^{2}}{9}
\]
and
\[
\left|\theta_{0}^{(0)}-\theta_{2}^{(0)}\right|^{2}=\left|\theta_{0}^{(1)}-\theta_{2}^{(1)}\right|^{2}=\frac{1}{9}V^{2}=\frac{1}{9}\left(q_{1}+\frac{\sigma_{1}}{8\sqrt{N}}\right)^{2}\leq q_{2}^{2},
\]
where the inequality holds from the global assumptions $q_{1}\leq q_{2}$
and \eqref{eq: lower bound two source intermediate regime condition}.
Since the second source model $t=2$ clearly does not contribute to
the KL divergence between the two hypotheses, this reduces the problem
to the single source model. By scaling the bound in \eqref{eq: proof lower bound one source one dimension first bound}
by $\nicefrac{1}{3}$, it holds that 
\begin{equation}
\tilde{L}_{N,1}\left(\boldsymbol{\Psi}_{\leq}(\boldsymbol{q}),\boldsymbol{\Gamma}\right)\geq\frac{1}{9}\cdot\frac{4\left(q_{1}+\frac{\sigma_{1}}{8\sqrt{N}}\right)^{2}}{5}.\label{eq: proof lower bound two sources one dimension second bound}
\end{equation}
Next, similarly to the $T=1$ setting, if $V^{2}\geq\nicefrac{\sigma_{0}^{2}}{(4N)}$
then the monotonicity of $\tilde{L}_{N,1}(\boldsymbol{\Psi}_{\leq}(\boldsymbol{q}),\boldsymbol{\Gamma})$
in $\boldsymbol{q}$ (which holds element-wise) and \eqref{eq: proof lower bound two sources one dimension second bound}
imply the lower bound 
\[
\tilde{L}_{N,1}\left(\boldsymbol{\Psi}_{\leq}(\boldsymbol{q}),\boldsymbol{\Gamma}\right)\geq\frac{1}{45}\cdot\frac{\sigma_{0}^{2}}{N}.
\]
From the above three cases we deduce that 
\begin{align}
\tilde{L}_{N,1}\left(\boldsymbol{\Psi}_{\leq}(\boldsymbol{q}),\boldsymbol{\Gamma}\right) & \geq\begin{cases}
\frac{30}{45}U^{2}, & U^{2}\leq\frac{1}{4}\frac{\sigma_{0}^{2}}{N}\\
\frac{4}{45}\cdot V^{2}, & U^{2}\geq\frac{1}{4}\frac{\sigma_{0}^{2}}{N}\geq V^{2}.\\
\frac{1}{45}\cdot\frac{\sigma_{0}^{2}}{N}, & V^{2}\geq\frac{1}{4}\frac{\sigma_{0}^{2}}{N}
\end{cases}\label{eq: proof first full lower bound on minimax risk}
\end{align}
We next show that this lower bound matches the weak oracle rate. Consider
the following cases:
\begin{enumerate}
\item $q_{1}^{2}\leq q_{2}^{2}\leq\nicefrac{\sigma_{0}^{2}}{(16N)}$. It
then holds that 
\[
U^{2}=\left(\frac{q_{1}}{2}+\frac{q_{2}}{2}+\frac{\sigma_{2}}{4\sqrt{N}}\right)^{2}\leq\left(q_{2}+\frac{\sigma_{0}}{4\sqrt{N}}\right)^{2}\leq2q_{2}^{2}+\frac{1}{8}\frac{\sigma_{0}^{2}}{N}\leq\frac{1}{4}\frac{\sigma_{0}^{2}}{N},
\]
and so \eqref{eq: proof first full lower bound on minimax risk} implies
that 
\begin{align*}
\tilde{L}_{N,1}\left(\boldsymbol{\Psi}_{\leq}(\boldsymbol{q}),\boldsymbol{\Gamma}\right) & \geq\frac{30}{45}U^{2}=\frac{30}{45}\left(\frac{q_{1}}{2}+\frac{q_{2}}{2}+\frac{\sigma_{2}}{4\sqrt{N}}\right)^{2}\geq\frac{30}{45}\left(\frac{q_{2}^{2}}{4}+\frac{1}{16}\frac{\sigma_{2}^{2}}{N}\right)\\
 & \geq\frac{1}{24}\cdot\left(q_{2}^{2}+\frac{\sigma_{2}^{2}}{N}\right).
\end{align*}
\item $q_{1}^{2}\leq\nicefrac{\sigma_{0}^{2}}{(16N)}$ and $\nicefrac{\sigma_{0}^{2}}{N}\leq q_{2}^{2}$.
It then holds that 
\[
U^{2}=\left(\frac{q_{1}}{2}+\frac{q_{2}}{2}+\frac{\sigma_{2}}{4\sqrt{N}}\right)^{2}\geq\frac{q_{2}^{2}}{4}\geq\frac{1}{4}\frac{\sigma_{0}^{2}}{N},
\]
and 
\[
V^{2}=\left(q_{1}+\frac{\sigma_{1}}{8\sqrt{N}}\right)^{2}\leq2q_{1}^{2}+\frac{1}{32}\frac{\sigma_{1}^{2}}{N}\leq\frac{1}{4}\frac{\sigma_{0}^{2}}{N}
\]
and so \eqref{eq: proof first full lower bound on minimax risk} implies
that
\[
\tilde{L}_{N,1}\left(\boldsymbol{\Psi}_{\leq}(\boldsymbol{q}),\boldsymbol{\Gamma}\right)\geq\frac{4}{45}\cdot V^{2}=\frac{4}{45}\cdot\left(q_{1}+\frac{\sigma_{1}}{8\sqrt{N}}\right)^{2}\geq\frac{1}{720}\cdot\left(q_{1}^{2}+\frac{\sigma_{1}^{2}}{N}\right).
\]
\item $q_{2}^{2}\geq q_{1}^{2}\geq\nicefrac{\sigma_{0}^{2}}{(4N)}$ . It
then holds that 
\[
V^{2}=\left(q_{1}+\frac{\sigma_{1}}{8\sqrt{N}}\right)^{2}\geq q_{1}^{2}+\frac{1}{64}\frac{\sigma_{1}^{2}}{N}\geq\frac{1}{4}\frac{\sigma_{0}^{2}}{N}
\]
and so \eqref{eq: proof first full lower bound on minimax risk} implies
that 
\[
\tilde{L}_{N,1}\left(\boldsymbol{\Psi}_{\leq}(\boldsymbol{q}),\boldsymbol{\Gamma}\right)\geq\frac{1}{45}\cdot\frac{\sigma_{0}^{2}}{N}.
\]
\end{enumerate}
Combining all the above, we obtain 
\[
\tilde{L}_{N,1}\left(\boldsymbol{\Psi}_{\leq}(\boldsymbol{q}),\boldsymbol{\Gamma}\right)\geq\begin{cases}
\frac{1}{24}\cdot\left(q_{2}^{2}+\frac{\sigma_{2}^{2}}{N}\right), & q_{1}^{2}\leq q_{2}^{2}\leq\frac{1}{16}\frac{\sigma_{0}^{2}}{N}\\
\frac{1}{720}\cdot\left(q_{1}^{2}+\frac{\sigma_{1}^{2}}{N}\right), & q_{1}^{2}\leq\frac{1}{16}\frac{\sigma_{0}^{2}}{N}\leq\frac{\sigma_{0}^{2}}{N}\leq q_{2}^{2}\\
\frac{1}{45}\cdot\frac{\sigma_{0}^{2}}{N}, & q_{2}^{2}\geq q_{1}^{2}\geq\frac{1}{4}\frac{\sigma_{0}^{2}}{N}
\end{cases}.
\]
By the monotonicity property of $\tilde{L}_{N,1}(\boldsymbol{\Psi}_{\leq}(\boldsymbol{q}),\boldsymbol{\Gamma})$
in $\boldsymbol{q}$, the same bound holds for smaller values of $q_{1}$and
$q_{2}$, that is, the bound in \eqref{eq: lower bound d=00003D1 T=00003D2}
holds. The bound in \eqref{eq: lower bound d=00003D1 T=00003D2 alternative}
then directly follows. 
\end{proof}

\paragraph{Multiple source models ($T\protect\geq3$)}

We next move to the proof of Theorem \ref{thm: lower bound global risk constant dimension}
for $T\geq3$. The proof will exploit the monotonicity property of
the semi-local minimax risk, but before providing it, we explain why
the construction used in the $T=2$ fails for $T=3$ and above. To
this end, let us focus on the interesting regime of $q_{3}^{2}\lesssim\nicefrac{\sigma_{0}^{2}}{N}$,
which implies that all $3$ source models are better than the target
model. In the $T=2$ case, each of the two constructed hypotheses
satisfied that one of the source parameters at exactly $q_{1}$ distance
from $\theta_{0}$ and the other at exactly $q_{2}$ distance from
$\theta_{0}$. In Figure \ref{fig:binary hypothesis three sources},
we show a failed attempt for such a construction. In this construction,
sufficiently large distances are first assured for the extremal source
models ${\cal M}_{1}$ and ${\cal M}_{3}$, and any tester of the
two hypotheses based on $N$ samples from each model cannot decide
between $\theta_{1}^{(0)}$ \vs $\theta_{1}^{(1)}$ or between $\theta_{3}^{(0)}$
\vs $\theta_{3}^{(1)}$ \WHP. As in the $T=1$ or $T=2$ settings,
this already assures suitable distance between the parameters of the
target model ${\cal M}_{0}$. The problematic part of this construction
is that after fixing these distances, there are not enough ``degrees-of-freedom''
to adjust the distances between the parameters of the ``middle''
source model ${\cal M}_{2}$. Indeed, referring to Figure \ref{fig:binary hypothesis three sources},
$\xi:=|\theta_{2}^{(0)}-\theta_{2}^{(1)}|\lesssim\nicefrac{\sigma_{2}}{\sqrt{N}}$
will hold only if 
\[
q_{2}+\frac{\sigma_{2}}{\sqrt{N}}\gtrsim q_{3}+\frac{\sigma_{3}}{\sqrt{N}}.
\]
Since $q_{2}\leq q_{3}$ and $\sigma_{2}^{2}\geq\sigma_{3}^{2}$,
this holds only if and only if $\nicefrac{\sigma_{2}}{\sqrt{N}}\gtrsim q_{3}$,
which does not necessarily hold. If $\theta_{2}^{(0)}$ and $\theta_{2}^{(1)}$
are too close to one another, then using the $N$ samples from the
models ${\cal M}_{2}$ may allow the tester to reliably decide on
the hypothesis, and no non-trivial lower bound can be established
by the reduction to this binary hypothesis test. We remark that considering
multidimensional parameters ($d>1$) will not allow to ameliorate
this problem as it stems from constraint on Euclidean distances, which
must hold for any dimension.
\begin{figure}
\begin{centering}
\includegraphics{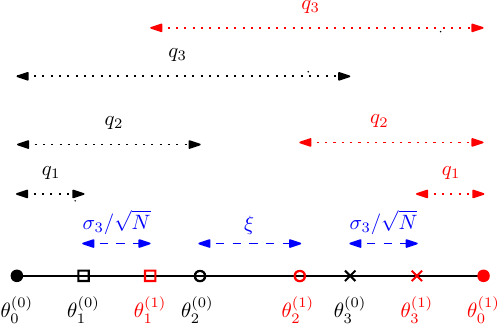}
\par\end{centering}
\caption{A failed attempt for a proper construction of parameter hypotheses
for $T=3$. One of the hypotheses is in black and the other in red.
The target parameter is designated by a disc, the first source parameter
by a square, the second by a circle, and the third by a cross. \label{fig:binary hypothesis three sources}}
\end{figure}

\begin{proof}[of Theorem \ref{thm: lower bound global risk constant dimension}
for $T\geq2$]
The proof is based on a reduction to the $T=2$ case. We first assume
that $d=1$. Consider the following subset of $\boldsymbol{\Psi}_{\leq}(\boldsymbol{q})$
with the additional constraints that

\[
\theta_{1}=\theta_{2}\cdots=\theta_{t_{\text{med}}-1}=:\tilde{\theta}_{1}
\]
and 
\[
\theta_{t_{\text{med}}}=\theta_{t_{\text{med}}+1}=\cdots=\theta_{T}=:\tilde{\theta}_{2},
\]
where $\|\tilde{\theta}_{1}-\theta_{0}\|^{2}\leq q_{1}^{2}:=\tilde{q}_{\zeta(1)}^{2}$
and $\|\tilde{\theta}_{2}-\theta_{0}\|^{2}\leq q_{\text{med}}^{2}:=\tilde{q}_{\zeta(2)}^{2}$.
Given this restricted subset, and since $\sigma_{0}^{2}\geq\sigma_{1}^{2}\geq\cdots\geq\sigma_{T}^{2}$,
a simple data-processing argument (Claim \ref{claim: Optimal tester ignore models})
implies that an optimal CL tester may sample just from the models
${\cal M}_{t_{\text{med}}-1}$ and ${\cal M}_{T}$ . Thus, the problem
is reduced to a $\tilde{T}=2$ source model problem with distances
$\tilde{\boldsymbol{q}}=(\tilde{q}_{1},\tilde{q}_{2})$, where $\tilde{q}_{1}=q_{1}$,
and $\tilde{q}_{2}=q_{\text{med}}$, for which it holds that $\tilde{q}_{1}^{2}\leq\tilde{q}_{2}^{2}\leq\nicefrac{\sigma_{0}^{2}}{(4N)}$,
and noise variances $\tilde{\boldsymbol{\Gamma}}:=(\sigma_{0}^{2},\tilde{\sigma}_{1}^{2},\tilde{\sigma}_{2}^{2})$,
for which it holds that $\tilde{\sigma}_{1}^{2}:=\sigma_{t_{\text{med}}-1}^{2}$
and $\tilde{\sigma}_{2}^{2}:=\sigma_{T}^{2}$. Restricting the set
$\boldsymbol{\Psi}_{\leq}(\boldsymbol{q})$ to such parameters only
reduces the risk, and so 
\[
\tilde{L}_{N,1}(\boldsymbol{\Psi}_{\leq}(\boldsymbol{q}),\boldsymbol{\Gamma})\geq\tilde{L}_{N,1}(\boldsymbol{\Psi}_{\leq}(\tilde{\boldsymbol{q}}),\tilde{\boldsymbol{\Gamma}})\trre[\geq,*]\frac{1}{720}\cdot\left(\tilde{q}_{2}^{2}+\frac{\tilde{\sigma}_{2}^{2}}{N}\right)=\frac{1}{720}\cdot\left(q_{\text{med}}^{2}+\frac{\sigma_{T}^{2}}{N}\right),
\]
where $(*)$ follows from Proposition \ref{prop: Lower bound d=00003D1 T=00003D2}.
This implies the statement of the theorem for $d=1$. For any constant
$d\geq1$, we may assume that learner knows that all parameters belongs
to a subset of $\boldsymbol{\Psi}_{\leq}(\boldsymbol{q})$, for which
the last $d-1$ coordinates are identically zero. An optimal learner
may ignore these coordinates. The only modification required compared
to the $d=1$ case is since ${\cal T}_{\text{w.o.}}(\kappa)$ is defined
as the set of source parameters for which $Q_{t}^{2}\leq\kappa\cdot\nicefrac{d\sigma_{0}^{2}}{N}$,
the value of $\kappa$ should be set so that $\kappa\cdot d=\nicefrac{1}{4}$
in order for the previous derivation to hold. 
\end{proof}
\begin{rem}
\label{rem: improving noise variance dependecny in lower bound}As
discussed, if $q_{t_{\text{w.o.}}+1}\geq\frac{1}{2}(q_{1}+q_{\text{med}}+\nicefrac{\sigma_{\text{w.o.}}}{8\sqrt{N}})$
then the dependency on the noise variance can be improved from $\sigma_{T}^{2}$
to $\sigma_{\text{w.o.}}^{2}$. This is obtained by modifying the
proof as follows. First, we let 
\[
\theta_{t_{\text{med}}}=\theta_{t_{\text{med}}+1}=\cdots=\theta_{t_{\text{\text{w.o.}}}}=:\tilde{\theta}_{2},
\]
and then add another possible point
\[
\theta_{t_{\text{\text{w.o.}}}+1}=\cdots=\theta_{T}=:\tilde{\theta}_{3}.
\]
We then consider a $\tilde{T}=3$ setting, but construct an hypothesis
testing problem between $\boldsymbol{\theta}^{(0)}$ and $\boldsymbol{\theta}^{(1)}$
as in $T=2$, as in Figure \ref{fig:binary hypothesis two sources},
where $\tilde{\theta}_{1}$ and $\tilde{\theta}_{2}$ are in the role
of $\theta_{1}$ and $\theta_{2}$. The third source parameter is
placed at $0$ for both hypotheses, and thus is useless for the distinction
between them. We omit the details for brevity. 
\end{rem}

\subsection{Proof of Theorem \ref{thm: lower local risk high dimension} \label{subsec:Proof-of-Theorem-second-minimax}}

We next prove Theorem \ref{thm: lower local risk high dimension},
which is a tight lower bound for the general $d$-dimensional case
and arbitrary $T$ sources. This requires a packing set of more than
a pair of points (exponential in $d$), and the use of Fano's method.

\paragraph{Proof outline}

The proof of Theorem \ref{thm: lower local risk high dimension} utilizes
Fano's method, which is based on a packing set in $\boldsymbol{\Psi}$
whose number of points is exponential in $d$. The proof begins by
a construction of such a packing set for a single parameter in $\mathbb{R}^{d}$
(Lemma \ref{lem:Construction of packing set}). Points in this packing
set belong to the unit sphere, and in addition, have a zero first
coordinate. The proof then splits between the two different regimes
$\nicefrac{d\sigma_{\text{w.o.}}^{2}}{N}\leq q_{\text{w.o.}}^{2}$
and $q_{\text{w.o.}}^{2}\leq\nicefrac{d\sigma_{\text{w.o.}}^{2}}{N}$.
For the former, Lemma \ref{lem:Construction of testing set noise regime}
uses the single-source packing set mentioned above to construct a
specific testing set $\boldsymbol{\Theta}_{\text{test}}^{*}$ with
appropriate distances between the parameters of the target and source
models, as well as appropriate separation between the models (Lemma
\ref{lem:Construction of testing set noise regime}). This specific
construction is tailored so that the optimal policy $\pi$ is non-adaptive
and uses just one source model, as shown in Lemma \ref{lem: a reduction to a single task lower bound},
which also invokes Fano's method for this construction. In fact, in
this regime, the learner knows in advance that parameters of ${\cal M}_{t}$
are at distance $q_{t}$ and have noise variance $\sigma_{t}^{2}$.
Proposition \ref{prop: Lower bound high dimensions - noise dominating}
then completes the proof of the lower bound for this regime by standard
bounding of the mutual information. The latter regime is then analyzed,
for which it is shown that the nature of the set $\boldsymbol{\Psi}_{*}(q_{\text{w.o.}})$
allows to reduce the problem to the standard case, in which only the
target model is sampled (using Claim \ref{claim: Optimal tester ignore models}).
Fano's method is then similarly invoked to establish the lower bound.

Our construction of a testing set will be based on a proper construction
of a set for a single task. Specifically, in standard single-task
estimation the constructed set is a scaled version of a packing set
of the Euclidean ball \citep[Example 15.14]{wainwright2019high}.
Here, we construct a slightly modified set, which is a packing set
of the Euclidean \emph{sphere, }and in addition, whose first coordinate
is zero. The cardinality of this set is essentially similar to the
standard packing set of the Euclidean ball, besides a worse numerical
constant.
\begin{lem}
\label{lem:Construction of packing set}Assume that $d\geq3$. Then,
there exists a set $\{v^{(j)}\}_{j\in[K]}\subset\mathbb{R}^{d}$ such
that: (1) Unit norm $\|v^{(j)}\|^{2}=1$. (2) The first coordinate
of each $v^{(j)}$ is zero. (3) The points in the set are separated
\[
\|v^{(j_{1})}-v^{(j_{2})}\|\geq e^{-\frac{d+2}{d-2}}:=c(d)\geq e^{-5}
\]
for any $j_{1}\neq j_{2}\in[K]$. (4) Cardinality $K\geq4\cdot e^{d}$. 
\end{lem}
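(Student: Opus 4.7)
The plan is to recast the construction as a standard metric-entropy estimate on the lower-dimensional sphere $\mathbb{S}^{d-2}$ and then verify that the proposed $c(d)$ leaves enough slack via a volume (covering--packing) argument. Concretely, the requirements ``unit norm'' and ``first coordinate zero'' identify the candidate set with $\mathbb{S}^{d-2}\subset\{0\}\times\mathbb{R}^{d-1}\subset\mathbb{R}^{d}$, with the induced distance being the ambient Euclidean one. So it suffices to produce $K\geq 4e^{d}$ points on $\mathbb{S}^{d-2}$ pairwise at distance at least $\varepsilon:=c(d)=e^{-(d+2)/(d-2)}$; prepending a zero first coordinate then yields the desired set in $\mathbb{R}^{d}$.

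For such a packing I would invoke the standard duality: the $\varepsilon$-packing number $M(\varepsilon)$ of $\mathbb{S}^{d-2}$ (w.r.t.\ Euclidean distance) satisfies $M(\varepsilon)\geq 1/\sigma(C(u_{0},\varepsilon))$, where $\sigma$ denotes the normalized uniform measure and $C(u_{0},\varepsilon)$ is any spherical cap of Euclidean radius $\varepsilon$, since any maximal $\varepsilon$-packing is also an $\varepsilon$-cover. The cap measure can be computed using the marginal density of the first coordinate $U_{1}$ of $U\sim\sigma$, namely $f(u_{1})=\frac{\Gamma((d-1)/2)}{\sqrt{\pi}\,\Gamma((d-2)/2)}(1-u_{1}^{2})^{(d-4)/2}$ on $[-1,1]$. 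Because $\|U-e_{1}\|\leq\varepsilon$ iff $U_{1}\geq 1-\varepsilon^{2}/2$, the substitution $u_{1}=1-s$ together with the estimate $(2s-s^{2})^{(d-4)/2}\leq(2s)^{(d-4)/2}$ and Gautschi's inequality $\Gamma((d-1)/2)/\Gamma(d/2)\leq\sqrt{2/(d-2)}$ yields the clean bound
\[
\sigma\bigl(C(u_{0},\varepsilon)\bigr)\leq\frac{\varepsilon^{d-2}}{\sqrt{2\pi(d-2)}},
\qquad\text{hence}\qquad
M(\varepsilon)\geq\sqrt{2\pi(d-2)}\cdot\varepsilon^{-(d-2)}.
\]

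Substituting $\varepsilon=e^{-(d+2)/(d-2)}$, so that $\varepsilon^{-(d-2)}=e^{d+2}$, gives $M(\varepsilon)\geq\sqrt{2\pi(d-2)}\,e^{d+2}\geq 4e^{d}$, since for $d\geq 3$ one has $\sqrt{2\pi(d-2)}\,e^{2}\geq\sqrt{2\pi}\,e^{2}\approx 18.5>4$. Extracting such an $\varepsilon$-packing and prepending a zero first coordinate yields the desired set. The auxiliary claim $c(d)\geq e^{-5}$ is immediate: $(d+2)/(d-2)=1+4/(d-2)$ is decreasing in $d$ and attains its maximum value $5$ at $d=3$. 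The only non-routine step is the cap-measure integral together with the Gamma-function comparison; with numerical slack of a factor $\approx e^{2}/4$ in the packing count, there is no real obstacle here beyond careful bookkeeping of constants.
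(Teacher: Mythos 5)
Your approach is correct in spirit but takes a genuinely different route from the paper's. The paper invokes \citep[Lemmas 5.5 and 5.7]{wainwright2019high} to obtain an $\epsilon$-separated set $\{\tilde v^{(j)}\}_{j\in[K]}$ of the lower-dimensional \emph{ball} $\mathbb{B}^{d-2}$ with $K\geq\epsilon^{-(d-2)}$, then lifts each point to the unit sphere in $\mathbb{R}^{d-1}$ by appending the coordinate $\sqrt{1-\|\tilde v^{(j)}\|^{2}}$ and prepending a zero; the ambient Euclidean separation can only increase under the lift, so no cap integral needs to be computed and the constant $c(d)=e^{-(d+2)/(d-2)}$ drops out directly from $\epsilon^{-(d-2)}=e^{d+2}\geq 4e^{d}$. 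You instead pack $\mathbb{S}^{d-2}$ directly via the cap-measure lower bound on the packing number, then estimate the cap measure explicitly. Your normalization manipulation is internally consistent: the stated $\Gamma((d-1)/2)/\Gamma(d/2)\leq\sqrt{2/(d-2)}$, combined with $\Gamma(d/2)=\tfrac{d-2}{2}\Gamma((d-2)/2)$, does deliver $\sigma(C)\leq\varepsilon^{d-2}/\sqrt{2\pi(d-2)}$, and your slack check $\sqrt{2\pi(d-2)}\,e^{2}>4$ is fine. Your route is more self-contained but requires carrying out a nontrivial integral; the paper's is shorter because it outsources the volume estimate to a standard cited lemma and sidesteps the sphere geometry entirely.

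There is one genuine gap. The estimate $(2s-s^{2})^{(d-4)/2}\leq(2s)^{(d-4)/2}$ requires the exponent $(d-4)/2$ to be nonnegative, i.e.\ $d\geq4$. For $d=3$ the exponent is $-\tfrac12$ and the inequality reverses on $(0,2)$, so your bound on the cap integral does not follow as written in the smallest dimension the lemma is claimed for. The \emph{conclusion} $\sigma(C)\leq\varepsilon^{d-2}/\sqrt{2\pi(d-2)}$ is in fact still true at $d=3$: on $\mathbb{S}^{1}$ the cap is an arc of normalized measure $\arccos(1-\varepsilon^{2}/2)/\pi$, which is bounded by $\varepsilon/\sqrt{2\pi}$ for $\varepsilon\in(0,1)$. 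To make the argument uniform you could instead use $2s-s^{2}=2s(1-s/2)\geq 2s(1-\varepsilon^{2}/4)$ on the integration range $s\in[0,\varepsilon^{2}/2]$, which for $d=3$ gives the integrand bound $(2s)^{-1/2}(1-\varepsilon^{2}/4)^{-1/2}$ and hence $\sigma(C)\leq\pi^{-1}\varepsilon(1-\varepsilon^{2}/4)^{-1/2}\leq\varepsilon/\sqrt{2\pi}$; the harmless factor $(1-\varepsilon^{2}/4)^{-1/2}$ is absorbed by the numerical slack you already noted.
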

\begin{proof}
Let $\mathbb{B}^{d}\equiv\mathbb{B}^{d}(1)$ be the unit ball in $\mathbb{R}^{d}$.
From \citep[Lemmas 5.5 and 5.7]{wainwright2019high}, for any $\epsilon\in(0,1)$
there exists a packing set $\{\tilde{v}^{(j)}\}_{j\in[K(\epsilon)]}\subset\mathbb{B}^{d-2}$
such that $\|\tilde{v}^{(j_{1})}-\tilde{v}^{(j_{2})}\|^{2}\geq\epsilon$,
and the cardinality satisfies $K(\epsilon)\geq\nicefrac{1}{\epsilon^{d-2}}$
. Choosing $\epsilon=e^{-\nicefrac{(d+2)}{(d-2)}}$, the set $\{\tilde{v}^{(j)}\}_{j\in[K]}\subset\mathbb{R}^{d}$
with $v^{(j)}=(0,\tilde{v}^{(j)},\sqrt{1-\|\tilde{v}^{(j)}\|})$ satisfies
the first three required properties, and its cardinality satisfies
$K\geq e^{\left(\nicefrac{(d+2)}{(d-2)}\right)\cdot(d-2)}\geq4\cdot e^{d}$
for any $d\geq3$. 
\end{proof}
\begin{rem}
The constant $c(d)=e^{-5}$ for $d=3$, however, improves with increasing
dimension, \eg, $c(d)=e^{-\nicefrac{3}{2}}\geq0.22$ if we restrict
to $d\geq10$, whereas for the $d\to\infty$ constant is $e^{-1}\geq0.36$. 
\end{rem}
We next prove Theorem \ref{thm: lower local risk high dimension}
separately in the two different regimes $\nicefrac{d\sigma_{\text{\text{w.o.}}}^{2}}{N}\leq q_{\text{w.o.}}^{2}$
and $q_{\text{w.o.}}^{2}\leq\nicefrac{d\sigma_{\text{\text{w.o.}}}^{2}}{N}$
in Propositions \ref{prop: Lower bound high dimensions - noise dominating}
and \ref{prop: Lower bound high dimensions - Q dominating}. We begin
with the former regime. Consider the following construction of a testing
set $\boldsymbol{\Theta}_{\text{test}}^{*}:=\{\theta^{(*,j)}\}_{j\in[K]}\subset\boldsymbol{\Theta}=\mathbb{R}^{d}$
for $d\geq3$. Let $\{v^{(j)}\}_{j\in[K]}$ be a set which satisfies
the four properties stated in Lemma \ref{lem:Construction of packing set}.
Then $\boldsymbol{\Theta}_{\text{test}}^{*}$ is constructed as follows:
\begin{enumerate}
\item Parameters for the target and ``close'' models: For any $t\in\{0\}\cup[t_{\text{w.o.}}]$
\[
\theta_{t}^{(*,j)}=\left(\sqrt{\frac{d\sigma_{\text{\text{w.o.}}}^{2}}{N}}-q_{t}\right)\cdot v^{(j)}.
\]
\item Parameters for the ``far'' models: For any $t\in[T]\backslash[t_{\text{w.o.}}]$
\[
\theta_{t}^{(*,j)}=\left(\sqrt{q_{t}^{2}-\frac{d\sigma_{\text{\text{w.o.}}}^{2}}{N}},0,\ldots,0\right)
\]
for any $j\in[K]$.
\end{enumerate}
\begin{lem}
\label{lem:Construction of testing set noise regime}Assume that $d\geq3$.
Then, the set $\boldsymbol{\Theta}_{\text{test}}^{*}:=\{\theta^{(*,j)}\}_{j\in[K]}\subset\boldsymbol{\Theta}=\mathbb{R}^{d}$
satisfies:
\begin{enumerate}
\item Distance between the task parameters: 
\begin{equation}
\|\theta_{0}^{(*,j)}-\theta_{t}^{(*,j)}\|=q_{t}^{2}\label{eq: distances between model parameters in testing set}
\end{equation}
for any $t\in[T]$ and any $j\in[K]$.
\item Separation in the target task:
\begin{equation}
\eta_{*}\equiv\eta(\boldsymbol{\Theta}_{\text{test}}^{*})=\|\theta_{0}^{(*,j_{1})}-\theta_{0}^{(*,j_{2})}\|\geq c(d)\cdot\sqrt{\frac{d\sigma_{\text{\text{w.o.}}}^{2}}{N}}\label{eq: separation in the target task}
\end{equation}
for any $j_{1},j_{2}\in[K]$, $j_{1}\neq j_{2}$. 
\end{enumerate}
\end{lem}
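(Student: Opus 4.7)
Both claims follow by direct computation from the construction, leveraging the four properties of the set $\{v^{(j)}\}_{j\in[K]}$ established in Lemma \ref{lem:Construction of packing set}. The main insight is that the construction was engineered precisely so that these computations reduce to elementary algebra, with no hidden obstacle.

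For the first claim, I split on whether $t$ indexes a ``close'' model ($t \in [t_{\text{w.o.}}]$) or a ``far'' model ($t \in [T]\setminus[t_{\text{w.o.}}]$). In the close case, the two parameters are collinear along $v^{(j)}$, so
\[
\theta_{0}^{(*,j)} - \theta_{t}^{(*,j)} = \left[\sqrt{\tfrac{d\sigma_{\text{w.o.}}^{2}}{N}} - \left(\sqrt{\tfrac{d\sigma_{\text{w.o.}}^{2}}{N}} - q_{t}\right)\right] v^{(j)} = q_{t}\,v^{(j)},
\]
and by property (1), $\|v^{(j)}\|=1$, giving distance $q_t$. In the far case, $v^{(j)}$ has zero first coordinate (property (2)) while $\theta_t^{(*,j)}$ is supported only on the first coordinate, so the two parts of $\theta_0^{(*,j)} - \theta_t^{(*,j)}$ are orthogonal, and Pythagoras yields
\[
\|\theta_{0}^{(*,j)} - \theta_{t}^{(*,j)}\|^{2} = \frac{d\sigma_{\text{w.o.}}^{2}}{N}\cdot\|v^{(j)}\|^{2} + \left(q_{t}^{2} - \frac{d\sigma_{\text{w.o.}}^{2}}{N}\right) = q_{t}^{2}.
\]
The radicand in the construction of $\theta_t^{(*,j)}$ is nonnegative precisely because $t \notin {\cal T}_{\text{w.o.}}(1)$, so the construction is well-defined.

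For the second claim, since $q_0 = 0$ the target parameters satisfy $\theta_{0}^{(*,j)} = \sqrt{d\sigma_{\text{w.o.}}^{2}/N}\cdot v^{(j)}$, so for any $j_{1}\neq j_{2}$,
\[
\|\theta_{0}^{(*,j_{1})} - \theta_{0}^{(*,j_{2})}\| = \sqrt{\frac{d\sigma_{\text{w.o.}}^{2}}{N}}\cdot\|v^{(j_{1})}-v^{(j_{2})}\| \geq c(d)\sqrt{\frac{d\sigma_{\text{w.o.}}^{2}}{N}},
\]
where the last inequality is property (3) of Lemma \ref{lem:Construction of packing set}. Taking the minimum over all distinct pairs yields the stated bound on $\eta_*$.

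There is no real difficulty in this lemma; the work was done in constructing the packing set with the non-standard zero-first-coordinate constraint of Lemma \ref{lem:Construction of packing set}. That constraint is what allows the Pythagorean decomposition in the ``far'' case, which in turn lets the same set $\{v^{(j)}\}$ simultaneously control both the target--source distances (forcing them to equal the prescribed $q_t$) and the target--target separation (giving a packing of order $\sqrt{d\sigma_{\text{w.o.}}^2/N}$). Everything else is bookkeeping.
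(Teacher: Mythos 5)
Your proof is correct and follows the same approach as the paper's; you simply spell out the ``easy to verify'' distance computations (the collinear case for close models, the Pythagorean decomposition for far models) that the paper glosses over. One small nitpick: in the far case, nonnegativity of the radicand needs both $q_t^2 \geq d\sigma_0^2/N$ (from $t \notin {\cal T}_{\text{w.o.}}(1)$) \emph{and} $\sigma_0^2 \geq \sigma_{\text{w.o.}}^2$ (from the assumed variance ordering), so ``precisely because $t \notin {\cal T}_{\text{w.o.}}(1)$'' slightly understates the argument, but the substance is right.
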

\begin{proof}
We first verify the distance between the model parameters. The key
point is to assure that the magnitudes $\sqrt{\nicefrac{d\sigma_{\text{\text{w.o.}}}^{2}}{N}}-q_{t}$
for close models and the magnitudes $\sqrt{q_{t}^{2}-\nicefrac{d\sigma_{\text{\text{w.o.}}}^{2}}{N}}$
used to the define the parameters are all (nonnegative) real numbers.
Indeed, for the close models, $t\in[t_{\text{\text{w.o.}}}]$, it
holds that $q_{t}^{2}\leq q_{t_{\text{w.o.}}}\leq\nicefrac{d\sigma_{\text{\text{w.o.}}}^{2}}{N}$
and then $\sqrt{\nicefrac{d\sigma_{\text{\text{w.o.}}}^{2}}{N}}-q_{t}\geq0$.
For the far models, $t\in[T]\backslash[t_{\text{\text{w.o.}}}]$,
it holds that $q_{t}^{2}\geq\nicefrac{d\sigma_{0}^{2}}{N}\geq\nicefrac{d\sigma_{\text{\text{w.o.}}}^{2}}{N}$
and then $q_{t}^{2}-\nicefrac{d\sigma_{\text{\text{w.o.}}}^{2}}{N}\geq0$.
Thus, in both cases and for all models, the parameter set $\{\theta_{t}^{(*,j)}\}_{j\in[K]}$
is well defined. It is then easy to verify that \eqref{eq: distances between model parameters in testing set}
holds. The separation in the target task follows directly from Lemma
\ref{lem:Construction of packing set} and $q_{0}=0$. 
\end{proof}
The next proposition is crucial, and is based on Claim \ref{claim: Optimal tester ignore models}
invoked for the constructed testing set $\boldsymbol{\Theta}_{\text{test}}^{*}$.
In essence, it shows that the optimal policy $\pi$ of a tester for
a randomly chosen parameter in $\boldsymbol{\Theta}_{\text{test}}^{*}$
is to take all $N$ samples from a single, optimally chosen, model.
This reduces the CL setting to a standard single-task setting, and
enables the utilization of Fano's method \citep[Section 15.3]{wainwright2019high}
to lower bound the error probability, and thus also the risk. It should
be noted that the learner knows in advance that parameters of ${\cal M}_{t}$
are at distance $Q_{t}=q_{t}$ and have noise variance $\sigma_{t}^{2}$. 
\begin{lem}
\label{lem: a reduction to a single task lower bound}An optimal CL
tester for testing a uniform random hypothesis from $\boldsymbol{\Theta}_{\text{test}}^{*}:=\{\theta^{(*,j)}\}_{j\in[K]}$
may collect all its $N$ samples from a single model ${\cal M}_{t^{*}}$,
where $t^{*}\in[t_{\text{\text{w.o.}}}]$. Given that $J=j$, the
$N$ resulting samples are equal in distribution to $N$ \IID samples
from the model 
\begin{equation}
{\cal M}_{*}:Y_{*}=\theta_{0}^{(*,j)}+\epsilon_{*}\label{eq: equivalent model}
\end{equation}
where $\epsilon_{*}\sim{\cal N}(0,\sigma_{*}^{2}\cdot I_{d})$, and
where $\sigma_{*}^{2}\geq\sigma_{\text{\text{w.o.}}}^{2}$ holds.
Furthermore, let $Y_{*}^{\otimes N}$ be $N$ \IID measurements from
\eqref{eq: equivalent model}. Then, 
\begin{equation}
\min_{{\cal A}}e_{N}(\boldsymbol{\Phi}_{\text{test}},{\cal A})\geq1-\frac{I(J;Y_{*}^{\otimes N})+\log2}{\log K}.\label{eq: lower bound on error probability via Fano}
\end{equation}
\end{lem}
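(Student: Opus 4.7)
The plan is to exploit the tailored structure of $\boldsymbol{\Theta}_{\text{test}}^{*}$: by construction, the ``far'' model parameters are constant in $j$, while the target and ``close'' source parameters are all collinear, $\theta_t^{(*,j)} = c_t\,v^{(j)}$ with $c_t := \sqrt{d\sigma_{\text{w.o.}}^2/N} - q_t \geq 0$ for $t \in \{0\} \cup [t_{\text{w.o.}}]$ (the nonnegativity uses the regime assumption $q_{\text{w.o.}}^2 \leq d\sigma_{\text{w.o.}}^2/N$). Both features are engineered precisely to invoke Claim \ref{claim: Optimal tester ignore models}: the first item reduces the useful pool of models to $\{0\} \cup [t_{\text{w.o.}}]$, and the second item collapses this pool to a single model. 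With the adaptive CL testing problem thereby reduced to non-adaptive single-task testing, \eqref{eq: lower bound on error probability via Fano} falls out of the standard Fano inequality.

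First, because $\theta_t^{(*,j)}$ does not depend on $j$ for $t \in [T]\setminus[t_{\text{w.o.}}]$, the samples from those models are independent of $J$ under every hypothesis, so by Claim \ref{claim: Optimal tester ignore models} (first item) an optimal tester never queries them. Second, pick $t^* \in \{0\} \cup [t_{\text{w.o.}}]$ to minimize $\sigma_t^2/c_t^2$. Then for every other close $t'$, the proportionality $\theta_{t'}^{(*,j)} = (c_{t'}/c_{t^*})\,\theta_{t^*}^{(*,j)}$ together with the minimality $(c_{t'}/c_{t^*})^2 \sigma_{t^*}^2 \leq \sigma_{t'}^2$ permits one to simulate a draw from ${\cal M}_{t'}$ using only $Y_{t^*}$, by rescaling $Y_{t^*}$ by $c_{t'}/c_{t^*}$ and adding independent Gaussian noise of variance $\sigma_{t'}^2 - (c_{t'}/c_{t^*})^2 \sigma_{t^*}^2 \geq 0$. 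This is exactly the setting of the second item of Claim \ref{claim: Optimal tester ignore models}, so an optimal tester may draw all $N$ samples from ${\cal M}_{t^*}$ alone.

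Next, the rescaling $\widetilde{Y} := (c_0/c_{t^*})\,Y_{t^*}$ converts a sample $Y_{t^*} \sim {\cal N}(c_{t^*} v^{(j)}, \sigma_{t^*}^2 I_d)$ into one distributed as ${\cal N}(\theta_0^{(*,j)}, \sigma_*^2 I_d)$, i.e., a draw from ${\cal M}_*$, with $\sigma_*^2 := (c_0/c_{t^*})^2 \sigma_{t^*}^2$. The inequality $q_{t^*} \geq 0$ gives $c_{t^*} \leq c_0$ and thus $\sigma_*^2 \geq \sigma_{t^*}^2$; combining with the global ordering $\sigma_0^2 \geq \cdots \geq \sigma_T^2$ and $t^* \leq t_{\text{w.o.}}$ yields $\sigma_*^2 \geq \sigma_{\text{w.o.}}^2$. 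The testing problem is now standard: identify $J$ from $N$ IID observations $Y_*^{\otimes N}$, so Fano's inequality gives $\P[\hat{J} \neq J] \geq 1 - (I(J; Y_*^{\otimes N}) + \log 2)/\log K$.

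The main delicacy is the second reduction step: one must verify that a single $t^*$ --- the minimizer of $\sigma_t^2/c_t^2$ --- simultaneously satisfies the variance condition of Claim \ref{claim: Optimal tester ignore models} against \emph{every} other close model, so that the elimination can be done in one pass rather than requiring a carefully chosen pairwise order of applications. Once this is in place, the rescaling lemma and the invocation of Fano are routine.
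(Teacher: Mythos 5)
Your proof is correct and follows essentially the same route as the paper's: eliminate the far models via the first item of Claim~\ref{claim: Optimal tester ignore models}, collapse the close models (including $t=0$) to the minimizer $t^*$ of the scaled variance $\sigma_t^2/c_t^2$ via the second item, rescale by $c_0/c_{t^*}$ to obtain the equivalent single-task model $\mathcal{M}_*$, verify $\sigma_*^2 \ge \sigma_{\text{w.o.}}^2$ from $c_{t^*} \le c_0$ together with the variance ordering, and finish with Fano. The ``delicacy'' you flag --- that the single minimizer $t^*$ must satisfy the simulation condition against every other close model simultaneously --- is exactly what the choice of the $\argmin$ takes care of, and the paper handles it the same way (implicitly, by defining $t^*$ as the minimizer of $\tilde\sigma_t^2$, which is identical to your $\sigma_t^2/c_t^2$ up to the constant factor $c_0^2$).
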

\begin{proof}
For far models, $t\in[T]\backslash[t_{\text{\text{w.o.}}}]$, the
parameters $\theta_{t}^{(*,j)}=(\sqrt{q_{t}^{2}-\lambda_{\text{\text{w.o.}}}^{2}},0,\ldots,0)$
are located at the same position for all $j\in[K]$ hypotheses. Thus,
the first case of Claim \ref{claim: Optimal tester ignore models}
implies that an optimal learner will not sample from ${\cal M}_{t}$
for $t\in[T]\backslash[t_{\text{\text{w.o.}}}]$. For close models,
we exploit the property that $\theta_{t_{1}}^{(*,j)}\propto\theta_{t_{2}}^{(*,j)}$
for any $t_{1},t_{2}\in\{0\}\cup[t_{\text{\text{w.o.}}}]$, and so
the second case of Claim \ref{claim: Optimal tester ignore models}
implies that an optimal learner should only sample from one of them.
Specifically, the one with minimal noise variance after scaling. Consider
a model $t\in\{0\}\cup[t_{\text{\text{w.o.}}}]$. Since the learner
knows the norm $\|\theta_{t}^{(*,j)}\|=\lambda_{\text{\text{w.o.}}}-q_{t}$
in advance, it can scale its samples by $\frac{\sqrt{\nicefrac{d\sigma_{\text{\text{w.o.}}}^{2}}{N}}}{\sqrt{\nicefrac{d\sigma_{\text{\text{w.o.}}}^{2}}{N}}-q_{t}}>1$
to obtain an equivalent model 
\[
\tilde{{\cal M}}_{t}:\tilde{Y}_{t}=\theta_{0}^{(*,j)}+\tilde{\epsilon}_{t}
\]
where $\tilde{\theta}_{t}^{(*,j)}=\theta_{0}^{(*,j)}$ and $\tilde{\epsilon}_{t}\sim{\cal N}(0,\tilde{\sigma}_{t}^{2}\cdot I_{d})$
with 
\[
\tilde{\sigma}_{t}^{2}=\left(\frac{\sqrt{\frac{d\sigma_{\text{\text{w.o.}}}^{2}}{N}}}{\sqrt{\frac{d\sigma_{\text{\text{w.o.}}}^{2}}{N}}-q_{t}}\right)^{2}\cdot\sigma_{t}^{2}.
\]
This holds for all models $t\in\{0\}\cup[t_{\text{\text{w.o.}}}]$,
and since the parameters of the different hypotheses are located at
exactly the same positions $\{\theta_{0}^{(*,j)}\}_{j\in[K]}$, the
learner can sample only from the model with the minimal (scaled) noise
variance, that is, from
\[
t^{*}:=\argmin_{t\in\{0\}\cup[t_{\text{\text{w.o.}}}]}\left(\frac{\sqrt{\frac{d\sigma_{\text{\text{w.o.}}}^{2}}{N}}}{\sqrt{\frac{d\sigma_{\text{\text{w.o.}}}^{2}}{N}}-q_{t}}\right)^{2}\cdot\sigma_{t}^{2}.
\]
 We thus let ${\cal M}_{*}=\tilde{{\cal M}}_{t_{*}}$ and $\sigma_{*}^{2}=\tilde{\sigma}_{t^{*}}^{2}$.
It now holds that 
\[
\tilde{\sigma}_{t^{*}}^{2}=\min_{t\in\{0\}\cup[t_{\text{\text{w.o.}}}]}\left(\frac{\sqrt{\frac{d\sigma_{\text{w.o.}}^{2}}{N}}}{\sqrt{\frac{d\sigma_{\text{w.o.}}^{2}}{N}}-q_{t}}\right)^{2}\cdot\sigma_{t}^{2}\geq\sigma_{\text{\text{w.o.}}}^{2},
\]
since $\sqrt{\nicefrac{d\sigma_{\text{\text{w.o.}}}^{2}}{N}}-q_{t}>0$
and due to the noise variance ordering assumption $\sigma_{0}^{2}\geq\sigma_{1}^{2}\geq\sigma_{2}^{2}\geq\cdots\geq\sigma_{T}^{2}$.
Given that an optimal CL tester may collect $N$ samples from ${\cal M}_{*}$,
the problem is now reduced to the standard hypothesis testing setting.
By Fano's inequality \citep[Eq. (15.31)]{wainwright2019high}, the
error probability in testing $J$ based on $N$ \IID samples from
${\cal M}_{*}$ is lower bounded by $1-\nicefrac{(I(J;Y_{*}^{\otimes N})+\log2)}{\log K}$,
from which \eqref{eq: lower bound on error probability via Fano}
directly follows. 
\end{proof}
The rest of the proof in this case is a technical standard bounding
of the mutual information, leading to:
\begin{prop}
\label{prop: Lower bound high dimensions - noise dominating}Consider
the setting of Theorem \ref{thm: lower local risk high dimension}
and further assume that $\nicefrac{d\sigma_{\text{\text{w.o.}}}^{2}}{N}\geq q_{\text{w.o.}}^{2}$.
Then, 
\[
L_{N,d}(\boldsymbol{\Psi}_{=}(\boldsymbol{q}),\boldsymbol{\Gamma})\geq\frac{c^{2}(d)}{2}\cdot\frac{d\sigma_{\text{\text{w.o.}}}^{2}}{N}.
\]
\end{prop}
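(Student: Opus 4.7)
The plan is to chain the three reductions developed in the preceding lemmas. First I invoke Proposition~\ref{prop: variation of reduction from estimation to HT} with the testing set $\boldsymbol{\Theta}_{\text{test}}^{*}$ of Lemma~\ref{lem:Construction of testing set noise regime}. Property~\eqref{eq: distances between model parameters in testing set} guarantees the inclusion $\boldsymbol{\Theta}_{\text{test}}^{*}\subset\boldsymbol{\Psi}_{=}(\boldsymbol{q})$, and \eqref{eq: separation in the target task} gives $\eta_{*}^{2}\geq c^{2}(d)\cdot d\sigma_{\text{w.o.}}^{2}/N$, so
\[
L_{N,d}(\boldsymbol{\Psi}_{=}(\boldsymbol{q}),\boldsymbol{\Gamma})\;\geq\;\frac{c^{2}(d)}{4}\cdot\frac{d\sigma_{\text{w.o.}}^{2}}{N}\cdot\min_{\mathcal{A}}e_{N}(\boldsymbol{\Phi}_{\text{test}}^{*},\mathcal{A}).
\]

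Second, Lemma~\ref{lem: a reduction to a single task lower bound}---whose hypotheses are satisfied by construction, since the close-source parameters are colinear with the target parameters (all proportional to $v^{(j)}$) while the far-source parameters are $J$-independent---reduces the adaptive CL testing problem to distinguishing a uniform hypothesis from $\{\theta_{0}^{(*,j)}\}_{j\in[K]}$ via $N$ i.i.d.\ samples from a single Gaussian model with noise variance $\sigma_{*}^{2}\geq\sigma_{\text{w.o.}}^{2}$, and supplies the Fano bound~\eqref{eq: lower bound on error probability via Fano}. This step, which leverages Claim~\ref{claim: Optimal tester ignore models}, is what circumvents the need for an adaptive version of Fano's inequality.

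Third, I bound the mutual information by convexity of KL divergence in its second argument combined with Fact~\ref{fact: KL for Gaussians}:
\[
I(J;Y_{*}^{\otimes N})\;\leq\;\frac{1}{K^{2}}\sum_{j_{1},j_{2}}D_{\mathrm{KL}}\!\left(P_{\theta^{(*,j_{1})}}^{\otimes N}\,\|\,P_{\theta^{(*,j_{2})}}^{\otimes N}\right)\;=\;\frac{N}{2\sigma_{*}^{2}K^{2}}\sum_{j_{1},j_{2}}\|\theta_{0}^{(*,j_{1})}-\theta_{0}^{(*,j_{2})}\|^{2}.
\]
Since $\theta_{0}^{(*,j)}=\sqrt{d\sigma_{\text{w.o.}}^{2}/N}\cdot v^{(j)}$ with $\|v^{(j)}\|=1$, the identity $\|v^{(j_{1})}-v^{(j_{2})}\|^{2}=2-2\langle v^{(j_{1})},v^{(j_{2})}\rangle$ and the nonnegativity of $\|\frac{1}{K}\sum_{j}v^{(j)}\|^{2}$ yield an average squared chord length of at most $2d\sigma_{\text{w.o.}}^{2}/N$, whence $I(J;Y_{*}^{\otimes N})\leq d\sigma_{\text{w.o.}}^{2}/\sigma_{*}^{2}\leq d$. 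Combining with $\log K\geq d+\log 4$ from Lemma~\ref{lem:Construction of packing set}, Fano's inequality gives a universal constant lower bound on $\min_{\mathcal{A}}e_{N}$, and propagating this through the chain above delivers the claimed $c^{2}(d)/2\cdot d\sigma_{\text{w.o.}}^{2}/N$.

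The main technical obstacle is the calibration of the scale of the target packing sphere: it must be large enough that the separation $\eta_{*}\asymp\sqrt{d\sigma_{\text{w.o.}}^{2}/N}$ is not killed by multiplication by the Fano error probability, yet small enough that the chord-length bound keeps $I(J;Y_{*}^{\otimes N})$ a constant factor below $\log K$. The construction in Lemma~\ref{lem:Construction of testing set noise regime} places the target parameters exactly on the sphere of radius $\sqrt{d\sigma_{\text{w.o.}}^{2}/N}$---the noise resolution---and the hypothesis $d\sigma_{\text{w.o.}}^{2}/N\geq q_{\text{w.o.}}^{2}$ enters precisely to ensure the close-source radii $\sqrt{d\sigma_{\text{w.o.}}^{2}/N}-q_{t}$ remain nonnegative, so that the colinearity required by Lemma~\ref{lem: a reduction to a single task lower bound} is preserved.
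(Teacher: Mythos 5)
Your reductions through Proposition~\ref{prop: variation of reduction from estimation to HT} and Lemma~\ref{lem: a reduction to a single task lower bound} are correctly applied and match the paper's structure, and your explanation of \emph{why} the construction makes the optimal policy non-adaptive is accurate. The gap is in the mutual-information step. You bound $I(J;Y_{*}^{\otimes N})$ by the pairwise average $\frac{1}{K^{2}}\sum_{j_1,j_2}D_{\mathrm{KL}}(P_{\theta^{(*,j_1)}}^{\otimes N}\,\|\,P_{\theta^{(*,j_2)}}^{\otimes N})$; working out the squared chord lengths on a unit sphere gives $I\leq d\sigma_{\text{w.o.}}^{2}/\sigma_{*}^{2}\leq d$. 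But Lemma~\ref{lem:Construction of packing set} supplies only $\log K\geq d+\log 4=d+2\log 2$, so Fano gives
\[
\min_{\mathcal A}e_{N}\;\geq\;1-\frac{d+\log 2}{d+2\log 2}\;=\;\frac{\log 2}{d+2\log 2},
\]
which tends to $0$ as $d\to\infty$ rather than being a universal constant as you assert. Propagating that factor $\Theta(1/d)$ through Proposition~\ref{prop: variation of reduction from estimation to HT} only yields a bound of order $\sigma_{\text{w.o.}}^{2}/N$, losing the crucial factor of $d$.

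The paper avoids this by using the information-radius upper bound $I(J;Y_{*}^{\otimes N})\leq\max_{j}D_{\mathrm{KL}}(P_{\theta^{(*,j)}}^{\otimes N}\,\|\,\overline P)$ with the \emph{centered} reference $\overline P={\cal N}(0,\sigma_{*}^{2}I_{d})^{\otimes N}$. Because every $\theta_{0}^{(*,j)}$ lies on the sphere of radius $\sqrt{d\sigma_{\text{w.o.}}^{2}/N}$ around the origin, this choice of $\overline P$ is a natural ``center'' and gives exactly $N\|\theta_{0}^{(*,j)}\|^{2}/(2\sigma_{*}^{2})=d\sigma_{\text{w.o.}}^{2}/(2\sigma_{*}^{2})\leq d/2$ --- a factor of $2$ better than the pairwise bound. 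With $I\leq d/2$ and $\log K\geq d+2\log 2$ the Fano fraction becomes $(d/2+\log 2)/(d+2\log 2)=1/2$ \emph{exactly}, so $\min_{\mathcal A}e_{N}\geq 1/2$ for every $d\geq 3$. The pairwise-KL bound is generally too lossy here because the cardinality $K\approx e^{d}$ of the packing leaves no slack: the factor of $2$ is precisely the margin needed to keep the Fano bound nontrivial.
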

\begin{proof}
Let $P^{(*,j)}$ be distribution of a sample from the Gaussian location
model ${\cal M}_{*}$ in \eqref{eq: equivalent model}, and let $\overline{P}$
be distribution of $N$ \IID samples from a pure noise model ${\cal N}(0,\sigma_{*}^{2}\cdot I_{d})$.
The information-radius upper bound on the mutual information and the
tensorization property of the KL divergence imply that 
\[
I(J;Y_{*}^{\otimes N})\leq N\cdot\max_{j\in[K]}\Dkl(P^{(*,j)}\mid\mid\overline{P})=N\cdot\max_{j\in[K]}\frac{\|\theta^{(*,j)}\|^{2}}{2\sigma_{*}^{2}},
\]
where the equality follows from Fact \ref{fact: KL for Gaussians}.
The lower bound \eqref{eq: lower bound on error probability via Fano}
then reads 
\begin{align*}
1-\frac{I(J;Y_{*}^{\otimes N})+\log2}{\log K} & \geq1-\frac{N\frac{\|\theta^{(*,j)}\|^{2}}{2\sigma_{*}^{2}}+\log2}{\log K}\\
 & \trre[\geq,a]1-\frac{\frac{d\sigma_{\text{w.o.}}^{2}}{2\sigma_{*}^{2}}+\log2}{\log K}\\
 & \trre[\geq,b]1-\frac{\frac{d}{2}+\log2}{\log K}\\
 & \trre[\geq,c]\frac{1}{2},
\end{align*}
where $(a)$ follows since $\|\theta^{(*,j)}\|^{2}=\nicefrac{d\sigma_{\text{\text{w.o.}}}^{2}}{N}$,
$(b)$ follows since Lemma \ref{lem: a reduction to a single task lower bound}
states that $\sigma_{*}^{2}\geq\sigma_{\text{\text{w.o.}}}^{2}$,
and $(c)$ follows since Lemma \ref{lem:Construction of packing set}
states that $K\geq4\cdot e^{d}$. Combining this with \eqref{eq: lower bound on error probability via Fano}
(Lemma \ref{lem: a reduction to a single task lower bound}) and \eqref{eq: lower bounding risk with error probability}
(Proposition \ref{prop: variation of reduction from estimation to HT})
implies that $L_{N,d}(\boldsymbol{\Psi}_{=}(\boldsymbol{q}),\boldsymbol{\Gamma})\geq\nicefrac{\eta_{*}^{2}}{2}$.
The proof is completed by using separation in the target task bound
\eqref{eq: separation in the target task} (Lemma \ref{lem:Construction of testing set noise regime}).
\end{proof}
\begin{prop}
\label{prop: Lower bound high dimensions - Q dominating}Consider
the setting of Theorem \ref{thm: lower local risk high dimension}
and further assume that $q_{\text{w.o.}}^{2}\geq\nicefrac{d\sigma_{\text{\text{w.o.}}}^{2}}{N}$.
Then, 
\[
L_{N,d}(\boldsymbol{\Psi}_{*}(q_{\text{w.o.}}),\boldsymbol{\Gamma})\geq\frac{c^{2}(d)}{8}\cdot q_{\text{w.o.}}^{2}.
\]
\end{prop}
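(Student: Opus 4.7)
The plan is to reduce the minimax estimation problem to a multi-hypothesis test via Proposition \ref{prop: variation of reduction from estimation to HT}, and then apply Fano's method in direct parallel to the proof of Proposition \ref{prop: Lower bound high dimensions - noise dominating}. The essential observation I would exploit is that membership in the single-localized-source set $\boldsymbol{\Psi}_*(q_{\text{w.o.}})$ only requires \emph{one} source parameter at distance $q_{\text{w.o.}}$ from the target, leaving every other source parameter free. This flexibility is what I would use to render every source model uninformative across hypotheses, so that Claim \ref{claim: Optimal tester ignore models} forces the optimal tester to collect all $N$ samples from the target model ${\cal M}_0$, reducing the problem to standard Gaussian-location testing.

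Concretely, I would invoke Lemma \ref{lem:Construction of packing set} to obtain a packing set $\{v^{(j)}\}_{j\in[K]}$ on the unit sphere with $K\geq 4 e^{d}$ and pairwise separation $\|v^{(j_1)}-v^{(j_2)}\|\geq c(d)$, and construct the testing set $\boldsymbol{\Theta}_{\text{test}}^{**}=\{\theta^{(**,j)}\}_{j\in[K]}$ by
\[
\theta_0^{(**,j)} := q_{\text{w.o.}}\cdot v^{(j)},\qquad \theta_t^{(**,j)} := 0 \text{ for all } t\in[T] \text{ and } j\in[K].
\]
Because $\|\theta_t^{(**,j)}-\theta_0^{(**,j)}\|=q_{\text{w.o.}}$ for every source $t\in[T]$, each hypothesis lies in $\boldsymbol{\Psi}_*(q_{\text{w.o.}})$; the target-parameter separation satisfies $\|\theta_0^{(**,j_1)}-\theta_0^{(**,j_2)}\|\geq q_{\text{w.o.}}\,c(d)$; and, because every source parameter is $j$-independent, the first case of Claim \ref{claim: Optimal tester ignore models} implies that the optimal policy samples only from ${\cal M}_0$.

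The problem now reduces to testing a uniformly random hypothesis based on $N$ IID samples from ${\cal N}(\theta_0^{(**,j)},\sigma_0^2 I_d)$. The information-radius bound combined with KL tensorization, exactly as in the proof of Proposition \ref{prop: Lower bound high dimensions - noise dominating}, yields $I(J;Y_0^{\otimes N})\leq N\|\theta_0^{(**,j)}\|^{2}/(2\sigma_0^2)=N q_{\text{w.o.}}^2/(2\sigma_0^2)$. The crucial quantitative ingredient is that $t_{\text{w.o.}}\in{\cal T}_{\text{w.o.}}(1)$ forces $q_{\text{w.o.}}^2\leq d\sigma_0^2/N$, so $I(J;Y_0^{\otimes N})\leq d/2$. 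Combined with $\log K\geq d+\log 4$, Fano's inequality produces a testing error probability of at least $1/2$, and Proposition \ref{prop: variation of reduction from estimation to HT} turns this into the claimed lower bound $L_{N,d}(\boldsymbol{\Psi}_*(q_{\text{w.o.}}),\boldsymbol{\Gamma})\geq c^2(d)\, q_{\text{w.o.}}^2/8$.

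The only genuinely new element of the argument, and the step I expect to require the most care, is the choice of localization. The broader set $\boldsymbol{\Psi}_*(q_{\text{w.o.}})$ is essential here: it permits every source parameter to coincide across hypotheses while still satisfying the ``some source is at distance $q_{\text{w.o.}}$'' constraint, which is precisely what deactivates all source sampling and deflects the learner to the target model, where it has only $q_{\text{w.o.}}^2\leq d\sigma_0^2/N$ worth of per-sample information. Attempting the same construction inside $\boldsymbol{\Psi}_=(\boldsymbol{q})$ would force the source parameters to take distinct $j$-dependent directions, which could leak information about $J$ through the source samples and weaken the bound; this is the conceptual obstacle that motivates working with $\boldsymbol{\Psi}_*$ in the first place. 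Beyond this choice, the computations are the Gaussian KL calculations already carried out in Proposition \ref{prop: Lower bound high dimensions - noise dominating} and present no additional difficulty.
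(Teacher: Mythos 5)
Your proposal is correct and takes essentially the same approach as the paper: you construct the same testing set (target parameter $q_{\text{w.o.}}\cdot v^{(j)}$, all source parameters fixed at the origin), invoke Claim~\ref{claim: Optimal tester ignore models} to confine the optimal tester to ${\cal M}_0$, and run the same Fano/information-radius computation using $q_{\text{w.o.}}^2 \leq d\sigma_0^2/N$ and $K\geq 4e^d$ to obtain testing error $\geq 1/2$ and hence the stated bound via Proposition~\ref{prop: variation of reduction from estimation to HT}. The only difference is expository: you add a discussion of why $\boldsymbol{\Psi}_*$ (rather than $\boldsymbol{\Psi}_=$) is needed, which the paper addresses elsewhere, but the proof itself is the same.
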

\begin{proof}
We construct the testing set $\boldsymbol{\Theta}_{\text{test}}^{*}:=\{\theta^{(*,j)}\}_{j\in[K]}$
in which $\theta_{0}^{(*,j)}=q_{\text{w.o.}}\cdot v^{(j)}$ and $\theta_{t}^{(*,j)}=0$
for all $t\in[T]$. Similarly to Lemma \ref{lem:Construction of testing set noise regime}
it holds that $\boldsymbol{\Theta}_{\text{test}}^{*}\subset\boldsymbol{\Psi}_{*}(q_{\text{w.o.}})$
and that 
\[
\|\theta_{0}^{(*,j_{1})}-\theta_{0}^{(*,j_{2})}\|\geq q_{\text{w.o.}}\cdot c(d)
\]
for all $j_{1}\neq j_{2}\in[K]$. By Claim \ref{claim: Optimal tester ignore models},
an optimal learner will collect $N$ samples from ${\cal M}_{0}$.
This reduces the problem to the standard estimation setting. Thus,
Fano's inequality \citep[Eq. (15.31)]{wainwright2019high} implies
that 
\[
\min_{{\cal A}}e_{N}(\boldsymbol{\Theta}_{\text{test}}^{*},{\cal A})\geq1-\frac{I(J;Y_{0}^{\otimes N})+\log2}{\log K},
\]
where $Y_{0}^{\otimes N}$ are $N$ \IID samples from ${\cal M}_{0}$.
Similarly to the proof of Proposition \ref{prop: Lower bound high dimensions - noise dominating},
this is further lower bounded as 
\begin{align*}
1-\frac{I(J;Y_{0}^{\otimes N})+\log2}{\log K} & \geq1-\frac{N\frac{\|\theta^{(*,j)}\|^{2}}{2\sigma_{0}^{2}}+\log2}{\log K}\\
 & =1-\frac{N\frac{q_{\text{w.o.}}^{2}}{2\sigma_{0}^{2}}+\log2}{\log K}\\
 & \trre[\geq,a]1-\frac{\frac{d}{2}+\log2}{\log K}\\
 & \trre[\geq,b]1-\frac{\frac{d}{2}+\log2}{\log K}\\
 & \trre[\geq,c]\frac{1}{2},
\end{align*}
where $(a)$ holds since $q_{\text{w.o.}}^{2}\leq\nicefrac{d\sigma_{0}^{2}}{N}$,
and $(b)$ since $K\geq4\cdot e^{d}$ (Lemma \ref{lem:Construction of packing set}).
The proof then follows from Proposition \ref{prop: variation of reduction from estimation to HT}.
\end{proof}
\bibliographystyle{unsrtnat}
\bibliography{CL}

\begin{thebibliography}{68}
\providecommand{\natexlab}[1]{#1}
\providecommand{\url}[1]{\texttt{#1}}
\expandafter\ifx\csname urlstyle\endcsname\relax
  \providecommand{\doi}[1]{doi: #1}\else
  \providecommand{\doi}{doi: \begingroup \urlstyle{rm}\Url}\fi

\bibitem[Bengio et~al.(2009)Bengio, Louradour, Collobert, and
  Weston]{bengio2009curriculum}
Yoshua Bengio, J{\'e}r{\^o}me Louradour, Ronan Collobert, and Jason Weston.
\newblock Curriculum learning.
\newblock In \emph{Proceedings of the 26th annual international conference on
  machine learning}, pages 41--48, 2009.

\bibitem[Soviany et~al.(2022)Soviany, Ionescu, Rota, and
  Sebe]{soviany2022curriculum}
Petru Soviany, Radu~Tudor Ionescu, Paolo Rota, and Nicu Sebe.
\newblock Curriculum learning: {A} survey.
\newblock \emph{International Journal of Computer Vision}, 130\penalty0
  (6):\penalty0 1526--1565, 2022.

\bibitem[Wang et~al.(2021)Wang, Chen, and Zhu]{wang2021survey}
Xin Wang, Yudong Chen, and Wenwu Zhu.
\newblock A survey on curriculum learning.
\newblock \emph{IEEE Transactions on Pattern Analysis and Machine
  Intelligence}, 44\penalty0 (9):\penalty0 4555--4576, 2021.

\bibitem[Caruana(1997)]{caruana1997multitask}
Rich Caruana.
\newblock Multitask learning.
\newblock \emph{Machine learning}, 28:\penalty0 41--75, 1997.

\bibitem[Ruder(2017)]{ruder2017overview}
Sebastian Ruder.
\newblock An overview of multi-task learning in deep neural networks.
\newblock \emph{arXiv preprint arXiv:1706.05098}, 2017.

\bibitem[Zhang and Yang(2021)]{zhang2021survey}
Yu~Zhang and Qiang Yang.
\newblock A survey on multi-task learning.
\newblock \emph{IEEE Transactions on Knowledge and Data Engineering},
  34\penalty0 (12):\penalty0 5586--5609, 2021.

\bibitem[Crawshaw(2020)]{crawshaw2020multi}
Michael Crawshaw.
\newblock Multi-task learning with deep neural networks: {A} survey.
\newblock \emph{arXiv preprint arXiv:2009.09796}, 2020.

\bibitem[Zenke et~al.(2017)Zenke, Poole, and Ganguli]{zenke2017continual}
Friedemann Zenke, Ben Poole, and Surya Ganguli.
\newblock Continual learning through synaptic intelligence.
\newblock In \emph{International conference on machine learning}, pages
  3987--3995. PMLR, 2017.

\bibitem[Van~de Ven and Tolias(2019)]{van2019three}
Gido~M Van~de Ven and Andreas~S Tolias.
\newblock Three scenarios for continual learning.
\newblock \emph{arXiv preprint arXiv:1904.07734}, 2019.

\bibitem[De~Lange et~al.(2021)De~Lange, Aljundi, Masana, Parisot, Jia,
  Leonardis, Slabaugh, and Tuytelaars]{de2021continual}
M.~De~Lange, R.~Aljundi, M.~Masana, S.~Parisot, X.~Jia, A.~Leonardis,
  G.~Slabaugh, and T.~Tuytelaars.
\newblock A continual learning survey: {D}efying forgetting in classification
  tasks.
\newblock \emph{IEEE transactions on pattern analysis and machine
  intelligence}, 44\penalty0 (7):\penalty0 3366--3385, 2021.

\bibitem[Baxter(2000)]{baxter2000model}
Jonathan Baxter.
\newblock A model of inductive bias learning.
\newblock \emph{Journal of artificial intelligence research}, 12:\penalty0
  149--198, 2000.

\bibitem[Argyriou et~al.(2006)Argyriou, Evgeniou, and
  Pontil]{argyriou2006mulfti}
Andreas Argyriou, Theodoros Evgeniou, and Massimiliano Pontil.
\newblock Multi-task feature learning.
\newblock \emph{Advances in neural information processing systems}, 19, 2006.

\bibitem[Maurer et~al.(2016)Maurer, Pontil, and
  Romera-Paredes]{maurer2016benefit}
Andreas Maurer, Massimiliano Pontil, and Bernardino Romera-Paredes.
\newblock The benefit of multitask representation learning.
\newblock \emph{Journal of Machine Learning Research}, 17\penalty0
  (81):\penalty0 1--32, 2016.

\bibitem[Du et~al.(2020)Du, Hu, Kakade, Lee, and Lei]{du2020few}
Simon~S Du, Wei Hu, Sham~M Kakade, Jason~D Lee, and Qi~Lei.
\newblock Few-shot learning via learning the representation, provably.
\newblock \emph{arXiv preprint arXiv:2002.09434}, 2020.

\bibitem[Tripuraneni et~al.(2020)Tripuraneni, Jordan, and
  Jin]{tripuraneni2020theory}
Nilesh Tripuraneni, Michael Jordan, and Chi Jin.
\newblock On the theory of transfer learning: {T}he importance of task
  diversity.
\newblock \emph{Advances in neural information processing systems},
  33:\penalty0 7852--7862, 2020.

\bibitem[Tripuraneni et~al.(2021)Tripuraneni, Jin, and
  Jordan]{tripuraneni2021provable}
Nilesh Tripuraneni, Chi Jin, and Michael Jordan.
\newblock Provable meta-learning of linear representations.
\newblock In \emph{International Conference on Machine Learning}, pages
  10434--10443. PMLR, 2021.

\bibitem[Evron et~al.(2023)Evron, Moroshko, Buzaglo, Khriesh, Marjieh, Srebro,
  and Soudry]{evron2023continual}
Itay Evron, Edward Moroshko, Gon Buzaglo, Maroun Khriesh, Badea Marjieh, Nathan
  Srebro, and Daniel Soudry.
\newblock Continual learning in linear classification on separable data.
\newblock \emph{arXiv preprint arXiv:2306.03534}, 2023.

\bibitem[Goldfarb and Hand(2023)]{goldfarb2023analysis}
Daniel Goldfarb and Paul Hand.
\newblock Analysis of catastrophic forgetting for random orthogonal
  transformation tasks in the overparameterized regime.
\newblock In \emph{International Conference on Artificial Intelligence and
  Statistics}, pages 2975--2993. PMLR, 2023.

\bibitem[Li et~al.(2023)Li, Wu, and Braverman]{li2023fixed}
Haoran Li, Jingfeng Wu, and Vladimir Braverman.
\newblock Fixed design analysis of regularization-based continual learning.
\newblock \emph{arXiv preprint arXiv:2303.10263}, 2023.

\bibitem[Lin et~al.(2023)Lin, Ju, Liang, and Shroff]{lin2023theory}
Sen Lin, Peizhong Ju, Yingbin Liang, and Ness Shroff.
\newblock Theory on forgetting and generalization of continual learning.
\newblock \emph{arXiv preprint arXiv:2302.05836}, 2023.

\bibitem[Evron et~al.(2024)Evron, Goldfarb, Weinberger, Soudry, and
  Hand]{evron2024joint}
Itay Evron, Daniel Goldfarb, Nir Weinberger, Daniel Soudry, and Paul Hand.
\newblock The joint effect of task similarity and overparameterization on
  catastrophic forgetting--an analytical model.
\newblock \emph{arXiv preprint arXiv:2401.12617}, 2024.

\bibitem[Weinshall et~al.(2018)Weinshall, Cohen, and
  Amir]{weinshall2018curriculum}
Daphna Weinshall, Gad Cohen, and Dan Amir.
\newblock Curriculum learning by transfer learning: {T}heory and experiments
  with deep networks.
\newblock In \emph{International Conference on Machine Learning}, pages
  5238--5246. PMLR, 2018.

\bibitem[Hacohen and Weinshall(2019)]{hacohen2019power}
Guy Hacohen and Daphna Weinshall.
\newblock On the power of curriculum learning in training deep networks.
\newblock In \emph{International conference on machine learning}, pages
  2535--2544. PMLR, 2019.

\bibitem[Weinshall and Amir(2020)]{weinshall2020theory}
Daphna Weinshall and Dan Amir.
\newblock Theory of curriculum learning, with convex loss functions.
\newblock \emph{Journal of Machine Learning Research}, 21\penalty0
  (222):\penalty0 1--19, 2020.

\bibitem[Saglietti et~al.(2022)Saglietti, Mannelli, and
  Saxe]{saglietti2022analytical}
Luca Saglietti, Stefano Mannelli, and Andrew Saxe.
\newblock An analytical theory of curriculum learning in teacher-student
  networks.
\newblock \emph{Advances in Neural Information Processing Systems},
  35:\penalty0 21113--21127, 2022.

\bibitem[Cornacchia and Mossel(2023)]{cornacchia2023mathematical}
Elisabetta Cornacchia and Elchanan Mossel.
\newblock A mathematical model for curriculum learning.
\newblock \emph{arXiv preprint arXiv:2301.13833}, 2023.

\bibitem[Xu and Tewari(2022)]{xu2022statistical}
Ziping Xu and Ambuj Tewari.
\newblock On the statistical benefits of curriculum learning.
\newblock In \emph{International Conference on Machine Learning}, pages
  24663--24682. PMLR, 2022.

\bibitem[Lattimore and Szepesv{\'a}ri(2020)]{lattimore2020bandit}
Tor Lattimore and Csaba Szepesv{\'a}ri.
\newblock \emph{Bandit algorithms}.
\newblock Cambridge University Press, 2020.

\bibitem[Rigollet and H{\"u}tter(2019)]{rigollet2019high}
Philippe Rigollet and Jan-Christian H{\"u}tter.
\newblock High dimensional statistics.
\newblock \emph{Lecture notes for course 18S997}, 2019.

\bibitem[Koltchinskii and Mendelson(2015)]{koltchinskii2015bounding}
Vladimir Koltchinskii and Shahar Mendelson.
\newblock Bounding the smallest singular value of a random matrix without
  concentration.
\newblock \emph{International Mathematics Research Notices}, 2015\penalty0
  (23):\penalty0 12991--13008, 2015.

\bibitem[Vershynin(2018)]{vershynin2018high}
Roman Vershynin.
\newblock \emph{High-dimensional probability: {A}n introduction with
  applications in data science}, volume~47.
\newblock Cambridge university press, 2018.

\bibitem[Ben-David et~al.(2010{\natexlab{a}})Ben-David, Lu, Luu, and
  P{\'a}l]{david2010impossibility}
Shai Ben-David, Tyler Lu, Teresa Luu, and D{\'a}vid P{\'a}l.
\newblock Impossibility theorems for domain adaptation.
\newblock In \emph{Proceedings of the Thirteenth International Conference on
  Artificial Intelligence and Statistics}, pages 129--136. JMLR Workshop and
  Conference Proceedings, 2010{\natexlab{a}}.

\bibitem[Yang and Barron(1999)]{yang1999information}
Yuhong Yang and Andrew Barron.
\newblock Information-theoretic determination of minimax rates of convergence.
\newblock \emph{Annals of Statistics}, pages 1564--1599, 1999.

\bibitem[Wainwright(2019)]{wainwright2019high}
Martin~J. Wainwright.
\newblock \emph{High-dimensional statistics: {A} non-asymptotic viewpoint},
  volume~48.
\newblock Cambridge University Press, 2019.

\bibitem[Mousavi~Kalan et~al.(2020)Mousavi~Kalan, Fabian, Avestimehr, and
  Soltanolkotabi]{mousavi2020minimax}
Mohammadreza Mousavi~Kalan, Zalan Fabian, Salman Avestimehr, and Mahdi
  Soltanolkotabi.
\newblock Minimax lower bounds for transfer learning with linear and one-hidden
  layer neural networks.
\newblock \emph{Advances in Neural Information Processing Systems},
  33:\penalty0 1959--1969, 2020.

\bibitem[Du et~al.(2021)Du, Hu, Kakade, Lee, and Lei]{du2021fewshot}
Simon~Shaolei Du, Wei Hu, Sham~M. Kakade, Jason~D. Lee, and Qi~Lei.
\newblock Few-shot learning via learning the representation, provably.
\newblock In \emph{International Conference on Learning Representations}, 2021.
\newblock URL \url{https://openreview.net/forum?id=pW2Q2xLwIMD}.

\bibitem[Allgower and Georg(2012)]{allgower2012numerical}
Eugene~L Allgower and Kurt Georg.
\newblock \emph{Numerical continuation methods: {A}n introduction}, volume~13.
\newblock Springer Science \& Business Media, 2012.

\bibitem[Portelas et~al.(2020)Portelas, Colas, Weng, Hofmann, and
  Oudeyer]{portelas2020automatic}
R{\'e}my Portelas, C{\'e}dric Colas, Lilian Weng, Katja Hofmann, and
  Pierre-Yves Oudeyer.
\newblock Automatic curriculum learning for deep {RL}: {A} short survey.
\newblock \emph{arXiv preprint arXiv:2003.04664}, 2020.

\bibitem[Narvekar et~al.(2020)Narvekar, Peng, Leonetti, Sinapov, Taylor, and
  Stone]{narvekar2020curriculum}
Sanmit Narvekar, Bei Peng, Matteo Leonetti, Jivko Sinapov, Matthew~E Taylor,
  and Peter Stone.
\newblock Curriculum learning for reinforcement learning domains: {A} framework
  and survey.
\newblock \emph{The Journal of Machine Learning Research}, 21\penalty0
  (1):\penalty0 7382--7431, 2020.

\bibitem[Brown et~al.(2020)Brown, Mann, Ryder, Subbiah, Kaplan, Dhariwal,
  Neelakantan, Shyam, Sastry, Askell, et~al.]{brown2020language}
Tom Brown, Benjamin Mann, Nick Ryder, Melanie Subbiah, Jared~D Kaplan, Prafulla
  Dhariwal, Arvind Neelakantan, Pranav Shyam, Girish Sastry, Amanda Askell,
  et~al.
\newblock Language models are few-shot learners.
\newblock \emph{Advances in neural information processing systems},
  33:\penalty0 1877--1901, 2020.

\bibitem[Xu(2023)]{xu2023benefits}
Ziping Xu.
\newblock \emph{On the Benefits of Multitask Learning: {A} Perspective Based on
  Task Diversity}.
\newblock PhD thesis, 2023.

\bibitem[Vilalta and Drissi(2002)]{vilalta2002perspective}
Ricardo Vilalta and Youssef Drissi.
\newblock A perspective view and survey of meta-learning.
\newblock \emph{Artificial intelligence review}, 18\penalty0 (2):\penalty0
  77--95, 2002.

\bibitem[Hospedales et~al.(2021)Hospedales, Antoniou, Micaelli, and
  Storkey]{hospedales2021meta}
Timothy Hospedales, Antreas Antoniou, Paul Micaelli, and Amos Storkey.
\newblock Meta-learning in neural networks: {A} survey.
\newblock \emph{IEEE transactions on pattern analysis and machine
  intelligence}, 44\penalty0 (9):\penalty0 5149--5169, 2021.

\bibitem[Ben-David et~al.(2010{\natexlab{b}})Ben-David, Blitzer, Crammer,
  Kulesza, Pereira, and Vaughan]{ben2010theory}
Shai Ben-David, John Blitzer, Koby Crammer, Alex Kulesza, Fernando Pereira, and
  Jennifer~Wortman Vaughan.
\newblock A theory of learning from different domains.
\newblock \emph{Machine learning}, 79:\penalty0 151--175, 2010{\natexlab{b}}.

\bibitem[Mohri and Mu{\~n}oz~Medina(2012)]{mohri2012new}
Mehryar Mohri and Andres Mu{\~n}oz~Medina.
\newblock New analysis and algorithm for learning with drifting distributions.
\newblock In \emph{Algorithmic Learning Theory: 23rd International Conference,
  ALT 2012, Lyon, France, October 29-31, 2012. Proceedings 23}, pages 124--138.
  Springer, 2012.

\bibitem[Yang et~al.(2013)Yang, Hanneke, and Carbonell]{yang2013theory}
Liu Yang, Steve Hanneke, and Jaime Carbonell.
\newblock A theory of transfer learning with applications to active learning.
\newblock \emph{Machine learning}, 90:\penalty0 161--189, 2013.

\bibitem[Germain et~al.(2013)Germain, Habrard, Laviolette, and
  Morvant]{germain2013pac}
Pascal Germain, Amaury Habrard, Fran{\c{c}}ois Laviolette, and Emilie Morvant.
\newblock A {PAC}-{B}ayesian approach for domain adaptation with specialization
  to linear classifiers.
\newblock In \emph{International conference on machine learning}, pages
  738--746. PMLR, 2013.

\bibitem[Hanneke and Kpotufe(2019)]{hanneke2019value}
Steve Hanneke and Samory Kpotufe.
\newblock On the value of target data in transfer learning.
\newblock \emph{Advances in Neural Information Processing Systems}, 32, 2019.

\bibitem[Zhuang et~al.(2020)Zhuang, Qi, Duan, Xi, Zhu, Zhu, Xiong, and
  He]{zhuang2020comprehensive}
Fuzhen Zhuang, Zhiyuan Qi, Keyu Duan, Dongbo Xi, Yongchun Zhu, Hengshu Zhu, Hui
  Xiong, and Qing He.
\newblock A comprehensive survey on transfer learning, 2020.

\bibitem[Kpotufe and Martinet(2021)]{kpotufe2021marginal}
Samory Kpotufe and Guillaume Martinet.
\newblock Marginal singularity and the benefits of labels in covariate-shift.
\newblock \emph{The Annals of Statistics}, 49\penalty0 (6):\penalty0
  3299--3323, 2021.

\bibitem[Mansour et~al.(2009)Mansour, Mohri, and
  Rostamizadeh]{mansour2009domain}
Yishay Mansour, Mehryar Mohri, and Afshin Rostamizadeh.
\newblock Domain adaptation: {L}earning bounds and algorithms.
\newblock \emph{arXiv preprint arXiv:0902.3430}, 2009.

\bibitem[Bubeck et~al.(2012)Bubeck, Cesa-Bianchi, et~al.]{bubeck2012regret}
S{\'e}bastien Bubeck, Nicolo Cesa-Bianchi, et~al.
\newblock Regret analysis of stochastic and nonstochastic multi-armed bandit
  problems.
\newblock \emph{Foundations and Trends{\textregistered} in Machine Learning},
  5\penalty0 (1):\penalty0 1--122, 2012.

\bibitem[Slivkins(2019)]{slivkins2019introduction}
Aleksandrs Slivkins.
\newblock Introduction to multi-armed bandits.
\newblock \emph{Foundations and Trends{\textregistered} in Machine Learning},
  12\penalty0 (1-2):\penalty0 1--286, 2019.

\bibitem[Even-Dar et~al.(2006)Even-Dar, Mannor, Mansour, and
  Mahadevan]{even2006action}
Eyal Even-Dar, Shie Mannor, Yishay Mansour, and Sridhar Mahadevan.
\newblock Action elimination and stopping conditions for the multi-armed bandit
  and reinforcement learning problems.
\newblock \emph{Journal of machine learning research}, 7\penalty0 (6), 2006.

\bibitem[Audibert et~al.(2010)Audibert, Bubeck, and Munos]{audibert2010best}
Jean-Yves Audibert, S{\'e}bastien Bubeck, and R{\'e}mi Munos.
\newblock Best arm identification in multi-armed bandits.
\newblock In \emph{COLT}, pages 41--53, 2010.

\bibitem[Soare et~al.(2014)Soare, Lazaric, and Munos]{soare2014best}
Marta Soare, Alessandro Lazaric, and R{\'e}mi Munos.
\newblock Best-arm identification in linear bandits.
\newblock \emph{Advances in Neural Information Processing Systems}, 27, 2014.

\bibitem[Jamieson and Nowak(2014)]{jamieson2014best}
Kevin Jamieson and Robert Nowak.
\newblock Best-arm identification algorithms for multi-armed bandits in the
  fixed confidence setting.
\newblock In \emph{2014 48th Annual Conference on Information Sciences and
  Systems (CISS)}, pages 1--6. IEEE, 2014.

\bibitem[Russo(2016)]{russo2016simple}
Daniel Russo.
\newblock Simple {B}ayesian algorithms for best arm identification.
\newblock In \emph{Conference on Learning Theory}, pages 1417--1418. PMLR,
  2016.

\bibitem[Garivier and Kaufmann(2016)]{garivier2016optimal}
Aur{\'e}lien Garivier and Emilie Kaufmann.
\newblock Optimal best arm identification with fixed confidence.
\newblock In \emph{Conference on Learning Theory}, pages 998--1027. PMLR, 2016.

\bibitem[Graves et~al.(2017)Graves, Bellemare, Menick, Munos, and
  Kavukcuoglu]{graves2017automated}
Alex Graves, Marc~G Bellemare, Jacob Menick, Remi Munos, and Koray Kavukcuoglu.
\newblock Automated curriculum learning for neural networks.
\newblock In \emph{international conference on machine learning}, pages
  1311--1320. Pmlr, 2017.

\bibitem[Hanneke(2013)]{hanneke2013statistical}
Steve Hanneke.
\newblock A statistical theory of active learning.
\newblock \emph{Foundations and Trends in Machine Learning}, pages 1--212,
  2013.

\bibitem[Hanneke et~al.(2014)]{hanneke2014theory}
Steve Hanneke et~al.
\newblock Theory of disagreement-based active learning.
\newblock \emph{Foundations and Trends{\textregistered} in Machine Learning},
  7\penalty0 (2-3):\penalty0 131--309, 2014.

\bibitem[Hino(2020)]{hino2020active}
Hideitsu Hino.
\newblock Active learning: {P}roblem settings and recent developments.
\newblock \emph{arXiv preprint arXiv:2012.04225}, 2020.

\bibitem[Rigollet and Rakhlin(2020)]{rigollet2020math}
Philippe Rigollet and Alexander Rakhlin.
\newblock Mathematical statistics: {A} non-asymptotic approach.
\newblock \emph{Lecture notes for IDS.160}, 2020.

\bibitem[Cover and Thomas(2006)]{cover2012elements}
T.~M. Cover and J.~A. Thomas.
\newblock \emph{Elements of Information Theory}.
\newblock Wiley-Interscience, Hoboken, NJ, USA, 2006.

\bibitem[Schervish and DeGroot(2014)]{schervish2014probability}
Mark~J. Schervish and Morris~H. DeGroot.
\newblock \emph{Probability and statistics}, volume 563.
\newblock Pearson Education London, UK:, 2014.

\bibitem[Laurent and Massart(2000)]{laurent2000adaptive}
Beatrice Laurent and Pascal Massart.
\newblock Adaptive estimation of a quadratic functional by model selection.
\newblock \emph{Annals of statistics}, pages 1302--1338, 2000.

\bibitem[Boucheron et~al.(2013)Boucheron, Lugosi, and
  Massart]{boucheron2013concentration}
St{\'e}phane Boucheron, G{\'a}bor Lugosi, and Pascal Massart.
\newblock \emph{Concentration inequalities: {A} nonasymptotic theory of
  independence}.
\newblock Oxford university press, 2013.

\end{thebibliography}

\end{document}